\def\tr{\mathop{\text{tr}}\kern.2ex}
\def\P{{\mathbb P}}
\def\E{{\mathbb E}}
\def\supp{\mathop{\text{supp}}}
\long\def\comment#1{}
\def\tr{\mathop{\text{Tr}}}
\newcommand{\bel}{\begin{eqnarray}\label}
\newcommand{\eel}{\end{eqnarray}}
\newcommand{\bes}{\begin{eqnarray*}}
\newcommand{\ees}{\end{eqnarray*}}
\def\real{{\mathbb{R}}}
\newcommand{\red}{\color{red}}
\let\emptyset\varnothing
\let\hat\widehat
\let\tilde\widetilde
\def\P{{\mathbb P}}
\def\E{{\mathbb E}}
\def\supp{\mathop{\text{supp}\kern.2ex}}
\def\argmin{\mathop{\text{\rm arg\,min}}}
\def\tr{{\rm{Tr}}}
\def\supp{\mathop{\text{supp}}}
\def\tr{\mathrm{Tr}}
\def\var{\mathop{\text{var}}}
\def\I{{\mathbf{I}}}
\theoremstyle{plain}
\newtheorem{prop}{Proposition}[section]
\newtheorem{assump}{Assumption}[section]
\def\##1\#{\begin{align}#1\end{align}}
\def\$#1\${\begin{align*}#1\end{align*}}
\theoremstyle{plain}
\theoremstyle{mytheoremstyle}
\newcommand{\transpose}{\scriptscriptstyle \sf T}
\def\trans{^{\transpose}}
\definecolor{darkred}{RGB}{150,50,50}
\definecolor{brown}{RGB}{250,100,100}
\definecolor{green}{RGB}{000,150,100}
\definecolor{purple}{RGB}{250,000,180}
\def\red{\color{darkred}}
\def\supini{^{\scriptscriptstyle \sf ini}}
\def\bSigmahat{\widehat{\bSigma}}
\def\Ghat{\widehat{G}}
\def\PMIbb{\mathbb{PMI}}
\def\PMIbbhat{\widehat{\PMIbb}}
\def\PMIhat{\widehat{\PMI}}
\def\PMI{\mbox{PMI}}
\def\SPPMIbb{\mathbb{SPPMI}}
\def\SPPMIbbhat{\widehat{\SPPMIbb}}
\def\SPPMIhat{\widehat{\SPPMI}}
\def\SPPMI{\mbox{SPPMI}}
\def\Abhat{\widehat{\Ab}}
\def\Qbhat{\widehat{\Qb}}
\def\Obhat{\widehat{\Ob}}
\begin{document}

\title{\LARGE Knowledge Graph Embedding with Electronic Health Records Data via Latent Graphical Block Model}

\author{ 
Junwei Lu\thanks{Department of Biostatistics, Harvard T.H. Chan School of Public Health; Email:
\texttt{junweilu@hsph.harvard.edu}},~
Ying Jin\thanks{Department of Statistics, Stanford University; Email:
\texttt{ying531@stanford.edu}},~and~
Tianxi Cai\thanks{Department of Biostatistics, Harvard T.H. Chan School of Public Health; Email:
\texttt{tcai@hsph.harvard.edu}}.
}

\date{}
\maketitle

\begin{abstract}

Due to the increasing adoption of electronic health records (EHR), large scale EHRs have become another rich data source for translational clinical research. The richness of the EHR data enables us to derive unbiased knowledge maps and large scale semantic embedding vectors for EHR features, which are valuable and shareable resources for translational research. Despite its potential, deriving generalizable  knowledge from EHR data remains challenging. First, EHR data are generated as part of clinical care with data elements too detailed and fragmented for research. Despite recent progress in mapping EHR data to common ontology with hierarchical structures, much development is still needed to enable automatic grouping of local EHR codes to meaningful clinical concepts at a large scale. Second, the total number of unique EHR features is large, imposing methodological challenges to derive reproducible knowledge graph, especially when interest lies in conditional dependency structure.  Third, the detailed EHR data on a very large patient cohort imposes additional computational challenge to deriving a knowledge network. To overcome these challenges, we propose to infer the conditional dependency structure among EHR features via a latent graphical block model (LGBM). The LGBM has a two layer structure with the first providing semantic embedding vector (SEV) representation for the EHR features and the second overlaying a graphical block model on the latent SEVs. The block structures on the graphical model also allows us to cluster synonymous features in EHR. We propose to learn the LGBM  efficiently, in both statistical and computational sense, based on the empirical point mutual information matrix. We establish the statistical rates of the proposed estimators and show the perfect recovery of the block structure. Numerical results from simulation studies and real EHR data analyses suggest that the proposed LGBM estimator performs well in finite sample. 
\end{abstract}

\def\Ebb{\mathbb{E}}
\def\Vbb{\mathbb{V}}
\def\ZZ{\mathbb{Z}}
\def\Vbvec{\overrightarrow{\Vb}}
\def\simiid{\stackrel{\scriptscriptstyle \sf iid}{\sim}}

\section{Introduction}

The increasing adoption of electronic health records (EHR) has led to EHR data becoming a rich data source for translational clinical research.  Detailed longitudinal clinical information on broad patient populations  have allowed researchers to derive comprehensive prediction models for translational and precision medicine research \citep[e.g.]{lipton2015learning,choi2016doctor,choi2016retain,rajkomar2018scalable,ma2017dipole}. The longitudinal co-occurrence patterns of EHR features observed on a large number of patients have also enabled researchers to derive comprehensive knowledge graphs and large scale representation learning of semantic embedding vectors to represent EHR features \citep{che2015deep,choi2016learning,choi2016multi,miotto2016deep}, which are valuable and shareable resources for translational clinical research.

There is a rich literature on machine learning approaches to deriving knowledge graphs  \citep{che2015deep,choi2016learning}. 
Some traditional approaches to learning knowledge graph, such as \cite{nickel2011three,bordes2013translating,wang2014knowledge,lin2015learning} , only consider the information on the connectivity and relationship between the nodes. The symbolic nature of the graph, while useful, makes it challenging to manipulation. To overcome these challenges, several useful knowledge graph embedding methods,  which aim to embed entities and relations
into continuous vector spaces, has found much success in recent years 
\cite[e.g.]{du2018dynamic,nickel2011three,nguyen2017novel,shang2019end,bordes2013translating}. 
Most of these existing methods require training on patient level longitudinal data, which would be both computationally prohibitive and subject to data sharing constraints due to patient privacy. 

In addition, these methods do not address another challenge arising from EHR data elements being overly detailed with many {\em near-synonymous} features that need to be grouped into a broader concept for better clinical interpretation \citep{schulam2015clustering}. For example, among COVID-19 hospitalized patients, 4 distinct LOINC (Logical Observation Identifiers Names and Codes) codes are commonly used for C-reactive protein (CRP) at Mass General Brigham (MGB), and we {\em manually} grouped them to represent CRP to track the progression of COVID patients \cite{brat2020international}. 
Valuable hierarchical ontologies, such as the ``PheCodes" from the phenome-wide association studies (PheWAS) catalog for grouping of International Classification of Diseases (ICD) codes, the clinical classification software (CCS) for grouping of the Current Procedural Terminology (CPT) codes, RxNorm (Prescription for Electronic Drug Information Exchange) hierarchy for medication, and LOINC  hierarchy for laboratory tests, have been curated via tremendous manual efforts \citep{steindel2002introduction,healthcare2017clinical,wu2019mapping}. 
However, these mappings are incomplete and only applicable to codes that have been mapped to common ontology.  

These challenges motivate us to propose a latent graphical block modeling (LGBM) approach to knowledge graph embedding with EHR data. The LGBM has a two-layer structure linked by the latent semantic embedding vectors (SEVs) that represent the EHR features. The co-occurrence patterns of the EHR features are generated from the first layer of a hidden markov model parametrized by the latent SEVs similar to those proposed in \cite{arora2016latent}. The conditional dependency structure of the SEVs is encoded by the second layer of a vector-valued block graphical model. Specifically, let $\Vb_w \in \RR^p$ be the SEV of feature $w\ \in \cV$, where $\cV$ is the corpus of all features. The vector-valued graphical model associates a feature network to the vectors by assuming that there is an edge between the codes $j$ and $k$ if  $\Vb_j$ and $\Vb_{k}$ are jointly independent conditioning on all the other vectors 
$\{\Vb_w\}_{w\neq j, k}$. The proposed LGBM has several key advantages over existing methods. First, we learn the LGBM from a co-occurrence matrix of EHR features, which only involve simple summary statistics that can be computed at scale, overcoming both computational and privacy constraints. Second, the LGBM characterizes the conditional dependence structure of the EHR features, not  marginal relationships. Third, the learned block structure also enables automatic grouping of near synonymous codes, which improves both interpretability and reproducibility. 

There is a rich statistical literature on vector-valued graphical models with high dimensional features \citep[e.g.]{Yuan2007Model,rothman2008,Friedman2008SparseIC,dAspremont2008First,fan2009,lam2009,Yu2010High,Cai2011AC,Liu2017,PMID:25632164,kolar2014graph,du2019multivariate}. The block structured graphical model has also been extensively studied by, e.g.,  \cite{bunea2020model,eisenach2020high,eisenach2019efficient}. For example, \cite{bunea2020model} proposed the $G$-block model which assumes that the weighted matrix of the graph is block-wise constant and they estimate the network by the convex relaxation optimization. \cite{eisenach2020high} considered the inference of the block-wise constant graphical model.
On the other hand, these graphical modeling methods largely require observations on the nodes of the graph and/or the random vectors on the nodes and hence are not applicable to the EHR setting where only co-occurrence patterns of the codes are observed. The latent structure of the LGBM makes it substantially more challenging to analyze the theoretical properties of the estimated network. Although a few estimators have also been proposed for latent graphical models \citep{choi2011learning,bernardo2003variational,chandrasekaran2010latent,wu2017graphical,bunea2020model,eisenach2020high,eisenach2019efficient}, they require subject-level data and are computationally intensive. In this paper, we propose a two-step approach to efficiently learn the LGBM to infer about the conditional dependence and grouping structures of EHR features based on the summary level co-occurrence matrix. We first conduct spectral decomposition on an empirical pointwise mutual information (PMI) matrix derived from the co-occurrence matrix to learn the SEVs for the features. In the second step, we obtain a block graphical model estimator on the representation vectors learned from the first step. We establish statistical rate of the learned PMI matrix and knowledge network. The remainder of the paper is organized as follows.  We detail the LGBM assumptions in Section \ref{sec:model} and present statistical methods for estimating the SEVs along with the block graphical model in Section \ref{sec:est}.  Theoretical properties of our proposed estimators are discussed in Section \ref{sec:theory} including the estimation procesion and clustering recovery. Section \ref{sec:numerical} implements our method to both synthetic simulations and a real EHR data to learn a knowledge graph based on a large number of codified EHR features. Although our methods are generally applicable to both codified and narrative features, we will use describe our methods below in the context of EHR codes  to be more concrete. 


\def\brmI{\textbf{I}}
\def\Ssc{\mathcal{S}}
\def\Asc{\mathcal{A}}
\def\Zsc{\mathcal{Z}}
\def\supth{^{\rm th}}
\def\Vsc{\mathcal{V}}

\section{Latent Graphical Block Model}\label{sec:model}


{In this section, we detail a two-layer generative process for the proposed LGBM with the two layers linked by the latent $p$-dimensional SEVs for the $d$ EHR codes, denoted by $\Vbb_{d\times p}=[\bV_{1}, ...,\bV_{d}]\trans$, where $\bV_j$ is the SEV for the $j$th code and without loss of generality, we index the $d$ codes as $\Vsc = [d] \equiv \{1, \ldots, d\}$. The first layer of the LGBM is a hidden Markov model for the code sequence given the SEVs and the second layer is a latent gaussian graphical block model that encodes the joint distribution of $\{\bV_j, j\in [d]\}$.}

\def\Vsc{\mathcal{V}}
\def\supth{^{\scriptscriptstyle \rm th}}

\def\th{^{\scriptscriptstyle \rm th}}

\subsection{Overview of the Hidden Markov Model}

We first give a high level picture of how we assume the longitudinal EHR data is generated from a hidden markov model. Consider an observed length $T$ longitudinal sequence of EHR features occurred in a patient, $w_1, w_2, \ldots w_T \in \cV$. We assume that the $t\th$ code, $w_t$, is generated from a hidden Markov model \citep{arora2016latent} and it takes the value of $w \in \cV=[d]$ with the probability
\[
  \PP(w_t = j|\bc_t) \propto{e^{\langle \Vb_j, \bc_t \rangle}},
\]
where $\Vb_j = (V_{j1}, ..., V_{jp})\trans \in \RR^p$ is the latent SEV of code $w$ with $\Vbb_{\cdot\ell}= (V_{1\ell}, ..., V_{d\ell})\trans$ following a Gaussian prior
$\Vbb_{\cdot\ell} \simiid N(0, \bSigma)$, and  $\{\bc_t\}_{t\geq 1}$ is the the discourse vector which follows a hidden markov process 
\begin{equation} \label{eq:hidden_Markov_def}
\begin{split} 
    &\bz_{2} = \sqrt{\alpha}\cdot \bz_1/\|\bz_1\|_2+ \sqrt{1 - \alpha} \cdot r_{2}; \\
    &\bz_{t+1} = \sqrt{\alpha}\cdot \bz_t + \sqrt{1 - \alpha} \cdot r_{t+1}, \\
    &\mbox{$\bc_t = \bz_{t+1}/\|\bz_{t+1}\|_2$ when $t \geq 2$},
\end{split}
\end{equation}
where $\alpha = 1 -  {\log d}/{k^2}$, $r_t \mbox{ i.i.d.} \sim N(0, \I_k/k)$ for all $t \geq 2$, and $r_{t+1}$ is independent from $\bz_1, \cdots, \bz_t$. 
 

From the hidden Markov model in \eqref{eq:hidden_Markov_def}, we generate the code sequence $w_1, \ldots, w_T \in  \cV$.
We will show that under the model in \eqref{eq:hidden_Markov_def}, the dependency structure of the latent SEVs can be connected with the code sequence via the point-wise mutual information (PMI) matrix. Specifically, we will show that under the hidden markov model with Gaussian assumption on the distribution of $\Vbb_{\cdot \ell}$, the PMI matrix is close to the covariance matrix of $\Vbb_{\cdot\ell}$:
\begin{equation}\label{eq:pmi-consist}
       \|\PMIbb - \bSigma\|_{\max} = O(\sqrt{\log d/p}),
\end{equation}
where $\PMIbb$ is the population PMI matrix defined based on the so-called co-occurrence matrix of the longitudinal word occurrence data $\{w_t\}_{t \ge 1}$. Define the context of a feature $w_t$ as the codes given within $q$-days of time $t$, denoted as $c_q(w_t) = \{w_{s}~:~|s-t|\le q \}$.  Given a pair of codes  $w,w' \in \cV$, we define the co-occurrence between $w$ and $w'$ similar to \cite{beam2020clinical} as the number of incidences the feature $w'$ occurs in the context of $w$, denoted by 
\begin{equation}\label{eq:coocur}
   \cC(w,w') = |\{(t,s):|t-s|\leq q \text{ and } w = w_t,w' = w_s   ~|~ t = 1, \ldots, T \}|.
\end{equation}
 We denote the co-occurrence matrix as $\CC = [\cC(j,j')] \in \RR^{d\times d}$.
 The PMI between codes $w, w'$ is defined as
\begin{equation}\label{eq:def_stat_pmi}
\PMIbb(w,w') = \log \frac{\mathcal{N}\cdot \mathcal{N}(w,w')}{\mathcal{N}(w,\cdot)\mathcal{N}(w',\cdot)},
\end{equation}
where $\mathcal{N}(w,w')$ is the expectation of $\mathcal{C}(w,w')$, and the marginal occurrence
\[
\mathcal{N}(w,\cdot)=\sum_{c\in V}\mathcal{N}(w,c), \mathcal{N}=\sum_{w\in V}\sum_{w'\in \Vsc}\mathcal{N}(w,w').
\]

\begin{remark}
 \cite{arora2016latent} shows a similar relationship as \eqref{eq:pmi-consist}. However, one of the major difference between the results in \cite{arora2016latent} and our model is that they assumed that all code vectors are independent from each other, while our representation vectors are generated from the above block graphical model which enable us to characterize the dependency structures between the words. Moreover, they did not characterize the specific rate on distance between the PMI and the covariance matrix.
\end{remark}

\subsection{Graphical Block Model for the Latent Embedding Vectors}\label{sub:Gaussian_graph}

In this section, we will describe the latent vector-valued block graphical model that captures the conditional dependency structure of $\{\bV_i, i \in [d]\}$ as well as the unknown block structure that encodes information on which codes are near-synonymous. Specifically, we assume that
\begin{equation} \label{eqn:cond_Gaussian}
    \Vb_{i} |\Vbb_{-i,\cdot} \sim N\left(\sum_{j \in [d]\backslash \{ i\} } B_{ij}\Vb_{\jmath},\sigma^2 \Ib_p\right) \mbox{   for all $i \in [d]$ and some $\sigma >0$},
\end{equation}
Let  $\Vbvec = (\Vbb_{\cdot 1}\trans, ..., \Vbb_{\cdot p}\trans)\trans$ denote the vector concatenating columns of $\Vbb$. We can show (see Appendix~\ref{sub:proof_v} for details) that \eqref{eqn:cond_Gaussian} can be satisfied 
if $\Vbvec$ follows a multivariate normal distribution
\begin{equation} \label{eqn:joint_Gaussian}
   \Vbvec \sim  N(0,\Ib_p \otimes \bSigma) \quad \mbox{with}\quad 
   \bSigma = \bOmega^{-1} ,
\end{equation}
where $\bOmega = \sigma^{-2}(\Ib_d - \Bb)$, where the identity matrix comes from the variance in \eqref{eqn:cond_Gaussian}, and $\Bb = [B_{ij}]$ is same as \eqref{eqn:cond_Gaussian}. Under \eqref{eqn:joint_Gaussian}, we show in Lemma \ref{lem:distribution_v} that the conditional distribution of $\Vb_{i} |\Vbb_{-i, \cdot}$ follows  (\ref{eqn:cond_Gaussian}). Therefore, \eqref{eqn:joint_Gaussian} defines a vector-valued gaussian graphical model such that the node of the graph for code $i$ is represented by the corresponding SEV $\Vb_{i}$ and the code $i$ is connected to $j$ if and only if $B_{ij} \neq 0$. We assume that $\Bb$ is sparse to enable recovery of the code dependency structure. From \eqref{eqn:cond_Gaussian} and  \eqref{eqn:joint_Gaussian}, we can see that $\Vb_{i}$ is conditionally independent to $\Vb_{j}$ conditioning on all $\{\Vb_k\}_{k \neq i,j}$ if and only if  $B_{ij} \neq 0$. Thus the dependency structure is encoded by the support of $\Bb$ which is assumed to be sparse. 



\def\Rbb{\mathbb{R}}

In order to characterize the synonymous structure, we further impose the block structure Gaussian graphical model of the SEVs such that
\begin{equation}\label{eq:word_cluster_assign}
\Vbb_{\cdot\ell} = \Ab_{d\times K} \Zb_{\ell} + \Eb_{\ell}, \quad \text{ with } \Zb_{\ell} \simiid N(0, \Qb), \quad \Eb_{\ell}\simiid N(0, \bGamma), \quad \text{for $\ell = 1, ..., p$,}
\end{equation}
where the covariance matrix $\Qb_{K\times K}$ is positive definite and  $\bGamma_{d \times d} = \diag(\bGamma_{11}, \ldots, \bGamma_{dd})$ with $\Gamma_j >0$ for $j\in [d]$. 
The clustering assignment matrix $\Ab=[\Ab_{ik}]=[I(i \in G_k)]$ essentially encodes the partition of $[d]$ as disjoint sets $G=\{G_1, \ldots, G_K\}$ with $[d] = \cup_{k=1}^K G_k$. The codes within each group $G_k$ are considered as near synonymous and can be grouped into a clinically meaningful code concept. With the additional block structure, we may express the covariance structure as
\begin{equation}\label{eq:cluster_cov_decomp}
\bSigma = \bOmega^{-1} =    \sigma^2(\Ib_d - \Bb)^{-1} = \Ab \Qb \Ab^{\transpose} + \bGamma .
\end{equation}
Under this model, embedding vectors for codes within a cluster share similar behavior. 
To see this, let $\ZZ = (\Zb_{1}, ..., \Zb_p)$ and $\Ebb = (\bE_1, ..., \bE_p)$. Then $\Vbb = \Ab\ZZ + \Ebb$ and hence $\{\Vb_i = \ZZ_{k\cdot} + \Ebb_{i\cdot}, i \in G_k\}$ share the same center $\ZZ_{k\cdot}$.  Moreover, according to the decomposition in \eqref{eq:cluster_cov_decomp}, we have 
{ $\bSigma_{ij} = \Qb_{g_ig_j}$}, meaning that the noiseless part of large covariance matrix $\bSigma$ is entrywise identical within groups. {We aim to leverage the structure of $\bSigma$ to improve the estimation of $\Bb$ and infer the dependency structure of the codes.} 


The block model in \eqref{eq:cluster_cov_decomp} is not identifiable due to the additivity of $ \Ab \Qb \Ab^{\transpose}$ and $\bGamma$. For the identifiability of the model, we introduce the following definition of feasible covariance.
 \begin{definition}
Define the cluster gap of any positive definite matrix $\bSigma$ induced by clustering $G$ as 
$$
\Delta(\bSigma,G) = \min_{g_i\neq g_j} \max_{\ell \neq i,j} |\bSigma_{i\ell}-\bSigma_{j \ell}|.
$$
A partition $G$ along with its associated assignment matrix $\Ab$ and decomposition $\bSigma=\Ab \Qb \Ab^{\transpose} + \bGamma$ is {feasible} if $\Delta(\bSigma,G)>0$ and  $\bGamma$ is diagonal with $\Gamma_{ii}=0$ for all $i$ such that $|G_{g_i}|=1$.
\end{definition}
The decomposition of $\bSigma$ by $G$ implies that items belonging to the same cluster behave in the same way. In a {\em feasible} partition and decomposition, $\Delta(\bSigma,G)>0$ means that the matrix $\bSigma$ can be accurately separated by $G$ and code in the same cluster will not be separately into distinct groups. Setting $\Gamma_{ii} =0$ for $|G_{g_i}|=1$ removes noise from singleton codes to ensure identifiability and hence allows us to remove the requirement of minimum cluster size being at least 2 needed in \cite{bunea2020model}.  Formally, we have the following result on identifiability of our model, saying that two {\em feasible} decompositions must be identical. We leave the proof of the proposition to Appendix \ref{sec:prop-pf}.

\begin{prop}[Identifiability of block model for code clusters]\label{prop:block_model_identification} Define the assignment $g_i=k$ iff $i\in G_k$.
Let the true decomposition be $\bSigma=\Ab \Qb \Ab^{\transpose} + \bGamma$ with true partition $G$, so that $G,\Ab$ and the decomposition is {\em feasible}. Then for any partition $\tilde{G}$, its assignment matrix $\tilde{\Ab}$ with decomposition $\bSigma = \tilde{\Ab}\tilde{\Qb} \tilde{\Ab}^{\transpose} + \tilde{\bGamma}$ that are also {\em feasible}, it holds that $\tilde{G}=G$, $\tilde{\Qb}=\Qb$, $\tilde{\bGamma}=\bGamma$.
\end{prop}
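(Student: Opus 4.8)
The plan is to prove uniqueness of the feasible decomposition by first pinning down the partition $\tilde G$ from the off-diagonal entries of $\bSigma$, and then deducing uniqueness of $\tilde \Qb$ and $\tilde\bGamma$ from the diagonal and off-diagonal structure. The key observation is that for a feasible decomposition $\bSigma = \Ab\Qb\Ab^\transpose + \bGamma$ with $\bGamma$ diagonal, every \emph{off-diagonal} entry satisfies $\bSigma_{i\ell} = \Qb_{g_i g_\ell}$ whenever $i \ne \ell$ — the noise $\bGamma$ only touches the diagonal. This means the full off-diagonal pattern of $\bSigma$ is determined by $\Qb$ and the assignment, so any ambiguity lives entirely in how rows/columns are grouped.

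First I would show $\tilde G = G$. Fix the true feasible partition $G$. For two indices $i, j$ in the same true block $G_k$ with $|G_k| \ge 2$, I claim $\bSigma_{i\ell} = \bSigma_{j\ell}$ for every $\ell \ne i,j$: indeed both equal $\Qb_{k g_\ell}$ by the displayed identity (and one checks the $\ell \in \{i,j\}$ cases separately, which is where the feasibility condition $\Gamma_{ii}=0$ for singletons is \emph{not} needed but the structure $\bSigma_{ii} = Q_{kk} + \Gamma_{ii}$ has to be handled). Conversely, if $g_i \ne g_j$ then $\Delta(\bSigma, G) > 0$ gives some $\ell \ne i,j$ with $\bSigma_{i\ell} \ne \bSigma_{j\ell}$. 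Hence ``$\bSigma_{i\cdot}$ and $\bSigma_{j\cdot}$ agree off of coordinates $i,j$'' is \emph{exactly} the equivalence relation ``same true block,'' at least among non-singletons. The same statement holds verbatim for $\tilde G$ using its feasibility. Since both relations coincide with the $\bSigma$-determined relation, $\tilde G$ and $G$ induce the same partition of the non-singleton indices; singletons are then forced to match too, because a singleton in one partition that got absorbed into a size-$\ge 2$ block of the other would contradict $\Delta(\bSigma,\tilde G)>0$ or $\Delta(\bSigma,G)>0$ applied to that pair. This yields $\tilde G = G$ (up to relabeling of block indices, which we absorb by matching $\tilde\Ab$ to $\Ab$).

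Given $\tilde G = G$, so $\tilde\Ab = \Ab$, the remaining step is algebraic. For $i \ne j$ with $g_i = k$, $g_j = k'$ (possibly $k = k'$ but $i\ne j$), the off-diagonal identity forces $\tilde\Qb_{kk'} = \bSigma_{ij} = \Qb_{kk'}$; since every block of size $\ge 2$ contributes such an off-diagonal pair within itself, and every pair of distinct blocks contributes a cross pair, this determines all entries $\tilde\Qb_{kk'}$ with $(k,k')$ not both equal to a singleton index — and for singleton blocks $G_k = \{i\}$ feasibility gives $\tilde\Gamma_{ii} = \Gamma_{ii} = 0$, so $\tilde\Qb_{kk} = \bSigma_{ii} = \Qb_{kk}$ as well. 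Hence $\tilde\Qb = \Qb$. Finally $\tilde\bGamma = \bSigma - \Ab\tilde\Qb\Ab^\transpose = \bSigma - \Ab\Qb\Ab^\transpose = \bGamma$.

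The main obstacle is the first step — showing the ``$\bSigma$-agreement off coordinates $i,j$'' relation is genuinely an equivalence relation that matches the true blocks, and in particular handling the boundary cases involving the two special coordinates $i,j$ and the singleton blocks cleanly, since there the diagonal noise $\bGamma$ intrudes. Everything after $\tilde G = G$ is essentially bookkeeping with the off-diagonal identity $\bSigma_{i\ell} = \Qb_{g_i g_\ell}$.
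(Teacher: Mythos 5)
Your proposal is correct and follows essentially the same route as the paper: use the off-diagonal identity $\bSigma_{i\ell}=\Qb_{g_i g_\ell}$ together with $\Delta(\bSigma,G)>0$ and $\Delta(\bSigma,\tilde G)>0$ to show the relation ``rows agree off $\{i,j\}$'' coincides with ``same block'' for both partitions, hence $\tilde G=G$, and then read off $\tilde\Qb=\Qb$ from off-diagonal entries (using $\tilde\Gamma_{ii}=0$ for singletons) and $\tilde\bGamma=\bGamma$ by subtraction. One small remark: the extra casework you flag for singletons is not actually needed — if $i$ is a singleton in $G$ then $g(i)\neq g(j)$ for every $j\neq i$, so the ``$\Delta>0$'' direction of the argument already forces $\tilde g(i)\neq\tilde g(j)$, and the uniform two-way implication ($g(i)=g(j)\Leftrightarrow\tilde g(i)=\tilde g(j)$) handles all indices without distinguishing singletons from non-singletons.
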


\section{Estimation of the Latent Code Graphical Model}
\label{sec:est}

To estimate the precision matrix for the graphical model, we first obtain estimators for $\bSigma$ and then use the CLIME estimator of \cite{cai2011constrained} to estimate the precision matrix $\bOmega$. We propose a three step estimator for $\bSigma$ and $\Ab$. In step (I), we obtain an initial estimator for $\bSigma$, $\bSigmahat\supini$, based on the co-occurrence matrix. In step (II), we perform clustering of the codes based on $\bSigmahat\supini$ to obtain $\Abhat$ and the resulting $\Ghat=\{\Ghat_1, ..., \Ghat_K\}$. Finally in step (III), we update the estimate of $\bSigma$ as $\bSigmahat$ by leveraging the estimated group structure $\Ghat$. 

\paragraph{Step I: } If $\{\Vb_i, i = 1, ..., d\}$ were observed, $\bSigma$ can be estimated empirically using the empirical covariances of $\Vb_i$ and $\Vb_j$. However, since $\Vb_i$ is latent, we instead estimate $\bSigma$ directly from the co-occurrence matrix $\CC_{d\times d} = [C_{jj'}]=[\cC(j,j')]$ with $\cC(j,j')$ calculated as (\ref{eq:coocur}) across all patients. From $\CC$, we derive the shifted and truncated empirical PMI matrix as an estimator for $\bSigma$. Specifically, we define the PMI matrix estimator as $\PMIbbhat_{d \times d} = [\PMIhat(j,j')]$ with
\begin{equation}\label{eqcal:def_emprical_pmi}
\PMIhat(j,j') = \log \frac{\cC(j,j')}{\cC(j, \cdot)\cC(j', \cdot)}
\end{equation}
and the shifted positive PMI matrix estimator as 
\begin{equation}
\SPPMIbbhat= [\SPPMIhat(j,j')] \quad \mbox{with}\quad
  \SPPMIhat(j,j') = \max \left\{\widehat{\mbox{PMI}}(j,j'), \eta \right\} ,
\end{equation}
where $\eta > -\infty$ is a threshold used in practice to prevent the values of  $\SPPMIhat$ being minus infinity. In our theoretical analysis, we will show that $\widehat{\mbox{PMI}}(j,j')$ is lower bounded by some constant with high probability under appropriate assumptions, so $\max \big(\widehat{\mbox{PMI}}(j,j'), \eta \big) $ would be closer to truth than $\widehat{\mbox{PMI}}(j,j')$ with high probability if $\eta$ is chosen properly. Our initial estimator for $\bSigma$ is then set as $\bSigmahat\supini = \SPPMIbbhat$. 


\paragraph{Step II} With $\bSigmahat\supini$, we estimate the code cluster $G = \{G_1, \ldots, G_K\}$ as $\Ghat = \{\Ghat_1, \ldots, \Ghat_K\}$ based on Algorithm \ref{al:cluster} proposed by \cite{bunea2020model} with distance between two rows of $\bSigmahat\supini$ corresponding to codes $j$ and $j'$ defined as 
\[
d(j,j') =  \max_{c \neq j, j'}\Big|\SPPMIhat(j,c) - \SPPMIhat(j',c)\Big|.
\]
\begin{algorithm}[htb]
\caption{
The COD Algorithm \citep{bunea2020model}}\label{al:cluster}
\begin{algorithmic}
\STATE Input: $\SPPMIhat$ and $\alpha >0$
\STATE Let the candidate nodes $V = [d]$ and $\ell = 0$
\WHILE{$V \neq \emptyset$}
  \STATE $\ell = \ell +1$
  \STATE If $|V| = 1$, then $\hat G_{\ell}  = V$
  \STATE If $|V| > 1$, then
\begin{enumerate}
\item $(j_{\ell}, j'_{\ell}) = \argmin_{j\neq j'} d(j,j')$
\item If $d(j_{\ell}, j'_{\ell}) > \alpha$, then $\hat G_{\ell} = \{j_{\ell}\}$\\
Else $\hat G_{\ell} = \{c \in V ~|~ d(j_{\ell}, c) \wedge d(j'_{\ell}, c) \le \alpha\}$
\end{enumerate}
\STATE Update $V  = V \backslash \hat G_{\ell}$
\ENDWHILE
\STATE Output: the cluster estimator $\hat G = \{\hat G_1, \ldots, \hat G_K\}$
\end{algorithmic}
\end{algorithm}

\paragraph{Step III} In the final step, we refine the estimator for $\bSigma$ by averaging the entries of $\bSigmahat\supini$ belonging to the same cluster since $\bSigma_{ii'} = \Qb_{kk'}$ and $\bSigma_{ii} = \Qb_{kk} + \bGamma_{ii}$ for all $i \in G_k$ and $i' \in G_{k'}\setminus \{i\}$. Thus, for $p,k' = 1,\ldots, K$, we estimate $\Qb_{kk'}$ as
$$
\arraycolsep=1.4pt\def\arraystretch{1.3}
  \Qbhat_{kk'} = \left\{
  \begin{array}{cl}
  \{|\widehat{G}_k|\cdot|\widehat{G}_{k'}|\}^{-1}\sum_{w \in \widehat{G}_k, w' \in \widehat{G}_{k'} } \SPPMIhat(w,w'), & \quad \mbox{if }k \ne k'; \\
 \{|\widehat{G}_k|\cdot (|\widehat{G}_k|-1)\}^{-1}\sum_{w, w' \in\Ghat_k} I(w \ne w')\SPPMIhat(w,w'), & \quad \mbox{if }  k = k', |\Ghat_k| > 1; \\
 \SPPMIhat(k,k), &\quad \mbox{if } k = k', |\Ghat_k| = 1.
 \end{array} \right.
$$

Finally, we estimate the $k\supth$ column (corresponding to cluster $G_k$) of the precision matrix 
{$\Ob = \Qb^{-1}$} based on $\Qbhat = [\Qbhat_{kk'}]$ for $k = 1, \ldots, K$ via the CLIME estimator \cite{cai2011constrained}:
\begin{equation}\label{eq:CLIME-1}
\Obhat_k =  \argmin_{\bbeta\in \RR^{K}} 
\|\bbeta\|_1 \qquad \text{subject to}\qquad 
\|\Qbhat \bbeta - \eb_k\|_{\infty} \le \lambda,
\end{equation}
and $\Obhat = (\Obhat_1, \ldots, \Obhat_K)$.
The latent code graph can thus be estimated from the support of $\Obhat$. By the model assumption in Eq\eqref{eq:cluster_cov_decomp}, we can estimate $\bGamma$ and $\bOmega$ by
\begin{equation}\label{eq:omg-est}
\hat \bGamma = \diag(\bSigmahat\supini - \hat \Qb) \text{ and } \hat \bOmega = \hat \bGamma^{-1} - \hat \bGamma^{-2}(\Obhat + \hat \Ab^{\transpose}  \hat \bGamma^{-1}  \hat \Ab   )^{-1}.
\end{equation}


\section{Theoretical Properties}\label{sec:theory}

In this section, we establish the estimation rates of the PMI matrix and the precision matrix estimators. We will also establish the consistency of the clustering recovery. The high level summary of our theoretical analysis has three major components: (1) the statistical error between $\SPPMIbbhat$ and the true PMI matrix; (2) the approximation error between the true $\mbox{PMI}$ and the feature SEV inner products $\Vb \Vb\trans/p$; and (3) the statistical error of precision matrix estimator and the clustering recovery.  We then integrate the three parts together to get the final rates. We first detail the key assumptions required by our theoretical analyses.

\begin{assump}\label{assump:parameter_space}
The true precision matrix $\Ob$ belongs to the parameter space 
\begin{equation}\label{eq:UM}
\begin{aligned}
 \cU_s(M,\rho) &= \Big\{\Ob \in R^{K \times K} \,\big|\, \Ob \succ 1/\rho, \|\Ob\|_{2} \le \rho,  \max_{j \in [d]} \|\Ob_{j\cdot}\|_{0} \le s, \|\Ob \|_1 \le M \Big\} 
 \mbox{.}
\end{aligned}
\end{equation}
for  some constants $\rho$ and $M$. 
\end{assump}

\begin{assump}\label{assump:block_bound}
Assume that the latent model follows the block structure so that the true decomposition $\bSigma=\Ab \Qb \Ab^{\transpose} + \bGamma$ is legal, and $\max_i \bGamma_{ii}\leq c_0$ for some constant $c_0>0$. Moreover, and the distance between clusters satisfies that $\Delta(\Qb)=\Delta(\bSigma,G)=\epsilon >0$, with $\sqrt{\log d/p} = o(\epsilon)$.
\end{assump}


\begin{assump}\label{assump:kdT}
Assume that $\log d=o(\sqrt{p})$, $s\log d/p = o(1)$, and $p(\log^2 d) \cdot \max_k |G_k| = O(d^{1-\gamma})$ for some $\gamma\in (0,1/2)$. Also assume that the corpus size $T$ satisfies $T=\Omega(p^5 d^4\log^4 d)$.
\end{assump}

\begin{remark}
The matrix class $\cU_s(M,\rho)$ is frequently considered in the literature on inverse covariance matrix estimation \citep{cai2016}. 
Assumption \ref{assump:parameter_space} of $\Ob \in \cU_s(M,\rho)$  bounds the variances of the code SEVs from both sides, thus ensuring the norm of the SEVs to concentrate around $O(k)$. The bounds on norms of $\Ob$  also prevents the  SEVs in different clusters from being too correlated. Assumption \ref{assump:block_bound} ensures that different code clusters have sufficiently large distance.
\end{remark}

\begin{remark}
Assumption \ref{assump:kdT} requires that the code SEVs are relatively compact with dimension $p = O(\sqrt{d})$ and the number of codes in the same cluster to be controlled by $\sqrt{d}/p$. We also require the corpus size $T$ to be appropriately large (polynomial in $p,d$) so that code occurrences are able to reveal desired properties of the underlying feature SEVs, with diminishing random deviations.
\end{remark}

\subsection{Statistical Rate of PMI Estimators}

The following proposition provides the approximation error rate of $\SPPMIbbhat$ for the population PMI matrix $\PMIbb$.  
The result is a consequence of the concentration of empirical code occurrences to their expectations conditional on $\{\bc_t\}$ and $\Vbb$, while the latter further concentrate to true PMI due to the good mixing properties of $\{\bc_t\}$. Detailed proof of Proposition~\ref{prop:hat_PMI_to_PMI} is given in Appendix~\ref{sec:conv_hat_PMI}. 

\begin{proposition}
Suppose Assumptions~\ref{assump:parameter_space}, \ref{assump:block_bound} and \ref{assump:kdT} hold. For a fixed window size $q\geq 2$,  
\[
\|\SPPMIbbhat- \PMIbb\|_{\max}  \leq {5}/{\sqrt{p}}  + dq/T,
\]
with probability at least $1-1/d$ and appropriately large $p$.  


\label{prop:hat_PMI_to_PMI}
\end{proposition}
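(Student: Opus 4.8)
## Proof Proposal for Proposition \ref{prop:hat_PMI_to_PMI}

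The plan is to bound $\|\SPPMIbbhat-\PMIbb\|_{\max}$ by a chain of three reductions. Step~1: show the shift-and-truncate operation $\max\{\cdot,\eta\}$ is inactive with high probability, so that $\SPPMIbbhat=\PMIbbhat$ on a good event and it suffices to control $\|\PMIbbhat-\PMIbb\|_{\max}$. Step~2: for each pair $(j,j')$, linearize the logarithm of the empirical count ratio, reducing the problem to the relative fluctuations of the co-occurrence counts $\cC(j,j')$, $\cC(j,\cdot)$, $\cC(j',\cdot)$ around their expectations $\mathcal{N}(j,j')$, $\mathcal{N}(j,\cdot)$, $\mathcal{N}(j',\cdot)$. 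Step~3: bound those fluctuations by a two-layer concentration argument — first over the word draws $w_1,\dots,w_T$ given the discourse path $\{\bc_t\}$ and the embeddings $\Vbb$, then over the randomness of the discourse Markov chain itself via its mixing.

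For Step~1, I would take $\eta$ to be a sufficiently negative absolute constant. The magnitude estimates $\mathcal{N}\asymp qT$, $\mathcal{N}(w,\cdot)\asymp qT/d$ and $\mathcal{N}(w,w')\asymp qT/d^{2}$ (for $w\neq w'$) give $\PMIbb(w,w')=\Theta(1)$ uniformly; combined with the concentration from Steps~2--3 this forces $\PMIhat(w,w')\ge \eta$ for every $(w,w')$ with probability at least $1-1/(2d)$, on which event the truncation does nothing. The magnitude estimates themselves rest on two-sided control of $\langle\Vb_w,\bc\rangle$ and of the partition function $Z(\bc)=\sum_v e^{\langle\Vb_v,\bc\rangle}$, uniformly over $w\in[d]$ and over $t\le T$: Assumptions~\ref{assump:parameter_space} and~\ref{assump:block_bound} bound the SEV variances $\bSigma_{jj}$, so $\langle\Vb_w,\bc\rangle$ is sub-Gaussian of constant scale, and a covering argument over the unit sphere together with a union bound over $t$ (affordable because $T$ is only polynomial) yields the uniform bounds. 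This is where $T=\Omega(p^{5}d^{4}\log^{4}d)$ first enters.

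For Step~2, write $\PMIhat(j,j')-\PMIbb(j,j')=\log\frac{\cC(j,j')}{\mathcal{N}(j,j')}-\log\frac{\cC(j,\cdot)}{\mathcal{N}(j,\cdot)}-\log\frac{\cC(j',\cdot)}{\mathcal{N}(j',\cdot)}+\log\frac{\cC}{\mathcal{N}}$, where the last term is identically $0$ since the total pair count is deterministic, and apply $|\log(1+x)|\le 2|x|$ for $|x|\le\frac12$; it then suffices to bound each of $|\cC(j,j')-\mathcal{N}(j,j')|/\mathcal{N}(j,j')$ and $|\cC(j,\cdot)-\mathcal{N}(j,\cdot)|/\mathcal{N}(j,\cdot)$ by $O(1/\sqrt p)+O(dq/T)$, uniformly, w.h.p. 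For Step~3, condition on $(\{\bc_t\},\Vbb)$: $\cC(j,j')$ is a function of the independent variables $w_1,\dots,w_T$ in which each $w_t$ moves at most $O(q)$ summands, so a Bernstein / bounded-differences inequality gives $|\cC(j,j')-\mu(j,j')|=O(\sqrt{\mu(j,j')\log d}+q\log d)$ with $\mu(j,j')=\E[\cC(j,j')\mid\{\bc_t\},\Vbb]$, and similarly for the marginals; dividing by the lower bounds $\mu(j,j')\gtrsim qT/d^{2}$, $\mu(j,\cdot)\gtrsim qT/d$ from Step~1 makes the relative error $O(1/\sqrt p)$ under Assumption~\ref{assump:kdT}. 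It remains to compare $\mu(j,j')$, a sum over $t$ of a length-$q$ local functional of the chain $\{\bc_t\}$, to $\mathcal{N}(j,j')$, which I would do with a concentration inequality for additive functionals of a geometrically ergodic chain: the recursion \eqref{eq:hidden_Markov_def} is a contraction on the sphere with factor $\sqrt\alpha$, $1-\alpha=\log d/p^{2}$, hence relaxation time $O(p^{2}/\log d)$, while each summand has range $O(\mathrm{polylog}(d)/d^{2})$; the resulting deviation, divided by $\mathcal{N}(j,j')$, is $O\big(p\,\mathrm{polylog}(d)/\sqrt{T}\big)=O(1/\sqrt p)$ under the stated $T$. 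Collecting the pieces, union-bounding over $O(d^{2})$ pairs (already paid for by the $\log d$ factors) and over $t\le T$, and absorbing the $O(q^{2})$ boundary corrections relating $\cC(j,\cdot)$ to $(2q+1)|\{t:w_t=j\}|$ — which, divided by $\mathcal{N}(j,\cdot)\asymp qT/d$, produce precisely the $dq/T$ term — yields the claim with probability at least $1-1/d$, the constant $5$ accommodating the various $O(1)$ factors.

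The main obstacle is the Markov-chain concentration in Step~3: $\alpha$ is extremely close to $1$, so the discourse process mixes slowly (relaxation time polynomial in $p$), while the quantities being averaged are of order $d^{-2}$; the target accuracy $O(1/\sqrt p)$ forces a sharp mixing estimate for the spherical AR(1) chain together with tight bookkeeping of this scale, and it is exactly the tension between the large relaxation time, the horizon $T$, and the $d^{-2}$ magnitudes that dictates the high polynomial order of $T$ in Assumption~\ref{assump:kdT}. A secondary difficulty is making the bounds on $\langle\Vb_w,\bc\rangle$ and $Z(\bc)$ genuinely uniform over $w$ and over the random length-$T$ trajectory — which needs a covering/chaining argument rather than a naive union bound — and keeping the randomness of $\Vbb$ (integrated out in $\PMIbb$) cleanly separated from the word-generation and discourse randomness throughout the conditioning.
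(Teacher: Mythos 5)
Your proposal is correct and follows essentially the same high-level strategy as the paper: decompose $\PMIbbhat-\PMIbb$ via the intermediate conditional expectations given the discourse path and embeddings, control the word-draw fluctuations conditionally, then control the discourse randomness via mixing of the spherical AR(1) chain, and finally linearize the log to collect. The tool-level choices differ. For the conditional step you propose bounded-differences/Bernstein on the independent $w_t$; the paper instead splits $\cC(j,j')$ into $2u$ conditionally independent Bernoulli subsequences and applies Chernoff bounds (Lemma~\ref{lem:Chernoff_Ber}), which is tighter than McDiarmid here because the relevant variance is tiny compared with the worst-case bounded-difference scale. For uniformity over $\bc$ you suggest a covering/chaining argument over the sphere; the paper instead proves concentration of the partition function $Z(V,\bc)$ for fixed $\bc$ over the randomness of $\Vbb$ via a log-Sobolev/Gaussian concentration inequality (Lemma~\ref{lem:gaussian_concentration}) and then exchanges the order of the two sources of randomness with Lemma~\ref{lem:exchange_prob}; a naive cover would not obviously give the $1/\sqrt{p}$ multiplicative accuracy needed. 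For the mixing step you invoke a generic additive-functional inequality for geometrically ergodic chains; the paper uses the generalized Hoeffding bound of \cite{glynn2002hoeffding} (Lemma~\ref{lem:Hoeffding_glynn}) and verifies the required uniform minorization with explicit coupling constructions (Lemmas~\ref{lem:lambda_lower_bound}--\ref{lem:mixing_property_zt_zt+u}). One point in your favor: your bookkeeping of the $O(q^2)$ boundary corrections relating $\cC(j,\cdot)$ to a multiple of $|\{t:w_t=j\}|$ actually produces the $dq/T$ term appearing in the statement, which the paper's own proof never isolates (it simply ends at $5/\sqrt{p}$, rendering the second term vacuous).
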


Next we establish the rate of the approximation error between $\PMIbb$ and the population covariance matrix $\bSigma = \bOmega^{-1}$ in Proposition \ref{prop:stat_pmi_to_cov} with proof given in Appendix~\ref{sec:stat_PMI_to_cov_pf}.

\begin{proposition}\label{prop:pmi}
Under Assumptions~\ref{assump:parameter_space}, \ref{assump:block_bound} and \ref{assump:kdT}, with probability at least $1-3d^{-\tau}$, for sufficiently large $(k,d,T)$ and constant $C_0 = 5+(7\sqrt{2q}+48\sqrt{3(\tau+4)})\|\bSigma\|_{\max}$, we have
\begin{equation}
\|\PMIbb
- \bm{\Sigma}\|_{\max}\leq C_0\sqrt{\log d /p}.
\label{eq:stat_pmi_to_cov}
\end{equation}

\label{prop:stat_pmi_to_cov}
\end{proposition}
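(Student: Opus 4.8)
The plan is to bound $\|\PMIbb-\bSigma\|_{\max}$ by interpolating through the rescaled Gram matrix $p^{-1}\Vbb\Vbb^{\T}$ of the latent SEVs. For each pair $w,w'\in[d]$ I would write
\[
\PMIbb(w,w')-\bSigma_{ww'}=\Big(\PMIbb(w,w')-\tfrac1p\langle\Vb_w,\Vb_{w'}\rangle\Big)+\Big(\tfrac1p\langle\Vb_w,\Vb_{w'}\rangle-\bSigma_{ww'}\Big),
\]
and control the two terms separately, uniformly over $w,w'$. The first term is a quantitative ``RAND-WALK'' statement (how faithfully $\PMIbb$ reflects the SEV inner products) and the second is Gaussian concentration of the empirical Gram matrix of $\Vbb$ around its mean $\bSigma$. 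Recall that $\PMIbb$ here is a function of $\Vbb$ (its defining occurrences $\mathcal{N}(w,w')$ are conditional expectations given $\Vbb$), so all probabilities are over $\Vbb$ with columns i.i.d.\ $N(0,\bSigma)$.

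The second term is the routine piece. For fixed $w,w'$, $\langle\Vb_w,\Vb_{w'}\rangle=\sum_{\ell=1}^{p}\Vbb_{w\ell}\Vbb_{w'\ell}$ is a sum of $p$ i.i.d.\ products of jointly normal coordinates with mean $\bSigma_{ww'}$ and sub-exponential tails governed by $\bSigma_{ww},\bSigma_{w'w'}\le\|\bSigma\|_{\max}$ (an $O(1)$ quantity under Assumptions~\ref{assump:parameter_space}--\ref{assump:block_bound}, since $\bSigma_{ww}=\Qb_{g_wg_w}+\bGamma_{ww}$ with $\|\Qb\|_2\le\rho$ and $\bGamma_{ww}\le c_0$). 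A Bernstein inequality gives $|p^{-1}\langle\Vb_w,\Vb_{w'}\rangle-\bSigma_{ww'}|\lesssim\|\bSigma\|_{\max}\sqrt{(\tau+4)\log d/p}$ off an event of probability $\le d^{-(\tau+4)}$; a union bound over the $O(d^2)$ pairs (and the diagonals) then contributes $48\sqrt{3(\tau+4)}\,\|\bSigma\|_{\max}\sqrt{\log d/p}$ to $C_0$ while keeping the total failure probability $\le d^{-\tau}$, and $\log d=o(\sqrt p)$ from Assumption~\ref{assump:kdT} renders the sub-exponential correction negligible.

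For the first term I would adapt the argument of \cite{arora2016latent}, the essential difference being that the code vectors $\Vb_w$ are now \emph{dependent} across $w$. The steps: (i) show the partition function $Z_{\bc}:=\sum_{w\in[d]}e^{\langle\Vb_w,\bc\rangle}$ concentrates multiplicatively, $Z_{\bc}=Z\,(1+O(\sqrt{\log d/p}))$, uniformly over unit $\bc$ via an $\epsilon$-net on the sphere; in place of the independence used in \cite{arora2016latent} I would use the block decomposition $\bSigma=\Ab\Qb\Ab^{\T}+\bGamma$, conditioning on the cluster centers $\ZZ$ so that $\langle\Vb_w,\bc\rangle=\langle\ZZ_{g_w\cdot},\bc\rangle+\langle\Ebb_{w\cdot},\bc\rangle$ with $\{\Ebb_{w\cdot}\}$ independent, together with the eigenvalue bounds on $\Qb$ from $\Ob\in\cU_s(M,\rho)$ and the cap $\max_k|G_k|=O(d^{1-\gamma}/(p\log^2 d))$, which makes the residual $K$-term fluctuation $O(\sqrt{\max_k|G_k|/d})=o(\sqrt{\log d/p})$; (ii) use the slow drift of the discourse walk $\bz_{t+1}=\sqrt{\alpha}\,\bz_t+\sqrt{1-\alpha}\,r_{t+1}$ with $1-\alpha=\log d/p$ to show $|\langle\Vb_w,\bc_t-\bc_s\rangle|=O(\|\bSigma\|_{\max}^{1/2}\sqrt{q\log d/p})$ whenever $|t-s|\le q$, the source of the window factor $7\sqrt{2q}$; (iii) combine (i)--(ii), discarding the finite-$T$ and special-initial-step edge corrections in $\mathcal{N}(\cdot,\cdot)$ (negligible under $T=\Omega(p^5d^4\log^4 d)$), to obtain, up to $(1+O(\sqrt{\log d/p}))$ factors,
\[
\frac{\mathcal{N}\cdot\mathcal{N}(w,w')}{\mathcal{N}(w,\cdot)\,\mathcal{N}(w',\cdot)}=\frac{\E_{\bc}\,e^{\langle\Vb_w+\Vb_{w'},\,\bc\rangle}}{\E_{\bc}\,e^{\langle\Vb_w,\,\bc\rangle}\;\E_{\bc}\,e^{\langle\Vb_{w'},\,\bc\rangle}};
\]
(iv) evaluate these Gaussian integrals using that $\bc$ is nearly isotropic on the unit sphere, giving the right-hand side $=e^{\langle\Vb_w,\Vb_{w'}\rangle/p}\,(1+O(\sqrt{\log d/p}))$, hence $|\PMIbb(w,w')-p^{-1}\langle\Vb_w,\Vb_{w'}\rangle|=O(\sqrt{\log d/p})$. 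Adding the two terms and collecting constants yields $C_0=5+(7\sqrt{2q}+48\sqrt{3(\tau+4)})\|\bSigma\|_{\max}$, with the additive $5$ absorbing the lower-order $O(1/\sqrt p)\le O(\sqrt{\log d/p})$ corrections (normalization of $\bc$, diagonal fluctuations, etc.).

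The main obstacle is precisely steps (i)--(iv) of the first term: the original analysis leans on independence of the word vectors, so the partition-function concentration, the log-linear self-normalization, and the near-isotropy of $\bc$ all have to be re-derived using only spectral control of $\bSigma$ and the block structure; and because $\PMIbb$ is the logarithm of a ratio of four occurrence quantities, each available only up to a $(1+\delta)$ factor, one must carefully track how those multiplicative errors, the window-$q$ summation, and the sphere-versus-Gaussian discrepancy each feed an additive $O(\sqrt{\log d/p})$ term into the final bound in order to pin down the explicit constant. The Gaussian concentration piece is comparatively routine.
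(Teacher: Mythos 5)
Your high-level plan matches the paper's: write $\PMIbb-\bSigma=\bigl(\PMIbb-p^{-1}\Vbb\Vbb^{\T}\bigr)+\bigl(p^{-1}\Vbb\Vbb^{\T}-\bSigma\bigr)$, handle the Gram-matrix term by Bernstein plus a union bound over $O(d^2)$ pairs, and handle the first term through partition-function concentration plus a slow-drift estimate of the discourse walk; your constant bookkeeping ($48\sqrt{3(\tau+4)}$ from the Gram piece, $7\sqrt{2q}$ from the window-$q$ drift, the additive $5$ from $O(1/\sqrt p)$ residuals) lands where the paper's does. The gaps are in step (i). First, the $\epsilon$-net cannot close. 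To control the $\sqrt p$-Lipschitz map $\bc\mapsto\log Z(\Vbb,\bc)$ at resolution $\sqrt{\log d/p}$ you need $\epsilon\lesssim\sqrt{\log d}/p$, so the net on the unit sphere in $\RR^p$ has $\exp(\Theta(p\log p))$ points; yet the per-$\bc$ tail available for $|Z(\Vbb,\bc)/Z-1|>\sqrt{\log d/p}$ is only $\exp\!\bigl(-\Theta(d^{1-\gamma}/(p\max_k|G_k|))\bigr)=\exp(-\Theta(\log^2 d))$ under Assumption~\ref{assump:kdT} — and that same assumption forces $p\gg\log^2 d$, so $p\log p \gg \log^2 d$ and the union bound diverges. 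No sup over $\bc$ is needed: $\PMIbb$ is a ratio of $\bc$-expectations, so it suffices, for typical $\Vbb$, to control the $\bc$-measure of the bad set and bound the contribution of that set trivially by its measure times the $\le 1$ integrand. This is exactly what the paper does via Lemmas~\ref{lem:part_fct}, \ref{lem:pw_stationary}, and the swap-of-randomness Lemma~\ref{lem:exchange_prob}.

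Second, conditioning on $\ZZ$ does not dispose of the dependence; it merely relocates it. After conditioning, the conditional mean of $Z(\Vbb,\bc)$ is $\sum_k b_k\,e^{\langle\ZZ_{k\cdot},\bc\rangle}$ with $b_k\asymp|G_k|$ and the exponents jointly $N(0,\Qb)$ — a sum of \emph{correlated} lognormals. Asserting that this sum fluctuates by $O(\sqrt{\max_k|G_k|/d})$ tacitly treats the $K$ terms as independent; under Assumption~\ref{assump:parameter_space} alone they are not, and what actually justifies a bound of this type is a Gaussian Lipschitz (log-Sobolev) concentration inequality applied after whitening by $\Qb^{1/2}$, together with a truncation of the exponents (which makes the map Lipschitz and introduces an extra $d^{\gamma}$ factor that Assumption~\ref{assump:kdT} still absorbs). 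That is precisely the mechanism the paper uses, except it applies it once to the full $d$-dimensional vector $(\langle\Vb_w,\bc\rangle)_w$ whitened by $\bSigma^{1/2}$, with Lipschitz constant controlled through $\|\bSigma\|_2\le\rho\max_k|G_k|+c_0$ (Lemma~\ref{lem:part_fct}); the Remark after Proposition~\ref{prop:stat_pmi_to_cov} explicitly singles out this "whitening + log-Sobolev" step as the new ingredient needed to cope with the dependent code vectors, and it is exactly the step your proposal leaves implicit.
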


\begin{remark}
 \cite{arora2016latent} only established the consistency of the PMI matrix without showing the concrete statistical rate. On the contrary, we establish the exact statistical rates of the PMI estimator with respect to $d, k$ and $T$ which involves finer analysis on the hidden Markov process. Moreover, the analysis of  \cite{arora2016latent} assumes that the prior of the word vectors are independent thus they can apply concentration inequalities to the code vectors. However, in our latent graphical model, the word vectors $\Vbb$ are dependent and thus their proof can no longer be applied.  We introduced the log-Sobolev inequality and the whitening trick to show the the rate of PMI under the  the Gaussian graphical model prior.
 \end{remark}
 
\subsection{Clustering Recovery and Precision Matrix Estimation Consistency}

\def\Obhat{\hat \Ob}
We next summarize the results on the clustering recovery and the estimation consistency of {\red $\Obhat$}  for the precision matrix $\Ob = \Qb^{-1}$. First, we have the following theorem on the perfect recovery of  $\Ghat=\{\Ghat_1, ..., \Ghat_K\}$ for $G=\{G_1, ..., G_K\}$ with proof given in Appendix~\ref{sec:appendix_cluster_recovery}.

{
Recall the latent model $\Vb_{\ell \cdot} = \Ab \Zb_{\ell} + \Eb_{\ell}$ and the decomposition $\bSigma = \Ab \Qb \Ab\trans + \bGamma$. 
In order to differentiate between different groups in the LGBM, we define the distance between clusters as 
\[
  \Delta(\Qb) :=  \min_{i\neq j} \max_{k\neq i,j} \big| \bm{Q}_{ik}-\bm{Q}_{jk}\big|.
\] 
} 
Such quantity is also defined in the study of block matrix estimation \citep{bunea2020model,eisenach2020high,eisenach2019efficient}.

\begin{theorem}\label{thm:recover_cluster}
Under Assumptions~\ref{assump:parameter_space}, \ref{assump:block_bound} and \ref{assump:kdT}, with any  $\alpha \in (0, \epsilon/2]$ in Algorithm \ref{al:cluster}, if $\Delta(\Qb) \ge C_0 \sqrt{\log d/p}$ where $C_0$ is defined in Proposition \ref{prop:stat_pmi_to_cov}, $\widehat{G}=G$ with probability at least $1-\exp(-\omega(\log^2 d))-O(d^{-\tau})$ for some large constant $\tau>0$.
\end{theorem}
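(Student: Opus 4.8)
The plan is to condition on a high‑probability event on which $\bSigmahat\supini=\SPPMIbbhat$ is close to $\bSigma$ in max‑norm, and then analyze Algorithm~\ref{al:cluster} as a deterministic recursion on that event. \emph{Step 1 (entrywise control of $\bSigmahat\supini$).} By the triangle inequality $\|\SPPMIbbhat-\bSigma\|_{\max}\le\|\SPPMIbbhat-\PMIbb\|_{\max}+\|\PMIbb-\bSigma\|_{\max}$, so Propositions~\ref{prop:hat_PMI_to_PMI} and~\ref{prop:stat_pmi_to_cov} together with Assumption~\ref{assump:kdT} (which renders $5/\sqrt p$ and $dq/T$ lower order than $\sqrt{\log d/p}$) give
\[
\delta_n:=\|\bSigmahat\supini-\bSigma\|_{\max}\le C_0\sqrt{\log d/p}\,(1+o(1))
\]
on an event whose complement has probability at most $\exp(-\omega(\log^2 d))+O(d^{-\tau})$; obtaining the super‑polynomially small part of the failure probability uses the sharper corpus‑concentration tail bounds inside the proof of Proposition~\ref{prop:hat_PMI_to_PMI} rather than its simplified $1-1/d$ statement, while the $O(d^{-\tau})$ term is inherited from the log‑Sobolev/whitening concentration in Proposition~\ref{prop:stat_pmi_to_cov}.

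\emph{Step 2 (separation of the dissimilarities).} Write $d(j,j')=\max_{c\ne j,j'}|\SPPMIhat(j,c)-\SPPMIhat(j',c)|$ as in Algorithm~\ref{al:cluster} and let $d^\ast(j,j')=\max_{c\ne j,j'}|\bSigma_{jc}-\bSigma_{j'c}|$ be its population version; since every entry moves by at most $2\delta_n$, $|d(j,j')-d^\ast(j,j')|\le 2\delta_n$ uniformly in $j\ne j'$. Using $\bSigma_{jc}=\Qb_{g_jg_c}$ for $c\ne j$: if $g_j=g_{j'}$ then $d^\ast(j,j')=0$, hence $d(j,j')\le2\delta_n$; if $g_j\ne g_{j'}$, then because $\{g_c:c\in[d]\setminus\{j,j'\}\}\supseteq\{k:k\ne g_j,g_{j'}\}$ we get $d^\ast(j,j')\ge\max_{k\ne g_j,g_{j'}}|\Qb_{g_jk}-\Qb_{g_{j'}k}|\ge\Delta(\Qb)$, hence $d(j,j')\ge\Delta(\Qb)-2\delta_n$. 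Under Assumption~\ref{assump:block_bound} we have $\sqrt{\log d/p}=o(\epsilon)$, so $2\delta_n=o(\Delta(\Qb))$; thus for $\alpha\in[2\delta_n,\epsilon/2]$ (nonempty for large $d$) every within‑cluster pair has $d(j,j')\le\alpha$ while every cross‑cluster pair has $d(j,j')>\alpha$, using $\alpha\le\epsilon/2=\Delta(\Qb)/2<\Delta(\Qb)-2\delta_n$.

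\emph{Step 3 (the COD recursion).} I would argue by induction on the while‑loop iterations, with the invariant that the candidate set $V$ is always a union of complete true clusters (true at initialization since $V=[d]$). In an iteration with $|V|>1$: if some surviving cluster has size $\ge2$, then the minimizer $(j_\ell,j'_\ell)$ of $d(j,j')$ over pairs in $V$ must be a within‑cluster pair, since its dissimilarity $\le2\delta_n<\alpha$ is strictly below every cross‑cluster value $\ge\Delta(\Qb)-2\delta_n$; the algorithm then enters the ``Else'' branch and $\{c\in V:d(j_\ell,c)\wedge d(j'_\ell,c)\le\alpha\}$ equals exactly $G_{g_{j_\ell}}$, because in‑cluster nodes are within $2\delta_n\le\alpha$ of $j_\ell$ or $j'_\ell$ while out‑of‑cluster nodes are at distance $>\alpha$ from both. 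If instead every surviving cluster is a singleton, then the minimizing dissimilarity exceeds $\alpha$ and the algorithm outputs $\{j_\ell\}=G_{g_{j_\ell}}$. In either case a full true cluster is peeled off, the invariant is preserved, and after $K$ iterations $\widehat G=G$ as a set partition.

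The main obstacle is Step~1: upgrading Propositions~\ref{prop:hat_PMI_to_PMI} and~\ref{prop:stat_pmi_to_cov} to max‑norm control of $\bSigmahat\supini-\bSigma$ at rate $\sqrt{\log d/p}$ with the stated super‑polynomially small failure probability — this is where the hidden‑Markov mixing analysis and the Gaussian‑prior log‑Sobolev/whitening concentration do the real work, whereas Steps~2 and~3 are essentially deterministic book‑keeping on the recursion. The one delicate structural point in Steps~2--3 is the treatment of singleton clusters in the definition of $\Delta(\Qb)$, which is precisely why the feasibility/identifiability notion sets $\Gamma_{ii}=0$ for $|G_{g_i}|=1$. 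A caveat worth recording: the argument genuinely needs $\alpha\gtrsim\delta_n$ (not merely $\alpha>0$), so the operative admissible range is $\alpha\in[2\delta_n,\epsilon/2]$.
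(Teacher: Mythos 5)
Your proposal is correct and follows essentially the same route as the paper's own proof: control $\|\bSigmahat\supini-\bSigma\|_{\max}$ via Propositions~\ref{prop:hat_PMI_to_PMI} and~\ref{prop:stat_pmi_to_cov}, establish separation of within-cluster versus cross-cluster dissimilarities $d(j,j')$, and then induct on the COD iterations showing one true cluster is peeled off at each step. Your caveat at the end is the one place you are sharper than the paper: the theorem's phrasing ``any $\alpha\in(0,\epsilon/2]$'' is implicitly asymptotic in the fixed-$\alpha$ sense (the paper writes ``for sufficiently large $k,d,T$ such that $2\|\widehat{\mbox{SPPMI}}-\bm{\Sigma}\|_{\max}<\alpha/2$''), and your explicit lower bound $\alpha\gtrsim\delta_n$ makes the admissible range precise.
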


With the perfect recovery of clusters, we have the following corollary whose proof is given in Appendix \ref{sec:appendix_precision_after_recovery}.
\begin{corollary}\label{cor:hat_PMI}
Under assumptions of Theorem \ref{thm:recover_cluster}, with probability no less than $1 - \exp(-\omega(\log^2 d))-O(d^{-\tau})$ for some large constant $\tau>0$, we have
$$\|\Qbhat - \Qb \|_{\max} \leq O\Big(\sqrt{\frac{\log d}{p}}\Big).$$
\end{corollary}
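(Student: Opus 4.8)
The plan is to transfer the entrywise bound on $\SPPMIbbhat$ already established in Propositions~\ref{prop:hat_PMI_to_PMI} and \ref{prop:stat_pmi_to_cov} directly to $\Qbhat$, exploiting that under the assumptions of Theorem~\ref{thm:recover_cluster} the clustering step recovers the true partition. First I would condition on the event $\mathcal{E} = \{\Ghat = G\}$; Theorem~\ref{thm:recover_cluster} gives $\PP(\mathcal{E}) \ge 1 - \exp(-\omega(\log^2 d)) - O(d^{-\tau})$. On $\mathcal{E}$, every occurrence of $\Ghat_k$ in the definition of $\Qbhat$ may be replaced by $G_k$, so it suffices to analyze the averages of $\SPPMIhat$ taken over the true groups.

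Next I would record the population identities coming from the feasible decomposition $\bSigma = \Ab\Qb\Ab\trans + \bGamma$. Since $\bGamma$ is diagonal, for $w \in G_k$ and $w' \in G_{k'}$ with $w \ne w'$ we have $\bSigma_{ww'} = (\Ab\Qb\Ab\trans)_{ww'} = \Qb_{g_w g_{w'}} = \Qb_{kk'}$; and for a singleton cluster $G_k = \{k\}$, feasibility forces $\bGamma_{kk} = 0$, so $\bSigma_{kk} = \Qb_{kk}$. Hence in each of the three branches of the definition of $\Qbhat_{kk'}$, the estimator $\Qbhat_{kk'}$ is an average of entries $\SPPMIhat(w,w')$ over index pairs $(w,w')$ whose true covariance entry is exactly $\Qb_{kk'}$. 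Writing $\Qbhat_{kk'} - \Qb_{kk'}$ as the matching average of $\SPPMIhat(w,w') - \bSigma_{ww'}$ and bounding an average by its largest summand, I would obtain, uniformly over $k,k'$,
\[
\|\Qbhat - \Qb\|_{\max} \;\le\; \|\SPPMIbbhat - \bSigma\|_{\max}.
\]

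Then I would bound the right-hand side by the triangle inequality $\|\SPPMIbbhat - \bSigma\|_{\max} \le \|\SPPMIbbhat - \PMIbb\|_{\max} + \|\PMIbb - \bSigma\|_{\max}$, applying Proposition~\ref{prop:hat_PMI_to_PMI} to the first term (bound $5/\sqrt p + dq/T$) and Proposition~\ref{prop:stat_pmi_to_cov} to the second (bound $C_0\sqrt{\log d/p}$), and take a union bound over the two corresponding events together with $\mathcal{E}$. Since Assumption~\ref{assump:kdT} gives $T = \Omega(p^5 d^4 \log^4 d)$ with $q$ fixed, one has $dq/T = o(\sqrt{\log d/p})$ and $5/\sqrt p = O(\sqrt{\log d/p})$, so the bound collapses to $\|\Qbhat - \Qb\|_{\max} = O(\sqrt{\log d/p})$ with probability at least $1 - \exp(-\omega(\log^2 d)) - O(d^{-\tau})$, which is the claim.

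The argument is mostly bookkeeping; the one place that will need care is the diagonal and singleton cases. The within-cluster averages in the definition of $\Qbhat$ deliberately exclude the diagonal terms, precisely because $\bSigma_{ww} = \Qb_{kk} + \bGamma_{ww} \ne \Qb_{kk}$ whenever $|G_k| > 1$; and for a singleton $G_k$ the estimator does use $\SPPMIhat(k,k)$, which is legitimate only because feasibility sets $\bGamma_{kk} = 0$ there. Verifying that these conventions line up exactly with the population identities, so that no residual $\bGamma$ term leaks into the bound, is the main obstacle — everything else (conditioning on $\mathcal{E}$, bounding averages by maxima, and combining the two Propositions while absorbing the $dq/T$ term) is routine.
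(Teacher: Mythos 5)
Your argument is correct and follows the paper's own proof essentially step for step: condition on exact cluster recovery from Theorem~\ref{thm:recover_cluster}, use the block identities $\bSigma_{ww'}=\Qb_{g(w)g(w')}$ for $w\neq w'$ (and $\bGamma_{kk}=0$ for singletons) so that each $\Qbhat_{kk'}$ is an average of entries $\SPPMIhat(w,w')$ whose population value is exactly $\Qb_{kk'}$, bound the average by the maximum, and invoke Propositions~\ref{prop:hat_PMI_to_PMI} and \ref{prop:stat_pmi_to_cov} to control $\|\SPPMIbbhat-\bSigma\|_{\max}$. You are in fact slightly more explicit than the paper on absorbing the $dq/T$ term via Assumption~\ref{assump:kdT}, but this is bookkeeping, not a different route.
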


In the following, we present the main theorem on the convergence rate of our precision estimator with proof given in Appendix \ref{sec:appendix_precision_after_recovery} as well.

\begin{theorem}\label{thm:main}
Under the settings of Theorem \ref{thm:recover_cluster}, for sufficiently large $p,d,T$, with probability no less than $1 - \exp(-\omega(\log^2 d))-O(d^{-\tau})$ for some large constant $\tau>0$, we have
\[
  \|\widehat \Ob - \Ob\|_{\max} \leq \lambda \|\Ob\|_{1} + O\Big(\sqrt{\frac{\log d}{p}} \Big), 
\|\widehat{\Ob}-\Ob\|_{1} \leq s\lambda \|\Ob\|_{1} + O\Big(s\sqrt{\frac{\log d}{p}} \Big), \]
and $\text{supp}(\widehat \Ob) = \text{supp}(\Ob)$.
If we choose $\lambda = C\log d/p$ for some sufficient large constant $C$, we also have with probability no less than $1 - \exp(-\omega(\log^2 d))-O(d^{-\tau})$ for some large constant $\tau>0$,
$
\|\widehat{\bOmega}-\bOmega\|_{1} = O\big(s\sqrt{{\log d}/{p}} \big).
$
\end{theorem}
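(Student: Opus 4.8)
The plan is to run the standard CLIME error analysis of \cite{cai2011constrained} with $\Qbhat$ playing the role of the sample covariance, and then to transport the resulting bounds for $\Obhat$ through the Woodbury form of \eqref{eq:cluster_cov_decomp} to control $\widehat\bOmega$. All randomness is confined to two events of probability $1-\exp(-\omega(\log^2 d))-O(d^{-\tau})$: the perfect recovery $\{\widehat G=G\}$ (Theorem \ref{thm:recover_cluster}) and $\{\|\Qbhat-\Qb\|_{\max}\le C_1\sqrt{\log d/p}\}$ (Corollary \ref{cor:hat_PMI}); I would condition on both and argue deterministically afterwards. Write $\delta:=\|\Qbhat-\Qb\|_{\max}$.

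\textbf{CLIME bounds.} First I would check that the truth is feasible for \eqref{eq:CLIME-1}: since $\Qb\Ob=\I_K$,
\[
\|\Qbhat\,\Ob_k-\eb_k\|_\infty=\|(\Qbhat-\Qb)\Ob_k\|_\infty\le\delta\,\|\Ob\|_1\le\delta M,
\]
so any $\lambda\gtrsim\delta M\asymp\sqrt{\log d/p}$ keeps each $\Ob_k$ feasible, whence $\|\Obhat_k\|_1\le\|\Ob_k\|_1$ and $\|\Obhat\|_1\le\|\Ob\|_1$ by optimality. Next, the identity $\Obhat-\Ob=\Ob(\Qb-\Qbhat)\Obhat+\Ob(\Qbhat\Obhat-\I_K)$, combined with the CLIME constraint $\|\Qbhat\Obhat-\I_K\|_{\max}\le\lambda$, symmetry of $\Ob$, and $\|AB\|_{\max}\le\|A\|_{\max}\|B\|_1$, yields
\[
\|\Obhat-\Ob\|_{\max}\le\|\Ob\|_1\big(\delta\|\Obhat\|_1+\lambda\big)\le\lambda\|\Ob\|_1+\delta M^2=\lambda\|\Ob\|_1+O\big(\sqrt{\log d/p}\big).
\]
The $\ell_1$ bound is the usual cone inequality: on $S_k:=\supp(\Ob_k)$ with $|S_k|\le s$, feasibility of $\Ob_k$ and optimality of $\Obhat_k$ give $\|(\Obhat_k)_{S_k^c}\|_1\le\|(\Obhat_k-\Ob_k)_{S_k}\|_1\le s\|\Obhat_k-\Ob_k\|_{\max}$, so $\|\Obhat_k-\Ob_k\|_1\le 2s\|\Obhat_k-\Ob_k\|_{\max}$ and hence $\|\Obhat-\Ob\|_1\le 2s\|\Obhat-\Ob\|_{\max}$, which is the stated rate. (A symmetrization step as in \cite{cai2011constrained} preserves both bounds up to a constant factor.)

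\textbf{Support recovery and $\widehat\bOmega$.} The max-norm bound shows that entries of $\Obhat$ off $\supp(\Ob)$ are $O(\sqrt{\log d/p})$ while nonzero entries of $\Ob$ are recovered to that accuracy, so hard-thresholding $\Obhat$ at a level of this order gives $\supp(\widehat\Ob)=\supp(\Ob)$ provided $\min_{(i,j)\in\supp(\Ob)}|\Ob_{ij}|$ exceeds twice the threshold --- a minimum-signal hypothesis I would state explicitly. For $\widehat\bOmega$, on $\{\widehat G=G\}$ we may take $\Abhat=\Ab$, and Woodbury applied to $\bSigma=\Ab\Qb\Ab\trans+\bGamma$ writes $\bOmega=\bGamma^{-1}-\bGamma^{-1}\Ab(\Ob+\Ab\trans\bGamma^{-1}\Ab)^{-1}\Ab\trans\bGamma^{-1}$, which is exactly what \eqref{eq:omg-est} estimates with $\Obhat$ and the diagonal $\widehat\bGamma$ of \eqref{eq:omg-est} in place of $\Ob,\bGamma$. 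I would then expand $\widehat\bOmega-\bOmega$ as a telescoping sum of perturbations --- in $\widehat\bGamma^{-1}-\bGamma^{-1}$, in $\Ab\trans(\widehat\bGamma^{-1}-\bGamma^{-1})\Ab$ inside the central inverse, and in $\Obhat-\Ob$ --- and bound each in $\|\cdot\|_1$ using $\|\widehat\bGamma-\bGamma\|_{\max}=O(\sqrt{\log d/p})$ (Propositions \ref{prop:hat_PMI_to_PMI}, \ref{prop:stat_pmi_to_cov}, Corollary \ref{cor:hat_PMI}, together with $\max_i\bGamma_{ii}\le c_0$ and $\bGamma_{ii}$ bounded below on non-singletons), the eigenvalue bounds $\Ob\succ 1/\rho$, $\|\Ob\|_2\le\rho$ (so the central matrix and its perturbation have bounded spectrum), the inverse-perturbation inequality $\|A^{-1}-B^{-1}\|_1\le\|A^{-1}\|_1\|B^{-1}\|_1\|A-B\|_1$, and the lift inequality $\|\Ab Y\Ab\trans\|_1\le(\max_k|G_k|)\|Y\|_1$. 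Collecting terms, the dominant contribution is $O(s\sqrt{\log d/p})$ once the cluster-size and sparsity controls of Assumption \ref{assump:kdT} are used to keep the $K$- and $\max_k|G_k|$-dependent factors lower order, giving $\|\widehat\bOmega-\bOmega\|_1=O(s\sqrt{\log d/p})$.

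\textbf{Where the work is.} Granting Corollary \ref{cor:hat_PMI}, the CLIME step is essentially mechanical; the substantive difficulty is the last step. Unlike in ordinary CLIME, $\bOmega$ lives in the $d\times d$ ambient space while only a $K\times K$ estimate $\Obhat$ and a diagonal $\widehat\bGamma$ are available, so the error has to be transported through the Woodbury identity, and one must verify that the lift factor $\max_k|G_k|$ and the factor $K$ incurred when passing from $\|\cdot\|_{\max}$ to $\|\cdot\|_1$ on the $K\times K$ blocks stay of lower order than $s\sqrt{\log d/p}$ --- which is precisely the role of $p(\log^2 d)\max_k|G_k|=O(d^{1-\gamma})$ and $s\log d/p=o(1)$ in Assumption \ref{assump:kdT}. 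A secondary point is that the literal equality $\supp(\widehat\Ob)=\supp(\Ob)$ requires a minimum-signal condition and an explicit thresholding of the CLIME output.
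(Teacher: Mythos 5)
Your CLIME argument --- checking feasibility of the truth, the identity $\Obhat-\Ob=\Ob(\Qb-\Qbhat)\Obhat+\Ob(\Qbhat\Obhat-\I_K)$, the submultiplicative bound $\|AB\|_{\max}\le\|A\|_1\|B\|_{\max}$, and then passing to the $\ell_1$ rate --- is essentially the paper's proof of the first two displays. Your cone-inequality derivation of $\|\Obhat-\Ob\|_1\le 2s\|\Obhat-\Ob\|_{\max}$ is actually a more careful version of what the paper does: the paper asserts $\|\Obhat-\Ob\|_1\le s\|\Obhat-\Ob\|_{\max}$ "by sparsity," which implicitly requires $\Obhat-\Ob$ to have at most $s$ nonzeros per column and is not true for the raw CLIME output without the cone step. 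Your scaling $\lambda\gtrsim\sqrt{\log d/p}$ is also what feasibility of the truth actually requires; the theorem's stated choice $\lambda=C\log d/p$ appears to be a typo (the simulations use $\sqrt{\log d/k}$).

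The more substantive point is that the paper's proof stops after the $\ell_1$ bound for $\Obhat$: it never establishes $\supp(\Obhat)=\supp(\Ob)$, and it never derives $\|\widehat\bOmega-\bOmega\|_1=O(s\sqrt{\log d/p})$ from the $K\times K$ estimate through Eq.~\eqref{eq:omg-est}, even though both are asserted in the theorem statement. You are right to flag both. Support recovery from a max-norm rate needs a minimum-signal condition on $\min_{(i,j)\in\supp(\Ob)}|\Ob_{ij}|$ plus an explicit thresholding step, neither of which appears in the paper. Your Woodbury-perturbation outline for transporting the $\Obhat$ error to $\widehat\bOmega$ is the right way to fill the second gap; to close it rigorously one also needs $\bGamma_{ii}$ bounded away from zero on non-singleton clusters so that $\|\widehat\bGamma^{-1}\|$ is controlled (the paper's Assumption~\ref{assump:block_bound} only gives the upper bound $\max_i\bGamma_{ii}\le c_0$), in addition to the $\max_k|G_k|$ and $K$-to-$\ell_1$ bookkeeping you already identify against Assumption~\ref{assump:kdT}.
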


\def\bOmegahat{\widehat{\bOmega}}
\section{Numerical Experiments for Synthetic and Real Datasets}
\label{sec:numerical}
In this subsection, we conduct simulation studies to evaluate the performance of our proposed algorithm and compare it with the GloVe method \citep{pennington2014glove}. We consider the settings  $(p,d)=(50,25)$, $(100,50), (500,1000)$ and $(1000,2000)$. 
We generate the precision matrix  $\Ob=\Qb^{-1}$ via the following two types of graphs. 
\begin{itemize}
\item \noindent{\it Independent Graph -- }
The basic model is where all nodes are independent with same variance. We set $\Ob = c\Ib_K$ for some $c$.
\item \noindent{\it Erdős–Rényi Graph -- }
This model generates a graph with the adjacency matrix $A_{ij}\stackrel{\text{i.i.d.}}{\sim} \mbox{Bern}(p)$, $i<j$ for some $p\in(0,1)$, i.e., all edges are independently added with probability $p$. In an Erdős–Rényi Graph with $K$ nodes, the expected amount of total edges is $K(K-1)p/2$. To satisfy the sparsity conditions, we typically choose small $p$, specified later on. After generating the adjacency matrix, we let $\Ob= c\Ab + (|\lambda_{\min}(c\Ab)|+c_1)\Ib $ for some $(c,c_1)$.
\end{itemize}
Under these two types of Graphs, we consider a total of six set of hyper parameters to generate $\Ob$ as detailed in Table \ref{fig:graph}.
\begin{table}[h]
\centering
\begin{tabular}{l |l }
\toprule
Name & Setting  \\
\hline
G1 & Independent Graph with $c=0.5$. \\
\hline
G2 & Independent Graph with $c=2$. \\
\hline
G3 & Erdős–Rényi Graph with $p=0.2$, $(c,c_1)=(0.3,0.2)$.\\
\hline
G4 & Erdős–Rényi Graph with $p=0.2$, $(c,c_1)=(0.5,0.3)$.\\
\hline
G5 & Erdős–Rényi Graph with $p=0.05$, $(c,c_1)=(0.3,0.2)$.\\
\hline
G6 & Erdős–Rényi Graph with $p=0.05$, $(c,c_1)=(0.5,0.3)$.\\
\bottomrule
\end{tabular}
\caption{The scenarios of the graphs generating the precision matrix.  \label{fig:graph}}
\end{table}
Finally, we construct $\Ab$ by evenly assigning the group with $G_k = \{(k-1)m+1,\dots,km\}$ for $k=1,\dots,K$ and generate the diagonal entries of $\bGamma$  from $\mbox{Unif}[0.25,0.5]$ to form the final $\bSigma$, where $m = d/K$ and we choose $K = 10, 25, 50$. 
After generating the underlying graph, we generate $\Zb$ and code vectors $\Vb$, and simulate $T$ corpus of the word sequence with discourse process  $\{\bc_t\}$ specified in \eqref{eq:hidden_Markov_def}. PMI matrix is then calculated, with window size $q=10$. The threshold $\alpha$ in Algorithm \ref{al:cluster} for estimating $\Ghat$ is set as $\alpha=c\cdot\sqrt{\log d/k}$, where $c$ is tuned over grid $\{0.1,0.2,\dots,2\}$ to have the most stable cluster assignment (quantified by Rand index specified later). { For $\Obhat$ estimated via CLIME-type estimator as in Eq. \eqref{eq:CLIME-1}, we suppose the clusters are recovered perfectly and use the true partition $G$ to calculate $\widetilde{\mbox{SPPMI}}$.} 
 We set the tuning parameter in \eqref{eq:CLIME-1} as {\ $\lambda=c\cdot \sqrt{\log d/k}$, where $c$ is chosen over grid $\{0.1,0.2,\dots,2\}$ to be most stable, with the smallest entry-wise change from the previous one.} 


\def\perr{\%\mbox{Err}}
\def\RI{\mbox{RI}}
Evaluations on the estimation of the knowledge network focused on three components: (1) accuracy in cluster recovery with cluster partition $\Ghat$; (2) average error in precision matrix estimation; and (3) support recover accuracy. We evaluate the performance of cluster recovery by Rand index averaged over all the repetitions. Specifically, for true partition $G$ and estimator $\hat G$ for nodes $\{1,\dots,d\}$, let $g(j)$ be the cluster of node $j$ in $G$ and $\hat{g}(j)$ in $\hat{G}$, then Rand index is calculated by 
\begin{equation*}
\RI = \frac{\big| \{i\neq j:g(i)=g(j),\hat{g}(i)=\hat{g}(j)\}\big| +\big|\{i\neq j:g(i)\neq g(j),\hat{g}(i)\neq \hat{g}(j)\}\big| }{d(d-1)/2}.
\end{equation*}
We evaluate the performance of our precision matrix estimator via the average relative error,
$\perr = \textrm{average}(\|\hat \Ob - \Ob\|/\|\Ob\|)$, where we let the matrix norm $\|\cdot\|$ be either $\|\cdot\|_{\max}$ or $\|\cdot\|_{F}$. We evaluate the support recovery of the true graph via F-score, which is defined based on the true positives (TP), false positives (FP) and false  negatives (FN) as
\[
\rm Precision = 
\frac{TP}{
TP + FP}
, Recall = \frac{TP}{
TP + FN}
, \text{F-score} = \frac{2 \cdot Precision \cdot Recall}{
Precision + Recall} ,
\]

Table \ref{fig:simu_cluster_recovery} summarizes the clustering accuracy of the estimator. The clustering accuracy is generally high with $\RI$ above 90\% across all settings. With a fixed $d$, the accuracy tends to be slightly higher with larger $K$ which corresponds to smaller number of codes per code concept group. We do not observe a big difference between different configurations for $\bO$. 
 Tables \ref{fig:synth1} and \ref{fig:synth2} give the $\perr$ for the estimation of $\Ob$ with $\Obhat$ and { the F-score of the support recovery for the graphical model estimator without and with clustering.} 
 In summary, the proposed procedure can identify most signals in the graph. We can also see that utilizing the block structure of the precision matrix helps the estimation of the precision matrix as long as inferring the graph structure.

\begin{table}[h]
\centering
{\footnotesize %
\renewcommand\arraystretch{1.2}
\begin{tabular}{c|c |c |c c c c c c c c c c}
\toprule
$d$ & $p$ & $K$ & G1 & G2 & G3 & G4 & G5 & G6 \\
\hline
\multirow{2}{*}{$25$} & \multirow{2}{*}{$50$} 
  & 25 & 100\%& 99.89\% &  100\%  &100\%  & 100\% & 100\%\\
 && 10 &  95.56\%& 96.10\% & 95.00\% &95.32\%  & 95.08\% & 95.18\%\\
\hline
\multirow{3}{*}{$50$} & \multirow{3}{*}{$100$} 
  & 50 &  100\%& 100\% & 100\% &100\%  & 100\% & 100\%\\
&& 25 &  97.95\%& 98.32\%  & 98.02\% & 98.26\%  & 98.02\% & 98.17\%\\
&& 10 & 92.24\%& 94.32\% & 91.93\% & 92.13\% & 91.88\%  & 91.90\%\\ 
\hline
\multirow{2}{*}{$500$} & \multirow{2}{*}{$1000$} 
& 50 &   98.52 \%& 97.45 \%    & 98.01 \% &  97.16\%  & 97.42 \% & 98.14 \%\\
&& 25 &  96.19  \%& 97.63 \%  & 98.05 \% & 96.25 \%  & 96.26 \% & 95.69 \%\\
&& 10 & 90.91 \%& 92.17 \%  & 93.62 \% & 92.42 \% & 92.17 \%  & 94.52  \%\\ 
\hline
\multirow{2}{*}{$1000$} & \multirow{2}{*}{$2000$} 
& 50 &   98.10 \%& 97.72 \%  & 98.62 \% &  98.10\%  & 98.44 \% & 98.10 \%\\
&& 25 &   96.10 \%& 95.42 \% & 97.41 \% &  96.10\%  & 98.21 \% & 98.16 \%\\

&& 10 &  90.18  \%&  91.53\%   &  91.56\% & 92.74 \%  & 91.39 \% & 92.83  \%\\
\bottomrule
\end{tabular}
}%
\caption{Averaged Rand index for cluster recovery. 
 \label{fig:simu_cluster_recovery}}
\end{table}

\begin{table}[htbp]
\centering
{\footnotesize %
\renewcommand\arraystretch{1.1}
\begin{tabular}{c c |c c c c c c c}
\toprule

& & $K$ & G1 & G2 &G3 & G4 & G5 & G6 \\
\cline{3-9}
\multirow{9}{*}{\begin{sideways}{\it $d=500$, $k=1000$}\end{sideways}} & \multirow{3}{*}{$\|\cdot\|_{\max}$}
  & 50 & 5.32\% & 16.09\% & 34.55\% & 31.99\%  & 34.31\% & 35.02\%\\
  && 25 & 3.34\%& 13.01\%  & 31.99\% &29.98\%  & 33.27\% & 31.23\%\\
& & 10 &  5.68\% & 7.12\%  & 25.88\% & 26.67 \%  & 35.24\% & 20.86\%\\
\cline{2-9}
 & \multirow{3}{*}{ $\|\cdot\|_F$ }
  & 50 & 10.66\% & 12.27\% & 49.08\% & 42.73\% & 37.76\% & 39.81\% \\
 & & 25 &  10.93\%& 10.54\% & 41.36\% & 39.42\% & 33.81\% & 32.51\%  \\
& & 10 &  9.38\% & 8.84\% & 31.14\% &  34.08\%  & 27.89 \% & 22.91\%\\
\cline{2-9}
& \multirow{3}{*}{F-score}
& 50 & 73.77 \% & 75.84 \% &  66.83\% & 66.85 \% & 68.89 \% & 69.17 \%  \\
 & & 25 &  73.69 \%& 70.82\% & 66.62\% & 61.39\% & 63.48\% & 63.45\%  \\
& & 10 &  75.68\% & 79.84\% & 70.16\%  &  69.76\%  & 67.42 \% & 62.53\% \\
\hline \hline
& & $K$ & G1 & G2 &G3 & G4 & G5 & G6 \\
\cline{3-9}
\multirow{9}{*}{\begin{sideways}{\it $d=1000$, $k=2000$}\end{sideways}} & \multirow{3}{*}{$\|\cdot\|_{\max}$}
  & 50 & 15.20 \%& 24.66\% & 34.12\% & 31.27\% & 25.22\% & 28.02\%  \\
& & 25 & 6.46 \%& 8.45\% & 22.37\%& 21.60\% & 26.63\% & 26.60\%  \\
& & 10 & 8.42 \% & 8.23\% & 23.58\%  & 22.07\%& 21.35\% & 18.95\% \\
\cmidrule{2-9}
 &\multirow{3}{*}{ $\|\cdot\|_F$ }
  & 50 &  15.00 \%& 23.69\% & 39.57\%& 38.40\% & 28.22\% & 28.32\%  \\
& & 25 &  3.20 \%& 9.26\% & 31.55\%& 30.77\% & 22.56\% & 23.41\%  \\
& & 10 &  5.52 \%& 8.78\% & 20.99\% & 28.74\% & 16.39\% & 17.72\%\\
\cmidrule{2-9}
 &\multirow{3}{*}{F-score}
  & 50 & 71.21 \%& 70.56\% & 66.99\% & 68.09\% & 70.40\% & 62.44\% \\
& & 25 & 76.46 \%& 68.46\% & 64.36\% & 64.65\% & 68.08\% & 62.78\%  \\
& & 10 & 74.43 \%& 71.23\% &71.07\% & 64.20\% & 64.07\% & 57.66\% \\
\bottomrule
\end{tabular}
}%
\caption{Averaged empirical relative error of precision matrix $\Ob$ without using the clustering method in Algorithm \ref{al:cluster}.  \label{fig:synth1}}
\end{table}

\begin{table}[htbp]
\centering
{\footnotesize %
\renewcommand\arraystretch{1.1}
\begin{tabular}{c c |c c c c c c c}
\toprule
& & $K$ & G1 & G2 &G3 & G4 & G5 & G6  \\
\cline{3-9}
\multirow{9}{*}{\begin{sideways}{\it $d=500$, $k=1000$}\end{sideways}} & \multirow{3}{*}{$\|\cdot\|_{\max}$}
& 50  &   3.47\%& 9.76 \%  & 31.83 \% &  28.61\%  & 23.33 \% & 27.73 \%\\
&  &  25 &  2.89  \%&  7.35 \%&  19.16 \% &  21.00 \%  &  28.30 \% &  25.92  \%\\
& &  10 & 1.29  \% & 4.36 \% &  21.64 \%   \%  &  22.28 \% &  15.81 \%\\
\cline{2-9}
 & \multirow{3}{*}{ $\|\cdot\|_F$ }
 & 50  &  8.62 \%& 8.14 \% & 41.35 \% & 40.61 \%   & 28.58 \% & 25.27 \%\\
 & &  25 &  7.22  \%&  6.10 \%&  23.32 \% &  27.83 \%   &  18.55 \% &  18.52   \%\\
& &  10 &  6.83 \% & 5.62 \% & 24.14  \% & 16.88 \%  & 12.70  \% & 10.29 \%\\
\cline{2-9}
& \multirow{3}{*}{F-score}
& 50  &  73.47 \%&  76.48\%  & 72.89 \% & 71.17 \%  & 67.29 \% &  71.52\%\\
 & &  25 &  72.90 \%&  73.10 \% & 66.21  \% & 69.97 \%  &  66.73 \% &  67.55 \%\\
& &  10 &  71.29  \% & 72.11 \% & 69.36  \%  & 74.12\%  & 74.42  \% & 68.35 \%\\
\hline \hline
& & $K$ & G1 & G2 &G3 & G4 & G5 & G6 \\
\cline{3-9}
\multirow{9}{*}{\begin{sideways}{\it $d=1000$, $k=2000$}\end{sideways}} & \multirow{3}{*}{$\|\cdot\|_{\max}$}
& 50  &  12.15 \%& 21.56 \%  & 34.89 \% & 35.54 \%  & 26.20 \% & 27.12 \%\\
 & &  25 &  2.16 \%& 6.28 \%  & 24.91 \% & 27.61\%  & 28.29 \% & 25.01 \%\\
& &  10 &  1.82 \% & 5.74 \% & 20.11 \% & 14.79 \%  & 19.29 \% & 14.06 \%\\
\cline{2-9}
 & \multirow{3}{*}{ $\|\cdot\|_F$ }
 & 50  & 11.70  \%&  20.42\%  &  61.14\% &  57.99\%  & 40.15 \% & 48.64 \%\\
 & &  25 &  3.02 \%&  6.76\% & 38.72 \% & 46.03\%  &  24.07\% & 29.40 \%\\
& &  10 & 3.66  \% &  5.81\%  &  18.54\% &  19.26\%  & 11.10  \% & 13.66 \%\\
\cline{2-9}
& \multirow{3}{*}{F-score}
& 50  &  73.66 \%& 75.69 \% & 73.61 \% & 76.79 \%  &  75.96\% &  64.21 \%\\
 & &  25 & 77.16  \%&  71.45\% & 70.36 \% & 72.22 \%  & 74.25 \% & 71.81 \%\\
& &  10 &  76.82 \% & 73.75 \%  & 73.87 \% & 72.95 \%  & 71.42 \% & 65.03 \%\\
\bottomrule
\end{tabular}
}%
\caption{Averaged empirical relative error of precision matrix $\Ob$ using the clustering method in Algorithm \ref{al:cluster}.  \label{fig:synth2}}
\end{table}

\subsection{Applications to Electronic Health Record Data}
\label{sec:app}
In this section, we apply the proposed LGBM inference procedure to derive a knowledge network using codified EHR data of 2.5 million patients from a large tertiary hospital system. We analyzed four categories of codes including ICD, medication prescription, laboratory tests, and CPT procedures. We mapped ICD codes to PheCodes using the ICD-to-PheCode mapping from PheWAS catalog (https://phewascatalog.org/phecodes). The CPT procedure codes are mapped into medical procedure categories according to the clinical classifications software (CCS) (https://www.hcup-us.ahrq.gov/toolssoftware/ccs\_svcsproc/ccssvcproc.jsp). The medication codes are mapped to the ingredient level RxNorm codes, which is part of the  Unified Medical Language System (UMLS) \citep{bodenreider2004unified}. The laboratory codes are mapped to LOINC codes of the UMLS. We included a total of $d= 5507$ mapped codes that have at least 1000 occurrences and calculated the co-occurrence of these codes within 30 day window across all patients. We then applied our proposed procedures to obtain estimates of group structure $G$ and the precision matrix $\Ob$. 


We grouped the 5507 codes into $K= 8$ code clusters. 
Because the network is too large to illustrate, we focus on two specific codes of interest: rheumatoid arthritis and type-II diabetes. The code clouds of the selected neighbors of rheumatoid arthritis and depression are illustrated in Figure \ref{fig:wordcloud}. We also only focus on the clustering of {the lab codes LOINC}. We choose the tuning parameter $\alpha$ in Algorithm \ref{al:cluster} in the range from each code consists of its own cluster to all codes merge to one cluster.  From Figure \ref{fig:cluster}, we visualize the clustering result via the clustering tree and we can observe that similar codes are easier to be merged together.


\begin{figure}[htbp]%
    \centering
\begin{tabular}{cc}
Rheumatoid arthritis & Type 2 diabetes\\[-20pt]
\includegraphics[width=8cm]{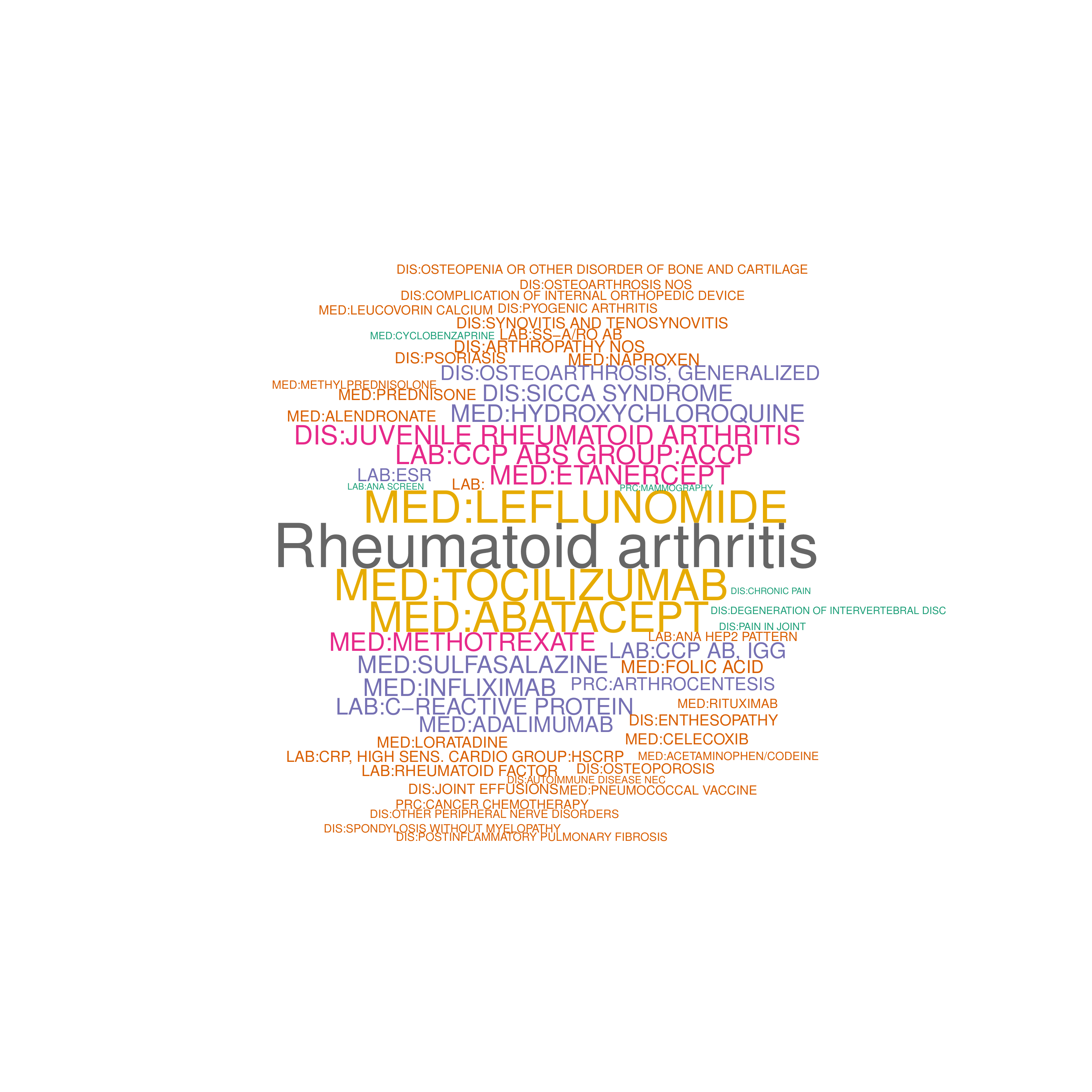} & \includegraphics[width=8cm]{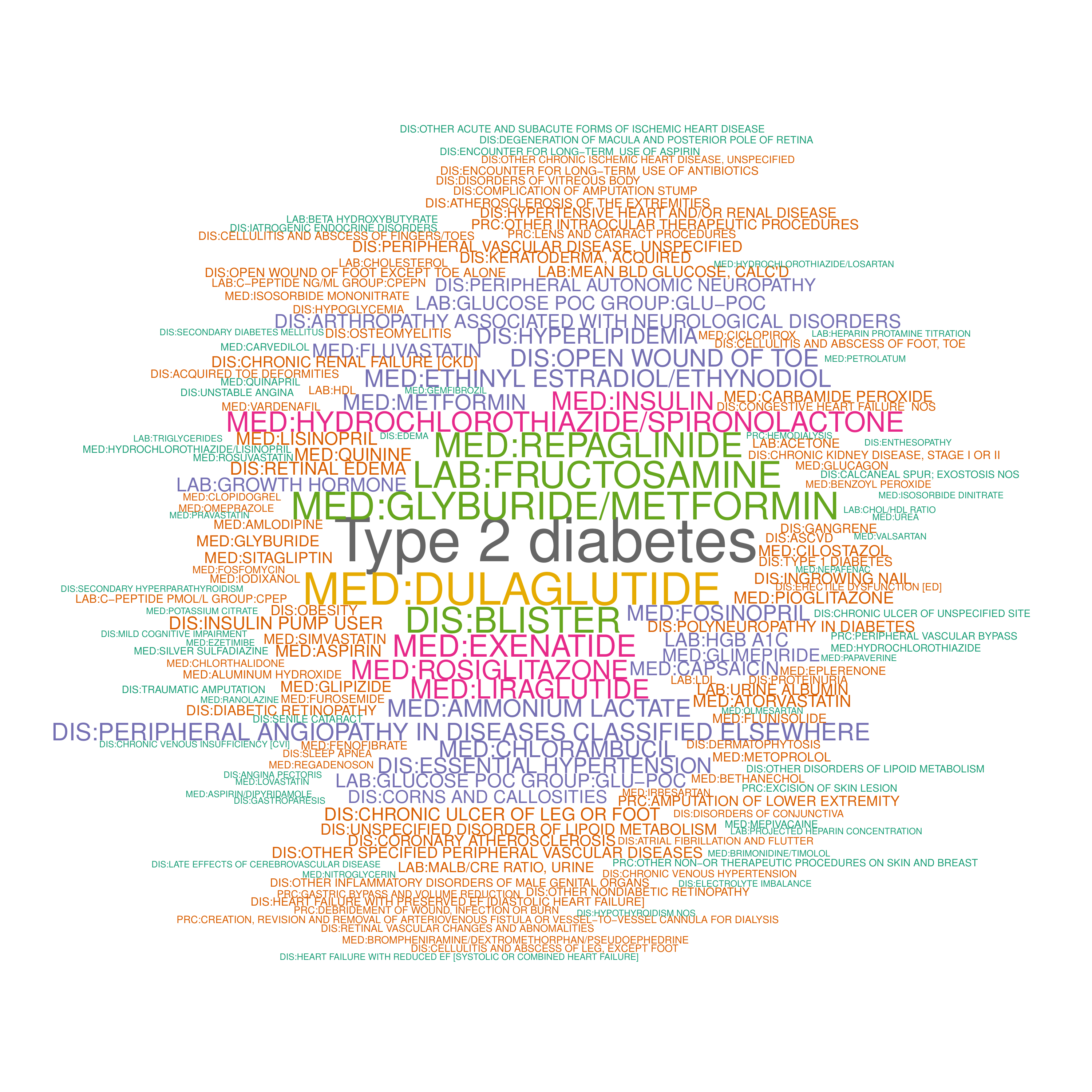}
\end{tabular}
\vskip-30pt
    \caption{Word cloud of selected features of rheumatoid arthritis and type 2 diabetes. Different colors represent different types of codes.}%
    \label{fig:wordcloud}%
\end{figure}

\begin{figure}[htbp]%
    \centering
\begin{tabular}{cc}
Rheumatoid arthritis & Type 2 diabetes\\
\includegraphics[width=7cm]{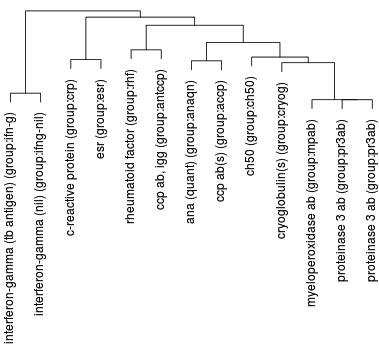} & \includegraphics[width=7cm]{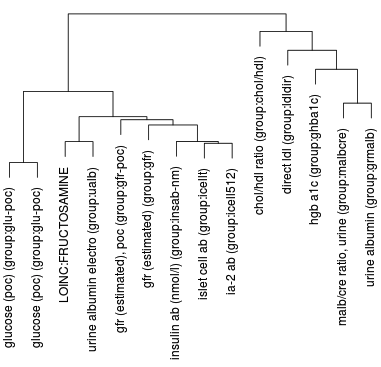}
\end{tabular}
    \caption{Clustering of laboratory codes for rheumatoid arthritis and type 2 diabetes}%
    \label{fig:cluster}%
\end{figure}

\bibliographystyle{ims}
\bibliography{ref}

\appendix


\section*{Appendix}

\section{Proofs on the Model Properties}

In this section, we prove the identifiability of the vector-valued graphical model.

\subsection{Vector-Valued Graphical Model on $\Vbb$} \label{sub:proof_v}

\begin{lemma} \label{lem:distribution_v}
Let $\Vb_j \in \RR^p$ be the code vector variable for code $j$ ($j \in [d]$) and $\Vb_{-i}$ be the set of all code vectors expect $\Vb_{i}$, i.e., $\Vb_{-i} = \{\Vb_j: j \in [d]\backslash \{i\}\}$. And let $[\Vb_j]_{\ell}$ be the $\ell$-th component of $\Vb_j$ ($\ell \in [k], j \in [d]$). 
$[\Vb]_\ell := \big( [\Vb_{1}]_\ell, [\Vb_{2}]_\ell, \cdots, [\Vb_{d}]_\ell \big)^{\transpose}\in \RR^{d}$ concatenates the $\ell$-th components of all code vectors. And $[\Vb]_{\ell}$ can be stacked in a column vector as $\Vb :=\big( [\Vb]_{1}^{\transpose}, [\Vb]_{2}^{\transpose}, \cdots, [\Vb]_{p}^{\transpose} \big)^{\transpose} \in \RR^{d\times p}$.

There exists a multivariate Gaussian distribution $N(0,\Sigma_{\Vb})$ for $\Vb$ such that 
$$    \Vb_{i} |\Vbb_{-i\cdot} = v_{-i} \sim N\left(\sum_{j \in [d]\backslash \{ i\} } B_{ij}v_j,\sigma^2 \Ib_p\right) \mbox{   for all $i \in [d]$},$$
where $\sigma^2 > 0$ is a constant.

And one such $N(0,\Sigma_{\Vb})$ is
$$ \Vb \sim N(0, \Sigma_{\Vb}) = N(0, \bOmega^{-1}) = N(0, (\Ib_p \otimes (\Ib_d - \Bb)/\sigma^2)^{-1}),$$
where $\Bb$ is a symmetric hollow matrix and $\Ib_d - \Bb$ is positive definite.

\end{lemma}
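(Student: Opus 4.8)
The plan is to \emph{exhibit} the candidate joint law explicitly and verify by direct computation that it induces the prescribed conditionals; since we will have produced an actual joint distribution, the existence claim and the consistency of the conditional specification come for free. First I would observe that because $\Ib_d-\Bb\succ 0$ by hypothesis, $\bSigma:=\sigma^2(\Ib_d-\Bb)^{-1}$ is symmetric positive definite, and hence so is $\Ib_p\otimes\bSigma$ (a Kronecker product of positive definite matrices is positive definite). Therefore $N(0,\Ib_p\otimes\bSigma)$ is a well-defined nondegenerate Gaussian law on $\RR^{dp}$ for $\Vbvec$, with precision matrix $(\Ib_p\otimes\bSigma)^{-1}=\Ib_p\otimes\bSigma^{-1}=\Ib_p\otimes\sigma^{-2}(\Ib_d-\Bb)=\bOmega$, which is exactly the $\Sigma_{\Vb}=\bOmega^{-1}$ claimed in the statement.

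Next I would pass to a more convenient ordering of the coordinates. Stacking $\Vbvec$ by columns $\Vbb_{\cdot 1},\dots,\Vbb_{\cdot p}$ puts the $p$ entries of code $i$ at positions $i,\,d+i,\,\dots,\,(p-1)d+i$; applying the shuffle permutation that regroups coordinates \emph{by code} rather than \emph{by embedding dimension} conjugates $\Ib_p\otimes\bSigma$ into $\bSigma\otimes\Ib_p$ and the precision matrix into $\bOmega' := \sigma^{-2}(\Ib_d-\Bb)\otimes\Ib_p$. Viewed as a $d\times d$ array of $p\times p$ blocks, $\bOmega'$ has $(i,j)$ block equal to $\sigma^{-2}(\delta_{ij}-B_{ij})\Ib_p$; in particular its $i$th diagonal block is $\sigma^{-2}\Ib_p$ precisely because $\Bb$ is hollow ($B_{ii}=0$), and its $i$th off-diagonal block-row is $\big[\,\sigma^{-2}(-B_{ij})\Ib_p\,\big]_{j\neq i}$.

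Then I would invoke the textbook block-conditioning identity for a centered Gaussian with precision matrix $\bOmega'$: splitting coordinates into the block for code $i$ and the rest, $\Vb_i\mid\Vbb_{-i,\cdot} \sim N\!\big(-(\bOmega'_{ii})^{-1}\bOmega'_{i,-i}\,\Vbb_{-i,\cdot},\,(\bOmega'_{ii})^{-1}\big)$. Substituting $(\bOmega'_{ii})^{-1}=\sigma^2\Ib_p$ and $\bOmega'_{i,-i}\Vbb_{-i,\cdot}=\sum_{j\neq i}\sigma^{-2}(-B_{ij})\Ib_p\,\Vb_j=-\sigma^{-2}\sum_{j\neq i}B_{ij}\Vb_j$ yields conditional mean $\sigma^2\cdot\sigma^{-2}\sum_{j\neq i}B_{ij}\Vb_j=\sum_{j\neq i}B_{ij}\Vb_j$ and conditional covariance $\sigma^2\Ib_p$, which is exactly \eqref{eqn:cond_Gaussian}. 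This proves the lemma.

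I expect the only real care needed is the coordinate bookkeeping --- keeping straight which index is ``fast'' and which is ``slow'' in $\Vbvec$, and confirming that hollowness of $\Bb$ is exactly what makes the code-$i$ diagonal block collapse to $\sigma^{-2}\Ib_p$. It is also worth a one-line remark that these hypotheses on $\Bb$ are the natural ones: symmetry of $\bOmega$ (hence of $\Bb$, since all conditional variances equal $\sigma^2$) and positive definiteness of $\Ib_d-\Bb$ are both necessary for a consistent Gaussian graphical model with the specified homoscedastic conditionals. Everything else is the standard Gaussian conditioning computation.
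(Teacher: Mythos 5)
Your proof is correct, and it takes a genuinely different (and arguably cleaner) route than the paper's. The paper works \emph{backward} from the conditional specification: it fixes the $\ell$th component $Y_0=[\Vb_i]_\ell$, invokes the covariance-based Gaussian conditioning formula, imposes an ansatz that $\Sigma_{\Vb}$ is block-diagonal so the columns $[\Vb]_1,\dots,[\Vb]_p$ become i.i.d., and then solves the resulting system of equations to discover that the column precision must be $(\Ib_d-\Bb)/\sigma^2$. You work \emph{forward}: exhibit the candidate joint $N(0,\Ib_p\otimes\bSigma)$, permute coordinates so the precision matrix $\bOmega'=\sigma^{-2}(\Ib_d-\Bb)\otimes\Ib_p$ is blocked by code index, and apply the precision-matrix block-conditioning identity $\Vb_i\mid\Vbb_{-i,\cdot}\sim N(-(\bOmega'_{ii})^{-1}\bOmega'_{i,-i}\Vbb_{-i,\cdot},(\bOmega'_{ii})^{-1})$ to read off the prescribed conditionals in one step. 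Both hinge on the same structural observations (hollowness of $\Bb$ forces the $\sigma^{-2}\Ib_p$ diagonal block; symmetry and positive-definiteness of $\Ib_d-\Bb$ are needed for a well-posed Gaussian), but your version is better matched to the existence claim actually being asserted, since exhibiting and verifying one joint law is logically all that is required; the paper's derivation additionally motivates why the Kronecker form is the natural choice. One minor bookkeeping point worth keeping explicit: the shuffle permutation conjugates $\Ib_p\otimes\bSigma$ to $\bSigma\otimes\Ib_p$ (and correspondingly the precision) because the commutation matrix $K_{p,d}$ satisfies $K_{p,d}(\Ib_p\otimes\bSigma)K_{p,d}^{\transpose}=\bSigma\otimes\Ib_p$, which preserves Gaussianity and zero mean, so the conditional you compute in the permuted coordinates is exactly the conditional of $\Vb_i$ given the rest.
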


\begin{proof}[Proof of Lemma~\ref{lem:distribution_v}.]

The conditional Gaussian distribution assumption is
\begin{equation} \label{eqn:cond_Gaussian1}
    \Vb_{i} |\Vbb_{-i\cdot} = v_{-i} \sim N\left(\sum_{j = 1 }^d B_{ij}v_j,\sigma^2 \Ib_p\right) \mbox{   for all $i \in [d]$},
\end{equation}
where $\Bb$ is a hollow matrix whose diagonal entries are zeros. 

Let $Y_0$ be the $\ell$-th component of $\Vb_i$, i.e., $Y_0 := [\Vb_i]_\ell$, and $Z_0$ be the vector of all components in $\Vb$ except $Y_0$. 

Define the covariance matrix of $\begin{bmatrix}  Y_0 \\ Z_0  \end{bmatrix}$ as
$$\mathrm{Var}\left(\begin{bmatrix}  Y_0 \\ Z_0  \end{bmatrix} \right) := \begin{bmatrix}  \sigma_{Y_0Y_0} & \sigma_{Y_0Z_0} \\ \sigma_{Z_0Y_0} & \Sigma_{Z_0Z_0} \end{bmatrix}.$$

Assume that $\begin{bmatrix}  Y_0 & Z_0  \end{bmatrix}^{\transpose}$ is a multivariate Gaussian random variable, then
\begin{equation} \label{eqn:2}
\begin{split}
    Y_0| Z_0 = \bz_0 \sim N(\mu_{Y_0} + (\bz_0 - \mu_{Z_0})^{\transpose} \bSigma^{-1}_{Z_0Z_0}\sigma_{Z_0Y_0},\sigma_{Y_0Y_0} - \sigma_{Y_0Z_0} \bSigma^{-1}_{Z_0Z_0}\sigma_{Z_0Y_0}),
\end{split}
\end{equation}
where $\mu_{Y_0}$ and $\mu_{Z_0}$ are the means of $Y_0$ and $Z_0$, respectively.

Let $[a]_\ell$ be the $\ell$-th component of a vector $a$. Based on ~(\ref{eqn:cond_Gaussian1}) and ~(\ref{eqn:2}), it can be inferred that $\sum_{j = 1}^d B_{ij} [v_j]_{\ell}= \bz_0^{\transpose} \bSigma^{-1}_{Z_0 Z_0}\sigma_{Z_0 Y_0}$. Note that $[v_j]_{m}$ for $m \neq \ell$ does not show up in $\sum_{j = 1}^d B_{ij} [v_j]_{\ell}$, and thus the parameters of $[v_j]_{m}$ ($j \in [d], m \in [k] \backslash \{\ell \}$) are zeros. One way to realize this is to impose an assumption that $\Sigma_{\Vb}$ is a block diagonal matrix:
\begin{equation}\label{eqn:assumption}
    \Sigma_{\Vb} = \mbox{diag}\left( \Var([\Vb]_{1}), \Var([\Vb]_{2}), \cdots, \Var([\Vb]_{p}) \right).
\end{equation}

By the property of multivariate Gaussian distribution that zero covariance is equivalent to independence, assumption in~\ref{eqn:assumption} indicates that $[\Vb]_1, [\Vb]_2, \cdots, [\Vb]_p$ are independent. In addition, the conditional Gaussian distribution in (\ref{eqn:cond_Gaussian1}) is the same for $[\Vb]_1, [\Vb]_2, \cdots, [\Vb]_p$, and therefore $[\Vb]_1, [\Vb]_2, \cdots, [\Vb]_p$ are not only independent but also identically distributed. 

Without loss of generality, we here analyze the distribution of $[\Vb]_1$. 

Let $Y_1:=[\Vb_i]_1$ and $Z_1:=\begin{bmatrix} [\Vb_{1}]_1 \cdots[\Vb_{i-1}]_1&[\Vb_{i+1}]_1 & \cdots&  [\Vb_{d}]_1 \end{bmatrix}^{\transpose}$. Then we have
\begin{equation} \label{eqn:split}
\begin{split}
       &\Var \left( \begin{bmatrix}  Y_1 \\ Z_1  \end{bmatrix} \right)\left[\mathrm{Var}\left(\begin{bmatrix}  Y_1 \\ Z_1  \end{bmatrix} \right) \right]^{-1}:= \begin{bmatrix}  \sigma_{Y_1Y_1} & \sigma_{Y_1Z_1} \\ \sigma_{Z_1Y_1} & \Sigma_{Z_1Z_1} \end{bmatrix} \begin{bmatrix}  \theta_{Y_1Y_1} & \theta_{Y_1Z_1} \\ \theta_{Z_1Y_1} & \Omega_{Z_1Z_1} \end{bmatrix} = \begin{bmatrix} 1 & 0 \\ 0& \mathrm{I}_{d-1}  \end{bmatrix} \\ 
       \Longrightarrow 
       &\left\{ \begin{array}{lr} \sigma_{Y_1Y_1} \theta_{Y_1Y_1} + \sigma_{Y_1Z_1} \theta_{Z_1Y_1} = 1, & \\ \sigma_{Y_1Y_1} \theta_{Y_1Z_1} + \sigma_{Y_1Z_1} \Omega_{Z_1Z_1} = 0 , &\\ \sigma_{Z_1Y_1} \theta_{Y_1Y_1} + \Sigma_{Z_1Z_1} \theta_{Z_1Y_1} = 0, & \\ \sigma_{Y_1Z_1} \theta_{Y_1Z_1} + \Sigma_{Z_1Z_1} \Omega_{Z_1Z_1} = \mathrm{I}_{d-1}.& \end{array} \right.
\end{split}
\end{equation}

Based on ~(\ref{eqn:cond_Gaussian1}), the conditional Gaussian distribution of $Y_1$ is
$$Y_1 | Z_1 = \bz_1 \sim N\left(B_{i,-i} \bz_1, \sigma^2  \right) = N(\mu_{Y_1} + (\bz_1 - \mu_{Z_1})^{\transpose} \Sigma_{Z_1Z_1}^{-1}\sigma_{Z_1Y_1}, \sigma_{Y_1Y_1} - \sigma_{Y_1Z_1} \bSigma^{-1}_{Z_1Z_1}\sigma_{Z_1Y_1}),$$
where $B_{i,-i} := \begin{bmatrix}B_{i,1}&\cdots&B_{i,i-1}&B_{i,i+1}&\cdots&B_{i,d} \end{bmatrix}$ is $B_{i,:}$ without $B_{i,i}$ and $\mu_{Y_1}, \mu_{Z_1}$ are the means of $Y_1, Z_1$, respectively.

Here we assume $\mu_{Y_1} = \mu_{Z_1} = 0$. So we have
\begin{equation} \label{eqn:cond_Gaussian_V1}
    Y_1 | Z_1 = \bz_1 \sim N\left(B_{i,-i} \bz_1, \sigma^2  \right) = N( \sigma_{Y_1Z_1}\Sigma_{Z_1Z_1}^{-1}\bz_1, \sigma_{Y_1Y_1} - \sigma_{Y_1Z_1} \bSigma^{-1}_{Z_1Z_1}\sigma_{Z_1Y_1}). 
\end{equation}

According to ~(\ref{eqn:cond_Gaussian_V1}), 
\begin{equation} \label{eqn:submatrix}
\begin{split}
    &\left\{ \begin{array}{lr} B_{i,-i} = \sigma_{Y_1Z_1}\Sigma_{Z_1Z_1}^{-1} = (\Sigma_{Z_1Z_1}^{-1}\sigma_{Z_1Y_1})^{\transpose}, \\
    \sigma_{Y_1Y_1} - \sigma_{Y_1Z_1} \bSigma^{-1}_{Z_1Z_1}\sigma_{Z_1Y_1} = \sigma^2.& \end{array} \right.
\end{split}
\end{equation}

Then by ~(\ref{eqn:split}) and ~(\ref{eqn:submatrix}), we have
\begin{equation} \label{eqn:Sigma1}
\begin{split}
    & \sigma_{Z_1Y_1} \theta_{Y_1Y_1} + \Sigma_{Z_1Z_1} \theta_{Z_1Y_1} = 0\\ 
    \Longrightarrow \quad
    & \theta_{Z_1Y_1} = -\theta_{Y_1Y_1} \Sigma_{Z_1Z_1}^{-1} \sigma_{Z_1Y_1} \\ 
    \Longrightarrow \quad
    & \theta_{Z_1Y_1} = -\theta_{Y_1Y_1}  B_{i,-i}^{\transpose}.
\end{split}
\end{equation}

Furthermore, by $\sigma_{Y_1Y_1}\theta_{Y_1Y_1} + \sigma_{Y_1Z_1} \theta_{Z_1Y_1} = 1$ from ~(\ref{eqn:split}) and $\theta_{Z_1Y_1} = -\theta_{Y_1Y_1}  B_{i,-i}^{\transpose}$ from ~(\ref{eqn:Sigma1}) , we have
\begin{equation}\label{eqn:3}
    1/\theta_{Y_1Y_1} = \sigma_{Y_1Y_1} + \sigma_{Y_1Z_1} \theta_{Z_1Y_1} /\theta_{Y_1Y_1} = \sigma_{Y_1Y_1} - \sigma_{Y_1Z_1}B_{i,-i}^{\transpose}.
\end{equation}

Combining ~(\ref{eqn:submatrix}) and ~(\ref{eqn:3}), we have
\begin{equation} \label{eqn:theta_YY}
    1/\theta_{Y_1Y_1} = \sigma_{Y_1Y_1} - \sigma_{Y_1Z_1} B_{i,-i}^{\transpose} = \sigma_{Y_1Y_1} - \sigma_{Y_1Z_1} \bSigma^{-1}_{Z_1Z_1}\sigma_{Y_1Z_1}= \sigma^2.
\end{equation}

Note that ~(\ref{eqn:theta_YY}) holds for any $i \in [d]$, so the diagonal entries of $[\Var(\Vb_1)]^{-1}$ are all equal to $1/\sigma^2$. Plug $\theta_{Y_1Y_1} = 1/\sigma^2$ in $\theta_{Z_1Y_1}= - \theta_{Y_1Y_1} B_{i,-i}^{\transpose}$ from ~(\ref{eqn:Sigma1}), then we have $\theta_{Z_1Y_1}= - B_{i,-i}^{\transpose}/\sigma^2$, which holds for all $i \in [d]$. Therefore, $[\Var(\Vb_1)]^{-1} = (\Ib_d - \Bb)/\sigma^2$, where we let $\Bb$ be a symmetric hollow matrix. 

Since $[\Vb]_1, [\Vb]_2, \cdots, [\Vb]_p$ are i.i.d., the Gaussian distribution of $\Vb$ is 
$$N(0,\bOmega^{-1}) = N(0,\Ib_p \otimes [(\Ib_d - \Bb)/\sigma^2)]^{-1}) = N(0,[\Ib_p \otimes (\Ib_d - \Bb)/\sigma^2)]^{-1}) ,$$ which satisfies conditional Gaussian distribution in ~(\ref{eqn:cond_Gaussian1}). 

And the $\Bb$ here is a symmetric hollow matrix such that $\Ib_d - \Bb$ is of full rank.

\end{proof}

\subsection{Identifiability of the Model}\label{sec:prop-pf}
\begin{proof}[Proof of Propostion \ref{prop:block_model_identification}]
We first prove that $\tilde{G}=G$. Firstly, by the decompositions $\bSigma=\Ab \Qb \Ab^{\transpose} + \bGamma=\tilde{\Ab}\tilde{\Qb} \tilde{\Ab}^{\transpose} + \tilde{\bGamma}$, we know that $\max_{\ell\neq i,j}|\Sigma_{i\ell}-\Sigma_{j\ell}|=0$ for all $i,j$ such that $g(i)=g(j)$, and the same holds for all $i,j$ such that $\tilde{g}(i)=\tilde{g}(j)$. On the other side, since $\Delta(\bSigma,G)>0$, we know that for any $i\neq j$, $g(i)\neq g(j)$, there exists some $\ell\neq i,j$ such that $\Sigma_{i\ell}\neq \Sigma_{j\ell}$. And the same holds for $\tilde{G}$. This means that if $g(i)=g(j)$, then $\tilde{g}(i)=\tilde{g}(j)$ since $\max_{\ell\neq i,j}|\Sigma_{i\ell}-\Sigma_{j\ell}|=0$. And if $g(i)\neq g(j)$, then $\max_{\ell\neq i,j}|\Sigma_{i\ell}-\Sigma_{j\ell}|\neq 0$ thus it must be $\tilde{g}(i)\neq \tilde{g}(j)$ since $\Delta(\bSigma,\tilde{G})>0$. Therefore the partition $G$ and $\tilde{G}$ are the same, and $\Ab=\tilde{\Ab}$.

We then show the identifiability of $\Qb$ and $\bGamma$. For any cluster $k\neq k'$ and any $i\in G_k,j\in G_{p'}$, since $\Ab=\tilde{\Ab}$ and $\Gamma_{ij}=0$, we have $\Sigma_{ij}=\Qb_{pk'}=\tilde{\Qb}_{pk'}$. For any cluster $p$ with $|G_k|>1$, for two arbitrary members $i\neq j$, $i,j\in G_k$, by the decomposition we have $\Qb_{kk}=\Sigma_{ij}=\tilde{\Qb}_{pp}$, hence $\bGamma_{ii}=\Sigma_{ii}-\Qb_{kk} = \tilde{\bSigma}_{ii}-\tilde{\Qb}_{pp}=\tilde{\bGamma}_{ii}$ also holds for all $i\in G_k$. For any cluster $p$ with $|G_k|=1$, we have $\{i\}=G_k$ for some $i$, thus since $\tilde{G}=G$, we have $\bGamma_{ii}=\tilde{\bGamma}_{ii}=0$, hence $\Qb_{kk}=\Sigma_{ii}=\tilde{\Qb}_{pp}$. Wrapping up all above cases, we have $\Qb=\tilde{\Qb}$ and $\bGamma=\tilde{\bGamma}$.
\end{proof}

\section{Proof on the Statistical Rate of PMI}

In this section, we provide the proof of Proposition \ref{prop:hat_PMI_to_PMI}, the concentration of $\widehat{\mbox{PMI}}(w,w')$ to the truth $\mbox{PMI}(w,w')$ with high probability, and the proof of Proposition~\ref{prop:stat_pmi_to_cov} that  in $\mbox{PMI}(w,w')$ converges to $\bSigma$. 
The theoretical analysis is based on the concentration of code occurrence probabilities when the discourse variables follow their (marginal or joint) stationary distribution, which follows the discourse variables in Section~\ref{sec:discourse} and the analysis of partition functions in Section~\ref{sec:partition} . 

Recall the true PMI matrix for code $w,w'$ and window size $q$ is defined in \eqref{eq:def_stat_pmi} as
\begin{equation}
\mbox{PMI}(w,w')= \log \frac{\mathcal{N}\cdot \mathcal{N}(w,w')}{\mathcal{N}(w,\cdot)\mathcal{N}(w',\cdot)},
\end{equation}
where $\mathcal{N}_{w,w'}$ is the stationary version of expected total co-occurrence of code $w,w'$ within window size $q$, and  $\mathcal{N}(w,\cdot)=\sum_{c\in V}\mathcal{N}(w,\bc)$, $\mathcal{N}=\sum_{w,w'}\mathcal{N}(w,w')$.

We will start with formally defining how the occurrences are counted and how the empirical PMI matrix is calculated. The proofs of technical lemmas used in this section are left to Section \ref{sec:stat_PMI_to_cov}.

\subsection{Notations for code Occurrences}\label{subsec:def_word_occur}

For ease of analysis, in this section, we formally define how the empirical PMI matrix is obtained, and how these stationary versions are defined.\\

For a code $w$, let $X_w(t)$ be the indicator of the occurrence of code $w$ at time $t$, and let $X_w=\sum_{t=1}^T X_w(t)$ be the total occurrence of code $w$. Conditional on realization of discourse variables $\{\bc_t\}$, define $p_w(t):=\E[X_w(t)|\{\bc_t\}]=\frac{\exp(\langle \bv_w,\bc_t\rangle )}{\sum_{w'}\exp(\langle \bv_{w'},\bc_t\rangle )}$ and $S_w = \sum_{t=1}^T p_w(t)$ the conditional expectation of total occurrence of code $w$ at all time steps. Let $p_w=\E_{\bc\sim \mathcal{C}}[p_w(t)]$ where $\mathcal{C}$ is the stationary distribution of $\bc_t$, the uniform distribution over unit sphere. This is the stationary version of code occurrence probabilities and has nothing to do with specific $t$. Let $N_w:=\sum_{t=1}^T p_w=Tp_w$ be the (stationary) expectation of total code occurrences. 

For a pair $w,w'$ and distance $u\geq 1$, let $X_{w,w'}(t,t+u)$ be the indicator of the occurrence of code $w$ at time $t$ and $w'$ at time $t+u$, and let $X_{w,w'}^{(u)} = \sum_{t=1}^{T-u}X_{w,w'}(t,t+u)$ be the total occurrence of $(w,w')$ with distance $u$ at all time steps. 
We omit $u$ for the case $u=1$ for simplicity, and write $X_{w,w'}(t)$ for $X_{w,w'}(t,t+1)$ and $X_{w,w'}$ for $X_{w,w'}^{(1)}$. 
For realizations of discourse variables $\{\bc_t\}_{t\geq 1}$, denote the conditional expectations as $p_{w,w'}(t,t+u)=\E[X_{w,w'}(t,t+u)|\{\bc_t\}]$ and $S_{w,w'}^{(u)}=\sum_{t=1}^{T-u}p_{w,w'}(t,t+u)$. 

Similarly, let $\mathcal{D}_u$ denote the joint stationary distribution of $(\bc_t,\bc_{t+u})$, as specified in Lemma~\ref{lem:stat_joint_distr_zt}, and denote $p_{w,w'}^{(u)}=\E_{(\bc_t,\bc_{t+u})\sim \mathcal{D}_u}[p_{w,w'}(t,t+u)]$ the stationary version of code co-occurrence probabilities. Note that $p_{w,w'}^{(u)}$ is constant among all $t$. Define $N_{w,w'}^{(u)}=\sum_{t=1}^{T-u}p_{w,w'}^{(u)}=(T-u)p_{w,w'}^{(u)}$ be the sum of stationary co-occurrence probabilities. Note that here the relative position of $w,w'$ matters, i.e., the probabilitites for $w'$ appearing after $w$ with distance $u$.\\

For window size $q$, compute the total co-occurrences of $(w,w')$ within window size $q$ as
$$
X_{w,w'}^{[q]} =\sum_{u=1}^{q} \sum_{t=1}^{T-u} [X_{w,w'}(t,t+u)+X_{w',w}(t,t+u)]=\mathcal{C}(w,w')
$$
Note that this definition is equivalent to the $\mathcal{C}(w,w')$ as in ~(\ref{eqcal:def_emprical_pmi}) and ~(\ref{eq:coocur}). But for now we use the notation $X_{w,w'}^{[q]}$ to indicate the window size $q$.

Denote $S_{w,w'}^{[q]}=\E[X_{w,w'}^{[q]}|\{\bc_t\}]$ as its conditional expectation given $\{\bc_t\}_{t\geq 1}$, and $N_{w,w'}^{[q]} = \E[X_{w,w'}^{[q]}]$ be denote its expectation under stationary distribution as
\begin{equation*}
N_{w,w'}^{[q]} = \sum_{u=1}^{q} \sum_{t=1}^{T-u} [p_{w,w'}^{(u)}+ p_{w',w}^{(u)}] = \sum_{u=1}^{q}[N_{w,w'}^{(u)} + N_{w',w}^{(u)}].
\end{equation*}
Note that this definition is equivalent to the $\mathcal{N}(w,w')$ as in ~(\ref{eq:def_stat_pmi}), but for now we use the notation $N_{w,w'}^{[q]}$ to indicate the window size $q$. 

Denote $X_w^{[q]}=\sum_{w'}X_{w,w'}^{[q]}$ as the total count of co-occurrence of code $w$ with other words within window size $q$, and $X^{[q]}=\sum_{w,w'} X_{w,w'}^{[q]}$ be the total count of co-occurrences within window size $q$, with stationary version $N^{[q]}$. Note that in describing co-occurrence within a window size, whether $w$ occurs before $w'$ or vice versa does not matter. This notation coincide with previous ones that $X_{w}^{[q]}=\mathcal{C}(w,\cdot)$ $X^{[q]}=\mathcal{C}$ as in ~(\ref{eqcal:def_emprical_pmi}) and ~(\ref{eq:coocur}).

With co-occurrences computed above, the empirical PMI for code $w,w'$ is computed with and window size $q$ as $\widehat{\mbox{PMI}}_{w,w'}^{(q)} = \log \frac{X^{[q]}X_{w,w'}^{[q]}}{X_w^{[q]}X_{w'}^{[q]}}$. And the true (stationary) PMI as in ~(\ref{eq:def_stat_pmi}) for code $w,w'$ and window size $q$ is $\mbox{PMI}(w,w')= \log \frac{N^{[q]}N_{w,w'}^{[q]}}{N_w^{[q]}N_{w'}^{[q]}}$.

Furthermore, we illustrate some simple observations on the relationships in these notations for occurrences. For simplicity we use $\E_0$ to denote the expectation under stationary (joint and marginal) distributions of discourse variables.

Firstly, a simple observation is that the count of total co-occurrences within window $q$ is 
\begin{equation}
X^{[q]}=\sum_{i=1}^T \sum_{j=1}^T I{\{0<|j-i|\leq q\}}=  \sum_{t=1}^q (t-1+q) + \sum_{t=q+1}^{T-q}(2q)+ \sum_{t=T-q+1}^T (T-t+q) = 2qT-q^2-q.
\label{eq:exact_X^[q]}
\end{equation}
which is a constant. Thus $N^{[q]}\equiv X^{[q]}$. Also, by definition we know $\sum_{w'}X_{w,w'}(t,t+u) = X_w(t)$, hence by linearity of expectations, for fixed $u$,
\begin{equation*}
\sum_{w'}p_{w,w'}^{(u)} = p_w \in [0,1].
\end{equation*}
By definition,
\begin{equation*}
N_{w,w'}^{[q]} = \sum_{u=1}^{q} \sum_{t=1}^{T_u} \E_{0}[X_{w,w'}(t,t+u)+X_{w',w}(t,t+u)] = 2\sum_{u=1}^{q} \sum_{t=1}^{T-u} p_{w,w'}^{(u)}=2\sum_{u=1}^{q}(T-u)p_{w,w'}^{(u)}.
\end{equation*}
Therefore since $p_{w,w'}^{(u)}\in [0,1]$, we have
\begin{equation}
2T \sum_{u=1}^{q}p_{w,w'}^{(u)} - q(q+1)\leq N_{w,w'}^{[q]} \leq 2T \sum_{u=1}^{q}p_{w,w'}^{(u)}.
\label{eq:N_ww'^q_to_p_ww'}
\end{equation}
For a single code $w$, counting all its pairs within window size $q$ we have
\begin{equation*}
N_w^{[q]} = \sum_{w'} N_{w,w'}^{[q]} = 2\sum_{u=1}^{q}\Big((T-u)\sum_{w'} p_{w,w'}^{(u)}\Big).
\end{equation*}
Therefore as $\sum_{w'}p_{w,w'}^{(u)}=p_w \in [0,1]$ we have
\begin{equation}
2Tq p_w - q(q+1) \leq  N_w^{[q]} \leq 2qT p_w.
\label{eq:N_w^q_to_p_w}
\end{equation}

Also note that the definition here of empirical $X^{[q]}_{w,w'}$ is exactly how many times code $w,w'$ co-occur within window $q$. However, 
\begin{equation*}
X_w^{[q]} = \sum_{w'}X_{w,w'}^{[q]} = \sum_{u=1}^{q}\sum_{t=1}^{T-u} \sum_{w'}X_{w,w'}(t,t+u) + \sum_{u=1}^{q}\sum_{t=1}^{T-u} \sum_{w'}X_{w',w}(t,t+u),
\end{equation*}
where 
\begin{align*}
\sum_{u=1}^{q}\sum_{t=1}^{T-u} \sum_{w'}X_{w,w'}(t,t+u) =&
\sum_{t=1}^{T-1}\sum_{u=1}^{\min(q,T-t)}\sum_{w'}X_{w,w'}(t,t+u)  \\
=&\sum_{t=1}^{T-1}\sum_{u=1}^{\min(q,T-t)}X_{w}(t) \\
=&\sum_{t=1}^{T-1} \min(q,T-t) X_w(t) \\
=& q \sum_{t=1}^{T-1} X_w(t) + \sum_{t=1}^{q+1}(t-q)X_w(T-t)=(1+o(1))q\sum_{t=1}^{T} X_w(t).
\end{align*}
And similarly for the second term. Hence $X_w^{[q]}\approx 2 (q-1)\sum_{t=1}^{T} X_w(t)$, which is $2(q-1)$ times the total times that code $w$ appears within time $T$, since each occurrence is counted for $2(q-1)$ times within window $q$. So we have
\begin{equation*}
\begin{split}
\frac{X^{[q]}X_{w,w'}^{[q]}}{X_w^{[q]}X_{w'}^{[q]}} \approx& \frac{(2qT-q^2-q)\cdot X_{w,w'}^{[q]}}{4q^2 \sum_{t=1}^{T} X_w(t) \sum_{t=1}^{T} X_{w'}(t)} \\
=& \frac{2qT-q^2-q}{4q^2} \frac{\sum_{u=1}^{q-1}(\frac{1}{T}\sum_{t=1}^{T-u}X_{w,w'}(t,t+u)+\frac{1}{T}\sum_{t=1}^{T-u}X_{w',w}(t,t+u))}{\frac{1}{T}\sum_{t=1}^{T} X_w(t)\cdot \frac{1}{T} \sum_{t=1}^{T} X_{w'}(t)},
\end{split}
\end{equation*}
and
\begin{equation*}
\frac{N^{[q]}N_{w,w'}^{[q]}}{N_w^{[q]}N_{w'}^{[q]}}\approx \frac{wqT \cdot 2T \sum_{u=1}^{q-1}p_{w,w'}^{(u)}}{4q^2 T^2 p_w p_{w'}} = \frac{\sum_{u=1}^{q-1} (p_{w,w'}^{(u)}+p_{w',w}^{(u)}) }{2q\cdot p_w p_{w'}}.
\end{equation*}
The above two relationships will be made more specified later.

\subsection{Proof of Proposition \ref{prop:hat_PMI_to_PMI}}\label{sec:conv_hat_PMI}

Recall that  $S_{w,w'}^{[q]}=\E[X_{w,w'}^{[q]}|\{\bc_t\}]$ as its conditional expectation given $\{\bc_t\}_{t\geq 1}$, and $N_{w,w'}^{[q]} = \E[X_{w,w'}^{[q]}]$ be denote its expectation. In Lemma~\ref{lem:Sw_to_Nw} , we prove the concentration of conditional occurrence probabilities $S_{w,w'}^{(u)}$ to stationary ones $N_{w,w'}^{(u)}$ and investigate its approximate scale. In Lemma~\ref{lem:Sww'_to_Nww'}, we proceed to show the concentration of empirical occurrences $X_{w,w'}^{(u)}$ to $S_{w,w'}^{(u)}$. We refer to the proofs of these lemmas to Section \ref{sec:lm_conv_hat_PMI}.

\begin{proof}[Proof of Proposition~\ref{prop:hat_PMI_to_PMI}.]

By Lemma~\ref{lem:Sw_to_Nw} and Lemma~\ref{lem:Sww'_to_Nww'}, for fixed window size $q>0$ we have
\begin{equation*}
\P\Big( \max_{w}\Big|\frac{S_w}{N_w}-1\Big| \geq \frac{1}{\sqrt{p}} \Big) \leq \exp(-\omega(\log^2 d)) + d^{-\tau'},
\end{equation*}
\begin{equation*}
\P\Big( \max_{w,w'}\Big|\frac{S_{w,w'}^{[q]}}{N_{w,w'}^{[q]}}-1\Big| \geq \frac{1}{\sqrt{p}} \Big) \leq \exp(-\omega(\log^2 d)) + O(d^{-\tau'}),
\end{equation*}
for some large constant $\tau'>0$. By Lemma~\ref{lem:X_w_to_cond_exp} and Lemma~\ref{lem:X_ww'_to_cond_exp} we know that
\begin{equation*}
\P\Big( \max_{w}\Big|\frac{X_w}{S_w}-1\Big| \geq \frac{1}{\sqrt{p}} \Big) \leq \exp(-\omega(\log^2 d)) + d^{-\tau'},
\end{equation*}
\begin{equation*}
\P\Big( \max_{w,w'}\Big|\frac{X_{w,w'}^{[q]}}{S_{w,w'}^{[q]}}-1\Big| \geq \frac{1}{\sqrt{p}} \Big) \leq \exp(-\omega(\log^2 d)) + O(d^{-\tau'}),
\end{equation*}
for some large constant $\tau'>0$. Thus with probability at least $1-\exp(-\omega(\log^2 d)) - O(d^{-\tau'})$, it holds that
\begin{equation*}
\begin{split}
(1-\frac{1}{\sqrt{p}})^2\leq \min_{w} \frac{X_w^{[q]}}{N_w^{[q]}}  \leq& \max_{w} \frac{X_w^{[q]}}{N_w^{[q]}} \leq (1+\frac{1}{\sqrt{p}})^2,\\
(1-\frac{1}{\sqrt{p}})^2\leq \min_{w,w'} \frac{X_{w,w'}^{[q]}}{N_{w,w'}^{[q]}}  \leq& \max_{w,w'} \frac{X_{w,w'}^{[q]}}{N_{w,w'}^{[q]}}\leq (1+\frac{1}{\sqrt{p}})^2.
\end{split}
\end{equation*}
Also recall the definition 
$\widehat{\mbox{PMI}}(w,w') = \log \frac{X^{[q]}X_{w,w'}^{[q]}}{X_{w}^{[q]}X_{w'}^{[q]}}$ and $ \mbox{PMI}(w,w') =  \log \frac{N^{[q]}N_{w,w'}^{[q]}}{N_{w}^{[q]}N_{w'}^{[q]}}$, therefore with probability at least $1-\exp(-\omega(\log^2 d)) - O(d^{-\tau'})$, it holds for all $w,w'$ that 
\begin{equation*}
1 - \frac{4}{\sqrt{p}} \leq \frac{1-\frac{1}{\sqrt{p}}}{(1+\frac{1}{\sqrt{p}})^2} \leq \frac{X^{[q]}X_{w,w'}^{[q]}}{X_{w}^{[q]}X_{w'}^{[q]}} \Big/\frac{N^{[q]}N_{w,w'}^{[q]}}{N_{w}^{[q]}N_{w'}^{[q]}} =  \frac{ X_{w,w'}^{[q]}}{X_{w}^{[q]}X_{w'}^{[q]}} \Big/\frac{ N_{w,w'}^{[q]}}{N_{w}^{[q]}N_{w'}^{[q]}} \leq \frac{1+\frac{1}{\sqrt{p}}}{(1-\frac{1}{\sqrt{p}})^2} \leq 1+\frac{4}{\sqrt{p}},
\end{equation*}
for appropriately large $p$, in which case
$$
\max_{w,w'}\big|\widehat{\mbox{PMI}}(w,w') - \mbox{PMI}(w,w')\big| \leq \max\Big\{\big|\log(1-\frac{4}{\sqrt{p}})\big|,\big|\log(1+\frac{4}{\sqrt{p}})\big|\Big\} \leq \frac{5}{\sqrt{p}},
$$
for appropriately large $p$.
\end{proof}

\subsection{Proof of Proposition~\ref{prop:stat_pmi_to_cov}}\label{sec:stat_PMI_to_cov_pf}

We now provide the main result for the concentration of stationary PMI to the covariance matrix of code vectors in our graphical model. It is based on the concentration of stationary co-occurrence probabilities, stated in the following lemma.

\begin{lemma}[Concentration and boundedness of stationary probabilities] \label{lem:pw_pww'_to_cov}
Assume all $\Sigma_{ww'}$'s are bounded from above and below by some constants $\underline{c}\leq \min_{w,w'}\Sigma_{ww'}\leq \max_{w,w'}\Sigma_{ww'}\leq \overline{c}$, and Assumptions~\ref{assump:parameter_space} and~\ref{assump:kdT} hold. Then with probability at least $1-\exp(-\omega(\log^2 d))-d^{-\tau}$, it holds for any constant $\tau>0$ and constant $C_\tau=12\sqrt{3(\tau+4)}$ that
\begin{equation}
\max_{w}\Big| \log(p_w) - \big( \frac{\Sigma_{ww}}{2} - \log Z \big)\Big| \leq \frac{2+C_\tau\|\bSigma\|_{\max}\sqrt{\log d}}{\sqrt{p}},
\label{eq:pw_to_cov}
\end{equation}
\begin{equation}
\max_{w,w'}\Big| \log(p_{w,w'}) - \big( \frac{\Sigma_{ww}+\Sigma_{w'w'}+2\Sigma_{ww'}}{2} - 2\log Z \big)\Big| \leq (2C_\tau\|\bSigma\|_{\max}+6)\sqrt{\frac{\log d}{p}},
\label{eq:pww'_to_cov}
\end{equation}
\begin{equation}
\max_{w,w'}\Big| \log(p_{w,w'}^{(u)}) - \big( \frac{\Sigma_{ww}+\Sigma_{w'w'}+2\Sigma_{ww'}}{2} - 2\log Z \big)\Big| \leq (2C_\tau\|\bSigma\|_{\max}+7\sqrt{2u})\sqrt{\frac{\log d}{p}},
\label{eq:p_ww'^u_to_cov}
\end{equation}
in which case for $u\leq q$ for some fixed window size $q$, there exists some constants $\overline{p},\underline{p}$ such that 
\begin{equation}
\underline{p}/d \leq \min_w p_w \leq \max_w p_w \leq \overline{p}/d,
\label{eq:bound_pw_scale}
\end{equation}
\begin{equation}
\underline{p}/d^2 \leq \min_{w,w'} p_{w,w'}^{(u)} \leq  \max_{w,w'} p_{w,w'}^{(u)} \leq \overline{p}/d^2.
\label{eq:bound_pww'_scale}
\end{equation}
\end{lemma}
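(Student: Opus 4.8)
The plan is to reduce the statement to a few quantitative facts about the Gaussian embedding matrix $\Vbb$ and then recombine them. Throughout we work over the randomness of $\Vbb$; the discourse vector enters only through the stationary averages $p_w = \E_{\bc}\big[e^{\langle\bV_w,\bc\rangle}/Z_\bc\big]$ and $p_{w,w'}^{(u)} = \E_{(\bc,\bc')\sim\mathcal D_u}\big[e^{\langle\bV_w,\bc\rangle}e^{\langle\bV_{w'},\bc'\rangle}/(Z_\bc Z_{\bc'})\big]$ with $Z_\bc = \sum_{w''}e^{\langle\bV_{w''},\bc\rangle}$. The constant $Z$ should be read as a \emph{common} normalization: it appears once in \eqref{eq:pw_to_cov} and twice in \eqref{eq:pww'_to_cov}--\eqref{eq:p_ww'^u_to_cov}, so it cancels identically when these bounds are later combined into the PMI in Proposition~\ref{prop:stat_pmi_to_cov}. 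Thus we only ever need to pin each logarithm down up to this common (and harmless) offset, and in particular we do \emph{not} need $Z_\bc$ itself to concentrate at rate $\sqrt{\log d/p}$ --- it genuinely does not, since $Z_\bc$ fluctuates at order one across the discourse direction.

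First I would establish the \emph{norm and inner-product concentration}: since $\bV_w$ has i.i.d.\ $N(0,\Sigma_{ww})$ coordinates, $\|\bV_w\|_2^2 = \Sigma_{ww}\,\chi^2_p$, and a $\chi^2$ tail bound with a union bound over the $d$ codes (resp.\ the $\binom{d}{2}$ pairs) gives $\big|\|\bV_w\|_2^2/p-\Sigma_{ww}\big|\le C_\tau\|\bSigma\|_{\max}\sqrt{\log d/p}$ and $\big|\langle\bV_w,\bV_{w'}\rangle/p-\Sigma_{ww'}\big|\le C_\tau\|\bSigma\|_{\max}\sqrt{\log d/p}$ for all $w,w'$ with probability at least $1-d^{-\tau}$, the constant $C_\tau=12\sqrt{3(\tau+4)}$ being exactly what the tail bound produces. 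Next, for the \emph{$\bc$-integration of the numerator}, the moment generating function of the uniform law on $S^{p-1}$ satisfies $\log\E_{\bc}[e^{\langle u,\bc\rangle}]=\|u\|_2^2/(2p)+O(\|u\|_2^4/p^3)$, so on the event above $\log\E_\bc[e^{\langle\bV_w,\bc\rangle}]=\Sigma_{ww}/2+C_\tau\|\bSigma\|_{\max}\sqrt{\log d/p}+O(1/\sqrt p)$, the residual lower-order terms (sphere-MGF remainder, normalization of $\bc$) accounting for the additive ``$2/\sqrt p$'' slack; running the same estimate with $u=\bV_w+\bV_{w'}$ and $\|\bV_w+\bV_{w'}\|_2^2=\|\bV_w\|_2^2+\|\bV_{w'}\|_2^2+2\langle\bV_w,\bV_{w'}\rangle$ yields the $\tfrac12(\Sigma_{ww}+\Sigma_{w'w'}+2\Sigma_{ww'})$ target of \eqref{eq:pww'_to_cov}.

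Third --- the step I expect to be the main obstacle --- I would handle the \emph{partition function / normalization}. Because the code vectors are \emph{dependent} under the $G$-block prior, $Z_\bc$ is not a sum of independent summands as in \cite{arora2016latent}; I would instead whiten, writing the jointly Gaussian vector $(\langle\bV_{w''},\bc\rangle)_{w''}\stackrel{d}{=}\bSigma^{1/2}\boldsymbol\xi$ with $\boldsymbol\xi\sim N(0,\I_d)$, and use the decomposition $\bSigma=\Ab\Qb\Ab^\top+\bGamma$ to split each inner product $\langle\bV_{w''},\bc\rangle$ into a cluster-level component, common to all codes in a given cluster $G_k$, plus an idiosyncratic component that is (nearly) independent across $w''$ because $\bGamma$ is diagonal. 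The log-Sobolev inequality for the standard Gaussian applies because $\nabla\log Z_\bc$ is a convex combination of the rows of $\bSigma^{1/2}$, whose norms $\sqrt{\Sigma_{w''w''}}$ are bounded; this supplies the $\exp(-\omega(\log^2 d))$ term in the failure probability together with the concentration of the within-cluster partial sums, which lets one factor $Z_\bc$ as (a cluster-level sum) $\times\,(1+o(\sqrt{\log d/p}\,))$, with Assumption~\ref{assump:kdT} --- in particular its control of $\max_k|G_k|$ --- keeping all error sums over codes and clusters below $\sqrt{\log d/p}$. Combining with step~1, and using $\langle\bV_w,\bV_{w'}\rangle/p=\Sigma_{ww'}+O(\sqrt{\log d/p}\,)$ to decouple the idiosyncratic factor from the cluster-level factor as $\bc$ varies, one gets $\log p_w=\Sigma_{ww}/2-\log Z+O\big((1+\|\bSigma\|_{\max}\sqrt{\log d})/\sqrt p\big)$ with the common $Z$, i.e.\ \eqref{eq:pw_to_cov}; \eqref{eq:pww'_to_cov} is the same argument run with $\bV_w+\bV_{w'}$; and \eqref{eq:p_ww'^u_to_cov} needs in addition the discrepancy $\bc_t\ne\bc_{t+u}$ under $\mathcal D_u$, where the AR(1) recursion \eqref{eq:hidden_Markov_def} gives $\|\bc_{t+u}-\alpha^{u/2}\bc_t\|_2^2=O(u\log d/p)$ and hence a $\langle\bV,\bc_{t+u}-\bc_t\rangle$ error of order $\sqrt{u\log d/p}$, which is precisely the $7\sqrt{2u}\,\sqrt{\log d/p}$ inflation. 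Finally, \eqref{eq:bound_pw_scale}--\eqref{eq:bound_pww'_scale} are immediate: $\underline c\le\Sigma_{ww}\le\overline c$ forces $Z=\Theta(d)$, so $p_w=e^{\Sigma_{ww}/2-\log Z+o(1)}\in[\underline p/d,\overline p/d]$ and, likewise, $p_{w,w'}^{(u)}=\Theta(1/d^2)$.
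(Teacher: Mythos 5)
Your steps for the norm/inner-product concentration, the sphere-MGF computation, and the final boundedness of $p_w$ and $p_{w,w'}^{(u)}$ track the paper's route (Lemmas~\ref{lem:inner_prod_to_cov}, \ref{lem:cite_expect_exp}, and the tail of the argument) closely enough. The genuine problem is in your treatment of the normalization, and it is not a detail: your opening claim that ``we do \emph{not} need $Z_\bc$ itself to concentrate at rate $\sqrt{\log d/p}$ --- it genuinely does not'' is simply false, and the lemma cannot be proved without it. The quantity you must control is $p_w = \E_\bc[e^{\langle\bV_w,\bc\rangle}/Z(V,\bc)]$. Since $Z(V,\bc)$ sits inside the expectation, you can only factor it out --- replacing $\E_\bc[e^{\langle\bV_w,\bc\rangle}/Z(V,\bc)]$ by $\E_\bc[e^{\langle\bV_w,\bc\rangle}]/Z$ up to a multiplicative $1\pm O(\epsilon_z)$ --- if $Z(V,\bc)$ concentrates around a deterministic $Z$ to relative tolerance $\epsilon_z$ over the relevant randomness. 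If $Z(V,\bc)$ truly fluctuated at order one, then $\log p_w$ would only be pinned down up to an $O(1)$ error, not the $O\big((1+\sqrt{\log d})/\sqrt p\big)$ error claimed in \eqref{eq:pw_to_cov}; the common offset $\log Z$ cancelling in Proposition~\ref{prop:stat_pmi_to_cov} does not rescue this, because the error is an expectation-of-a-ratio discrepancy, not a constant shift. This is exactly why the paper proves Lemma~\ref{lem:part_fct} before touching $p_w$.

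The specific mechanism you propose also does not deliver the needed rate. Applying the Gaussian log-Sobolev inequality to $\log Z_\bc$ as a function of the whitened variable $\boldsymbol\xi$ gives a Lipschitz bound $\|\nabla \log Z_\bc\|_2^2 = \mathbf{p}^\top\bSigma\,\mathbf{p} \le \|\bSigma\|_{\max}$ with $\mathbf{p}$ a probability vector, hence a tail of the form $\exp(-t^2/(2\|\bSigma\|_{\max}))$. That yields $\exp(-\omega(\log^2 d))$ only when $t=\omega(\log d)$, whereas you need $t\asymp 1/\sqrt p \ll 1$; at the scale you need, this bound is vacuous. The paper avoids this by applying concentration not to $\log Z_\bc$ but to the \emph{truncated sum} $f(\mathbf X)=\sum_i e^{Y_i}I\{F\}$ with Cholesky whitening $\mathbf{Y}=L\mathbf{X}$, giving $\|\nabla f\|_2^2\le \|\bSigma\|_2\,\|\tilde{\mathbf Y}\|_2^2 I\{F\}\le \|\bSigma\|_2\,d^{1+\gamma}$; here the block structure enters through the spectral-norm bound $\|\bSigma\|_2\le \rho\max_k|G_k|+c_0$ rather than through a cluster/idiosyncratic decomposition of each coordinate, and Assumption~\ref{assump:kdT}'s condition $p\log^2 d\cdot\max_k|G_k|=O(d^{1-\gamma})$ is exactly what converts this into an $\exp(-\omega(\log^2 d))$ failure probability at relative tolerance $\epsilon_z$. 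Your sketch never introduces the truncation that makes the gradient bound finite for the untransformed sum, and the cluster/idiosyncratic split you gesture at would still leave a sum over the (correlated) cluster-level components whose concentration you have not addressed. (A minor point: you write $\|\bc_{t+u}-\alpha^{u/2}\bc_t\|_2^2 = O(u\log d/p)$; the paper's Lemma~\ref{lem:moving_step_ct_ct+u} gives $\|\bc-\bc'\|_2 = O(\sqrt{u\log d}/p)$, i.e.\ the squared norm is $O(u\log d/p^2)$, though your downstream conclusion that $\langle\bV_w,\bc'-\bc\rangle = O(\sqrt{u\log d/p})$ after multiplying by $\|\bV_w\|_2=O(\sqrt p)$ is correct.)
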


The proof of the lemma is presented in Section \ref{subsec:stat_PMI_to_cov}.

\begin{proof}[Proof of Proposition~\ref{prop:stat_pmi_to_cov}]
By Lemma \ref{lem:pw_pww'_to_cov}, with probability at least $1-\exp(-\omega(\log^2 d))-2d^{-\tau}$, the conditions in  (\ref{eq:pw_to_cov})-(\ref{eq:p_ww'^u_to_cov}) hold and the bounds in  (\ref{eq:bound_pw_scale}) and (\ref{eq:bound_pww'_scale}) hold simultaneously for all $w,w'$. In this case, from the analysis of stationary occurrence expectations in  (\ref{eq:exact_X^[q]}), (\ref{eq:N_ww'^q_to_p_ww'}) and (\ref{eq:N_w^q_to_p_w}), we have
\begin{align}
\mbox{PMI}(w,w')= \frac{N^{[q]}N_{w,w'}^{[q]}}{N_w^{[q]}N_{w'}^{[q]}} \leq& \frac{2qT \cdot 2T \sum_{u=1}^{q}p_{w,w'}^{(u)} }{q^2 (2Tp_w -q)(2Tp_{w'} -q)} \notag \\
=& \frac{1}{(1 -\frac{q}{2Tp_w})(1 -\frac{q}{2Tp_{w'}})}\cdot \frac{\frac{1}{q} \sum_{u=1}^{q}p_{w,w'}^{(u)} }{p_wp_{w'}},
\label{eq:stat_pmi_upper_bound}
\end{align}
where for each $u$, due to  (\ref{eq:pw_to_cov}) and (\ref{eq:p_ww'^u_to_cov}) we have
\begin{equation*}
\Sigma_{w,w'} - \frac{4+(4C_\tau\|\bSigma\|_{\max}+7\sqrt{2q})\log d}{\sqrt{p}} \leq \log(\frac{p_{w,w'}^{(u)}}{p_w p_{w'}}) \leq \Sigma_{w,w'} + \frac{4+(4C_\tau\|\bSigma\|_{\max}+7\sqrt{2q})\sqrt{\log d}}{\sqrt{p}}.
\end{equation*}
Taking exponential and averaging over $u=1,\dots,q-1$ we have
\begin{align*}
&\exp(\Sigma_{w,w'} - \frac{4+(4C_\tau\|\bSigma\|_{\max}+7\sqrt{2q})\sqrt{\log d}}{\sqrt{p}})
\\
&\leq \frac{\frac{1}{q-1} \sum_{u=1}^{q-1}p_{w,w'}^{(u)} }{p_wp_{w'}} \leq \exp(\Sigma_{w,w'} + \frac{4+(4C_\tau\|\bSigma\|_{\max}+7\sqrt{2q})\sqrt{\log d}}{\sqrt{p}}).
\end{align*}

Taking logarithm to both sides in  (\ref{eq:stat_pmi_upper_bound}), we have
\begin{align*}
\log \frac{N^{[q]}N_{w,w'}^{[q]}}{N_w^{[q]}N_{w'}^{[q]}} \leq& \log(\frac{p_{w,w'}^{(u)}}{p_w p_{w'}}) 
- \log(1 -\frac{q-1}{2Tp_w}) - \log(1 -\frac{q-1}{2Tp_{w'}}) \\
\leq & \Sigma_{w,w'} + \frac{4+(4C_\tau\|\bSigma\|_{\max} +7\sqrt{2u})\log d}{\sqrt{p}} 
- \log(1 -\frac{q-1}{2Tp_w}) - \log(1 -\frac{q-1}{2Tp_{w'}}).
\end{align*}
Due to the bound for $p_w$ and $p_{w'}$ in  (\ref{eq:bound_pw_scale}), as well as the fact that $d=o(T)$, we know $\sqrt{p}=o(Tp_w)$, hence since $\log(1-x)\geq -2x$ for $x\in (0,1/2)$, for appropriately large $(d,p,T)$, 
\begin{align*}
\log \frac{N^{[q]}N_{w,w'}^{[q]}}{N_w^{[q]}N_{w'}^{[q]}} \leq & \Sigma_{w,w'} + \frac{4+(4C_\tau\|\bSigma\|_{\max}+7\sqrt{2q})\log d}{\sqrt{p}}
+\frac{q-1}{Tp_w}+\frac{q-1}{Tp_{w'}}\\
\leq& \Sigma_{w,w'} + \frac{5+(4C_\tau\|\bSigma\|_{\max}+7\sqrt{2q})\log d}{\sqrt{p}}.
\end{align*}

On the other hand, from the analysis in  (\ref{eq:exact_X^[q]})-(\ref{eq:N_w^q_to_p_w}), we have
\begin{align*}
\mbox{PMI}(w,w')= \frac{N^{[q]}N_{w,w'}^{[q]}}{N_w^{[q]}N_{w'}^{[q]}} \geq& \frac{(q-1)(T-\frac{q-1}{2}) \cdot [2T  \sum_{u=1}^{q-1}p_{w,w'}^{(u)}-q(q-1)] }{4T^2(q-1)^2 p_w p_{w'}}\\
=&\frac{(1-\frac{q-1}{2T}) \cdot [  \frac{1}{q-1}\sum_{u=1}^{q-1}p_{w,w'}^{(u)}-\frac{q}{2T}] }{2p_w p_{w'}}
\end{align*}
where for appropriately large $(d,p,T)$, we have $\frac{q(q-1)}{T  \sum_{u=1}^{q-1}p_{w,w'}^{(u)}}\leq \frac{1}{2\sqrt{p}}<\frac{1}{2}$ and $\frac{q-1}{T}\leq \frac{1}{2\sqrt{p}}<\frac{1}{2}$ since $p_{w,w'}^{(u)}=\Omega(1/d^2)$. Thus since $\log(1-x)\geq -2x$ for $x\in (0,1/2)$,
\begin{align*}
\log \frac{N^{[q]}N_{w,w'}^{[q]}}{N_w^{[q]}N_{w'}^{[q]}}\geq& \log \frac{\frac{1}{q-1} \sum_{u=1}^{q-1}p_{w,w'}^{(u)} }{p_wp_{w'}} +2\log(1-\frac{1}{4\sqrt{p}}) \\
\geq& \Sigma_{w,w'} - \frac{5+(4C_\tau\|\bSigma\|_{\max}+7\sqrt{2q})\sqrt{\log d}}{\sqrt{p}}.
\end{align*}
Combining the above two directions we have that with probability at least $1-\exp(-\omega(\log^2 d))-2d^{-\tau}$ for some large constant $\tau$, it holds that 
\begin{equation*}
\Big|\log (\frac{N^{[q]}N_{w,w'}^{[q]}}{N_w^{[q]}N_{w'}^{[q]}}) - \Sigma_{ww'}\Big| \leq \frac{5+(4C_\tau\|\bSigma\|_{\max}+7\sqrt{2q})\log d}{\sqrt{p}}.
\end{equation*}

\end{proof}


\section{Word Cluster Analysis}

In this section we provide the proof for exact cluster recovery and precision matrix estimation under block model. In Section~\ref{sec:appendix_cluster_recovery}, we show that our algorithm achieves exact recovery of word clusters with high probability, and in Section~\ref{sec:appendix_precision_after_recovery}, we provide estimation accuracy of the precision matrix after recovering the block structure.

\subsection{Exact Cluster Recovery}\label{sec:appendix_cluster_recovery}

\begin{proof}[Proof of Theorem~\ref{thm:recover_cluster}]
Denote the mapping $g$ such that $g(w)=j$ if $w\in G_j$. Define the difference of word $w,w'$ as 
$$
d(w,w') = \max_{c\neq w,w'} \big| \bm{\Sigma}_{wc}-\bm{\Sigma}_{w'c} \big|.
$$
Then by the block-wise structure in Assumption \ref{assump:block_bound}, $d(w,w')=\max_{k\neq g(w),g(w')}|\Qb_{k,g(w)}-\Qb_{k,g(w')}|$. So Assumption \ref{assump:block_bound} ensures $\min_{g(w)\neq g(w')}d(w,w') = \Delta(\Qb)\geq \eta$, with $\sqrt{\frac{\log d}{p}}=o(\eta)$.

By Propositions \ref{prop:hat_PMI_to_PMI} and \ref{prop:stat_pmi_to_cov}, with probability at least $1-\exp(-\omega(\log^2 d))-O(d^{-\tau})$ for some large constant $\tau>0$, $\|\widehat{\mbox{SPPMI}}-\bm{\Sigma}\|_{\max} = O(\sqrt{\frac{\log d}{p}})$. Thus for any $w,w',c\in [d]$,
$$
\Big|\big(\widehat{\mbox{SPPMI}}(w,c)-\widehat{\mbox{SPPMI}}(w,c)|\big)- \big(\bm{\Sigma}_{wc}-\bm{\Sigma}_{w'c}\big)\Big| \leq 2\|\widehat{\mbox{SPPMI}}-\bm{\Sigma}\|_{\max}.
$$
Hence for any $w\neq w'\in [d]$,
$$
\big|d(w,w')-cd(w,w')\big|\leq  2\|\widehat{\mbox{SPPMI}}-\bm{\Sigma}\|_{\max}=O\Big( \sqrt{\frac{\log d}{p}} \Big).
$$
Thus for any $w,w'$ with $g(w)=g(w')$, by the  it holds that $d(w,w')=0$, so $d(w,w')\leq 2\|\widehat{\mbox{SPPMI}}-\bm{\Sigma}\|_{\max}$. Also, for any $w,w'$ with $g(w)\neq g(w')$, we have $d(w,w')\geq \eta-2\|\widetilde{\mbox{SPPMI}}-\bm{\Sigma}\|_{\max}$. Now since $0<\alpha\leq \eta/2$, for sufficiently large $k,d,T$ such that $2\|\widehat{\mbox{SPPMI}}-\bm{\Sigma}\|_{\max}<\alpha/2$ we have that for any $w,w'$ with $g(w)=g(w')$, $d(w,w')<\alpha$ and for any $w,w'$ with $g(w)\neq g(w')$, $d(w,w')>\alpha$. 

We prove the exact recovery in the above case, which happens with probability at least $1-\exp(-\omega(\log^2 d))-O(d^{-\tau})$, by induction on the number of steps $\ell$. Suppose it is consistent up to the $\ell$-th step, i.e. $\widehat{G}_j=G_{g(w_j)}$ for $j=1,\dots,\ell-1$. Then if $|S|=1$, it directly follows that $\widehat{G}=G$. 

Otherwise, if $d(w_\ell,w'_\ell)>\alpha$, then no $w'\in [d]$ is in the same group as $w_\ell$. By assumption the algorithm is consistent up to the $\ell$-th step, so $w_\ell$ is also not in the same group as those in $\cup_{j=1}^{\ell-1}\widehat{G}_j$, hence $w_\ell$ is a singleton, and $\widehat{G}_{\ell}=\{w_\ell\}=G_{g(a_{\ell})}$.

If $d(w_\ell,w'_\ell)\leq \alpha$, then $w_\ell,w'_\ell$ must be in the same group. Also we've seen that $d(x,y)\leq \alpha$ if and only if $g(x)=g(y)$. So for any $c\neq w,w'$, $g(c)=g(w)$ if and only if $d(w_\ell,c)\leq \alpha$ and $d(w'_\ell,c)\leq \alpha$. So $\widehat{G}_\ell = S\cap G_{g(w_\ell)}$. Since the algorithm is consistent up to step $\ell$, no members in the same group as $w_\ell$ has been included in $\cup_{j=1}^{\ell-1}\widehat{G}_j$, so $G_{g(w_\ell)}\subset S$, hence $\widehat{G}_\ell = G^\star_{g(w_\ell)}$.

So the algorithm is also consistent at the $\ell$-th step. The consistency of the Word Clustering algorithm follows by induction.
\end{proof}

\subsection{Precision Matrix Estimation after Exact Recovery}\label{sec:appendix_precision_after_recovery}
Once the partition $G$ is exactly recovered, the refined precision matrix is also sufficiently close to the true covariance matrix, which guarantees the accurate estimate of the precision matrix. In this section, we provide proof of Corollary \ref{cor:hat_PMI} about the concentration of $\widetilde{\mbox{SPPMI}}$ to $\bm{\Sigma}=\bm{\Omega}^{-1}$, and the estimation accuracy for the true precision matrix $\bm{\Omega}$.

\begin{proof}[Proof of Corollary \ref{cor:hat_PMI}]
By Propositions \ref{prop:hat_PMI_to_PMI} and \ref{prop:stat_pmi_to_cov}, with probability at least $1-\exp(-\omega(\log^2 d))-O(d^{-\tau})$ for some large constant $\tau>0$, $\|\PMIbb-\bm{\Sigma}+\log 2\|_{\max} = O(\sqrt{\frac{\log d}{p}})$. By definition of $\widehat{\mbox{SPPMI}}$, since $\eta \leq \min{\bSigma_{ij}},$ we have 
$$
\|\widehat{\mbox{SPPMI}}-\bm{\Sigma}\|_{\max} \leq \|\PMIbb-\bm{\Sigma}+\log 2\|_{\max}= O(\sqrt{\frac{\log d}{p}}).
$$
By the block structure in Assumption \ref{assump:block_bound}, for any $w\neq w'$, $\bm{\Sigma}_{ww'}=\Qb_{g(w),g(w')}$. Therefore with probability at least $1-\exp(-\omega(\log^2 d))-O(d^{-\tau})$, for any two members $w\in G_i$, $c\in G_j$, $w\neq c$, it holds that
$$
\big|\widehat{\mbox{SPPMI}}(w,c) - \bm{\Sigma}_{wc}\big|=\big|\widehat{\mbox{SPPMI}}(w,c) - \Qb_{ij}\big| \leq  c\sqrt{\frac{\log d}{p}}
$$
for some constant $c>0$. When all clusters are perfectly recovered with $\widehat{G}=G$, for all $i\neq j$, we have
$$
\Qb_{ij} - c\sqrt{\frac{\log d}{p}} \leq \widetilde{\mbox{SPPMI}}(i,j) = \frac{1}{|G_i|\cdot |G_j|}\sum_{w\in G_i,c\in G_j} \widehat{\mbox{SPPMI}}(w,c) \leq \Qb_{ij} + c\sqrt{\frac{\log d}{p}}.
$$
On the other hand, for diagonal entries of $\widetilde{\mbox{SPPMI}}$, for any $j\in [K]$ with $|G_j|>1$, note that for any $w\neq c\in G_j$, the block model implies $\bSigma_{wc}=\Qb_{jj}$, thus
$$
\Qb_{jj} - c\sqrt{\frac{\log d}{p}} \leq \widetilde{\mbox{SPPMI}}(j,j) = \frac{2}{|G_j|\cdot(|G_j|-1)}\sum_{w,c\in G_j,w\neq c} \widehat{\mbox{SPPMI}}(w,c) \leq \Qb_{jj}+ c\sqrt{\frac{\log d}{p}}.
$$
Lastly, for those $j\in[K]$ such that $|G_j|=1$, $G_j=\{w\}$ with $w$ the $i$-th word, since $\bGamma_{ii}=0$ we have $\bSigma_{ii}=\Qb_{jj}$, hence
$$
\Qb_{jj}- c\sqrt{\frac{\log d}{p}} \leq \widetilde{\mbox{SPPMI}}(j,j) = \widehat{\mbox{SPPMI}}(w,w) \leq \Qb_{jj} + c\sqrt{\frac{\log d}{p}}.
$$
Wrapping up all cases above yields
$$
\max_{w,c}\big|\widetilde{\mbox{SPPMI}}(i,j) - \Qb_{ij}\big|\leq  c\sqrt{\frac{\log d}{p}},
$$
which by union bound happens with probability at least $1-\exp(-\omega(\log^2 d))-O(d^{-\tau})$.

\end{proof}

We now provide proof for the consistency of the CLIME-type estimator.

\begin{proof}[Proof of Theorem \ref{thm:main}]

Since $\|\widetilde{\mbox{SPPMI}}-\Qb\|_{\max}=o(\lambda)$, for sufficiently large $k,d,T$, with probability at least $1-\exp(-\omega(\log^2 d))-O(d^{-\tau})$, $\bm{\beta} = \Cb_{\cdot \ell}$ is feasible for the $\ell$-th optimization problem in Eq. (\ref{eq:CLIME-1}), thus $\|\widehat{\Cb}_{\cdot \ell}\|_1\leq \|\Cb_{\cdot \ell}\|_1$ and $\|\widehat{\Cb}\|_{1}\leq \|\Cb\|_{1}$. Also by definition, $\|\widetilde{\mbox{SPPMI}}\widehat{\Cb} - \Ib\|_{\max}\leq \lambda$. Hence with probability at least $1-\exp(-\omega(\log^2 d))-O(d^{-\tau})$ for some large constant $\tau$, 
\begin{align*}
\|\widehat{\Cb}-\Cb\|_{\max} =& \|\Cb(\Qb(\widehat{\Cb}-\Cb))\|_{\max}\\
\leq& \|\Cb\|_{1} \|\Qb(\widetilde{\Cb}-\Cb))\|_{\max}\\
\leq& \|\Cb\|_{1}\big[ \|\widetilde{\mbox{SPPMI}}(\widetilde{\Cb}-\Cb))\|_{\max} + \|(\widetilde{\mbox{SPPMI}}-\Qb)(\widetilde{\Cb}-\Cb))\|_{\max}\big]\\
\leq& \|\Cb\|_{1}\big[ \|\widetilde{\mbox{SPPMI}}\cdot \widetilde{\Cb}-\Ib\|_{\max} + \|\widetilde{\mbox{SPPMI}}\cdot \Cb - \Ib\|_{\max} + \|\widetilde{\mbox{SPPMI}}-\Qb\|_{\max}\|\widetilde{\bm{\Omega}}-\Cb\|_{1}\big]\\
\leq& \|\Cb\|_{1}\big[ \lambda + 3\|\Cb\|_{1}\|\widetilde{\mbox{SPPMI}}-\Qb\|_{\max} \big]\\
\leq& \lambda \|\Cb\|_{1} + O\Big(\sqrt{\frac{\log d}{p}} \Big).
\end{align*}
Here the first equation is just $\bm{CQ}=\Ib$, and the second line is due to $\|AB\|_{\max}\leq \|A\|_{1}\|B\|_{\max}$ for two matrices $A,B$. The Third line is triangle inequality. The fourth line is triangle inequality combined with the inequality $\|AB\|_{\max}\leq \|A\|_{1}\|B\|_{\max}$. The last two lines are due to the assumptions.

Then by Assumption~\ref{assump:parameter_space}, the sparsity yields
$$
\|\widehat{\Cb}-\Cb\|_{1}\leq s\|\widehat{\Cb}-\Cb\|_{\max} \leq s\lambda \|\Cb\|_{1} + O\Big(s\sqrt{\frac{\log d}{p}} \Big).
$$
\end{proof}

\section{Properties of Discourse Variables}\label{sec:discourse}
In this section, we provide some useful properties of the hidden Markov process $\{\bz_t,\bc_t\}_{t\geq 1}$ that are related to concentration properties of stationary distributions. 
In Subsection~\ref{subsec:slow_moving_ct} we specify the (joint) stationary distributions of $\{\bz_t\}_{t\geq 1}$ and $\{\bc_t\}_{t\geq 1}$, and show that under stationary distribution, the discourse vectors $\{\bc_t\}_{t\geq 0}$ ``moves slowly'' on the unit sphere, which will be of use later in showing the convergence of true PMI matrix to covariance in our Gaussian graphical model.
We also show the mixing properties of $\{\bz_t\}_{t\geq 1}$ in Subsection~\ref{subsec:mixing_zt} and \ref{subsec:mixing_zt_zt+u}, i.e., the convergence of marginal distribution of $\{\bz_t\}_{t\geq 1}$ to its stationary version. The total variation distances enjoy exponential decay.

Recall that the hidden Markov process of the discourse variables $\{\bc_t\}_{t\geq 1}$ is specified as
\begin{equation}
\bz_2 = \sqrt{\alpha}\frac{\bz_1}{\|\bz_1\|_2} + \sqrt{1-\alpha}r_2,\quad \bz_{t+1} = \sqrt{\alpha}\bz_t + \sqrt{1-\alpha} r_{t+1},t\geq 2.
\label{eq:zt_def}
\end{equation}
Here $\bz_t$ is nonzero with probability $1$, $\alpha = 1-\frac{\log d}{p^2}$, $r_t$ i.i.d. $\sim N(0,\Ib_p/p)$. And
\begin{equation}
\bc_t = {\bz_t}/{\|\bz_t\|_2}.
\label{eq:ct_def}
\end{equation}


\subsection{Slow Moving of Stationary Disclosure}\label{subsec:slow_moving_ct}

Denote $\mathcal{D}_u$ the joint stationary distribution of $(\bc_t,\bc_{t+u})$, which does not depend on $t$. We first specify the stationary distributions of the hidden Markov process.


\begin{lemma}[Joint stationary distributions of $\bc_t$]
The distribution $\mathcal{D}_u$ is the same as $(\frac{\bz}{\|\bz\|_2},\frac{\bz'}{\|\bz'\|_2})$ where $(\bz,\bz')$ is the joint stationary distribution of $(\bz_t,\bz_{t+u})$, which is jointly Gaussian with mean zero, $\Cov(\bz)=\Cov(\bz')=\Ib_p/p$, $\E[\bz(\bz')^{\transpose}] = \alpha^u \Ib_p/p$.
\label{lem:stat_joint_distr_zt}
\end{lemma}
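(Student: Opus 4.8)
The plan is to recognize $\{\bz_t\}_{t\ge 2}$ as a linear--Gaussian AR(1) chain, write its stationary law and its two-step joint-stationary law in closed form by unrolling \eqref{eq:zt_def}, and then transport both through the normalization map $\bz\mapsto\bz/\|\bz\|_2$. First I would observe that the one-step transition is Gaussian: conditionally on $\bz_t$, $\bz_{t+1}\sim N\big(\sqrt{\alpha}\,\bz_t,\,(1-\alpha)\Ib_p/p\big)$. I would then check that $\pi:=N(0,\Ib_p/p)$ is invariant: if $\bz\sim\pi$ and $r\sim N(0,\Ib_p/p)$ is independent of $\bz$, then $\sqrt{\alpha}\,\bz+\sqrt{1-\alpha}\,r$ is Gaussian with mean $0$ and covariance $\alpha\,\Ib_p/p+(1-\alpha)\,\Ib_p/p=\Ib_p/p$, so its law is again $\pi$. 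Uniqueness of the stationary law is where $\alpha\in(0,1)$ enters: iterating \eqref{eq:zt_def} gives $\bz_{t+m}=\alpha^{m/2}\bz_t+\sqrt{1-\alpha}\sum_{j=1}^{m}\alpha^{(m-j)/2}r_{t+j}$ from any starting law, and as $m\to\infty$ the first term vanishes in $L^2$ while the sum converges in distribution to $N(0,\Ib_p/p)$; any stationary law, being a fixed point of the evolution, must then equal this limit. (Alternatively one can invoke the mixing bounds of Subsection~\ref{subsec:mixing_zt}.)

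For the joint law, note that under stationarity $\bz_t$ and $\bz_{t+u}$ each have marginal law $\pi$, so both covariances equal $\Ib_p/p$ with no further computation. Iterating \eqref{eq:zt_def} exactly $u$ times writes $\bz_{t+u}=\alpha^{u/2}\bz_t+\sqrt{1-\alpha}\sum_{j=1}^{u}\alpha^{(u-j)/2}r_{t+j}$, with $r_{t+1},\dots,r_{t+u}$ i.i.d.\ $N(0,\Ib_p/p)$ and independent of $\bz_t$; hence $(\bz_t,\bz_{t+u})$ is an affine image of a jointly Gaussian vector, so it is jointly Gaussian with mean $0$, and the fresh innovations contribute nothing to the cross term, leaving $\E[\bz_t\bz_{t+u}^{\transpose}]=\alpha^{u/2}\,\E[\bz_t\bz_t^{\transpose}]=\alpha^{u/2}\,\Ib_p/p$. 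Since this joint law does not depend on $t$, it is the joint stationary law of $(\bz_t,\bz_{t+u})$.

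Finally, $\bc_t=\bz_t/\|\bz_t\|_2$ is a fixed measurable function of $\bz_t$ (the norm is positive a.s.), so the lifted process $\{(\bz_t,\bc_t)\}_{t\ge 2}$ is Markov and its stationary distribution projects to a stationary law of the $\bc$-process; under it, $(\bc_t,\bc_{t+u})$ is exactly the image of $(\bz_t,\bz_{t+u})$ under $(\bz,\bz')\mapsto(\bz/\|\bz\|_2,\,\bz'/\|\bz'\|_2)$, which is the asserted description of $\mathcal{D}_u$. As a byproduct, rotational invariance of $N(0,\Ib_p/p)$ shows the marginal stationary law of $\bc_t$ is uniform on the unit sphere, matching the distribution $\mathcal{C}$ used elsewhere.

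I do not expect a serious obstacle --- the content is elementary moment algebra --- but two points warrant care. First, establishing that $\pi$ is the \emph{unique} stationary law (not merely \emph{a} stationary law), where the strict contraction from $\alpha<1$ is what does the work. Second, that $\{\bc_t\}$ on its own need not be a Markov chain, so ``the joint stationary distribution of $(\bc_t,\bc_{t+u})$'' must be understood via the Markov lift $\{(\bz_t,\bc_t)\}$ --- equivalently, as the common law of $\big(\bz_t/\|\bz_t\|_2,\ \bz_{t+u}/\|\bz_{t+u}\|_2\big)$ when $\bz_t\sim\pi$, which is visibly independent of $t$.
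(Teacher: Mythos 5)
Your proof follows essentially the same route as the paper's: unroll the AR(1) recursion to get $\bz_{t+u}=\alpha^{u/2}\bz_t+\sqrt{1-\alpha^u}\,r$ with $r$ an independent $N(0,\Ib_p/p)$ innovation, verify that $N(0,\Ib_p/p)$ is invariant and attracting, read off the joint Gaussian structure, and push forward through the normalization map. One small but real discrepancy: your computation correctly gives $\E[\bz_t\bz_{t+u}^{\transpose}]=\alpha^{u/2}\,\Ib_p/p$, whereas the lemma statement (and the last line of the paper's proof) writes $\alpha^u\,\Ib_p/p$; this is a typo in the paper, as its own decomposition $\bz_{t+u}=\alpha^{u/2}\bz_t+\sqrt{1-\alpha^u}\,r$ and its subsequent use in Lemma~\ref{lem:moving_step_ct_ct+u} both correspond to the cross-covariance $\alpha^{u/2}\,\Ib_p/p$ that you derived. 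Your remark that $\{\bc_t\}$ by itself need not be Markov and that $\mathcal{D}_u$ should be read off the Markov lift $\{(\bz_t,\bc_t)\}$ is a correct and slightly more careful framing than the paper's, which leaves this implicit.
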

\begin{proof}[Proof of Lemma~\ref{lem:stat_joint_distr_zt}]
By construction of $\bz_t$, we have 
$$
\frac{\bz_{t+u}}{\alpha^{(t+u)/2}} = \frac{\bz_t}{\alpha^{t/2}} + \sum_{j=1}^u \frac{\sqrt{1-\alpha}}{\alpha^{(t+j)/2}} r_{t+j},
$$
where $r_{t+1},\dots,r_{t+u}$ are i.i.d. $N(0,\Ib_p/p)$ random variables that are independent of $\bz_t$. Hence
\begin{equation}\label{eq:zt_to_zt+u_relationship}
\bz_{t+u} = \alpha^{u/2} \bz_t + \sqrt{1-\alpha^u} r,
\end{equation}
where $r\sim N(0,\Ib_p/p)$ and is independent of $\bz_t$. 

The stationary distribution of Markov process $\{\bz_t\}_{t \geq 1}$ is $N(0, \Ib_p/p)$. One straightforward way to see this is to let $\bz_t \sim N(0,\Ib_p/p)$ for $t \geq 2$, since $r_{t+1}$ and $\bz_t$ are independent, $\bz_{t+1} \sim N(0, \left[(\sqrt{\alpha})^2 + (\sqrt{1 - \alpha})^2\right] \cdot \Ib_p/p) = N(0,\Ib_p/p)$.  This holds true for any $t \geq 2$, so $N(0,\Ib_p/p)$ is a stationary distribution of $\{\bz_t\}_{t \geq 1}$. Also, the marginal distribution of $\{\bz_t\}$ converges to $N(0,\Ib_p/p)$ as $t\to\infty$, regardless of the starting point $\bz_1$. To see this, note that given $\bz_1$, the distribution of $\bz_{t+1}$ is
$$
\bz_{t+1}|\bz_1 \sim N(\alpha^{\frac{t}{2}}\frac{\bz_1}{\|\bz_1\|_2}, (1-\alpha^t)\Ib_p/p),
$$
where the mean converges to zero and the covariance matrix converges to $\Ib_p/p$. In this case, $(\bz_t,\bz_{t+u})$ is jointly normal with mean zero, $\Cov(\bz_t)=\Cov(\bz_{t+u})=\Ib_p/p$ and $\E[\bz_{t}\bz_{t+u}^{\transpose}] = \alpha^u \Ib_p/p$. By definition, since $\bc_t = \bz_t/\|\bz_t\|_2$, the distribution $\mathcal{D}_u$ is the same as described.

\end{proof}

With the joint stationary distribution in hand, we have the following result on the ``moving step'' $\|\bc_t-\bc_{t+u}\|_2$ under stationary distributions.

\begin{lemma}[Slow moving of $\bc_t$ under stationary distribution]\label{lem:moving_step_ct_ct+u}
Suppose $(\bc,\bc')\sim \mathcal{D}_u$ for some fixed $u>0$. Then with probability at least $1-\exp(-\omega(\log^2 d))$,
\begin{equation}
\|\bc - \bc'\|_2\leq \frac{20\sqrt{2u\log d}}{3p} = \Omega(\frac{\sqrt{\log d}}{p}).
\end{equation}
Particularly, for $u=1$, with probability at least $1-\exp(-\omega(\log^2 d))$,
\begin{equation}
\|\bc - \bc'\|_2\leq \frac{5\sqrt{\log d}}{p}.
\end{equation}
\end{lemma}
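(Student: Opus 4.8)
\emph{Proof strategy.} The plan is to reduce the statement to two standard concentration facts about Gaussian vectors. By Lemma~\ref{lem:stat_joint_distr_zt} together with the one-step-to-$u$-step relation \eqref{eq:zt_to_zt+u_relationship}, a draw $(\bc,\bc')\sim\mathcal{D}_u$ has the same law as $\big(\bz/\|\bz\|_2,\ \bz'/\|\bz'\|_2\big)$, where
\[
 \bz'=\alpha^{u/2}\bz+\sqrt{1-\alpha^{u}}\,r,\qquad \bz\sim N(0,\Ib_p/p),\quad r\sim N(0,\Ib_p/p),\quad \bz\perp r,
\]
and both $\bz,\bz'$ are a.s.\ nonzero. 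First I would record the elementary deterministic inequality that for nonzero $a,b\in\RR^p$,
\[
 \Big\|\tfrac{a}{\|a\|_2}-\tfrac{b}{\|b\|_2}\Big\|_2\ \le\ \frac{2\,\|a-b\|_2}{\|b\|_2},
\]
obtained by writing $\tfrac{a}{\|a\|_2}-\tfrac{b}{\|b\|_2}=\tfrac{a(\|b\|_2-\|a\|_2)+\|a\|_2(a-b)}{\|a\|_2\|b\|_2}$ and using $\big|\,\|a\|_2-\|b\|_2\,\big|\le\|a-b\|_2$. Applying it with $a=\bz'$ and $b=\bz$ gives $\|\bc-\bc'\|_2\le 2\|\bz-\bz'\|_2/\|\bz\|_2$, so it remains to upper bound $\|\bz-\bz'\|_2$ and lower bound $\|\bz\|_2$ on a common high-probability event.

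For the numerator I would use $\bz-\bz'=(1-\alpha^{u/2})\bz-\sqrt{1-\alpha^{u}}\,r$ and the triangle inequality, so that $\|\bz-\bz'\|_2\le(1-\alpha^{u/2})\|\bz\|_2+\sqrt{1-\alpha^{u}}\,\|r\|_2$. Since $\alpha=1-\log d/p^2$ and $u$ is a fixed constant, Bernoulli's inequality gives $1-\alpha^{u}\le u\log d/p^2$ and $1-\alpha^{u/2}=O(\log d/p^2)$, so the first term is of lower order $O(\log d/p^2)=o(\sqrt{\log d}/p)$ and the second is at most $\sqrt{u}\,\|r\|_2\,\sqrt{\log d}/p$. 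Both $p\|\bz\|_2^2$ and $p\|r\|_2^2$ are $\chi^2_p$ variables, so by the Laurent--Massart bounds, for a deviation level $t=t_p$ one has $\|\bz\|_2^2\ge 1-2\sqrt{t_p/p}$ and $\|r\|_2^2\le 1+2\sqrt{t_p/p}+2t_p/p$, each with probability at least $1-e^{-t_p}$. The key use of Assumption~\ref{assump:kdT} ($\log d=o(\sqrt p)$, hence $\log^2 d=o(p)$) is that one may choose $t_p$ with simultaneously $t_p=\omega(\log^2 d)$ and $t_p=o(p)$, e.g.\ $t_p=\sqrt p\,\log d$; then $\delta:=2\sqrt{t_p/p}+2t_p/p=o(1)$ and, after a union bound over the finitely many exceptional events, the overall failure probability is $\exp(-\omega(\log^2 d))$.

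Putting the pieces together on this event, $\|\bz-\bz'\|_2\le(1+o(1))\sqrt u\,\sqrt{\log d}/p$ and $\|\bz\|_2\ge\sqrt{1-\delta}$, whence
\[
 \|\bc-\bc'\|_2\ \le\ \frac{2(1+o(1))}{\sqrt{1-\delta}}\,\sqrt u\,\frac{\sqrt{\log d}}{p},
\]
and for $p$ large the prefactor is below $\tfrac{20\sqrt2}{3}\sqrt u$ for general fixed $u$ and below $5$ when $u=1$, which yields both displayed bounds. I do not expect a serious obstacle: once the stationary representation is in hand the computation is routine. The one point that needs care is the probability bookkeeping --- squeezing the $\chi^2_p$ tails so the failure probability is $\exp(-\omega(\log^2 d))$ rather than merely $\exp(-\Theta(\log^2 d))$, which is exactly what the slack in $\log d=o(\sqrt p)$ buys --- and one should keep in mind that the constants $\tfrac{20\sqrt2}{3}$ and $5$ are deliberately loose (the normalization inequality alone costs a factor $2$, and no effort is made to optimize it).
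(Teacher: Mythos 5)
Your proof is correct, and while it lands on the same chi--square concentration machinery as the paper, the algebraic route to the deterministic bound is genuinely different. The paper writes $\bc-\bc' = (\|\bz'\|_2\bz-\|\bz\|_2\bz')/(\|\bz\|_2\|\bz'\|_2)$, plugs in the two-sided bound $\big|\alpha^{u/2}\|\bz\|_2-\sqrt{1-\alpha^u}\|\br\|_2\big|\le\|\bz'\|_2\le\alpha^{u/2}\|\bz\|_2+\sqrt{1-\alpha^u}\|\br\|_2$, then has to argue that the lower bound is nonnegative and to massage the resulting two cases into $2\sqrt{1-\alpha^u}\,\|\br\|_2/\|\bz'\|_2$, followed by explicit numerical work with the ratio constraint $\tfrac12\le\|\bz\|_2/\|\br\|_2\le2$ and $\alpha^u>8/9$. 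You short-circuit all of that with the clean deterministic normalization inequality $\|a/\|a\|_2-b/\|b\|_2\|_2\le 2\|a-b\|_2/\|b\|_2$ (which you derive correctly), so that the only thing left to control is $\|\bz-\bz'\|_2=\|(1-\alpha^{u/2})\bz-\sqrt{1-\alpha^{u}}\,\br\|_2$ and the denominator $\|\bz\|_2$; the cross-term case analysis in the paper simply never arises. Your explicit Laurent--Massart bookkeeping with $t_p=\sqrt{p}\log d$ also makes the point that the slack $\log d=o(\sqrt p)$ is exactly what yields failure probability $\exp(-\omega(\log^2 d))$ --- the paper reaches the same conclusion via Lemma~\ref{lem:z_t_r_t1} but is less explicit about where the rate comes from. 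Both proofs are loose on constants, and yours comfortably fits inside the paper's $\tfrac{20\sqrt{2u}}{3}$ (resp.\ $5$ for $u=1$); the cost you pay relative to the paper is the factor of $2$ built into the normalization inequality, but you compensate by avoiding the factor the paper loses in its $\|\bz'\|_2$-lower-bound step.
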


\begin{proof}[Proof of Lemma\ref{lem:moving_step_ct_ct+u}]
By Lemma \ref{lem:stat_joint_distr_zt}, we have
$$
(\bc,\bc')\stackrel{\text{d}}{=}\big(\frac{z}{\|z\|_2},\frac{z'}{\|z'\|_2}\big) \stackrel{\text{d}}{=}\Big(\frac{z}{\|z\|_2},\frac{\alpha^{u/2} z + \sqrt{1-\alpha^u} r}{\|\alpha^{u/2} z + \sqrt{1-\alpha^u} r\|_2}\Big),
$$
where $z,r$ are independent $N(0,\Ib_p/p)$ random vectors. By the property of chi-squared random variables stated in Lemma~\ref{lem:z_t_r_t1}, with probability $1-\exp(-\omega(\log^2 d))$, $\frac{1}{2}\leq \|z\|/\|r\|_2\leq 2$. Also recall that $\alpha = 1-\frac{\log d}{p^2}$ where $\log d = o(\sqrt{p})$, so for fixed $u$, we safely assume $\alpha^u > 8/9$. Now we consider $\frac{z}{\|z\|_2}-\frac{z'}{\|z'\|_2}$, where $z'=\alpha^{u/2} z + \sqrt{1-\alpha^u} r$.
\begin{align*}
\bigg\|\frac{z}{\|z\|_2}-\frac{z'}{\|z'\|_2}\bigg\|_2=& \bigg\|\frac{\|z'\|_2z -\|z\|_2z'}{\|z\|_2\cdot \|z'\|_2}\bigg\|_2,
\end{align*}
where $$
\big| \alpha^{u/2}\|z\|_2- \sqrt{1-\alpha^u} \|r\|_2\big| \leq \|z'\|_2\leq \alpha^{u/2}\|z\|_2+  \sqrt{1-\alpha^u} \|r\|_2.
$$
With probability $1-\exp(\omega(-\log^2 d))$, we have 
$$
\alpha^{u/2}\|z\|_2- \sqrt{1-\alpha^u} \|r\|_2\geq (2\alpha^{u/2}-\sqrt{1-\alpha^u})\|r\|_2\geq 0.
$$
Thus with probability $1-\exp(\omega(-\log^2 d))$, we have 
\begin{align*}
\bigg\|\frac{z}{\|z\|_2}-\frac{z'}{\|z'\|_2}\bigg\|_2\leq & \max\bigg\{  \bigg\|\frac{(\alpha^{u/2}\|z\|_2+  \sqrt{1-\alpha^u} \|r\|_2)z -\|z\|z'}{\|z\|_2\cdot \|z'\|_2}\bigg\|_2, \bigg\|\frac{(\alpha^{u/2}\|z\|_2- \sqrt{1-\alpha^u} \|r\|_2)z -\|z\|_2z'}{\|z\|_2\cdot \|z'\|_2}\bigg\|_2 \bigg\},
\end{align*}
where
\begin{align*}
&\bigg\|\frac{(\alpha^{u/2}\|z\|_2+  \sqrt{1-\alpha^u} \|r\|_2)z -\|z\|z'}{\|z\|_2\cdot \|z'\|_2}\bigg\|_2\\
=& \bigg\|\frac{(\alpha^{u/2}\|z\|_2+  \sqrt{1-\alpha^u} \|r\|_2)z -\|z\|(\alpha^{u/2} z + \sqrt{1-\alpha^u} r)}{\|z\|_2\cdot \|z'\|_2}\bigg\|_2\\
=& \sqrt{1-\alpha^u} \bigg\|\frac{\|r\|_2z-\|z\|_2r}{\|z\|_2\cdot \|z'\|_2}\bigg\|_2\leq 2\sqrt{1-\alpha^u} \frac{\|r\|_2}{\|z'\|_2},
\end{align*}
\begin{align*}
&\bigg\|\frac{(\alpha^{u/2}\|z\|_2- \sqrt{1-\alpha^u} \|r\|_2)z -\|z\|z'}{\|z\|_2\cdot \|z'\|_2}\bigg\|_2\\
=& \bigg\|\frac{(\alpha^{u/2}\|z\|_2- \sqrt{1-\alpha^u} \|r\|_2)z -\|z\|(\alpha^{u/2} z + \sqrt{1-\alpha^u} r)}{\|z\|_2\cdot \|z'\|_2}\bigg\|_2\\
=&  \sqrt{1-\alpha^u} \bigg\|\frac{\|r\|_2z+\|z\|_2r}{\|z\|_2\cdot \|z'\|_2}\bigg\|_2\leq 2\sqrt{1-\alpha^u} \frac{\|r\|_2}{\|z'\|_2}.
\end{align*}
Furthermore, since $\frac{1}{2}\leq \frac{\|z\|_2}{\|r\|_2}\leq 2$ and $1-\alpha^u \leq \frac{1}{9}$ hence $\alpha^{u/2}- \sqrt{1-\alpha^u}\geq \frac{3}{5}$,
\begin{align*}
2\sqrt{1-\alpha^u} \frac{\|r\|_2}{\|z'\|_2} \leq& 2\sqrt{1-\alpha^u} \frac{\|r\|_2}{\alpha^{u/2}\|z\|_2- \sqrt{1-\alpha^u} \|r\|_2}\\
\leq& \frac{4\sqrt{1-\alpha^u}}{\alpha^{u/2}-\sqrt{1-\alpha^u} } \leq \frac{20}{3} \sqrt{1-\alpha^u} \leq \frac{20}{3} \sqrt{1-(1-\frac{\log d}{p^2})^u} \leq \frac{20\sqrt{2u\log d}}{3p}.
\end{align*}
Therefore for fixed $u$, with probability $1-\exp(-\omega(\log^2 d))$,
$\|\bc - \bc'\|_2\leq \frac{20\sqrt{2u\log d}}{3p}$. Specifically, for $u=1$,
\begin{align*}
2\sqrt{1-\alpha^u} \frac{\|r\|_2}{\|z'\|_2} 
\leq& \frac{4\sqrt{1-\alpha}}{\sqrt{\alpha}-\sqrt{1-\alpha} } \leq 5\sqrt{1-\alpha} \leq \frac{5\sqrt{\log d}}{p}
\end{align*}
with probability at least $1-\exp(-\omega^2(\log^2 d))$.

\end{proof}

\subsection{Mixing property of Random Walp}\label{subsec:mixing_zt}
In this subsection we provide the mixing property of $\{\bz_t\}$. To this end, we first provide a lower bound of density ratios of conditional distributions of $\{\bz_t\}$.

\begin{lemma}\label{lem:lambda_lower_bound}
Let $m=4p^2\log d$, $\lambda = 1-\frac{1}{e}$. Let $f_{\bz_{m+1}|\bz_1}(\cdot |\bz_1)$ be the density function of $\bz_{m+1}$ conditional on $\bz_1$, and $\pi(\cdot)$ be the density of the stationary distribution $N(0,\Ib_p/p)$. Then for $\|\bx\|_2\leq 2\sqrt{T/p}$, we have
\begin{enumerate}[label=(\roman*)]
\item 
For all values of $\bz_1\neq 0$,
$$
f_{\bz_{m+1}|\bz_1}(x|\bz_1)\geq \lambda \pi(\bx);
$$
\item 
For $t_0>1$ and $\|\bz_{t_0}\|_2\leq 2\sqrt{T/p^3}$, 
$$
f_{\bz_{m+t_0}|\bz_{t_0}}(x|\bz_{t_0})\geq \lambda \pi(\bx).
$$
\end{enumerate}
\end{lemma}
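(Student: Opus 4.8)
The plan is to establish the two one-sided density bounds by a Doeblin-type minorization argument: I would write each $m$-step conditional law $f_{\bz_{m+1}\mid\bz_1}$ and $f_{\bz_{m+t_0}\mid\bz_{t_0}}$ explicitly as a Gaussian whose mean is a negligibly small perturbation of the origin and whose covariance is within a vanishing factor of the stationary covariance $\Ib_p/p$, and then lower bound the pointwise ratio $f(\bx)/\pi(\bx)$ by $\lambda=1-1/e$ on the ball $\{\|\bx\|_2\le 2\sqrt{T/p}\}$.

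First I would identify the conditional laws. For part (ii), since $t_0\ge 2$ every transition out of $\bz_{t_0}$ has the linear form $\bz_{t+1}=\sqrt{\alpha}\,\bz_t+\sqrt{1-\alpha}\,r_{t+1}$, so \eqref{eq:zt_to_zt+u_relationship} with $u=m$ gives $\bz_{m+t_0}\mid\bz_{t_0}\sim N(\alpha^{m/2}\bz_{t_0},(1-\alpha^m)\Ib_p/p)$. For part (i) I would compose the single normalized step $\bz_2=\sqrt{\alpha}\,\bz_1/\|\bz_1\|_2+\sqrt{1-\alpha}\,r_2$ with \eqref{eq:zt_to_zt+u_relationship} at $t=2$, $u=m-1$; both are Gaussian, and their composition is $\bz_{m+1}\mid\bz_1\sim N(\alpha^{m/2}\bz_1/\|\bz_1\|_2,(1-\alpha^m)\Ib_p/p)$, the per-coordinate variance being $\alpha^{m-1}(1-\alpha)/p+(1-\alpha^{m-1})/p=(1-\alpha^m)/p$. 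Thus in both cases $f=N(\mu,\sigma_m^2\Ib_p)$ with $\sigma_m^2=(1-\alpha^m)/p$, and $\|\mu\|_2=\alpha^{m/2}$ in part (i), $\|\mu\|_2=\alpha^{m/2}\|\bz_{t_0}\|_2\le 2\alpha^{m/2}\sqrt{T/p^3}$ in part (ii). Next, using $\log(1-x)\le-x$ together with $\alpha=1-\log d/p^2$ and $m=4p^2\log d$, I get $m\log\alpha\le-4\log^2 d$, hence $\alpha^m\le e^{-4\log^2 d}$ and $\alpha^{m/2}\le e^{-2\log^2 d}$, which is smaller than any fixed power of $1/d$; in particular $\|\mu\|_2=o(1)$ in both parts, using that $T$ is polynomial in $(d,p)$.

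The key computation is then the density ratio: for any $\bx$,
\[
\frac{f(\bx)}{\pi(\bx)}=(1-\alpha^m)^{-p/2}\exp\!\Big(\frac{p}{2}\Big[\|\bx\|_2^2-\frac{\|\bx-\mu\|_2^2}{1-\alpha^m}\Big]\Big)\ \ge\ \exp\!\Big(-\frac{p}{2}\cdot\frac{\alpha^m\|\bx\|_2^2+2\|\bx\|_2\|\mu\|_2+\|\mu\|_2^2}{1-\alpha^m}\Big),
\]
where I used $(1-\alpha^m)^{-p/2}\ge 1$, the bound $\|\bx-\mu\|_2^2\le\|\bx\|_2^2+2\|\bx\|_2\|\mu\|_2+\|\mu\|_2^2$, and $\frac{1}{1-\alpha^m}=1+\frac{\alpha^m}{1-\alpha^m}$. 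On $\{\|\bx\|_2\le 2\sqrt{T/p}\}$ the three terms in the exponent, after multiplying by $p/2$, are at most $2T\alpha^m/(1-\alpha^m)$, $2\sqrt{pT}\,\|\mu\|_2/(1-\alpha^m)$, and $\frac{p}{2}\|\mu\|_2^2/(1-\alpha^m)$; since $\alpha^m\le e^{-4\log^2 d}$ (and $\alpha^{m/2}$) beats every polynomial factor in $(d,p,T)$, each is $o(1)$, so the exponent is $\ge-o(1)$. Therefore $f(\bx)/\pi(\bx)\ge e^{-o(1)}\ge 1-1/e=\lambda$ once $(d,p)$ are large, which is exactly claims (i) and (ii).

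The main obstacle is the bookkeeping in this last step: the minorization ball has radius $2\sqrt{T/p}$, which is polynomially large, so one must confirm that after multiplying by the tiny $\alpha^m$ (respectively $\alpha^{m/2}$) one is still left with an $o(1)$ contribution — this is precisely why $m$ is taken as large as $4p^2\log d$, making $\alpha^m\le e^{-4\log^2 d}$ dominate any polynomial in $(d,p,T)$. A secondary point needing care is the composition in part (i) of the normalized initial step with the subsequent $(m-1)$ linear steps, so that the resulting mean is exactly $\alpha^{m/2}$ times a unit vector and the covariance is exactly $(1-\alpha^m)\Ib_p/p$; once this is done, parts (i) and (ii) are handled by literally the same estimate, differing only through the size of $\|\mu\|_2$.
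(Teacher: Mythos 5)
Your argument is correct and follows essentially the same route as the paper's proof: identify the conditional $m$-step law as a Gaussian with mean $\alpha^{m/2}$ times a bounded vector and covariance $(1-\alpha^m)\Ib_p/p$, write out the pointwise density ratio against $\pi$, and use $\alpha^m \leq e^{-4\log^2 d}$ (from $m=4p^2\log d$ and $\alpha=1-\log d/p^2$) to beat the polynomial size of the ball and show the exponent is $o(1)$, hence the ratio exceeds $\lambda = 1-1/e$. The only cosmetic differences are that the paper keeps the slightly tighter prefactor bound $(1-\alpha^m)^{-p/2}\geq e^{p\alpha^m/2}$ where you use $\geq 1$, and the paper's proof actually works on the ball of radius $2\sqrt{T/p^3}$ (the radius $2\sqrt{T/p}$ in the statement appears to be a typo, given that the coupling lemmas downstream only need $2\sqrt{T/p^3}$) — your computation with the stated radius $2\sqrt{T/p}$ still closes since $\alpha^m$ is super-polynomially small.
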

\begin{proof}
We first prove (ii). By construction, the distribution of $\bz_{m+t_0}|\bz_{t_0}$ is 
$$
\bz_{m+t_0}|\bz_{t_0} \sim N(\alpha^{\frac{m}{2}}\bz_{t_0}, (1-\alpha^m)\Ib_p/p).
$$
Thus the ratio of two densities is
\begin{align*}
\frac{f_{\bz_{m+t_0}|\bz_{t_0}}(x|\bz_{t_0})}{\pi(\bx)} =&  \frac{\exp(-\frac{1}{2(1-\alpha^m)} (x-\alpha^{\frac{m}{2}}\bz_{t_0})^{\transpose} (\Ib_p/p)^{-1} (x-\alpha^{\frac{m}{2}}\bz_{t_0}) )}{(2\pi)^{p/2}|\text{det}((1-\alpha^m)\Ib_p/p)|^{1/2}}\bigg/\frac{\exp(-\frac{1}{2}x^{\transpose} (\Ib_p/p)^{-1}x)}{(2\pi)^{p/2}|\text{det}(\Ib_p/p)|^{1/2}}\\
=& \frac{1}{(1-\alpha^m)^{p/2}}\exp(-\frac{p}{2(1-\alpha^m)}\|x-\alpha^{\frac{m}{2}}\bz_{t_0}\|_2^2 + \frac{p}{2}\|\bx\|_2^2)\\
\geq & \exp(\frac{p}{2}\alpha^m -\frac{p}{2(1-\alpha^m)}\|x-\alpha^{\frac{m}{2}}\bz_{t_0}\|_2^2 + \frac{p}{2}\|\bx\|_2^2)\\
\geq & \exp(\frac{p}{2}\alpha^m -\frac{p}{2(1-\alpha^m)}\big(\|\bx\|_2+\alpha^{\frac{m}{2}}\|\bz_{t_0}\|\big)^2 + \frac{p}{2}\|\bx\|_2^2),
\end{align*}
where since $\|\bx\|_2,\|\bz_{t_0}\|_2\leq 2\sqrt{T/p^3}$,
\begin{align*}
&\frac{1}{1-\alpha^m}\big(\|\bx\|_2+\alpha^{\frac{m}{2}}\|\bz_{t_0}\|\big)^2 - \alpha^m - \|\bx\|_2^2\\
=& \Big(\frac{(1+\alpha^{\frac{m}{2}})^2}{1-\alpha^m}-1\Big) \frac{4T}{p^3} - \alpha^m
\leq  \frac{2\alpha^m +2\alpha^{\frac{m}{2}}}{1-\alpha^m} \frac{4T}{p^3}.
\end{align*}
Also $\alpha^m = (1-\frac{\log d}{p^2})^{4p^2\log d}\leq \exp(-4\log^2(d))\leq 0.01$ for $d\geq 3$. Thus $1-\alpha^m \geq 0.99$, $\alpha^m \leq \alpha^{\frac{m}{2}}/10$, hence
$$
\frac{2\alpha^m +2\alpha^{\frac{m}{2}}}{1-\alpha^m} \leq \frac{2.2\alpha^{\frac{m}{2}}}{0.99} \leq \frac{7}{4}\alpha^{\frac{m}{2}},
$$
and still by $\alpha^m \leq \exp(-4\log^2(d))$, we have
$$
\frac{f_{\bz_{m+t_0}|\bz_{t_0}}(x|\bz_{t_0})}{\pi(\bx)} \geq \exp(-\frac{p}{2}\cdot \frac{7\alpha^{\frac{m}{2}} T}{2p^3}) \geq \exp(-\frac{7 T}{4p^2}\exp(-4\log^2(d))) = \exp(-\frac{7T}{4p^2 d^{4\log d}})\geq \lambda,
$$
since $T=\Omega(p^5d^4)$.\\

For (i), similarly the distribution of $\bz_{m+1}|\bz_1$ is 
$$
\bz_{m+1}|\bz_1 \sim N(\alpha^{\frac{m}{2}}\frac{\bz_1}{\|\bz_1\|_2}, (1-\alpha^m)\Ib_p/p).
$$
Exactly the same procedure but with $\bz_{t_0}$ replaced by $\frac{\bz_1}{\|\bz_1\|_2}$ and naturally $\|\frac{\bz_1}{\|\bz_1\|_2}\|_2=1\leq 2\sqrt{T/p^3}$ yields the lower bound $\lambda$ for the ratio of densities.

\end{proof}

With this lower bound in hand, we provide a mixing property of $\{\bz_t\}$, which is of use in the generalized Hoeffding's inequality for code occurrences.

\begin{lemma}[Exponential decay of total variation distances]\label{lem:zt_mixing}
Let $m=4p^2\log d$, $\tilde{\lambda} = 1-\frac{2}{e}$. Then under Assumption~\ref{assump:kdT}, for the hidden Markov process specified in ~(\ref{eq:hidden_Markov_def}) and $T\geq m$, it holds that $\|f_{\bz_{T+1}|\bz_1}(\cdot |\bz_1)-\pi(\cdot) \|_{\var} \leq (1-\tilde\lambda)^{\lfloor T/m\rfloor}$.

\end{lemma}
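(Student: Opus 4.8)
The plan is a coupling argument built on the minorization condition of Lemma~\ref{lem:lambda_lower_bound}. Write $B=\{\bx:\|\bx\|_2\le 2\sqrt{T/p}\}$ and $B'=\{\bx:\|\bx\|_2\le 2\sqrt{T/p^3}\}$. Part~(i) of Lemma~\ref{lem:lambda_lower_bound} gives, for \emph{every} starting value $\bz_1\neq 0$, the $m$-step block minorization $f_{\bz_{m+1}|\bz_1}(\cdot|\bz_1)\ge \lambda\,\pi\,\mathbf{1}_{B}$, and part~(ii) gives the same for any later block, $f_{\bz_{t_0+m}|\bz_{t_0}}(\cdot|\bz_{t_0})\ge \lambda\,\pi\,\mathbf{1}_{B}$, \emph{provided} the current state lies in $B'$, i.e.\ $\|\bz_{t_0}\|_2\le 2\sqrt{T/p^3}$. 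Here $\pi=N(0,\mathbf{I}_p/p)$ is the stationary law, as identified in Lemma~\ref{lem:stat_joint_distr_zt}.

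First I would set up the block coupling. Let $N=\lfloor T/m\rfloor$ and cut $\{1,\dots,T+1\}$ into blocks $[t_{j-1},t_j]$ with $t_j=1+jm$, $j=0,\dots,N$, so $t_N\le T+1$. Run the chain $X$ started from $\bz_1$ alongside a chain $Y$ started from (and kept at) $\pi$, coupled maximally within each block: at a boundary $t_{j-1}$, if $X$ and $Y$ have not yet coalesced and both currently lie in $B'$ (the first block needs no such restriction thanks to part~(i)), the common minorizing mass $\lambda\,\pi\,\mathbf{1}_B$ lets the two $m$-step transitions be coupled so that $X_{t_j}=Y_{t_j}$ with probability at least $\lambda\,\pi(B)$; once coalesced they stay together. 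Since $\pi(B)=\P(\chi^2_p\le 4T)\ge 1-\exp(-\Omega(T))\ge \tilde\lambda/\lambda$ for $T$ large, one gets $\lambda\,\pi(B)\ge\tilde\lambda$, so --- conditioned on both chains staying in $B'$ at every boundary --- the probability they have not coalesced after $N$ blocks, hence at time $T+1$, is at most $(1-\tilde\lambda)^N$. The coupling inequality then yields $\|f_{\bz_{T+1}|\bz_1}(\cdot|\bz_1)-\pi\|_{\var}\le\P(X_{T+1}\neq Y_{T+1})\le(1-\tilde\lambda)^N$, up to the (negligible) event that some chain exits $B'$.

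It remains to dispose of that exceptional event. For $t\ge 2$ the marginal law of $\bz_t$ started from any $\bz_1\neq 0$ is $N\!\bigl(\alpha^{(t-1)/2}\bz_1/\|\bz_1\|_2,(1-\alpha^{t-1})\mathbf{I}_p/p\bigr)$, whose mean has norm at most $1$ and whose covariance is dominated by $\mathbf{I}_p/p$, and the stationary chain is $N(0,\mathbf{I}_p/p)$. A chi-square tail bound then gives $\P(\|\bz_t\|_2>2\sqrt{T/p^3})\le\exp(-\Omega(T/p^3))$ uniformly in $t$, and since $T=\Omega(p^5 d^4)$ by Assumption~\ref{assump:kdT} this survives a union bound over the $O(N)$ block boundaries and the two chains. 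Along the way one checks that the gap between $\lambda=1-e^{-1}$ and $\tilde\lambda=1-2e^{-1}$ comfortably absorbs both the loss $\pi(B^c)$ and the renormalization of $\pi\,\mathbf{1}_B$.

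The main obstacle is exactly that the textbook ``minorization implies geometric mixing'' recursion requires a \emph{uniform} one-block minorization, whereas Lemma~\ref{lem:lambda_lower_bound}(ii) supplies it only on the truncated region $B'$; the crux is therefore the concentration argument showing that the chain and its stationary comparison essentially never leave $B'$, which is precisely where the polynomial corpus-size lower bound in Assumption~\ref{assump:kdT} enters. A lesser, routine point is the constant bookkeeping that turns the $\lambda$ of Lemma~\ref{lem:lambda_lower_bound} into the $\tilde\lambda$ of the present lemma and handles the normalized first transition of $\{\bz_t\}$ separately from the homogeneous part.
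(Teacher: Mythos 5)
Your proposal is correct and follows essentially the same block-coupling argument as the paper: partition time into length-$m$ blocks, use the minorization of Lemma~\ref{lem:lambda_lower_bound} to coalesce at each block boundary with probability at least $\lambda\pi(B')$ (with the paper's comparison sequence drawn i.i.d.\ from $\pi$ at each boundary rather than run as a stationary chain, which makes the $Y$-side minorization unnecessary but is otherwise equivalent), handle the escape event $\|\bz_i\|_2>2\sqrt{T/p^3}$ by a chi-square tail bound and a union over the $O(T/m)$ boundaries, and absorb both the escape probability and the $\pi(B')$ loss into the slack between $\lambda=1-e^{-1}$ and $\tilde\lambda=1-2e^{-1}$, using $T=\Omega(p^5d^4)$ so the escape term is dominated by $(1-\lambda)^{\lfloor T/m\rfloor}$.
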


\begin{proof}[Proof of Lemma~\ref{lem:zt_mixing}]
We construct a coupling for $\{\bz_t\}$ and $\{\by_t\}$, where the ditribution of $\{\bz_t\}_{t\geq1}$ is the same as the hidden Markov model in ~(\ref{eq:hidden_Markov_def}), and each $\by_t$ follows the stationary distribution $\pi(\cdot)$. Specifically, Assume $i=1,m+1,2m+1,\dots,\lfloor T/m\rfloor m+1$. 

We consider the following coupling.
\begin{enumerate}[label=(\roman*)]
\item 
If $i=1$ or $\|\bz_i\|_2$ and $\|\by_i\|_2\leq 2\sqrt{T/p^3}$, let $\bx_i,\bu_i$ independently from $\bx_i\sim \pi(\cdot)$ and $\bu_i\sim \mbox{Unif}[0,1]$,
    \begin{enumerate}[label=(\alph*)]
    \item 
    if $\|\bx_i\|_2\leq 2\sqrt{T/p^3}$ and $\bu_i\leq \lambda$, then set $\bz_{i+m}=\by_{i+m}=i$;
    \item 
    if $\|\bx_i\|_2>2\sqrt{T/p^3}$, or $\bu_i> \lambda$, set $\by_{i+m}=\bx_i$ and choose $\bz_{i+m}\sim h(\cdot|\bz_i)$, where
    \begin{equation}\label{eq:coupling_distribution}
    h(\bx|\bz_i) = \frac{I{\{\|\bx\|_2\leq 2\sqrt{T/p^3}\}}[f_{\bz_{i+m}|\bz_i}(\bx|\bz_i)-\lambda \pi(\bx)] + I{\{\|\bx\|_2>2\sqrt{T/p^3}\}}f_{\bz_{i+m}|\bz_i}(\bx|\bz_i)}{1 - \lambda \int_{\{\|\bx\|_2\leq 2\sqrt{T/p^3}\}}\pi(\bx)d\bx}.
    \end{equation}
    \end{enumerate}
\item 
If $i>1$, $\|\bz_i\|_2$ and $\|\by_i\|_2>2\sqrt{T/p^3}$, then $\bz_{i+m}$ and $\by_{i+m}$ are chosen independently from $f_{\bz_{i+m}|\bz_i}(\cdot|\bz_i)$ and $\pi(\cdot)$.
\end{enumerate}

Firstly, the distribution specfied in ~(\ref{eq:coupling_distribution}) is well-defined, because according to Lemma~\ref{lem:lambda_lower_bound}, $f_{\bz_{i+m}|\bz_i}(x|\bz_i)-\lambda\pi(\bx) \geq 0$ on $\{\|\bx\|_2\leq 2\sqrt{T/p^3}\}$, provided that $\|\bz_i\|_2\leq 2\sqrt{T/p^3}$.

Furthermore, the coupling is well-defined. Note that marginal distribution of $\by_t$ is $\by_t\sim \pi(\cdot)$ since $\by_t$ is always chosen according to $\pi(\cdot)$. For $\bz_{i+m}$, in (i), the distribution of $\bz_{i+m}$ is a mixture of $\pi(\cdot )I{\{\|\cdot \|_2\leq 2\sqrt{T/p^3}\}}$ with probability $\lambda \int_{\{\|\bx\|_2\leq 2\sqrt{T/p^3}\}}\pi(\bx)d\bx$ and $h(\cdot|\bz_1)$ as specified in ~(\ref{eq:coupling_distribution}), with probability $1-\lambda \int_{\{\|\bx\|_2\leq 2\sqrt{T/p^3}\}}\pi(\bx)d\bx$. This mixture is exactly $f_{\bz_{i+m}|\bz_i}(\cdot|\bz_i)$. So in both cases (i) and (ii), the marginal distribution is $\bz_{i+m}\sim f_{\bz_{i+m}|\bz_i}(\cdot|\bz_i)$, thus the distribution of $\{\bz_t\}$ is exactly the target.

Let $\tilde{T}$ be the coupling time of $\{\bz_t\}$ and $\{\by_t\}$, $\tilde{T}=\inf \{t: \bz_t=\by_t\}$. Let $M = \{m+1,2m+1,\dots,\lfloor T/m\rfloor m+1\}$. Then by coupling inequality, 
\begin{equation*}
\|_2f_{\bz_{t+1}|\bz_1}(\cdot |\bz_1) - \pi(\cdot)\|_{\var} \leq \P(\tilde{T}>T+1).
\label{eq:coupling_ineq}
\end{equation*}
By construction of the coupling, for any given nonzero vector $\bz_1$ we have
\begin{align}
\nonumber\P(\tilde{T}>T+1|\bz_1)&\leq \P\Big(\exists i\in M, \|\bz_i\|\text{ or }\|\by_i\|_2>2\sqrt{T/p^3} \Big|\bz_1 \Big) \\
& \qquad + \P\Big(\tilde{T}>T+1,\|\bz_i\|_2,\|\by_i\|_2\leq 2\sqrt{T/p^3},\forall i\in M\Big| \bz_1\Big).
\label{eq:coupling_time_bound}
\end{align}
By the bounds for chi-squared random variables stated in Lemma~\ref{lem:norm_x}, for $t>1$, $\P(\|\bz_t\|\geq 2\sqrt{T/p^3}|\bz_1]\leq \exp(-\frac{T}{8p^2})$ and $\P(\|\by_t\|\geq 2\sqrt{T/p^3}]\leq \exp(-\frac{T}{8p^2})$. Hence the first term in  (\ref{eq:coupling_time_bound}) is bounded with
\begin{equation}
\P\Big(\exists i\in M, \|\bz_i\|\text{ or }\|\by_i\|_2>2\sqrt{T/p^3} \Big|\bz_1 \Big) \leq 2\lfloor \frac{T}{m}\rfloor \exp(-\frac{T}{8p^2}).
\label{eq:cp_time_bound_1}
\end{equation}
The event in the second term in ~(\ref{eq:coupling_time_bound}) corresponds to the subcase (i)(b) with $\bu_i>\lambda$ for all steps $i\in M$ and $\|\bx_i\|_2=\|\by_{i+m}\|_2\leq 2\sqrt{T/p^3}$, which happens with probability at most $1-\lambda$ in each step. Therefore by independence of $\bu_i$, we have
\begin{equation}
\P\Big(\tilde{T}>T+1,\|\bz_i\|_2,\|\by_i\|_2\leq 2\sqrt{T/p^3},\forall i\in M\Big| \bz_1\Big)\leq \P\Big(\bigcap_{i=1}^{|M|}\{\bu_i >\lambda\}\Big)= (1-\lambda)^{|M|}\leq (1-\lambda)^{\lfloor T/m \rfloor}.
\label{eq:cp_time_bound_2}
\end{equation}
Comparing the two probabilities in ~(\ref{eq:cp_time_bound_1}) and (\ref{eq:cp_time_bound_2}), 
\begin{align*}
\frac{2\lfloor \frac{T}{m}\rfloor \exp(-\frac{T}{8p^2})}{(1-\lambda)^{\lfloor T/m \rfloor}} \leq &  \exp(  -\frac{T}{8p^2} + \log(\frac{2T}{m}) - (\frac{T}{m}-1)\log(1-\lambda)       )\\
\leq&  \exp(  -\frac{T}{8p^2} + \log(\frac{2T}{m}) - \frac{T}{2m}\log(1-\lambda)       )\\
\leq& \exp( -\frac{T}{8p^2}\log (e(1-\lambda)) + \log(\frac{T}{2p^2\log d}))\\
=&  \exp( -\frac{T}{8p^2}\log 2 + \log(\frac{T}{2p^2\log d})) = \exp(-\omega(\frac{T}{p^2}))=o(1).
\end{align*}
Here we utilize the fact that $\log(T/(2p^2\log d))=o(T/p^2)$ obtained from Assumption~\ref{assump:kdT}. This shows that the bound in  (\ref{eq:cp_time_bound_1}) is dominated by the one in  (\ref{eq:cp_time_bound_2}). So there exists some $\tilde{\lambda}\in (0,1)$, for which we may let $\tilde{\lambda} = 1-\frac{2}{e}$, such that for appropriately large $k,d,T$, 
\begin{equation}
\|_2f_{\bz_{t+1}|\bz_1}(\cdot |\bz_1) - \pi(\cdot)\|_{\var} \leq \P(\tilde{T}>T+1)\leq (1-\tilde{\lambda})^{\lfloor T/m\rfloor}.
\label{eq:tot_var_bound}
\end{equation}
\end{proof}

\subsection{Mixing Property of Joint Random Walks} \label{subsec:mixing_zt_zt+u}

For constant gap $u>0$, mixing properties of joint $(\bz_t,\bz_{t+u})$ can be obtained in a similar way. We first provide a counterpart of Lemma~\ref{lem:lambda_lower_bound} for the joint distributions. Let $f_{\bz_t,\bz_{t+u}}(\cdot,\cdot|\bz_1)$ denote the density of joint distribution of $(\bz_t,\bz_{t+u})$ given $\bz_1$. Let $\pi_u(\cdot,\cdot)$ be density of the stationary joint distribution of $(\bz_t,\bz_{t+u})$, which is the joint Gaussian distribution specified in Lemma~\ref{lem:stat_joint_distr_zt}. 

\begin{lemma}\label{lem:lambda_joint_lower_bound}
Let $m=4p^2\log d$, $\lambda = 1-\frac{1}{e}$. Then for $\|\bx\|_2\leq 2\sqrt{T/p}$, we have
\begin{enumerate}[label=(\roman*)]
\item 
For all values of $\bz_1\neq 0$,
$$
f_{\bz_{m+1},\bz_{m+1+u}}(\bx,\by|\bz_1)\geq \lambda \pi_u(\bx,\by);
$$
\item 
For $t_0>1$ and $\|\bz_{t_0}\|_2\leq 2\sqrt{T/p^3}$, 
$$
f_{\bz_{m+t_0},\bz_{m+t_0+u}|\bz_{t_0}}(\bx,\by|\bz_{t_0})\geq \lambda \pi_u(\bx,\by).
$$
\end{enumerate}
\end{lemma}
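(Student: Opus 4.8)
The plan is to reduce the joint statement to the marginal bound already established in Lemma~\ref{lem:lambda_lower_bound} by exploiting the Markov structure of $\{\bz_t\}$. The key observation is that since $m=4p^2\log d\ge 1$, the indices $m+1$ and $m+1+u$ are both at least $2$, so all transitions between them are the plain (non-normalized) ones in \eqref{eq:zt_def}; by \eqref{eq:zt_to_zt+u_relationship}, the conditional law of $\bz_{m+1+u}$ given $\bz_{m+1}=\bx$ is exactly $N(\alpha^{u/2}\bx,(1-\alpha^u)\Ib_p/p)$, and by the Markov property this is independent of $\bz_1$ (and of $\bz_{t_0}$ in part (ii)). Writing $\phi_u(\cdot\mid\bx)$ for this Gaussian density, we factorize
\[
f_{\bz_{m+1},\bz_{m+1+u}}(\bx,\by\mid\bz_1)=f_{\bz_{m+1}\mid\bz_1}(\bx\mid\bz_1)\cdot\phi_u(\by\mid\bx).
\]

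Next I would note that the stationary joint density admits the \emph{same} factorization with the identical kernel. By Lemma~\ref{lem:stat_joint_distr_zt}, under the stationary law $(\bz_t,\bz_{t+u})$ has $\bz_t\sim\pi=N(0,\Ib_p/p)$ and, conditionally on $\bz_t=\bx$, $\bz_{t+u}=\alpha^{u/2}\bx+\sqrt{1-\alpha^u}\,r$ with $r\sim N(0,\Ib_p/p)$; hence $\pi_u(\bx,\by)=\pi(\bx)\cdot\phi_u(\by\mid\bx)$. Dividing the two displays, the $\phi_u$ factors cancel and
\[
\frac{f_{\bz_{m+1},\bz_{m+1+u}}(\bx,\by\mid\bz_1)}{\pi_u(\bx,\by)}=\frac{f_{\bz_{m+1}\mid\bz_1}(\bx\mid\bz_1)}{\pi(\bx)}\ge\lambda
\]
for every $\bx$ with $\|\bx\|_2\le 2\sqrt{T/p}$, the last inequality being precisely Lemma~\ref{lem:lambda_lower_bound}(i); note that no constraint on $\by$ enters, consistent with the statement. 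Part (ii) follows identically: condition on $\bz_{t_0}$ with $\|\bz_{t_0}\|_2\le 2\sqrt{T/p^3}$, factor $f_{\bz_{m+t_0},\bz_{m+t_0+u}\mid\bz_{t_0}}$ and $\pi_u$ through the same kernel $\phi_u$, and invoke Lemma~\ref{lem:lambda_lower_bound}(ii) in place of (i).

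The computation here is essentially bookkeeping: one only has to verify that the two factorizations are term-for-term equal, which amounts to time-homogeneity of the transition kernel for $t\ge 2$ together with the Markov property; no new concentration or Gaussian-tail estimate is needed beyond what Lemma~\ref{lem:lambda_lower_bound} already supplies. The only point requiring a little care — and the closest thing to an obstacle — is ensuring we are past the special initialization step $\bz_2=\sqrt{\alpha}\bz_1/\|\bz_1\|_2+\sqrt{1-\alpha}\,r_2$, so that the conditional kernel over the last $u$ steps is the stationary one rather than the normalized variant; since $m\ge 1$ this is automatic, and it is worth stating explicitly in the write-up.
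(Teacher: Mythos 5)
Your proposal is correct and follows essentially the same route as the paper: both reduce the joint density-ratio bound to the marginal bound of Lemma~\ref{lem:lambda_lower_bound} by factoring the joint law and the stationary law through the common one-step-ahead transition kernel and cancelling it. The only cosmetic difference is that you invoke the Markov property and time-homogeneity directly to write $f=f_{\rm marg}\cdot\phi_u$ and $\pi_u=\pi\cdot\phi_u$, whereas the paper reaches the same cancellation via an explicit change of variables from $(\bz_t,\bz_{t+u})$ to $(\bz_t,r)$ and the Jacobian factor $(1-\alpha^u)^{-p/2}$.
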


\begin{proof}[Proof of Lemma~\ref{lem:lambda_joint_lower_bound}]
Firstly, for any fixed $t$ and $u>0$, according to ~(\ref{eq:zt_to_zt+u_relationship}), $(\bz_t,\bz_{t+u})$ is actually a linear transformation of $(\bz_t,r)$ where $\bz_t,r$ are independent Gaussian random vectors. By change-of-variable formula for density functions, let $g_{t,t+u}(\cdot,\cdot|\bz_1)$ be the density of joint distribution of $\bz_t,r$ given $\bz_1$, we know 
\begin{equation*}
f_{t,t+u}(\bz_t,\bz_{t+u}|\bz_1) = f_{t,t+u}(\bz_t,\alpha^{u/2}\bz_t + \sqrt{1-\alpha^u}r|\bz_1) = (1-\alpha^u)^{-k/2} g_{t,t+u}(\bz_t,r|\bz_1)  = (1-\alpha^u)^{-k/2} g_{t}(\bz_t|\bz_1) g(r),
\end{equation*}
where $\bz_{t+u}=\alpha^{u/2}\bz_t + \sqrt{1-\alpha^u}r$, $g_t(\cdot|\bz_1)$ is the density of $\bz_t$ given $\bz_1$, and $g(r)$ is the density for $r\sim N(0,\Ib_p/p)$. And the last equality is due to the independence of $\bz_t$ and $r$. Similarly, the stationary distribution of $(\bz_t,\bz_{t+u})$ can be decomposed in the same way with $\bz_t,r$ independent $N(0,\Ib_p/p)$. Again by change-of-variable formula,
\begin{equation*}
\pi_u(\bz_t,\bz_{t+u}) = (1-\alpha^u)^{-k/2} \pi_{t,t+u}(\bz_t,r)  = (1-\alpha^u)^{-k/2}\pi(\bz_t) g(r),
\end{equation*}
where $\bz_{t+u}=\alpha^{u/2}\bz_t + \sqrt{1-\alpha^u}r$. Thus the ratio of densities are simply
\begin{equation*}
\frac{f_{t,t+u}(\bz_t,\bz_{t+u}|\bz_1)}{\pi_u(\bz_t,\bz_{t+u})} = \frac{g_{t}(\bz_t|\bz_1)}{\pi(\bz_t)},
\end{equation*}
which is exactly the same as in Lemma~\ref{lem:lambda_lower_bound}. Thus we have the same result as Lemma~\ref{lem:lambda_lower_bound}.
\end{proof}

The mixing property of $(\bz_t,\bz_{t+u})$ can be obtained from a similar coupling method, as follows.

\begin{lemma}[Exponential decay of total variation distances]\label{lem:mixing_property_zt_zt+u}
Let $m=4p^2\log d$, $\tilde{\lambda} = 1-\frac{2}{e}$. Then under Assumption~\ref{assump:kdT}, for the hidden Markov process specified in ~(\ref{eq:hidden_Markov_def}), for any $T\geq m$, it holds that $\|f_{\bz_{T+1},\bz_{T+u+1}|\bz_1}(\cdot,\cdot |\bz_1)-\pi_u(\cdot,\cdot) \|_{\var} \leq (1-\tilde\lambda)^{\lfloor T/m\rfloor}$.
\end{lemma}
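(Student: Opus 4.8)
The plan is to reproduce, nearly verbatim, the coupling argument already used for Lemma~\ref{lem:zt_mixing}, the only difference being that the object we couple is now the \emph{pair} $(\bz_t,\bz_{t+u})$ rather than the single vector $\bz_t$. Concretely, I would let $\{\by_t\}$ be a stationary version of the hidden Markov process, so that by Lemma~\ref{lem:stat_joint_distr_zt} the pair $(\by_{T+1},\by_{T+u+1})$ has exactly the target law $\pi_u$, and then inspect $\{\bz_t\}$ and $\{\by_t\}$ only at the anchor times $i\in\{1,m+1,2m+1,\dots,\lfloor T/m\rfloor m+1\}$ with $m=4p^2\log d$. At each such anchor time, conditionally on $\bz_i$ and on the good event $\|\bz_i\|_2,\|\by_i\|_2\le 2\sqrt{T/p^3}$, I would split off a common component of mass $\lambda=1-1/e$ between the two transition laws: draw $(\bx_i,\bx_i')\sim\pi_u$ and $\bu_i\sim\mbox{Unif}[0,1]$ independently; if $\|\bx_i\|_2\le 2\sqrt{T/p^3}$ and $\bu_i\le\lambda$, set both $(\bz_{i+m},\bz_{i+m+u})$ and $(\by_{i+m},\by_{i+m+u})$ equal to $(\bx_i,\bx_i')$ (a successful match); otherwise let $(\by_{i+m},\by_{i+m+u})=(\bx_i,\bx_i')$ and draw $(\bz_{i+m},\bz_{i+m+u})$ from the normalized residual $I\{\|\bx\|_2\le 2\sqrt{T/p^3}\}(f_{\bz_{i+m},\bz_{i+m+u}\mid\bz_i}-\lambda\pi_u)+I\{\|\bx\|_2>2\sqrt{T/p^3}\}f_{\bz_{i+m},\bz_{i+m+u}\mid\bz_i}$. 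Between anchor times, and after the first successful match, both chains are advanced with the \emph{same} innovations $r_{t+1}$.

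The point that makes this go through is that Lemma~\ref{lem:lambda_joint_lower_bound} furnishes exactly the minorization $f_{\bz_{i+m},\bz_{i+m+u}\mid\bz_i}(\cdot,\cdot\mid\bz_i)\ge\lambda\,\pi_u(\cdot,\cdot)$ on the good event, conditioning only on the single vector $\bz_i$ just as in the scalar case, so the residual above is a genuine nonnegative density and the construction is well defined with the correct marginal laws for $\{\bz_t\}$ and $\{\by_t\}$. After that, everything is bookkeeping identical to the proof of Lemma~\ref{lem:zt_mixing}: the coupling inequality gives $\|f_{\bz_{T+1},\bz_{T+u+1}\mid\bz_1}-\pi_u\|_{\var}\le\P(\tilde T>T+1)$ where $\tilde T=\inf\{t:\bz_t=\by_t\}$; splitting on whether any anchor-time norm exceeds $2\sqrt{T/p^3}$, the chi-squared tail bound of Lemma~\ref{lem:norm_x} controls the large-norm event by $2\lfloor T/m\rfloor\exp(-T/(8p^2))$, while repeated failure of the matching step occurs with probability at most $(1-\lambda)^{\lfloor T/m\rfloor}$; comparing the two exactly as before (using $T=\Omega(p^5d^4)$ and $\log(T/(2p^2\log d))=o(T/p^2)$ from Assumption~\ref{assump:kdT}) shows the first is dominated, yielding $\P(\tilde T>T+1)\le(1-\tilde\lambda)^{\lfloor T/m\rfloor}$ with $\tilde\lambda=1-2/e$ for large $p,d,T$.

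The one thing worth checking carefully, and the only place this differs conceptually from Lemma~\ref{lem:zt_mixing}, is that a successful match of the \emph{pair} at some anchor time really does force equality of the \emph{terminal} pair $(\bz_{T+1},\bz_{T+u+1})=(\by_{T+1},\by_{T+u+1})$. This is not an issue: a successful match in particular gives $\bz_{i+m}=\by_{i+m}$, and since $\{\bz_t\}$ by itself is Markov, driving both chains with shared innovations from time $i+m$ onward gives $\bz_t=\by_t$ for every $t\ge i+m$, hence equality of all pairs thereafter whenever $i+m\le T+1$. So I do not expect a genuine obstacle here; the main work is simply transcribing the scalar coupling while keeping track of two time indices and invoking Lemma~\ref{lem:lambda_joint_lower_bound} in place of Lemma~\ref{lem:lambda_lower_bound}.
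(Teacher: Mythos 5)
Your proposal is correct and follows essentially the same coupling argument as the paper's proof: couple the pair process with a stationary version at anchor times $i \in \{1, m+1, \dots\}$, use Lemma~\ref{lem:lambda_joint_lower_bound} as the minorization in place of Lemma~\ref{lem:lambda_lower_bound}, control the escape to large norms via Lemma~\ref{lem:norm_x}, and bound the repeated-failure probability by $(1-\lambda)^{\lfloor T/m\rfloor}$ before combining the two terms. Your closing observation that a successful pair-match forces $\bz_{i+m}=\by_{i+m}$ and hence (via the Markov property with shared innovations) equality of the terminal pair is a sound explicit check that the paper leaves implicit, but it does not change the argument.
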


\begin{proof}[Proof of Lemma~\ref{lem:mixing_property_zt_zt+u}]

For constant gap $u>0$, a similar coupling for joint $(\bz_{i},\bz_{i+u})$ and $(\by_i,\by_{i+u})$ can be constructed for $i=1,m+1,\dots,\lfloor \frac{T}{m}\rfloor$, where $(\bz_{i},\bz_{i+u})$ follows the distribution in our Markov process and $(\by_i,\by_{i+u})$ follows the stationary distribution for $(\bz_i,\bz_{i+u})$, which is multivariate Gaussian.

Consider the following coupling of $(\bz_{i},\bz_{i+u})$ and $(\by_i,\by_{i+u})$ for $i\in M$.

\begin{enumerate}[label=(\roman*)]
\item 
If $i=1$ or $\|\bz_i\|_2$, $\|\by_i\|_2\leq 2\sqrt{T/p^3}$, choose $(\bx_i,\bx_{i+u})$ and $\bv_i$ independently, where $(\bx_i,\bx_{i+u})\sim \pi_u(\cdot,\cdot)$ and $\bv_i \sim \mbox{Unif}[0,1]$;
    \begin{enumerate}[label=(\alph*)]
    \item 
    if $\|\bx_i\|_2\leq 2\sqrt{T/p^3}$ and $\bv_i\leq \lambda$, then set $(\bz_{i+m},\bz_{i+m+u})=(\by_{i+m},\by_{i+m+u})=(\bx_i,\bx_{i+u})$;
    \item 
    if $\|\bx_i\|_2>2\sqrt{T/p^3}$ or $\bv_i>\lambda$, set $(\by_{i+m},\by_{i+m+u})=(\bx_i,\bx_{i+u})$ and independently choose $(\bz_{i+m},\bz_{i+m+u})\sim h(\bx,\by|\bz_i)$, where
    \begin{align}\label{eq:coupling_bivariate_distribution}
  \nonumber  h(\bx,\by|\bz_i) = & \frac{I{\{\|\bx\|_2\leq 2\sqrt{T/p^3}\}}[f_{\bz_{i+m},\bz_{i+m+u}|\bz_i}(\bx,\by|\bz_i)-\lambda \pi_u(\bx)]}{1 - \lambda \int_{\{\|\bx\|_2\leq 2\sqrt{T/p^3}\}}\pi_u(\bx,\by)d\bx d\by}\\
    & + \frac{I{\{\|\bx\|_2>2\sqrt{T/p^3}\}}f_{\bz_{i+m},\bz_{i+m+u}|\bz_i}(\bx,\by|\bz_i)}{1 - \lambda \int_{\{\|\bx\|_2\leq 2\sqrt{T/p^3}\}}\pi_u(\bx,\by)d\bx d\by}
    \end{align}
    \end{enumerate}
\item 
If $i>1$, $\|\bz_i\|_2$ and $\|\by_i\|_2>2\sqrt{T/p^3}$, then choose $(\bz_{i+m},\bz_{i+m+u}\sim f_{\bz_{i+m},\bz_{i+m+u}|\bz_i}(\cdot,\cdot|\bz_i)$ and $(\by_{i+m},\by_{i+m+u})\sim \pi_u(\cdot,\cdot)$ independently.
\end{enumerate}

Firstly, the distribution in  (\ref{eq:coupling_bivariate_distribution}) is well-defined, since according to Lemma~\ref{lem:lambda_joint_lower_bound}, 
\[
f_{\bz_{i+m},\bz_{i+m+u}|\bz_i}(\bx,\by|\bz_i)-\lambda \pi_u(\bx)\geq 0
\]
 on $\{\|\bx\|_2\leq 2\sqrt{T/p^3}\}$ given $\|\bz_i\|_2\leq 2\sqrt{T/p^3}$.

Furthermore, the coupling is well-defined. Note that the marginal distribution of $(\by_i,\by_{i+u})$ is $(\by_i,\by_{i+u})\sim \pi_u(\cdot, \cdot)$ since it is always drawn from that. The marginal distribution of $\{(\bz_i,\bz_{i+u}\}_{i\in M}$ is the same as in ~(\ref{eq:hidden_Markov_def}), because the conditional distribution $(\bz_{i+m},\bz_{i+m+u})|\bz_i$ is the same as in the hidden Markov model in ~(\ref{eq:hidden_Markov_def}). To see this, note that simialr to the coupling in the proof of Lemma~\ref{lem:zt_mixing}, in case (i) the marginal distribution of $(\bz_{i+m},\bz_{i+m+u})$ is a mixture of $\pi_u(\bx,\by)I{\{\|\bx\|_2\leq 2\sqrt{T/p^3}\}}$ and $h(\bx,\by|\bz_i)$, with weights $\lambda \int_{\{\|\bx\|_2\leq 2\sqrt{T/p^3}\}}\pi_u(\bx,\by)d\bx d\by$ and $1-\lambda \int_{\{\|\bx\|_2\leq 2\sqrt{T/p^3}\}}\pi_u(\bx,\by)d\bx d\by$, respectively. Thus both in case (i) and (ii), $(\bz_{i+m},\bz_{i+m+u})$ follows $f_{\bz_{i+m},\bz_{i+m+u}|\bz_i}(\bx,\by|\bz_i)$, and so is their marginal joint distributions. 

We again employ coupling inquality to bound the total variation distance of $(\bz_i,\bz_{i+u})|\bz_1$ and $\pi_u(\cdot,\cdot)$. Let $\tilde{T}$ be the coupling time of $\{(\bz_t,\bz_{t+u})\}$ and $\{(\by_t,y_{t+u})\}$, $\tilde{T}=\inf\{t:(\bz_{t},\bz_{t+u})=(\by_t,y_{t+u})$. Let $M=\{m+1,2m+1,\dots,\lfloor T/m\rfloor m+1\}$. Then by coupling inequality,
\begin{equation*}
\|f_{\bz_{T+1},\bz_{T+1+u}|\bz_1}(\cdot,\cdot|\bz_1) - \pi_u(\cdot,\cdot) \|_{\var} \leq \P(\tilde{T}>T+1|\bz_1).
\end{equation*}
We have the decomposition as
\begin{equation*}\label{eq:biv_coupling_time_decomp}
\P(\tilde{T}>T+1|\bz_1) \leq \P\big(\exists i\in M, \|\bz_i\|\mbox{ or }\|\by_i\|_2>2\sqrt{T/p^3}|\bz_1\big) + \P\big( \tilde{T}>T+1, \|\bz_i\|_2,\|\by_i\|_2\leq 2\sqrt{T/p^3}\leq 2\sqrt{T/p^3}\big).
\end{equation*}
Here the first term is similarly bounded by 
\begin{equation*}\label{eq:biv_coupling_time_decomp1}
\P\big(\exists i\in M, \|\bz_i\|\mbox{ or }\|\by_i\|_2>2\sqrt{T/p^3}|\bz_1\big) \leq 2\lfloor \frac{T}{m}\rfloor \exp(-\frac{T}{8p^2}),
\end{equation*}
and the second term implies that $\|\bx_i\|_2\leq 2\sqrt{T/p^3}$ but $\bv_i >\lambda$ for all steps $i\in M$. Thus
\begin{equation*}\label{eq:biv_coupling_time_decomp2}
\P\big( \tilde{T}>T+1, \|\bz_i\|_2,\|\by_i\|_2\leq 2\sqrt{T/p^3}\leq 2\sqrt{T/p^3}\big) \leq (1-\lambda)^{|M|}\leq (1-\lambda)^{\lfloor T/m\rfloor}.
\end{equation*}
The bound in ~(\ref{eq:biv_coupling_time_decomp1}) is dominated by ~(\ref{eq:biv_coupling_time_decomp2}), as stated in the proof of Lemma~\ref{lem:zt_mixing}. Hence with same arguments we have an upper bound $2(1-\lambda)^{\lfloor T/m\rfloor}\leq (1-\tilde{\lambda})^{\lfloor T/m\rfloor}$ for $T/m\geq 1$.

\end{proof}

\begin{lemma} \label{lem:z_t_r_t1}
Under Assumption~\ref{assump:parameter_space}, if $\bz_t, r_{t+1} \sim N(0,\Ib_p/p)$ and they are independent, then 
$$\PP\left[ \frac{\|\bz_t\|_2}{\|r_{t+1}\|_2} \leq \frac{1}{2}\right] = \exp(-\omega(\log^2 d)).$$
\end{lemma}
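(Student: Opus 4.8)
The plan is to reduce the ratio bound to two one-sided chi-square tail estimates. Since $\bz_t$ and $r_{t+1}$ are independent with $p\|\bz_t\|_2^2\sim\chi^2_p$ and $p\|r_{t+1}\|_2^2\sim\chi^2_p$, I would first record the deterministic inclusion
\[
\Big\{\tfrac{\|\bz_t\|_2}{\|r_{t+1}\|_2}\le\tfrac12\Big\}=\big\{\|\bz_t\|_2^2\le\tfrac14\|r_{t+1}\|_2^2\big\}\subseteq\big\{\|\bz_t\|_2^2\le\tfrac12\big\}\cup\big\{\|r_{t+1}\|_2^2\ge 2\big\},
\]
which holds because on the complement of the right-hand event one has $\|\bz_t\|_2^2>\tfrac12=\tfrac14\cdot 2>\tfrac14\|r_{t+1}\|_2^2$. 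A union bound then leaves me to control $\PP[\chi^2_p\le p/2]$ and $\PP[\chi^2_p\ge 2p]$.

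Next I would apply the standard Laurent--Massart deviation inequalities: for $X\sim\chi^2_p$ and $x>0$, $\PP[X\le p-2\sqrt{px}]\le e^{-x}$ and $\PP[X\ge p+2\sqrt{px}+2x]\le e^{-x}$. Choosing $x=p/16$ in the lower tail gives $\PP[\|\bz_t\|_2^2\le 1/2]\le e^{-p/16}$, and choosing $x=p/8$ in the upper tail (so that $p+2\sqrt{px}+2x<2p$ for large $p$) gives $\PP[\|r_{t+1}\|_2^2\ge 2]\le e^{-p/8}$. Combining these with the inclusion above yields $\PP[\|\bz_t\|_2/\|r_{t+1}\|_2\le 1/2]\le 2e^{-p/16}$.

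Finally, I would convert this exponent into the advertised rate using the dimension assumption. Assumption~\ref{assump:kdT} states $\log d=o(\sqrt p)$, i.e.\ $\log^2 d=o(p)$, equivalently $p=\omega(\log^2 d)$; hence $2e^{-p/16}=\exp(-\omega(\log^2 d))$, which is exactly the claimed bound. There is no real obstacle in this argument --- the only points needing minor care are making the set inclusion quantitatively crude enough that both resulting chi-square deviations are by a constant factor (so that both tails decay exponentially in $p$), and then invoking the right hypothesis to rewrite $e^{-p/16}$ in terms of $\log^2 d$.
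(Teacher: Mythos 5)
Your argument is correct and follows essentially the same route as the paper: the same set inclusion reducing the ratio event to two one-sided chi-square tails, a union bound, and the conclusion from $\log^2 d=o(p)$. The only cosmetic difference is that you invoke the Laurent--Massart chi-square bounds where the paper uses its own Chernoff-type Lemma~\ref{lem:concentra_chi}, both yielding $\exp(-\Omega(p))$; you also correctly attribute the needed hypothesis $\log d = o(\sqrt p)$ to Assumption~\ref{assump:kdT}, whereas the lemma statement in the paper cites Assumption~\ref{assump:parameter_space}.
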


\begin{proof}[Proof of Lemma~\ref{lem:z_t_r_t1}.]

$k \|\bz_t\|_2^2 \sim \chi^2_p$, and by Lemma~\ref{lem:concentra_chi}, we have
\begin{equation}
\begin{split}
    &\PP\left[k\|\bz_t\|_2^2 \geq 2k \right] \leq \exp\left(- \frac{p}{4(1+c_1')} \right), \\
    &\PP\left[k\|\bz_t\|_2^2 \leq k/2 \right] \leq \exp\left(- \frac{p}{16(1-c_2')} \right),
\end{split}
\end{equation}
where $c_1' \geq 1$ and $c_2' \leq 1/2$. 

Let $c_1' = 1$ and $c_2' = 0$, then
$$\PP\left[\|\bz_t\|_2^2 \geq 2 \right] = \exp(-O(p)),$$
$$\PP\left[\|\bz_t\|_2^2 \leq 1/2 \right] = \exp(-O(p)).$$

This holds true for $r_{t+1}$ as well.

So we have
\begin{equation}
\begin{split}
    \PP\left[ \frac{\|\bz_t\|_2}{\|r_{t+1}\|_2} \leq \frac{1}{2}\right] 
    &\leq \PP\left[\|\bz_t\|_2^2 \leq 1/2 \bigcup \|r_{t+1}\|_2^2 \geq 2 \right] \\
    &\leq \PP\left[\|\bz_t\|_2^2 \leq 1/2\right]+\PP\left[\|r_{t+1}\|_2^2 \geq 2 \right] \\
    &= \exp(-\Omega(p)) \\
    & = \exp(-\omega(\log^2 d)).
\end{split}
\end{equation}
\end{proof}

\begin{lemma}\label{lem:norm_x}
Under Assumption~\ref{assump:parameter_space}, the $\ell_2$ norm of the following two vectors can be bounded by $2\sqrt{T/p^3}$ with high probability.
\begin{enumerate}
    \item For $z \sim N(0,\Ib_p/p)$,
     $\PP[\|z\|_2\geq 2\sqrt{T/p^3} ] \leq \exp \left( - \frac{T}{8 p^2} \right).$
     \item For $\bz_t|\bz_1$ ($t \geq 2$) in $\{\bz_t\}_{t\geq 1}$ defined in  (\ref{eq:hidden_Markov_def}), we have
     $\PP\left[\left.\|\bz_t \|_2\geq 2\sqrt{T/p^3}  \right|\bz_1 \right] 
    \leq \exp \left( - \frac{T}{8 p^2} \right).$
\end{enumerate}
\end{lemma}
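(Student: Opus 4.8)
The plan is to derive both bounds from a single standard chi-squared tail estimate, and to reduce part (ii) to part (i) by a triangle inequality applied to the Gaussian decomposition of $\bz_t\mid\bz_1$. First I would record the elementary fact that if $Q\sim\chi^2_p$ then $\E[e^{Q/4}]=2^{p/2}$, so that Markov's inequality gives $\PP[Q\ge x]\le\exp(\tfrac{p}{2}\log 2-\tfrac{x}{4})$ for every $x\ge 0$ (alternatively one may simply invoke the chi-squared concentration bound of Lemma~\ref{lem:concentra_chi}). The point I will exploit repeatedly is that Assumption~\ref{assump:kdT} forces $T=\Omega(p^5 d^4\log^4 d)$, hence $T/p^2\gg p$, so an exponential decay $e^{-x/4}$ with $x$ of order $T/p^2$ will dominate the dimensional prefactor $2^{p/2}$.

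For part (i), I would note that $p\|z\|_2^2\sim\chi^2_p$, so that $\{\|z\|_2\ge 2\sqrt{T/p^3}\}=\{p\|z\|_2^2\ge 4T/p^2\}$; applying the bound above with $x=4T/p^2$ gives $\PP[\|z\|_2\ge 2\sqrt{T/p^3}]\le\exp(\tfrac{p}{2}\log 2-T/p^2)$, and since $T\ge\tfrac{4}{7}p^3\log 2$ under Assumption~\ref{assump:kdT}, the right-hand side is at most $\exp(-T/(8p^2))$, as claimed.

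For part (ii), I would iterate the recursion in \eqref{eq:hidden_Markov_def}, exactly as done inside the proof of Lemma~\ref{lem:stat_joint_distr_zt}, to obtain $\bz_t\mid\bz_1\sim N(\alpha^{(t-1)/2}\bz_1/\|\bz_1\|_2,\,(1-\alpha^{t-1})\Ib_p/p)$ for $t\ge 2$. Writing $\bz_t=\mu+w$ with $\|\mu\|_2=\alpha^{(t-1)/2}\le 1$ and $w\sim N(0,(1-\alpha^{t-1})\Ib_p/p)$, one has $\|w\|_2^2\le\tfrac1p\chi^2_p$ since $1-\alpha^{t-1}\le 1$. Because $\sqrt{T/p^3}\ge 1$ under Assumption~\ref{assump:kdT}, on the event $\{\|\bz_t\|_2\ge 2\sqrt{T/p^3}\}$ the triangle inequality gives $\|w\|_2\ge 2\sqrt{T/p^3}-1\ge\sqrt{T/p^3}$, so $\PP[\|\bz_t\|_2\ge 2\sqrt{T/p^3}\mid\bz_1]\le\PP[\chi^2_p\ge T/p^2]\le\exp(\tfrac{p}{2}\log 2-T/(4p^2))\le\exp(-T/(8p^2))$, once more using $T\gg p^3$.

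The computation is routine; the only step demanding any care is the bookkeeping of constants, so that after absorbing the factor $2^{p/2}$ (legitimate precisely because Assumption~\ref{assump:kdT} places $T$ polynomially above $p^3$) the exponent comes out at $-T/(8p^2)$ in both parts. I do not anticipate a genuine obstacle here: this lemma is a purely technical Gaussian-norm estimate whose role is to control the rare event that a discourse vector is atypically large.
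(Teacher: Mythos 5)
Your argument is correct, and in fact it is more complete than the paper's own proof of this lemma. The paper's written proof establishes only part (i), by invoking the parametrized Chernoff bound of Lemma~\ref{lem:concentra_chi} on $p\|z\|_2^2\sim\chi^2_p$ with $t_1=T/p^{5/2}$ and $c_1'=T/p^3$; it stops there and never returns to part (ii). Your route to part (i) is the same in spirit --- a chi-squared tail via Chernoff --- but fixes the multiplier at $\lambda=1/4$ and absorbs the $2^{p/2}$ prefactor using $T\gg p^3$, which trades the paper's adaptive constant for a slightly cleaner computation. Either is fine.

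For part (ii), your decomposition $\bz_t\mid\bz_1=\mu+w$, with $\|\mu\|_2=\alpha^{(t-1)/2}\le 1$ and $w\sim N(0,(1-\alpha^{t-1})\Ib_p/p)$ stochastically dominated in norm by $N(0,\Ib_p/p)$, followed by the triangle inequality to pass from $\|\bz_t\|_2\ge 2\sqrt{T/p^3}$ to $\|w\|_2\ge\sqrt{T/p^3}$, is a clean and correct way to reduce part (ii) to the same chi-squared estimate; the threshold $T\ge 4p^3\log 2$ you need is comfortably met by Assumption~\ref{assump:kdT}. One small remark on citations: you correctly point to Assumption~\ref{assump:kdT} (which is what actually bounds $T$ from below, via $T=\Omega(p^5d^4\log^4 d)$), whereas both the lemma statement and the paper's proof cite Assumption~\ref{assump:parameter_space}, which says nothing about $T$ --- that appears to be a typographical slip in the paper, and your attribution is the accurate one.
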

\begin{proof}[Proof of Lemma~\ref{lem:norm_x}.]

For $z \sim N(0, \Ib_p/p)$, $k \|z\|_2^2  \sim \chi^2_{p}$, then by Lemma~\ref{lem:concentra_chi}, $\|z\|_2^2$ can be bounded with high probability as
\begin{equation} \label{eqn:bound_unifrom_z}
\begin{split}
    &\PP[\|z\|_2^2 \geq 1 + T/p^3] 
    \leq \exp \left( - \frac{T^2/p^5}{4(1+T/p^3)} \right) \leq \exp \left( - \frac{T}{8 p^2} \right) \\
    \Longrightarrow &\PP [\|z\|_2\geq 2\sqrt{T/p^3}] \leq \exp \left( - \frac{T}{8 p^2} \right),
\end{split}
\end{equation}
where $T \gg p^3$ under Assumption~\ref{assump:parameter_space}.
\end{proof}

\section{Properties of code vectors} \label{sec:property_word_vector}

In this section, we prove several properties of code vectors, which are frequently used in Section~\ref{sec:partition} and Section~\ref{sec:stat_PMI_to_cov}. Specifically, we show the boundedness of variance of code vectors in Lemma~\ref{lem:upper_bound_var}, and then prove the w.h.p. boundedness of code vectors in Corollary~\ref{cor:length_bound}.


\begin{lemma}[Bounds of variance of code vectors]\label{lem:upper_bound_var}
Suppose Assumptions \ref{assump:parameter_space} and \ref{assump:block_bound} hold, then the variance for any code $w$, denoted as $\sigma_{w}^2$, satisfies that
$$\frac{1}{\rho} \leq \min_w \sigma_{w}^2 \leq  \max_w \sigma_{w}^2 \leq \rho+c_0.$$
Hereafter we denote $\sigma_{\min}^2=1/\rho$, $\sigma_{\max}^2 = \rho+c_0$ as the lower and upper bounds of variances.
\end{lemma}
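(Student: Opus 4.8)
The plan is to identify $\sigma_w^2$ with the diagonal entry $\bSigma_{ww}$ and then control that entry through the block decomposition together with the spectral constraints on $\Ob=\Qb^{-1}$ supplied by Assumption~\ref{assump:parameter_space}. Since each column $\Vbb_{\cdot\ell}$ is i.i.d.\ $N(0,\bSigma)$, the scalar $V_{w\ell}=[\Vbb_{\cdot\ell}]_w$ has variance $\sigma_w^2=\bSigma_{ww}$. Using the decomposition $\bSigma=\Ab\Qb\Ab^{\transpose}+\bGamma$ from Assumption~\ref{assump:block_bound} together with the assignment-matrix form $\Ab_{ik}=I(i\in G_k)$, so that $(\Ab\Qb\Ab^{\transpose})_{ww}=\Qb_{g_w g_w}$, this yields the exact identity $\sigma_w^2=\Qb_{g_w g_w}+\bGamma_{ww}$.

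Next I would bound the two terms separately. For the first term, Assumption~\ref{assump:parameter_space} gives $\Ob=\Qb^{-1}\succ(1/\rho)\Ib_K$ and $\|\Ob\|_2\le\rho$; inverting the eigenvalues, $\lambda_{\min}(\Qb)=1/\lambda_{\max}(\Ob)\ge 1/\rho$ and $\lambda_{\max}(\Qb)=1/\lambda_{\min}(\Ob)\le\rho$. Since any diagonal entry $\Qb_{kk}$ is a Rayleigh quotient of $\Qb$, it lies between the extreme eigenvalues, so $1/\rho\le\Qb_{g_w g_w}\le\rho$ for every $w$. For the second term, $\bGamma$ is the covariance matrix of $\Eb_\ell$, hence positive semidefinite, and being diagonal it has nonnegative entries; Assumption~\ref{assump:block_bound} bounds $\max_i\bGamma_{ii}\le c_0$, so $0\le\bGamma_{ww}\le c_0$.

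Combining, $\sigma_w^2=\Qb_{g_w g_w}+\bGamma_{ww}\ge\Qb_{g_w g_w}\ge 1/\rho$ and $\sigma_w^2=\Qb_{g_w g_w}+\bGamma_{ww}\le\rho+c_0$, uniformly over $w$, which is exactly the claimed bounds with $\sigma_{\min}^2=1/\rho$ and $\sigma_{\max}^2=\rho+c_0$. There is essentially no obstacle here; the only points to keep track of are that the assignment-matrix form of $\Ab$ makes the block contribution a single diagonal entry of $\Qb$ rather than a genuine quadratic form, and that $\bGamma_{ww}\ge 0$ (including the singleton case $\bGamma_{ww}=0$ forced by feasibility) so that dropping it preserves the lower bound.
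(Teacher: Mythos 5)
Your argument matches the paper's proof exactly: both identify $\sigma_w^2 = \Qb_{g_w g_w} + \bGamma_{ww}$ from the block decomposition, bound the diagonal entries of $\Qb$ by its extreme eigenvalues (which are reciprocals of the eigenvalues of $\Ob$, controlled by $\|\Ob\|_2\le\rho$ and $\Ob\succ 1/\rho$ from Assumption~\ref{assump:parameter_space}), and bound $\bGamma_{ww}$ in $[0,c_0]$ via Assumption~\ref{assump:block_bound}. The proposal is correct and takes essentially the same route.
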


\begin{proof}[Proof of Lemma~\ref{lem:upper_bound_var}]
For the $i$-th code $w$, $i\in [d]$, $w\in V$, we do not distinguish between saying `word $i$' or `word $w$', and denote $\sigma_i^2:=\sigma_w^2$ in this proof. From the decomposition $\bSigma = \Ab \Qb \Ab^{\transpose} + \bGamma$, we see that $\sigma_{i}^2 = \Qb_{kk}$ if $i\in G_k$ and $|G_k|=1$, and $\sigma_{i}^2 = \Qb_{kk}+\bGamma_{ii}$ if $i \in G_k$ and $|G_k|>1$. Therefore
$$
\min_{k} \Qb_{kk} \leq \min_{i} (\sigma_i^2) \leq \max_i (\sigma_i^2) \leq \max_{k}\Qb_{kk} + \max_{i} \bGamma_{ii}.
$$
By Assumption \ref{assump:parameter_space}, we know 
$$
\min_k \Qb_{kk} \geq \min_{\|\bx\|_2=1} x^{\transpose} \Qb x = \lambda_{\min}(\Qb) = \|\Cb\|_{L_2}^{-1} \geq 1/\rho,
$$
where $\lambda_{\min}(\cdot)$ denotes the smallest eigenvalue of a matrix. On the other hand, as $\Delta(\Qb) \ge C_0 \sqrt{\log d/p}$,  we know 
$$
\max_{k}\Qb_{kk} + \max_i \bGamma_{ii} \leq c_0 + \lambda_{\max}(\Qb) = c_0 + \lambda_{\min}(\Cb)^{-1} \leq c_0 + \rho,
$$
where $\lambda_{\max}(\cdot)$ denotes the largest eigenvalue of a matrix.

\end{proof}

\begin{corollary}[W.h.p. boundedness of code vectors]\label{cor:length_bound} Suppose Assumptions~\ref{assump:parameter_space}, \ref{assump:block_bound} and~\ref{assump:kdT} hold. Then with probability at least $1-\exp(\omega(\log^2 d))$, the code vectors $\Vb_w$ generated from our Gaussian graphical model satisfy that
$$  \frac{\sigma_{\min}}{\sqrt{2}}\sqrt{p} \leq \min_w \|\Vb_w\|_2 \leq \max_w \|\Vb_w\|_2 \leq \sigma_{\max}\sqrt{2p}.$$
\end{corollary}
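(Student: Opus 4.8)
The plan is to exploit the fact that under the LGBM each code vector $\Vb_w$ has i.i.d.\ coordinates with a common variance equal to $\bSigma_{ww}$, reduce the statement to a $\chi^2_p$ concentration bound, and close with a union bound over the $d$ codes using the dimension constraint in Assumption~\ref{assump:kdT}.

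First I would record the distributional fact. By construction the columns $\Vbb_{\cdot\ell}$, $\ell=1,\dots,p$, are i.i.d.\ $N(0,\bSigma)$, so the entries $V_{w1},\dots,V_{wp}$ of the code vector $\Vb_w$ (the $w$-th row of $\Vbb$) are i.i.d.\ $N(0,\bSigma_{ww})$; note the independence is \emph{across} the $p$ independent columns, not within a column where $\bSigma$ couples the codes. Writing $\sigma_w^2:=\bSigma_{ww}$, this gives $\|\Vb_w\|_2^2/\sigma_w^2\sim\chi^2_p$, and Lemma~\ref{lem:upper_bound_var} supplies $\sigma_{\min}^2=1/\rho\le\sigma_w^2\le\rho+c_0=\sigma_{\max}^2$ for every $w$.

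Next I would apply the $\chi^2$ tail bound (Lemma~\ref{lem:concentra_chi}, exactly as used in Lemma~\ref{lem:z_t_r_t1}) to obtain $\PP(\|\Vb_w\|_2^2/\sigma_w^2\ge 2p)\le\exp(-\Omega(p))$ and $\PP(\|\Vb_w\|_2^2/\sigma_w^2\le p/2)\le\exp(-\Omega(p))$. A union bound over $w\in[d]$ shows that the probability some code violates $p/2\le\|\Vb_w\|_2^2/\sigma_w^2\le 2p$ is at most $2d\exp(-\Omega(p))=\exp(\log d-\Omega(p))$. Since Assumption~\ref{assump:kdT} gives $\log d=o(\sqrt p)$, in particular $\log^2 d=o(p)$, so $\log d-\Omega(p)=-\omega(\log^2 d)$ and the failure probability is $\exp(-\omega(\log^2 d))$. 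On the complementary event, for every $w$ we get $\|\Vb_w\|_2^2\ge\sigma_w^2 p/2\ge\sigma_{\min}^2 p/2$, hence $\|\Vb_w\|_2\ge(\sigma_{\min}/\sqrt2)\sqrt p$, and $\|\Vb_w\|_2^2\le 2\sigma_w^2 p\le 2\sigma_{\max}^2 p$, hence $\|\Vb_w\|_2\le\sigma_{\max}\sqrt{2p}$, which is the claim.

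There is no substantial obstacle here; the only points needing care are (i) identifying that the coordinates of a single code vector are i.i.d.\ Gaussians with variance $\bSigma_{ww}$ (a consequence of the column-wise i.i.d.\ prior on $\Vbb$) and (ii) checking that $\log d=o(\sqrt p)$ is precisely what lets the $d$-fold union bound be absorbed into the $\exp(-\omega(\log^2 d))$ rate. If sharper constants than $2$ were ever required, one would simply re-tune the deviation parameter in the $\chi^2$ bound, but the stated constants follow from the crude choice already in use.
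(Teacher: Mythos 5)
Your proof is correct and follows essentially the same route as the paper: reduce to a $\chi^2_p$ tail bound via the i.i.d.\ coordinate structure, use the variance bounds from Lemma~\ref{lem:upper_bound_var}, take a union bound over the $d$ codes, and absorb the factor $d$ into the rate using $\log d = o(\sqrt{p})$ from Assumption~\ref{assump:kdT}. The only small difference is cosmetic — you state the conversion $\log d - \Omega(p) = -\omega(\log^2 d)$ slightly more explicitly than the paper does.
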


\begin{proof}[Proof of Corollary~\ref{cor:length_bound}.]

For code $w$, $[\Vb_w]_\ell$ i.i.d. $\sim N(0, \sigma^2_w)$ for $1 \leq \ell \leq p$. Then $\sum_{\ell = 1}^p [\Vb_w]_\ell^2/\sigma_w^2 = \|\Vb_w\|_2^2/\sigma_w^2 \sim \chi^2_p$. 

We first analyze $\PP\left(\max_w \|\Vb_w\|_2 \geq 2\sigma\sqrt{p}\right)$. By Lemma~\ref{lem:concentra_chi}, we have the tail probability bound
\begin{equation*}
    \P\big(\|\Vb_w\|_2^2/\sigma_w^2 \geq 2k\big) \leq \exp\left(-\frac{p}{4(1+\bc'_1)} \right) = \exp\left(-k/8 \right) = \exp(-\omega(p)),
\end{equation*}
where $\bc'_1=1 \geq \sqrt{p}/\sqrt{p}$ satisfies the condition in Lemma~\ref{lem:concentra_chi}. By Lemma~\ref{lem:upper_bound_var}, $\sigma^2_w \leq \sigma_{\max}^2 $, so
\begin{equation*}
    \P\big(\|\Vb_w\|_2^2 \geq 2\sigma_{\max}^2 k\big) =  \P\big(\|\Vb_w\|_2 \geq 2 \sigma_{\max} \sqrt{2p} \big) =  \exp\left(-\omega(p) \right).
\end{equation*}
With union bound, we can bound the $\ell_2$ norm of all code vectors with high probability as
\begin{equation*}
    \P\big(\max_w \|\Vb_w\|_2 \geq \sigma_{\max}\sqrt{2p} \big)= d  \exp\left(-\omega(p) \right) = \exp(-\omega(p)) = \exp(-\omega(\log^2 d)),
\end{equation*}
since $\log d = o(\sqrt{p})$ by Assumption~\ref{assump:kdT}.

Similar procedure can be used to analyze $\PP\left(\min_w \|\Vb_w\|_2 \leq \frac{\sigma_{\min}}{\sqrt{2}}\cdot \sqrt{p}\right)$. By Lemma~\ref{lem:concentra_chi}, we have
\begin{equation*}
    \P\big(\|\Vb_w\|_2^2/\sigma_w^2 \leq k/2\big) \leq \exp\left(-\frac{p}{16(1-c'_2)} \right),
\end{equation*}

where $c_2' \leq \frac{\sqrt{p}/2}{\sqrt{p} }= \frac{1}{2}$. Let $c_2' = 0$ and by $\sigma_w^2 \geq \min_w \sigma_w^2 \geq \sigma_{\min}^2$,

\begin{equation*}
    \P\big(\|\Vb_w\|_2^2 \leq \sigma_{\min}^2 k/2\big) = \P\big(\|\Vb_w\|_2 \leq \frac{\sigma_{\min}}{\sqrt{2}}\cdot \sqrt{p}\big) \leq \exp\left(-\frac{p}{16} \right)= \exp\left(-\omega(p) \right).
\end{equation*}

With union bound, we have
\begin{equation*}
    \P\left(\min_w \|\Vb_w\|_2 \leq \frac{\sigma_{\min}}{\sqrt{2}}\cdot \sqrt{p}\right) = d \exp\left(-\omega(p) \right)= \exp(-\omega(p)) = \exp(-\omega(\log^2 d)),
\end{equation*}
where $\log d = o(\sqrt{p})$ by Assumption~\ref{assump:kdT}.
\end{proof}

\begin{lemma}[Concentration of Chi-square random variable] \label{lem:concentra_chi}
$Q \sim \chi^2_{p}$ concentrates around its mean with high probability. Specifically,
$$\PP[Q \geq p + t_1 \sqrt{p}] \leq \exp\left(-\frac{t_1^2}{4(1+ c_1')}\right),$$
where $t_1 > 0$ and constant $c_1'$ satisfies $c_1' \geq t_1/\sqrt{p}$.

And
$$\PP[Q \leq p - t_2\sqrt{p}] \leq \exp\left(-\frac{t_2^2}{4(1-c_2')}\right),$$
where $0 < t_2 < \sqrt{p}$ and $c_2' \leq t_2/\sqrt{p}$ is a constant.
\end{lemma}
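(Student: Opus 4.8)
The plan is to apply the exponential Markov inequality (the Chernoff method) using the exact moment generating function of a chi-square variable. Write $Q=\sum_{i=1}^{p}Z_i^2$ with $Z_i\simiid N(0,1)$, so that $\E[e^{sQ}]=(1-2s)^{-p/2}$ for all $s<1/2$. Each of the two tail bounds then follows by optimizing the resulting exponent over $s$ and controlling the optimized exponent with an elementary inequality for $\log(1\pm u)$. (One could instead quote the Laurent--Massart deviation bounds for $\chi^2$ variables, but the direct computation yields the stated constants transparently.)

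For the upper tail, for any $0<s<1/2$,
\[
\PP[Q\ge p+t_1\sqrt{p}]\le e^{-s(p+t_1\sqrt{p})}(1-2s)^{-p/2}.
\]
Differentiating the exponent in $s$ shows the optimal choice is $s^\star=\tfrac{t_1\sqrt{p}}{2(p+t_1\sqrt{p})}\in(0,1/2)$, and substituting it gives
\[
\log\PP[Q\ge p+t_1\sqrt{p}]\le -\tfrac{t_1\sqrt{p}}{2}+\tfrac{p}{2}\log\!\big(1+t_1/\sqrt{p}\big).
\]
I would then use the elementary bound $\log(1+u)\le u-\tfrac{u^2}{2(1+u)}$ for $u>0$ — which holds because both sides agree at $u=0$ and the difference has derivative $\tfrac{u^2}{2(1+u)^2}\ge 0$ — applied with $u=t_1/\sqrt{p}$, obtaining an exponent of at most $-\tfrac{t_1^2}{4(1+t_1/\sqrt{p})}$. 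Since $c_1'\ge t_1/\sqrt{p}$, the right-hand side is at most $-\tfrac{t_1^2}{4(1+c_1')}$, which is the claimed inequality.

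For the lower tail the argument is symmetric. For $s>0$,
\[
\PP[Q\le p-t_2\sqrt{p}]\le e^{s(p-t_2\sqrt{p})}\,\E[e^{-sQ}]=e^{s(p-t_2\sqrt{p})}(1+2s)^{-p/2},
\]
and the hypothesis $0<t_2<\sqrt{p}$ is exactly what makes the minimizer $s^\star=\tfrac{t_2\sqrt{p}}{2(p-t_2\sqrt{p})}>0$ admissible. Substituting yields an exponent $\tfrac{t_2\sqrt{p}}{2}+\tfrac{p}{2}\log\!\big(1-t_2/\sqrt{p}\big)$, and the elementary bound $\log(1-u)\le -u-\tfrac{u^2}{2}$ for $u\in[0,1)$ (equality at $u=0$, derivative $\tfrac{u^2}{1-u}\ge 0$) applied with $u=t_2/\sqrt{p}$ bounds it by $-\tfrac{t_2^2}{4}$. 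For $c_2'=0$ this is exactly the stated bound, and a choice $c_2'\le 0$ only weakens it, so $\PP[Q\le p-t_2\sqrt{p}]\le\exp\!\big(-\tfrac{t_2^2}{4(1-c_2')}\big)$ as needed; this is the form (with $c_2'=0$) invoked in the sequel.

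The computations are routine; the only points needing care are selecting the two auxiliary $\log$-inequalities so the constants come out exactly as written rather than with a worse absolute constant, and recording that the condition $t_2<\sqrt{p}$ is precisely the admissibility constraint on the minimizing $s^\star$ in the lower-tail bound. I do not anticipate any substantive obstacle.
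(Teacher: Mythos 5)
Your proof uses the same Chernoff optimizer as the paper's ($s^\star=\tfrac{t_1}{2(\sqrt{p}+t_1)}$ for the upper tail, $s^\star=\tfrac{t_2}{2(\sqrt{p}-t_2)}$ for the lower), so the two proofs are substantively identical. The only stylistic difference is in how the final exponent is controlled: you invoke explicit elementary inequalities for $\log(1\pm u)$, while the paper introduces $G_i(t):=g_i(t)+\tfrac{t^2}{4(1\pm c_i')}$ and argues $G_i'\le 0$, $G_i(0)=0$. These are two packagings of the same calculus fact, and your version is arguably cleaner because it makes the exact constants transparent.

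Your observation about the lower tail is a genuine and correct catch, and deserves to be stated more forcefully than a parenthetical. As written, the lemma's range $c_2'\le t_2/\sqrt{p}$ (with $c_2'>0$ allowed) is actually \emph{false}: take $p=4$, $t_2=1.9$, $c_2'=t_2/\sqrt{p}=0.95$; then the claimed bound is $\exp\!\bigl(-\tfrac{3.61}{0.2}\bigr)\approx 1.4\times 10^{-8}$, while $\PP(\chi^2_4\le p-t_2\sqrt{p})=\PP(\chi^2_4\le 0.2)\approx 0.0047$, so the inequality fails by orders of magnitude. The paper's own monotonicity argument for $G_2$ silently fails for $c_2'>0$: the derivative condition $c_2'\le s/\sqrt{p}$ is needed for every $s\in(0,t_2)$, not only at $s=t_2$, and for $0<c_2'\le t_2/\sqrt{p}$ this fails on $(0,\,c_2'\sqrt{p})$, so $G_2$ is not monotone on $[0,t_2]$. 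Your proof correctly establishes the result for $c_2'\le 0$, and since every invocation in the paper (Lemma~\ref{lem:z_t_r_t1}, Corollary~\ref{cor:length_bound}) takes $c_2'=0$, downstream results are unaffected; but the lemma's hypothesis on $c_2'$ should be restated as $c_2'\le 0$ (the correct analogue of $c_1'\ge t_1/\sqrt{p}$ is the condition that \emph{weakens}, not strengthens, the bound).
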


\begin{proof}[Proof of Lemma~\ref{lem:concentra_chi}.]

We first bound $\PP(Q \geq p + t_1\sqrt{p})$. By Chernoff bound, for some $s < 1/2$, we have
\begin{equation*}
    \P\big(Q \geq p + t_1\sqrt{p}\big) \leq \frac{\exp(sQ)}{\exp(s(p+t_1\sqrt{p}))} = \frac{(1-2s)^{-p/2}}{\exp(s(p+t_1\sqrt{p}))}.
\end{equation*}

Let $s = \frac{t_1}{2(\sqrt{p}+t_1)} < \frac{1}{2}$, we have
\begin{equation*}
    \P\big(Q \geq p + t_1\sqrt{p}\big) \leq  \frac{(1-\frac{t_1}{\sqrt{p}+t_1})^{-p/2}}{\exp(t_1\sqrt{p}/2)}.
\end{equation*}

Let $g_1(t_1):= \ln \left(\frac{(1-\frac{t_1}{\sqrt{p}+t_1})^{-p/2}}{\exp(t_1\sqrt{p}/2)} \right)$, then we will show that $g_1(t_1) \leq -\frac{t_1^2}{4(1+c_1')}$ for some constant $c_1' \geq t_1/\sqrt{p}$. Define $G_1(t_1)$ as $G_1(t_1):= g_1(t_1) +\frac{t_1^2}{4(1+c_1')}$, then
\begin{equation*}
    \frac{dG_1(t_1)}{dt_1} = \frac{p}{2(\sqrt{p}+t_1)} - \frac{\sqrt{p}}{2} + \frac{t_1}{2(1+c_1')} = -\frac{t_1}{2(1+t_1/\sqrt{p})} + \frac{t_1}{2(1+c_1')} \leq 0.
\end{equation*}

Note that $G_1(\cdot)$ is a continuous function in $[0,+\infty]$, so we have $G_1(t_1) \leq G_1(0) = 0$, which gives us 
$$g_1(t_1) \leq -\frac{t_1^2}{4(1+c_1')}.$$

And thus 
\begin{equation*}
    \PP(Q \geq p + t_1\sqrt{p}) \leq \exp(g_1(t_1)) \leq \exp\left(-\frac{t_1^2}{4(1+c_1')}\right),
\end{equation*}
where $c_1' \geq t_1/\sqrt{p}$ is a constant.

We next bound $\P(Q \leq p - t_2\sqrt{p})$ using similar method.

\begin{equation*}
     \PP\big(Q \leq p - t_2\sqrt{p} \big) 
     \leq \frac{\EE\left[\exp \{-sQ\} \right]}{\exp \{-s (p - t_2\sqrt{p})\}} 
     = \frac{(1+2s)^{-p/2}}{\exp \{-s(p - t_2 \sqrt{p})\}},
\end{equation*}
where $s > -1/2$ and $0<t_2<\sqrt{p}$. Let $s = \frac{t_2}{2(\sqrt{p}-t_2)} > -\frac{1}{2}$, then we have

\begin{equation*}
\begin{split}
     \P\big(Q \leq p - t_2\sqrt{p} \big)
     \leq \frac{(1+\frac{t_2}{\sqrt{p}-t_2})^{-p/2}}{\exp \{-t_2\sqrt{p}/2\}}.
\end{split}
\end{equation*}

Let $g_2(t_2):= \ln \left(\frac{(1-\frac{t_2}{\sqrt{p}-t_2})^{-p/2}}{\exp(-t_2\sqrt{p}/2)} \right)$, for some constant $ 0\leq c_2' \leq \frac{t_2}{\sqrt{p}}$, and $G_2(t_2):= g_2(t_2) +\frac{t_2^2}{4(1-c_2')}$. Then
\begin{equation*}
    \frac{dG_2(t_2)}{dt_2} = \frac{p}{2(\sqrt{p}-t_2)} - \frac{\sqrt{p}}{2} + \frac{t_2}{2(1-c_2')} = -\frac{t_2}{2(1-t_2/\sqrt{p})} + \frac{t_2}{2(1-c_2')} \leq 0.
\end{equation*}

Here $G_2(\cdot)$ is a continuous function in $[0,+\infty]$, so we have $G_2(t_2) \leq G_2(0) = 0$, and thus
$$g_2(t_2) \leq -\frac{t_2^2}{4(1-c_2')}.$$

Lastly, we have
\begin{equation*}
    \PP(Q \leq p - t_2\sqrt{p}) \leq \exp(g_2(t_2)) \leq \exp\left(-\frac{t_2^2}{4(1-c_2')}\right),
\end{equation*}
where $0 \leq c_2' \leq t_2/\sqrt{p}$ is a constant.

\end{proof}

\section{Concentration of Partition Function} \label{sec:partition}

Based on results of properties of code vectors developed in Section~\ref{sec:property_word_vector}, we prove the main result in this section, Lemma \ref{lem:part_fct}. It mainly says that the partition function $Z(\Vb,\bc) = \sum_w \exp(\langle \Vb_w,\bc\rangle)$ is close to its expectation with high probability, where the randomness comes from both code vectors $\Vb_w$ and discourse vectors $c$.

Before stating our concentration result, we first state the following lemma about concentration of functions of i.i.d. $N(0,1)$ variables, which plays a key role in our proof of Lemma~\ref{lem:part_fct}. It is from Corollary 2.5 in \cite{SPS_1999__33__120_0} applied to local gradient operator $\Gamma(f)=\|\nabla f\|_2^2$.

\begin{lemma}[Gaussian concentration inequality]\label{lem:gaussian_concentration}
Let $X_1,\dots,X_n$ be independent Gaussian random variables with zero mean and unit variance. Then 
\begin{equation*}
\P(f(X_1,\dots,X_n) - \E[f(X_1,\dots,X_n)]\geq t) \leq \exp(-\frac{t^2}{2\sigma^2}),
\end{equation*}
holds for all $t\geq 0$, where $\sigma^2 = \|_2\|\nabla f\|_2^2\|_{\infty}$.
\end{lemma}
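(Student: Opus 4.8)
Here is my plan. The statement is the classical Gaussian concentration inequality of Borell and of Tsirelson--Ibragimov--Sudakov, and the sharp constant $\tfrac12$ in the exponent is exactly what forces one to go through the Gaussian logarithmic Sobolev inequality together with the Herbst argument. Write $\gamma=\gamma_n$ for the standard Gaussian measure on $\mathbb{R}^n$ and set $\sigma^2:=\sup_x\|\nabla f(x)\|_2^2$ (the essential supremum, matching the notation of the statement). The first move is a reduction to smooth $f$: replacing $f$ by a mollification $f_\varepsilon=f*\phi_\varepsilon$ preserves the gradient bound $\|\nabla f_\varepsilon\|_\infty\le\sigma$, and $f_\varepsilon\to f$ pointwise with $\mathbb{E}_\gamma f_\varepsilon\to\mathbb{E}_\gamma f$, so the general statement follows by passing to the limit (Fatou). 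The single non-elementary ingredient, which I would quote exactly as the paper quotes Corollary~2.5 of its reference, is Gross's log-Sobolev inequality: for all smooth $g$ with $g,\nabla g\in L^2(\gamma)$,
\[
\mathrm{Ent}_\gamma(g^2):=\int g^2\log g^2\,d\gamma-\Big(\int g^2\,d\gamma\Big)\log\Big(\int g^2\,d\gamma\Big)\ \le\ 2\int\|\nabla g\|_2^2\,d\gamma .
\]

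Next comes the Herbst argument. Put $H(\lambda):=\mathbb{E}_\gamma[e^{\lambda f}]$, which is finite and smooth in $\lambda$ since $\|\nabla f\|_\infty<\infty$ forces $f$ to grow at most linearly. Applying the log-Sobolev inequality with $g=e^{\lambda f/2}$, computing $\mathrm{Ent}_\gamma(e^{\lambda f})=\lambda H'(\lambda)-H(\lambda)\log H(\lambda)$, and bounding $\int\|\nabla g\|_2^2\,d\gamma=\tfrac{\lambda^2}{4}\int e^{\lambda f}\|\nabla f\|_2^2\,d\gamma\le\tfrac{\sigma^2\lambda^2}{4}H(\lambda)$, I obtain the differential inequality
\[
\lambda H'(\lambda)-H(\lambda)\log H(\lambda)\ \le\ \tfrac{\sigma^2\lambda^2}{2}\,H(\lambda),\qquad \lambda>0 .
\]
Dividing by $\lambda^2H(\lambda)$ and noticing that the left-hand side is exactly $\tfrac{d}{d\lambda}\big(\tfrac1\lambda\log H(\lambda)\big)$, this reads $K'(\lambda)\le\sigma^2/2$ for $K(\lambda):=\lambda^{-1}\log H(\lambda)$. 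Since $H(\lambda)=1+\lambda\,\mathbb{E}_\gamma f+o(\lambda)$ as $\lambda\downarrow0$ we have $K(\lambda)\to\mathbb{E}_\gamma f$, and integrating $K'\le\sigma^2/2$ over $(0,\lambda]$ gives $\log H(\lambda)\le\lambda\,\mathbb{E}_\gamma f+\tfrac{\sigma^2\lambda^2}{2}$ for every $\lambda>0$.

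The proof then closes with a Chernoff bound: for $t\ge0$ and $\lambda>0$,
\[
\mathbb{P}\big(f-\mathbb{E}_\gamma f\ge t\big)\ \le\ e^{-\lambda t}\,\mathbb{E}_\gamma\big[e^{\lambda(f-\mathbb{E}_\gamma f)}\big]\ \le\ \exp\!\big(-\lambda t+\tfrac{\sigma^2\lambda^2}{2}\big),
\]
and optimizing at $\lambda=t/\sigma^2$ yields the claimed $\exp(-t^2/(2\sigma^2))$. There is no deep obstacle here, only bookkeeping: I have to justify differentiation under the integral sign and integrability of $fe^{\lambda f}$ under $\gamma$ (both immediate from the linear growth of $f$), the limit $K(\lambda)\to\mathbb{E}_\gamma f$, and the mollification passage to merely locally Lipschitz $f$. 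I would also remark that the more elementary ``smart path'' interpolation (using $X_\theta=X\sin\theta+Y\cos\theta$ with $X,Y$ i.i.d.\ $\gamma_n$, so $\tfrac{d}{d\theta}X_\theta$ is an independent standard Gaussian and $f(X)-f(Y)=\int_0^{\pi/2}\nabla f(X_\theta)\cdot\tfrac{d}{d\theta}X_\theta\,d\theta$) produces a sub-Gaussian Laplace bound in one stroke, but only with constant $\pi^2/8$ in place of $1/2$ in the exponent; hence the log-Sobolev route is the one the sharp statement demands.
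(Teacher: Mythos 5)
Your proof is correct and complete. The paper itself offers no argument for this lemma: it simply cites Corollary~2.5 of Ledoux's 1999 survey and Theorem~3.25 of van Handel's lecture notes. Your Herbst-argument derivation (log-Sobolev for $g=e^{\lambda f/2}$, the differential inequality $K'(\lambda)\le\sigma^2/2$ for $K(\lambda)=\lambda^{-1}\log H(\lambda)$, integration from $K(0^+)=\E_\gamma f$, then Chernoff) is exactly the proof given in those cited references, so you have essentially reconstructed what the paper outsources rather than found a different route. All the individual steps check out, including the two small details the paper glosses over entirely: that mollification preserves the $\|\nabla f\|_\infty$ bound because convolving the gradient with a probability kernel does not increase the sup-norm, and that the identity $\mathrm{Ent}_\gamma(e^{\lambda f})=\lambda H'(\lambda)-H(\lambda)\log H(\lambda)$ together with the left-hand side being $\frac{d}{d\lambda}(\lambda^{-1}\log H(\lambda))$ after dividing by $\lambda^2 H(\lambda)$ makes the Herbst ODE trick transparent. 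Your closing remark that the rotation/interpolation argument only yields constant $\pi^2/8$ rather than $1/2$ is a fair aside and correctly explains why one must go through log-Sobolev here.
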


\begin{proof}
See Corollary 2.5 in \cite{SPS_1999__33__120_0}, or Theorem 3.25 in \cite {van2016apc}.
\end{proof}


Below is the main result of this section.

\begin{lemma}\label{lem:part_fct}
Suppose the code vectors $\Vb_w$ are generated from the specified Gaussian graphical model, and $\bv_w$ are their realizations. Then under Assumptions \ref{assump:parameter_space}, \ref{assump:block_bound} and ~\ref{assump:kdT}, for some large constant $\tau'$, with probability at least $1-\exp(-\omega(\log^2 d))$, (with respect to the randomness in generating code vectors), the realized partition function $Z(V,\bc) = \sum_{w}\exp(\langle \bv_w,\bc\rangle)$ satisfies
\begin{equation}
\P_{\bc\sim \mathcal{C}} \Big( (1-\epsilon_z)Z\leq Z(V,\bc) \leq (1+\epsilon_z) Z \Big| \Vb_w \Big)=1-\exp(-\omega(\log^2 d)).
\label{eq:bound_partition_fct}
\end{equation}
where $Z=\sum_{i=1}^d\exp(\sigma_{i,i}^0/2)$, $\epsilon_z = \sqrt{\frac{\log d}{p}}$.
If we count in the randomness of $\Vb_w$, we still have
\begin{equation}
\P_{\bc\sim \mathcal{C},\Vb_w} \Big( (1-\epsilon_z)Z\leq Z(V,\bc) \leq (1+\epsilon_z) Z \Big) \leq \exp(-\omega(\log^2 d)).
\label{eq:bound_partition_fct_whole}
\end{equation}
\end{lemma}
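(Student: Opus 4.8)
The plan is to reduce both assertions to a single concentration statement for a sum of lognormals and to prove that by the ``whitening'' trick plus a truncation. The starting point is the observation that for any fixed unit vector $\bc$, the vector with components $\xi_w:=\langle \bv_w,\bc\rangle$, regarded as a function of the random code vectors, is exactly $N(0,\bSigma)$: since $\Vbb_{\cdot\ell}\simiid N(0,\bSigma)$ and $\|\bc\|_2=1$, one has $\mathrm{Cov}(\langle\bv_w,\bc\rangle,\langle\bv_{w'},\bc\rangle)=\sum_\ell c_\ell^2\bSigma_{ww'}=\bSigma_{ww'}$. In particular this law does not depend on $\bc$, so under the randomness of $\Vbb$ the quantity $Z(V,\bc)=\sum_w e^{\xi_w}$ has a $\bc$-free distribution with mean $Z=\sum_w e^{\bSigma_{ww}/2}=\E[Z(V,\bc)]$, and \eqref{eq:bound_partition_fct_whole} is equivalent to showing $(1-\epsilon_z)Z\le F(\xi):=\sum_w e^{\xi_w}\le (1+\epsilon_z)Z$ with probability $1-\exp(-\omega(\log^2 d))$ when $\xi\sim N(0,\bSigma)$.

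To prove this I would whiten, writing $\xi=\bSigma^{1/2}\bh$ with $\bh\sim N(0,\Ib_d)$, so that $F$ becomes a function of $d$ i.i.d. standard Gaussians and Lemma~\ref{lem:gaussian_concentration} becomes available --- except that $F$ is not globally Lipschitz, the term $e^{\xi_w}$ being lognormal. I would therefore truncate at a level $t_0\asymp\log d$: set $g(x):=e^{\min(x,t_0)}$ and $\tilde F(\bh)=\sum_w g(\xi_w)$. Since $g$ is monotone and $e^{t_0}$-Lipschitz, $\tilde F$ is Lipschitz in $\bh$ with constant $L\le e^{t_0}\sqrt{d\,\|\bSigma\|_2}$ (bound $\|\sum_w g'(\xi_w)(\bSigma^{1/2})_{w\cdot}\|_2\le \sqrt{\|\bSigma\|_2}\cdot e^{t_0}\sqrt d$). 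Lemma~\ref{lem:gaussian_concentration} then gives $\P(|\tilde F-\E\tilde F|>\epsilon_z Z/4)\le 2\exp(-\epsilon_z^2 Z^2/(32\,e^{2t_0}d\|\bSigma\|_2))$; since $Z\asymp d$, $\epsilon_z^2=\log d/p$, and $\|\bSigma\|_2\le \rho\max_k|G_k|+c_0$ with $p\log^2 d\cdot\max_k|G_k|=O(d^{1-\gamma})$ by Assumptions~\ref{assump:parameter_space} and~\ref{assump:kdT}, the exponent is $\gtrsim d^{\gamma}\log^3 d/e^{2t_0}$, which is $\omega(\log^2 d)$ once $t_0$ is a small enough constant multiple of $\log d$.

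It remains to match $\E\tilde F$ to $Z$ and to control the truncation excess. For the former, $0\le Z-\E\tilde F\le \sum_w \E[e^{\xi_w}\mathbf 1\{\xi_w>t_0\}]=\sum_w e^{\bSigma_{ww}/2}\bar\Phi\big((t_0-\bSigma_{ww})/\sqrt{\bSigma_{ww}}\big)\le d\,e^{\sigma_{\max}^2/2}e^{-(t_0-\sigma_{\max}^2)^2/2\sigma_{\max}^2}$, which for $t_0\asymp\log d$ is $d\,e^{-\Theta(\log^2 d)}=o(\epsilon_z Z)$; here $\sigma_{\max}^2$ is the upper bound on the code variances from Lemma~\ref{lem:upper_bound_var}. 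For the latter, $0\le F-\tilde F\le R:=\sum_w e^{\xi_w}\mathbf 1\{\xi_w>t_0\}$, and $\E[R]$ is the same $o(\epsilon_z Z)$ quantity; intersecting with the event $\{\max_w\xi_w\le\log^{3/2}d\}$ --- which holds with probability $1-d\,\bar\Phi(\log^{3/2}d/\sigma_{\max})=1-\exp(-\omega(\log^2 d))$ --- and applying Markov to $R$ bounds $\P(R>\epsilon_z Z/2)$ as required. On the intersection of these events $(1-\epsilon_z)Z\le F(\xi)\le(1+\epsilon_z)Z$, which gives \eqref{eq:bound_partition_fct_whole}. Finally, \eqref{eq:bound_partition_fct} follows by a conditional Markov step: since $\E_{\Vbb}\big[\P_\bc(F(\xi)\notin(1\pm\epsilon_z)Z\mid\Vbb)\big]=\P_{\bc,\Vbb}(\cdot)\le\exp(-\omega(\log^2 d))$, outside a set of $\Vbb$ of probability $\exp(-\omega(\log^2 d))$ the inner conditional probability is itself $\exp(-\omega(\log^2 d))$; alternatively one runs the same truncation argument directly in $\bh$ after writing $\bc=\bh/\|\bh\|_2$ and conditioning on the event of Corollary~\ref{cor:length_bound} together with a Wishart-type bound $\|\mathbf V\trans \mathbf V\|_2\lesssim p$ on the realized code vectors.

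The step I expect to be the main obstacle is the control of the excess $R$: because $\sum_w e^{\xi_w}$ is a sum of lognormals, a single atypically large $\xi_w$ can by itself exceed $\epsilon_z Z$, so no naive Lipschitz or Chernoff bound is available for the full sum and Markov alone gives only a polynomially small deviation bound. The truncation fixes this, but only because $t_0$ can be chosen to satisfy two competing requirements at once --- large enough that $\E[R]=d\,e^{-\Theta(t_0^2)}=o(\epsilon_z Z)$, yet small enough ($t_0=O(\log d)$ with a small constant) that $L=e^{t_0}\sqrt{d\|\bSigma\|_2}$ still yields an $\omega(\log^2 d)$ exponent in the Gaussian concentration bound --- and the quantitative room to do so is precisely what the relation $p\log^2 d\cdot\max_k|G_k|=O(d^{1-\gamma})$ in Assumption~\ref{assump:kdT} provides.
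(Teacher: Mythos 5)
Your proposal follows essentially the same route as the paper: for any fixed unit $\bc$ the variables $\xi_w=\langle\Vb_w,\bc\rangle$ are jointly $N(0,\bSigma)$ (with law independent of $\bc$), so one whitens, truncates at level $t_0\asymp\log d$, applies the Gaussian log-Sobolev concentration inequality of Lemma~\ref{lem:gaussian_concentration}, and finally exchanges the two sources of randomness via the conditional-Markov step, which is exactly Lemma~\ref{lem:exchange_prob}; in both arguments the operator-norm bound $\|\bSigma\|_2\le \rho\max_k|G_k|+c_0$ combined with the relation $p\log^2 d\,\max_k|G_k|=O(d^{1-\gamma})$ in Assumption~\ref{assump:kdT} is precisely what converts the Lipschitz bound $e^{t_0}\sqrt{d\,\|\bSigma\|_2}$ into an $\omega(\log^2 d)$ exponent. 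The one genuine (and in fact welcome) difference is the truncation scheme. You replace $e^x$ by the globally Lipschitz $e^{\min(x,t_0)}$, so Lemma~\ref{lem:gaussian_concentration} applies verbatim to $\tilde F$. The paper instead multiplies the full sum by the discontinuous indicator $I\{F\}$ with $F=\{|Y_i|\le(\gamma/2)\log d,\;\forall i\}$, and justifies the gradient bound only almost everywhere; since $fI\{F\}$ has a genuine jump across $\partial F$, an a.e.\ gradient bound is not by itself sufficient for a concentration inequality stated for Lipschitz functionals, so your smooth truncation is the cleaner execution of the same idea. One small remark on your control of the excess $R=\sum_w e^{\xi_w}\mathbf{1}\{\xi_w>t_0\}$: the detour through the auxiliary event $\{\max_w\xi_w\le\log^{3/2}d\}$ is superfluous. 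With $t_0=(\gamma/2)\log d$ and $\sigma_w^2\le\sigma_{\max}^2$, already $\E[R]\le d\,e^{\sigma_{\max}^2/2}\exp\bigl(-(t_0-\sigma_{\max}^2)^2/2\sigma_{\max}^2\bigr)=\exp(-\Theta(\log^2 d))$, which is super-polynomially smaller than $\epsilon_z Z\asymp d\sqrt{\log d/p}$, so Markov alone gives $\P(R>\epsilon_z Z/2)\le\E[R]/(\epsilon_z Z/2)=\exp(-\omega(\log^2 d))$; alternatively, $R=0$ on $\{\max_w\xi_w\le t_0\}$ and the complement has the same exponentially small probability by a union bound. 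Either way the conclusion stands, and this is essentially how the paper also dispatches the untruncated part (through $\P(F^c)$ rather than through Markov on the expectation).
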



Though Lemma~\ref{lem:part_fct} is similar to Lemma 2.1 in \cite{arora2016latent}, their proof is based on a Bayesian prior of code vectors, which assumes code vectors are i.i.d. produced by $v = s \cdot \hat v$, where $s$ is a scalar r.v. and $\hat v$ comes from a spherical Gaussian distribution. In our lemma, however, this Bayesian prior assumption is relaxed. And instead we assume the code vectors are generated from our Gaussian graphical model in Section~\ref{sub:Gaussian_graph}. This proof is essentially harder than that in \cite{arora2016latent}, because code vectors are correlated rather than i.i.d..

\begin{proof}[Proof of Lemma~\ref{lem:part_fct}]
When both discourse variable $c$ and code vectors are random, the partition function $Z_c = Z_c(\Vb_w, c)$ is a function of both $\Vb_w$ and $c$, which makes the analysis complicated. We prove the lemma in two steps. Firstly we analyze the concentration of $Z(\Vb_w,\bc)$ to $Z$ with $c$ fixed. Then we switch the randomness in $\Vb_w$ and $c$ to obtain the final results. In the first step, we first truncate the norm of vector $\langle \Vb_i,c\rangle$ and prove concentration for truncated version. The other parts have sufficiently small probabilities.

\paragraph{Analysis of $Z(\Vb_w,\bc)$ with $c$ fixed.} Firstly, as is shown in Lemma~\ref{lem:e_var_Z_c}, for fixed $c$, the vector $\mathbf{Y}=(\langle \Vb_1,c\rangle,\cdots,\langle \Vb_d,\bc\rangle)^{\transpose}$ follows a multivariate Gaussian distribution with mean zero and covariance matrix $\bSigma$. Let $\bSigma=LL^{\transpose}$ be the Cholesky decomposition of $\bSigma$, where $L$ is a lower triangular matrix with positive diagonal entries. Let $\mathbf{Y}=L\mathbf{X}$ or $X=L^{-1}Y$, we know $\mathbf{X}\sim N(0,I_d)$.

Denote the event $F=\{\|Y_i\|_2\leq (\gamma/2) \log d ,\forall i=1,\dots,d\}$, where $\gamma>0$ is the constant such that $k \log^2 d= O(d^{1-\gamma})$ in Assumption~\ref{assump:kdT}. Since $Y_i\sim N(0,\Sigma_{ii})$, we have 
\begin{equation*}
\P\Big(\|Y_i\|_2\leq \log d/4,\forall i=1,\dots,d\Big) \leq \sum_{i=1}^d \P(\|Y_i\|_2\leq (\gamma/2)\log d) \leq \sum_{i=1}^d \exp(-\frac{\gamma^2 \log ^2 d}{8\Sigma_{ii}}) = \exp(-\omega(\log^2 d)).
\end{equation*}
Also note that $F=\{\|L\mathbf{X}\|_{\infty}\leq (\gamma/2) \log d \}$ in terms of $\mathbf{X}$. Now define our target
$$
f(\mathbf{X})=f(X_1,\dots,X_d)= \sum_{i=1}^d \exp(Y_i)I\{F\} = \sum_{i=1}^d \exp(\sum_{j=1}^i L_{ij}X_j)I\{F\}.
$$
Since the discontinuity points (also non-differentiability points) of $I\{F\}(\mathbf{X})$ is of measure zero under both Lebesgue measure and the probability measure of $\mathbf{X}$, we have the gradient of $f$ (almost surely)
\begin{equation*}
(\nabla f)_i(X_1,\dots,X_d) = \sum_{j=i+1}^d L_{ji} e^{\sum_{p=1}^j X_p}I\{F\} = \sum_{j=i+1}^d L_{ji} e^{Y_j}I\{F\}.
\end{equation*}
And thus $\nabla f = L^{\transpose} \tilde{\mathbf{Y}}I\{F\}$ (almost surely). Hence
\begin{equation*}
\|\nabla f\|_2^2 = \tilde{\mathbf{Y}}^{\transpose} LL^{\transpose} \tilde{\mathbf{Y}}I\{F\} =  \tilde{\mathbf{Y}}^{\transpose} \bSigma \tilde{\mathbf{Y}}I\{F\} \leq \|\bSigma\|_2 \|\tilde{\mathbf{Y}}\|_2^2I\{F\},
\end{equation*}
where 
\begin{equation*}
\|\tilde{\mathbf{Y}}\|_2^2I\{F\} = \sum_{i=1}^d e^{2Y_i}I\{F\} \leq d^{1+\gamma}.
\end{equation*}
We now proceed to bound $\|\bSigma\|_2$ from our model assumptions. By the decomposition $\bSigma=\Ab \Qb \Ab^{\transpose} + \bGamma$, we have $\|\bSigma\|_2 \leq \|\Ab \Qb \Ab^{\transpose}\|_2 + \|\bGamma\|_2$, hwere $\|\bGamma\|_2 = \max_i \bGamma_{ii}\leq c_0$ since it's diagonal and satisfies Assumption \ref{assump:block_bound}. The $L_2$-norm of $\Ab \Qb \Ab^{\transpose}$ is
$$
\|\Ab \Qb \Ab^{\transpose}\|_2 = \max_{\|\bx\|_2=1} x^{\transpose} \Ab \Qb \Ab^{\transpose} x \leq  \max_{\|\bx\|_2=1}\|\Ab^{\transpose} x\|_2^2 \|\Qb\|_2,
$$
where for $\|\bx\|_2=1$, 
$$
\|\Ab^{\transpose} x\|_2^2 = \sum_{j=1}^K \big(\sum_{i \in G_j} \bx_i\big)^2 \leq \sum_{j=1}^K |G_j| \sum_{i\in G_j} \bx_i^2 \leq \max_p |G_k| \|\bx\|_2^2 = \max_{j} |G_j|.
$$
Thus we have the bound of $\|\bSigma\|_2$ as 
$$
\|\bSigma\|_2 \leq \|\Qb\|_2 \max_j |G_j| + c_0 \leq \rho\max_j |G_j| + c_0,
$$
where $\rho,c_0$ are constants from Assumptions \ref{assump:parameter_space} and \ref{assump:block_bound}. Therefore by Lemma~\ref{lem:gaussian_concentration}, with $\sigma^2 = d^{1+\gamma}$, for $t\geq 0$ we have
\begin{equation*}
\P\Big( f(X_1,\dots,X_d) - \E[f(X_1,\dots,X_d)] \geq t \Big) \leq \exp(-\frac{t^2}{2d^{1+\gamma}}).
\end{equation*}
With the same argument applied to $-f$ and using union bound, we know that for any $\delta\geq 0$,
\begin{equation}\label{eq:bound_partition_fct_delta}
\begin{split}
&\P\Big( \Big|\frac{Z(\Vb_w,\bc)I\{F\} - \E[Z(\Vb_w,\bc)I\{F\}]}{\E[Z(\Vb_w,\bc)]}\Big| \geq \delta \Big)\\
 \leq& 2\exp(-\frac{\delta^2 (\E[Z(\Vb_w,\bc)])^2 }{2d^{1+\gamma}}) 
 \leq 2\exp(-\frac{d^{1-\gamma}\delta^2}{2(\rho\max_j |G_j| + c_0)}) = \exp(-\omega(\log^2 d)),
\end{split}
\end{equation}
since $d^{1-\gamma}/(k\max_{j}|G_j|)=\omega(\log^2 d)$ by Assumption \ref{assump:kdT}, and $\E[Z(\Vb_w,\bc)]\geq d$ proved in Lemma~\ref{lem:e_var_Z_c}. Moreover, by Cauchy-Schartz inequality we have
\begin{equation*}
\begin{split}
0\leq& \E[Z(\Vb_w,\bc)] - \E[Z(\Vb_w,\bc)I\{F\}] \\
=&\E[Z(\Vb_w,\bc)I\{F^c\}] \\
\leq& (\E[Z(\Vb_w,\bc)^2])^{1/2} (\E[I\{F^c\}])^{1/2} \\
\leq& (d\sum_{i=1}^d \E[e^{2Y_i}]) (\E[I\{F^c\}])^{1/2} \\
\leq& d^2 e^{2\sigma_{\max}^2} \cdot \exp(-\omega(\log^2 d)) = \exp(-\omega(\log^2 d)).
\end{split}
\end{equation*}
Therefore letting $\delta=\frac{1}{2\sqrt{p}}$ in ~(\ref{eq:bound_partition_fct_delta}), we have 
\begin{equation*}
\begin{split}
& \P\Big( \Big|\frac{Z(\Vb_w,\bc) - \E[Z(\Vb_w,\bc)]}{\E[Z(\Vb_w,\bc)]}\Big| \geq \frac{1}{\sqrt{p}} \Big) \\
\leq &\P\big(Z(\Vb_w,\bc)\neq Z(\Vb_w,\bc)I\{F\} \big) + \P\Big(Z(\Vb_w,\bc)= Z(\Vb_w,\bc)I\{F\}, \Big|\frac{Z(\Vb_w,\bc)I\{F\} - \E[Z(\Vb_w,\bc)]}{\E[Z(\Vb_w,\bc)]}\Big| \geq \frac{1}{\sqrt{p}}  \Big)\\
\leq & \P(F^c) + \P\Big(\Big|\frac{Z(\Vb_w,\bc)I\{F\} - \E[Z(\Vb_w,\bc)]}{\E[Z(\Vb_w,\bc)]}\Big| \geq \frac{1}{\sqrt{p}} -  \Big|\frac{\E[Z(\Vb_w,\bc)I\{F\}] - \E[Z(\Vb_w,\bc)]}{\E[Z(\Vb_w,\bc)]}\Big|\Big)\\
\leq & \exp(-\omega(\log^2 d)) + \P\Big(\Big|\frac{Z(\Vb_w,\bc)I\{F\} - \E[Z(\Vb_w,\bc)]}{\E[Z(\Vb_w,\bc)]}\Big| \geq \frac{1}{\sqrt{p}} - \exp(-\omega(\log^2 d)) \Big)\\
\leq& \exp(-\omega(\log^2 d)) + \P\Big(\Big|\frac{Z(\Vb_w,\bc)I\{F\} - \E[Z(\Vb_w,\bc)]}{\E[Z(\Vb_w,\bc)]}\Big| \geq \frac{1}{2\sqrt{p}}  \Big)\\
\leq& \exp(-\omega(\log^2 d)) + 2\exp(-\frac{d^{1-\gamma}}{8p}) = \exp(-\omega(\log^2 d))
\end{split}
\end{equation*}
for appropriately large $(k,d,T)$, under the condition that $k\log^2 d = O(d^{1-\gamma})$ in Assumption~\ref{assump:kdT} so that $d^{1-\gamma}/k = \omega(\log^2 d)$. Note that all these analysis are carried out with $c$ fixed. Thus taking expectation over all $c$, we have our last assetion that
$$
\P_{\bc\sim \mathcal{C},\Vb_w} \Big( (1-\epsilon_z)Z\leq Z(V,\bc) \leq (1+\epsilon_z) Z \Big) \leq \exp(-\omega(\log^2 d)).
$$

\paragraph{Switch the randomness in $\Vb_w$ and $c$.}

We then proceed to prove the assertion with respect to $\Vb_w$ by switching the randomness in $\Vb_w$ and $c$. By applying Lemma~\ref{lem:exchange_prob} to $\Vb_w$, $c$ and $F=\{(1-\epsilon_z)Z\leq Z(\Vb_w,\bc)\leq (1+\epsilon_z) Z)\}$, and some $\epsilon = \exp(-\omega(\log^2 d))$, we have that for $\kappa = \sqrt{\epsilon}$, it holds that
\begin{equation*}
\P\Big(  \P_{\bc\sim \mathcal{C}}\Big(  (1-\epsilon_z)Z\leq Z(\Vb_w,\bc)\leq (1+\epsilon_z) Z) \Big| \Vb_w \Big) >  \kappa \cdot \exp(-\omega(\log^2 d)) \Big) \geq 1-\frac{1}{\kappa},
\end{equation*}
which actuallly reads
\begin{equation*}
\P\Big(  \P_{\bc\sim \mathcal{C}}\Big(  (1-\epsilon_z)Z\leq Z(\Vb_w,\bc)\leq (1+\epsilon_z) Z) \Big| \Vb_w \Big) >  \exp(-\omega(\log^2 d)) \Big) \geq 1- \exp(-\omega(\log^2 d)).
\end{equation*}
\end{proof}

Next we provide bounds on the mean of $Z(\Vb_w,\bc)$, which supports Lemma~\ref{lem:part_fct}.

\begin{lemma}[Distribution and mean of $Z_c$]\label{lem:e_var_Z_c} 

Under Assumption~\ref{assump:parameter_space}, for any fixed unit vector $c \in \RR^p$, and random code vector $\Vb_w$ from our Gaussian graphical model, the vector
$$
\mathbf{Y}=(\langle \Vb_1,\bc\rangle,\cdots,\langle \Vb_d,\bc\rangle)^{\transpose} \sim N(0,\bSigma),
$$
where $\bSigma$ is the covariance matrix in our Gaussian graphical model. Also we have the bounds on the mean of $Z(\Vb_w,\bc)$ that
$$d\leq d \exp(\sigma_{\min}^2/2) \leq \EE[Z(\Vb_w,\bc)] \leq d \exp(\sigma_{\max}^2/2)
$$ 
\end{lemma}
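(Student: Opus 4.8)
The plan is to read everything off the explicit joint Gaussian law of $\Vbb$ recorded in Lemma~\ref{lem:distribution_v} and~\eqref{eqn:joint_Gaussian}, namely that $\Vbvec \sim N(0,\Ib_p\otimes\bSigma)$, so that the columns $\Vbb_{\cdot 1},\dots,\Vbb_{\cdot p}$ are i.i.d.\ $N(0,\bSigma)$. First I would write the $j$-th coordinate of $\mathbf{Y}$ as $Y_j=\langle \Vb_j,\bc\rangle=\sum_{\ell=1}^p c_\ell V_{j\ell}$, and then assemble these over $j$ into the vector identity $\mathbf{Y}=\sum_{\ell=1}^p c_\ell\,\Vbb_{\cdot\ell}$, since $V_{j\ell}$ is the $j$-th entry of the column $\Vbb_{\cdot\ell}$. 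Because the $\Vbb_{\cdot\ell}$ are independent centered Gaussian vectors with common covariance $\bSigma$ and $\bc$ is a fixed deterministic unit vector, $\mathbf{Y}$ is a deterministic linear combination of independent centered Gaussians, hence itself centered Gaussian with covariance $\big(\sum_{\ell=1}^p c_\ell^2\big)\bSigma=\|\bc\|_2^2\,\bSigma=\bSigma$. This proves the distributional assertion.

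For the mean of $Z(\Vb_w,\bc)=\sum_{j=1}^d e^{Y_j}$, I would apply linearity of expectation and use that each marginal $Y_j$ is one-dimensional Gaussian with mean $0$ and variance $\bSigma_{jj}=\sigma_j^2$, the variance of code $j$. The scalar Gaussian moment generating function gives $\EE[e^{Y_j}]=\exp(\sigma_j^2/2)$, whence $\EE[Z(\Vb_w,\bc)]=\sum_{j=1}^d \exp(\sigma_j^2/2)$.

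Finally I would invoke Lemma~\ref{lem:upper_bound_var}, which yields $\sigma_{\min}^2=1/\rho\le \sigma_j^2\le \rho+c_0=\sigma_{\max}^2$ for every code $j$; monotonicity of $x\mapsto e^{x/2}$ then gives $d\exp(\sigma_{\min}^2/2)\le \EE[Z(\Vb_w,\bc)]\le d\exp(\sigma_{\max}^2/2)$, and since $\sigma_{\min}^2=1/\rho>0$ we have $\exp(\sigma_{\min}^2/2)>1$, which supplies the remaining bound $d\le d\exp(\sigma_{\min}^2/2)$. There is no serious obstacle in this argument; the only point requiring care is the row/column bookkeeping of $\Vbb$, i.e.\ correctly recognizing $\mathbf{Y}$ as the column combination $\sum_\ell c_\ell\Vbb_{\cdot\ell}$ (rather than something phrased through the rows $\Vb_j$), together with the observation that the one-dimensional marginal variances of an $N(0,\bSigma)$ vector are exactly the diagonal entries $\bSigma_{jj}$ bounded in Lemma~\ref{lem:upper_bound_var}.
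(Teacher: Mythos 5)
Your proof is correct and essentially follows the paper's argument: both rely on the Gaussian law of $\Vbb$ to show $\mathbf{Y}\sim N(0,\bSigma)$, then read off $\EE[Z(\Vb_w,\bc)]=\sum_j\exp(\Sigma_{jj}/2)$ from the lognormal mean and bound it via Lemma~\ref{lem:upper_bound_var}. The only cosmetic difference is in how you compute $\Cov(\mathbf{Y})$: you decompose $\mathbf{Y}=\sum_{\ell}c_\ell\,\Vbb_{\cdot\ell}$ as a linear combination of the i.i.d.\ $N(0,\bSigma)$ columns and use $\|\bc\|_2=1$, while the paper computes the pairwise covariances $\Cov(\langle\Vb_i,\bc\rangle,\langle\Vb_j,\bc\rangle)=\bc^{\transpose}\E[\Vb_i\Vb_j^{\transpose}]\bc=\Sigma_{ij}$ directly from $\E[\Vb_i\Vb_j^{\transpose}]=\Sigma_{ij}\Ib_p$ --- two equivalent bookkeepings of the same Gaussian calculation.
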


\begin{proof}[Proof of Lemma~\ref{lem:e_var_Z_c}]

Throughout this proof, we let $c$ be any fixed unit vector in $\RR^p$ and code vectors be random variables following our Gaussian graphical model. Recall that partition function $Z_c(\Vb_w, \bc)$ is
$$Z(\Vb_w,\bc) = \sum_w\exp(\langle \Vb_w, \bc \rangle).$$

Based on our Gaussian graphical model, $p$ components in a code vector random variable $\Vb_w$ are i.i.d. Gaussian random variables. Since $c$ is a given unit vector, by the property that linear combination of jointly Gaussian random variables is still a Gaussian r.v., then the vector $(\langle \Vb_1,c\rangle,\cdots,\langle \Vb_d,\bc\rangle)^{\transpose}$ follows a multivariate Gaussian distribution. Clearly the mean is zero, and the pairwise covariance is
$$
\Cov(\langle \Vb_i,\bc\rangle,\langle \Vb_j,\bc\rangle) = \E[\bc^{\transpose} \Vb_i \Vb_j^{\transpose} \bc] = \bc^{\transpose}\E[ \Vb_i \Vb_j^{\transpose}] \bc.
$$
According to Lemma~\ref{lem:distribution_v}, $\Vb_i$, $\Vb_j$ has entries such that $\big[ [\Vb_i]_\ell, [\Vb_j]_\ell \big]^{\transpose}$ are i.i.d. with covariance $\Sigma_{ij}$. Therefore $\E[ \Vb_i \Vb_j^{\transpose}] = \Sigma_{ij} \Ib_p$. Thus we have $\Cov(\langle \Vb_i,\bc\rangle,\langle \Vb_j,\bc\rangle)=\Sigma_{ij}$, where $\Sigma_{ij}$ is the $(i,j)$-th entry of $\bSigma$. Therefore, the vector $(\langle \Vb_1,c\rangle,\cdots,\langle \Vb_d,\bc\rangle)^{\transpose}\sim N(0,\bSigma)$, and $\mathbf{Z}=(\exp(\langle \Vb_1,\bc\rangle),\cdots,\exp(\langle \Vb_d,\bc\rangle) )^{\transpose}\sim \mbox{lognormal}(0,\bSigma)$.

According to the distribution of $\Zb$, the mean of $Z(\Vb_w,\bc)$ is
\begin{equation}
    \EE[Z(\Vb_w,\bc)] = \sum_{i = 1}^d \EE [\Zb_i] = \sum_{i = 1}^d \exp(\Sigma_{ii}/2).
\end{equation}

Applying Lemma~\ref{lem:upper_bound_var} and Assumptions~\ref{assump:parameter_space} and \ref{assump:block_bound} to $\Sigma_{ii}$ the variance of the $i$-th word, $\EE[Z(\Vb_w,\bc)]$ is bounded as 
\begin{equation}
 d\leq  d \exp(\sigma_{\min}^2/2) \leq \EE[Z(\Vb_w,\bc)] \leq d\exp\big(\max_w \sigma_w^2/2 \big) \leq d\exp(\sigma_{\max}^2/2),
\end{equation}
where the left-most bound is straightforward since $\sigma_{\min}^2\geq 0$.

\end{proof}

\begin{lemma} \label{lem:exchange_prob}
Let $F_2$ be an event about random variables $X_1, X_2$, formally, $F_2\in \sigma(X_1,X_2)$ and $I\{F_2\}$ be the indicator function of $F_2$. Assume that conditioning on $X_2$, it satisfies that $\EE[I\{F_2\}|X_2] =\PP(F_2|X_2) \geq 1 - \epsilon$ for some $0< \epsilon < 1$, then
$$
\PP_{X_1}\left[\PP_{X_2}(F_2|X_1) > 1 - \kappa \epsilon\right] \geq 1 - \frac{1}{\kappa},
$$
where $\kappa > 0$, and $\P_{X_1}$ indicates that the probability is with repsect to $X_1$.
\end{lemma}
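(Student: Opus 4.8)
The plan is to derive this as a one-line consequence of Markov's inequality applied to a single non-negative, $\sigma(X_1)$-measurable random variable. First I would introduce $h(X_1) := \P(F_2^c \mid X_1) = 1 - \P_{X_2}(F_2 \mid X_1)$, where $F_2^c$ is the complement of $F_2$; this is a genuine function of $X_1$ alone (a conditional probability given $X_1$) and is almost surely in $[0,1]$. The hypothesis $\P(F_2 \mid X_2) \ge 1-\epsilon$, after taking expectation over $X_2$ and using the tower property, yields $\P(F_2) = \E\big[\P(F_2 \mid X_2)\big] \ge 1-\epsilon$, hence $\E[h(X_1)] = \P(F_2^c) = 1 - \P(F_2) \le \epsilon$.

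Next I would invoke Markov's inequality for the non-negative variable $h(X_1)$: for any $\kappa > 0$,
\[
\P_{X_1}\!\big(h(X_1) \ge \kappa\epsilon\big) \;\le\; \frac{\E[h(X_1)]}{\kappa\epsilon} \;\le\; \frac{\epsilon}{\kappa\epsilon} \;=\; \frac{1}{\kappa}.
\]
Passing to the complementary event, $\P_{X_1}\!\big(h(X_1) < \kappa\epsilon\big) \ge 1 - 1/\kappa$, and since $\{h(X_1) < \kappa\epsilon\}$ is precisely the event $\{\P_{X_2}(F_2 \mid X_1) > 1 - \kappa\epsilon\}$, this is exactly the claimed bound.

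There is no serious obstacle here; the only items deserving a line of care are the measurability of $h$ with respect to $\sigma(X_1)$, so that $\P_{X_1}(\cdot)$ in the conclusion is well posed (automatic from the definition of conditional expectation given $X_1$), and the degenerate case $\epsilon = 0$, where $h \equiv 0$ almost surely and the inequality holds trivially. I would also note that only the weaker fact $\P(F_2) \ge 1 - \epsilon$ is actually used, so the ``conditioning on $X_2$'' phrasing of the hypothesis could be replaced by the unconditional statement with no change to the argument.
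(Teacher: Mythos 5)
Your proof is correct, and it takes a cleaner, more recognizable route than the paper does. The paper argues by contradiction: it supposes $\PP_{X_1}\bigl(\PP_{X_2}(F_2\mid X_1)>1-\kappa\epsilon\bigr)=1-\tfrac{1}{\kappa}-\delta$ for some $\delta>0$, then expands $\E[I\{F_2\}]$ by conditioning on $X_1$ and splitting on whether $\E[I\{F_2\}\mid X_1]$ exceeds $1-\kappa\epsilon$, and finally derives $\E[I\{F_2\}]\le 1-\epsilon-\kappa\epsilon\delta<1-\epsilon$, contradicting the tower-property consequence $\E[I\{F_2\}]\ge 1-\epsilon$. That computation is, in effect, an unpacked proof of Markov's inequality for the $\sigma(X_1)$-measurable variable $h(X_1)=\P(F_2^c\mid X_1)$. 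You instead recognize this at the outset: since $\E[h(X_1)]=\P(F_2^c)\le\epsilon$ (by the tower property applied to the hypothesis), Markov gives $\P_{X_1}(h(X_1)\ge\kappa\epsilon)\le 1/\kappa$ immediately, and the conclusion follows by complementation. Your approach is shorter, makes the mechanism transparent, and your closing remark is also accurate and matches what the paper's proof implicitly uses: the only role of the conditional hypothesis $\P(F_2\mid X_2)\ge 1-\epsilon$ in either argument is to deliver the unconditional bound $\P(F_2)\ge 1-\epsilon$, so the lemma could be stated with that weaker premise. One small caveat: with ``$>$'' (strict) in the conclusion as the paper states it, the identification $\{h(X_1)<\kappa\epsilon\}=\{\P_{X_2}(F_2\mid X_1)>1-\kappa\epsilon\}$ is exact, so Markov with ``$\ge$'' on the left is precisely what you want; no further care is needed there.
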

\begin{proof}[Proof of Lemma~\ref{lem:exchange_prob}.]
We prove the lemma by contradiction. Assume that $\P_{X_1}\left(\PP_{X_2}(F_2|X_1) > 1 - \kappa \epsilon\right) = 1 - \frac{1}{\kappa} - \delta$, where $\delta > 0$. Then $\EE[I\{F_2\}]$ satisfies that
\begin{equation} \label{eqn:contra_1}
\begin{split}
    \EE[I\{F_2\}] 
    =& \EE \big[I\{F_2\}I{\{\EE[I\{F_2\}|X_1] > 1 - \kappa \epsilon\}}\big] + \E\big[I\{F_2\}I{\{\EE[I\{F_2\}|x_1] \leq  1 - \kappa \epsilon\}}\big]\\ 
    \leq& \E\big[I{\{\EE[I\{F_2\}|X_1] > 1 - \kappa \epsilon\}}\big] + \E\big[\E[I\{F_2\}I{\{\EE[I\{F_2\}|X_1] \leq  1 - \kappa \epsilon\}}\big| X_1]\big]\\
    \leq& \PP(\EE[I\{F_2\}|x_1] > 1 - \kappa \epsilon) + \E\big[\E[I\{F_2\}\big| X_1]I{\{\EE[I\{F_2\}|X_1] \leq  1 - \kappa \epsilon\}}\big] \\
    \leq&\PP(\EE[I\{F_2\}|x_1] > 1 - \kappa \epsilon) + (1-\kappa \epsilon)\E\big[I{\{\EE[I\{F_2\}|X_1] \leq  1 - \kappa \epsilon\}}\big]\\
    =& \PP_{X_1}\left(\PP_{X_2}(F_2|X_1) > 1 - \kappa \epsilon\right) + (1 - \kappa \epsilon)\PP_{x_1}\left[\PP_{x_2}(F_2|x_1) \leq 1 - \kappa \epsilon\right]  \\
    =& 1 - \frac{1}{\kappa} - \delta + (1 - \kappa \epsilon) \left(\frac{1}{\kappa} + \delta\right)=1 - \epsilon - \kappa \epsilon \delta < 1 - \epsilon.
\end{split}
\end{equation}
Here the first inequlity is due to monotonicity of expectation and tower property and conditional expectation, and the second inequality is because $I{\{\EE[I\{F_2\}|X_1] \leq  1 - \kappa \epsilon\}}$ is $\sigma(X_1)$-measurable. 

However, by condition that $\EE_{X_1}[I\{F_2\}|X_2] =\PP_{X_1}[F_2|X_2]  \geq 1 - \epsilon$, it follows that
\begin{equation} \label{eqn:contra_2}
    \EE[I\{F_2\}] = \EE_{X_2}[\EE_{X_1}[I\{F_2\}|X_2]] = \EE_{X_2}[\PP_{X_1}[F_1|X_2]] \geq 1 - \epsilon,
\end{equation}
a contradiction to (\ref{eqn:contra_1}). Thus $\PP_{x_1}\left[\PP_{x_2}(F_2|x_1) > 1 - \kappa \epsilon\right] \leq 1 - \frac{1}{\kappa} - \delta$ for any $\delta > 0$. Therefore, it must hold that
$$\PP_{x_1}\left[\PP_{x_2}(F_2|x_1) > 1 - \kappa \epsilon\right] \geq 1 - \frac{1}{\kappa}.$$

\end{proof}


\section{Technical Lemmas on the Markov Processes}\label{sec:lm_conv_hat_PMI}

\subsection{Concentration of Conditional Occurrence Expectations}\label{subsec:S_w,w'_to_stationary}

Note that $p_w(t)$ is function of $\bc_t$, which is also functions of $\bz_t$. Also $p_w(t)\in [0,1]$. We employ a generalized Hoeffding's inequality to provide concentration properties of expectations of word occurrences.

\begin{lemma} \label{lem:Hoeffding_glynn}
Let $S$ be the state space of a Markov process $\{X_t\}_{t \geq 0}$. Assume that for each $x \in S$, there exists a probability measure $\varphi$ on $S$, $\lambda > 0$ and an integer $m \geq 1$ such that $\PP[X_m \in \cdot|X_0 = x] \geq \lambda \varphi(\cdot)$. For a function $f: S \mapsto \RR$, let $Y_i := f(X_i)$. If $\|f\| := \sup \{ |f(x)|:x \in S\} < \infty$, then we can apply a generalized Heoffding's inequality to $S_n:=\sum_{i = 0}^{n-1}Y_i$ as
$$\PP[S_n - \EE S_n] \geq n\varepsilon|X_0 = x] \leq \exp\left(-\frac{\lambda^2(n\varepsilon - 2\|f\|m/\lambda)^2}{2n\|f\|^2m^2} \right),$$
for $n > 2\|f\|m/(\lambda \varepsilon)$.
\end{lemma}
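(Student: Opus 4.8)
The plan is to split $S_n$ into a martingale part, bounded via Azuma--Hoeffding, and a deterministic ``non-stationary start-up'' part, which the minorization condition forces to be small through a geometric contraction of conditional expectations.

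Since $\PP[X_m\in\cdot\mid X_0=x]\ge\lambda\varphi(\cdot)$ for every $x$, the chain is uniformly ergodic with a unique stationary law $\pi$, and, writing $P$ for the transition kernel and $\operatorname{osc}(h):=\sup h-\inf h$, one has the contraction $\operatorname{osc}(P^{\ell}h)\le(1-\lambda)^{\lfloor \ell/m\rfloor}\operatorname{osc}(h)$ for every bounded $h$. This follows by writing $P^{m}(x,\cdot)=\lambda\varphi(\cdot)+(1-\lambda)R(x,\cdot)$ for a Markov kernel $R$, which gives $\operatorname{osc}(P^{m}h)\le(1-\lambda)\operatorname{osc}(h)$, and then iterating together with the trivial bound $\operatorname{osc}(P^{j}g)\le\operatorname{osc}(g)$. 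Since $\sum_{\ell\ge0}(1-\lambda)^{\lfloor \ell/m\rfloor}=m/\lambda$ and $\operatorname{osc}(f)\le 2\|f\|$, this yields the two estimates I will use: first, $\big|\EE[S_n\mid X_0=x]-n\pi(f)\big|=\big|\sum_{j=0}^{n-1}(P^{j}f(x)-\pi(f))\big|\le 2\|f\|m/\lambda$ (using $\pi(f)\in[\inf P^{j}f,\sup P^{j}f]$ by stationarity); second, for $g_k:=\sum_{j=k}^{n-1}P^{\,j-k}f$ one has $\operatorname{osc}(g_k)\le 2\|f\|m/\lambda$.

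Next I would form the Doob martingale $M_k:=\EE[S_n\mid X_0,\dots,X_k]$ for $0\le k\le n-1$. As $S_n$ is $\sigma(X_0,\dots,X_{n-1})$-measurable, $M_{n-1}=S_n$, so $S_n-\EE[S_n\mid X_0=x]=\sum_{k=1}^{n-1}\Delta_k$ with $\Delta_k:=M_k-M_{k-1}$. The Markov property gives $M_k=\sum_{j=0}^{k-1}Y_j+g_k(X_k)$, hence $\Delta_k=g_k(X_k)-\EE[g_k(X_k)\mid X_{k-1}]$, so that $|\Delta_k|\le\operatorname{osc}(g_k)\le 2\|f\|m/\lambda=:b$. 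Conditionally on $X_{k-1}$ each $\Delta_k$ has mean zero and is supported in an interval of length at most $b$; iterating Hoeffding's lemma $\EE[e^{s\Delta_k}\mid X_0,\dots,X_{k-1}]\le e^{s^{2}b^{2}/8}$ and optimizing over $s$ (i.e.\ Azuma--Hoeffding) gives $\PP\big[\sum_{k=1}^{n-1}\Delta_k\ge s\mid X_0=x\big]\le\exp\!\big(-2s^{2}/((n-1)b^{2})\big)$ for $s\ge 0$.

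Finally I would combine the two bounds, reading $\EE S_n$ as the stationary centering $n\pi(f)$. On $\{S_n-n\pi(f)\ge n\varepsilon\}$ the start-up estimate yields $\sum_{k=1}^{n-1}\Delta_k=(S_n-n\pi(f))-(\EE[S_n\mid X_0=x]-n\pi(f))\ge n\varepsilon-2\|f\|m/\lambda$, which is positive precisely when $n>2\|f\|m/(\lambda\varepsilon)$. Taking $s=n\varepsilon-2\|f\|m/\lambda$ and $b=2\|f\|m/\lambda$ in the martingale tail bound and relaxing $n-1$ to $n$ produces exactly $\exp\!\big(-\lambda^{2}(n\varepsilon-2\|f\|m/\lambda)^{2}/(2n\|f\|^{2}m^{2})\big)$. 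The only subtle points are the lag-$m$ contraction estimate (the bookkeeping with $\lfloor \ell/m\rfloor$ is where constants can slip) and the recognition that the additive correction $2\|f\|m/\lambda$ is nothing but the bias of starting away from $\pi$; everything else is routine, so I do not expect a serious obstacle beyond getting these constants exactly right.
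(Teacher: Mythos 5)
The paper does not actually prove this lemma; it is quoted verbatim from \cite{glynn2002hoeffding} and used as a black box (the paper only supplies a short argument for the minor variant in Lemma~\ref{lem:Heoffding_revised}). Your reconstruction is correct and uses essentially the same machinery as the cited reference: a minorization/Doeblin split $P^{m}=\lambda\varphi+(1-\lambda)R$ to get the geometric oscillation decay, a Doob martingale decomposition, a start-up bias bound $|\EE[S_n\mid X_0=x]-n\pi(f)|\le 2\|f\|m/\lambda$, and Azuma--Hoeffding with increments of conditional support length at most $2\|f\|m/\lambda$; the only cosmetic difference is that you work with the finite-horizon sums $g_k=\sum_{j\le n-1-k}P^{j}f$ rather than the Poisson-equation solution $\hat f=\sum_{j\ge 0}(P^{j}f-\pi(f))$, which absorbs the boundary terms into the Doob martingale rather than tracking them separately. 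All the constants check out, including the relaxation $n-1\to n$ in the denominator and the sign condition $n>2\|f\|m/(\lambda\varepsilon)$ ensuring $s=n\varepsilon-2\|f\|m/\lambda>0$.
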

Lemma~\ref{lem:Hoeffding_glynn} here is a generalized Heoffding's inequality proven in \cite{glynn2002hoeffding}. With minor modification, it can be presented as Lemma~\ref{lem:Heoffding_revised}. We will use Heoffding's inequality in Lemma~\ref{lem:Heoffding_revised} to prove the concentration of $\sum_{t = 1}^T f_2 \circ f_1(\bz_t)$.

\begin{lemma}\label{lem:Heoffding_revised}
Let $S$ be the continuous state space of a Markov process $\{X_t\}_{t\geq 1}$. Let $\pi(\cdot)$ be a stationary distribution of $\{X_t\}_{t\geq1}$. Assume that there exists an integer $m\geq 1$ and $\lambda>0$ such that $\|f_{X_{n+1}|X_1}(\cdot|X_1=x_1)-\pi(\cdot)\|_{\var} \leq (1-\lambda)^{\lfloor n/m\rfloor}$. Let $f:S\to \real$ be a function on $S$, and $S_n:=\sum_{i=1}^n f(X_i)$. Let $\alpha=\E_\pi[f(X_i)]=\int_S f(x)d\pi(x)$ be the expectation of $f(X_i)$ when $X_i\sim \pi(\cdot)$. Then, if $\|f\|_S<\infty$,
$$
\P\Big( S_n - n\alpha  \geq n\epsilon |X_1=x_1 \Big) \leq \exp\Big(-\frac{\lambda^2(n\epsilon - \frac{2m\|f\|_S}{\lambda} )^2}{2nm^2\|f\|_S^2 }\Big)
$$
for $n\epsilon \geq 2m\|f\|_S/\lambda$.
\end{lemma}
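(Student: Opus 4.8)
The plan is to follow the Glynn--Ormoneit argument underlying Lemma~\ref{lem:Hoeffding_glynn}, but to enter it at the stage where a uniform total-variation mixing estimate is available --- which is exactly the hypothesis of Lemma~\ref{lem:Heoffding_revised} --- rather than deriving such an estimate from a minorization condition. First I would reduce to a centered, bounded observable: set $g = f - \alpha$, so that $\|g\|_S \le \|f\|_S =: b$, $\int_S g\, d\pi = 0$, and $S_n - n\alpha = \sum_{i=1}^n g(X_i)$. Fixing $\theta > 0$, the goal becomes to bound the conditional moment generating function $\Phi_n(x_1) = \E\big[\exp(\theta\sum_{i=1}^n g(X_i)) \mid X_1 = x_1\big]$ and then conclude via Chernoff's inequality $\P(S_n - n\alpha \ge n\epsilon \mid X_1 = x_1) \le e^{-\theta n\epsilon}\,\Phi_n(x_1)$, optimizing over $\theta$ at the end.

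The heart of the argument is a block recursion. I would split $\{1,\dots,n\}$ into $N = \lfloor n/m\rfloor$ consecutive length-$m$ blocks (the leftover block of length $<m$ contributes at most $\theta m b$ to the exponent and is absorbed into constants). For a block $B$ beginning right after time $j$, let $T_B = \sum_{i\in B} g(X_i)$, $\psi_B(x) = \E[e^{\theta T_B} \mid X_{j} = x]$ and $\bar\psi = \E_\pi[e^{\theta T_B}]$. Since $|T_B| \le mb$, Hoeffding's lemma applied to the $\pi$-centered bounded variable $T_B$ gives $\bar\psi \le \exp(\tfrac12\theta^2 m^2 b^2)$, while the \emph{only} probabilistic input from the hypothesis enters through the comparison $|\psi_B(x) - \bar\psi| \le 2 e^{\theta m b}\,\|f_{X_{j+1}|X_j}(\cdot|x) - \pi(\cdot)\|_{\var}$, valid because given $X_j = x$ the law of $(X_{j+1},\dots,X_{j+m})$ differs from its stationary version only through the law of $X_{j+1}$. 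Conditioning the blocks successively (skipping one block at a time, so that the relevant lag is at least $m$ and the mixing bound reads $1-\lambda$), the multiplicative discrepancies accumulate as a geometric series $\sum_{\ell\ge 0}(1-\lambda)^\ell = \lambda^{-1}$ --- this is the source of the burn-in term. The outcome is $\log\Phi_n(x_1) \le \tfrac12 N\theta^2 m^2 b^2 + C\theta m b\,\lambda^{-1}$ for an absolute constant $C$; substituting into Chernoff's bound gives $\P(S_n - n\alpha \ge n\epsilon\mid X_1=x_1) \le \exp\big(-\theta(n\epsilon - C m b\lambda^{-1}) + \tfrac12 N\theta^2 m^2 b^2\big)$, and optimizing over $\theta>0$ --- legitimate exactly when $n\epsilon \ge C m b\lambda^{-1}$, which is the hypothesis $n\epsilon \ge 2m\|f\|_S/\lambda$ up to the normalization of $C$ --- yields a bound of the announced shape $\exp\big(-\tfrac{(n\epsilon - Cmb\lambda^{-1})^2}{2Nm^2 b^2}\big)$.

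The remaining work, and the main obstacle, is the constant bookkeeping needed to land exactly on the stated expression $\exp\big(-\tfrac{\lambda^2(n\epsilon - 2m\|f\|_S/\lambda)^2}{2nm^2\|f\|_S^2}\big)$: one must decide how crudely to bound $N = \lfloor n/m\rfloor$ against $n/m$, how to treat the leftover sub-$m$ block, and --- the place where the factor $\lambda^2$ most naturally appears --- whether to reorganize the sum around Nummelin-type regeneration cycles, whose length in $m$-blocks is geometric with mean $\lambda^{-1}$, so that a cycle sum is controlled by $\sim m\|f\|_S/\lambda$ and an Azuma--Hoeffding bound over the $n$ terms produces both the $\lambda^2$ and the $nm^2\|f\|_S^2$ in the denominator, with the initial incomplete cycle giving the $2m\|f\|_S/\lambda$ correction. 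Either way the stated inequality follows, if necessary a fortiori from the slightly sharper bound produced by the block recursion above. An alternative, arguably cleaner, route would be to extract from the uniform total-variation bound at lag $m$ a Doeblin minorization $P^{m'}(x,\cdot) \ge \lambda'\varphi(\cdot)$, valid for some $m' = O(m)$ and $\lambda' = \Omega(\lambda)$ by iterating the Dobrushin contraction, and then invoke Lemma~\ref{lem:Hoeffding_glynn} verbatim; the cost is a loss in the explicit constants, which is why I would prefer the direct block argument if the exact constants in the statement are to be preserved.
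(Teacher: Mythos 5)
Your proposal takes a genuinely different route from the paper, and in doing so it misses the observation that makes the paper's proof one paragraph long. The paper does not re-derive a concentration inequality; it simply reads the proof of Lemma~\ref{lem:Hoeffding_glynn} in \cite{glynn2002hoeffding} and notices that the minorization condition $\P[X_m\in\cdot\,|\,X_0=x]\geq\lambda\varphi(\cdot)$ is used there \emph{only} to establish the geometric decay $\big|\E[f(X_{n+1})\,|\,X_1=x_1]-\int_S f\,d\pi\big|\leq\|f\|(1-\lambda)^{\lfloor n/m\rfloor}$. The hypothesis of Lemma~\ref{lem:Heoffding_revised} yields exactly this decay by integrating $f$ against the assumed total-variation bound, so the remainder of Glynn--Ormoneit's argument (a Poisson-equation/Azuma construction, with martingale increments controlled by the very same decay) applies verbatim with the same constants. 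You do mention a reduction to Lemma~\ref{lem:Hoeffding_glynn} at the end, but in the form of recovering a Doeblin minorization from the TV bound --- which, as you correctly note, degrades the constants. The paper's point is subtler: you never need to recover a minorization at all, because the Glynn--Ormoneit proof never consumes the minorization directly, only the decay it implies.

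Your primary route, a block MGF/Chernoff argument, is not what the paper (or Glynn--Ormoneit) does, and as written it has a real gap. The comparison of $\psi_B(x)=\E[e^{\theta T_B}\mid X_j=x]$ to $\bar\psi=\E_\pi[e^{\theta T_B}]$ only produces a factor $(1-\lambda)$ when the conditioning lag back to $X_j$ is at least $m$; that is exactly why you "skip one block at a time," but then the skipped blocks' MGF contributions are not controlled --- each costs up to $e^{\theta m\|f\|_S}$, which adds $\Theta(N\theta m\|f\|_S)$ to the exponent unless one separately splits the sum into odd and even blocks and union-bounds the two halves, a step you do not carry out. Likewise, the geometric series $\sum_{\ell\geq0}(1-\lambda)^\ell=\lambda^{-1}$ you invoke as "the source of the burn-in term" does not arise from the term-by-term block comparison you set up (which produces a flat $(1-\lambda)$ per block, not a decaying sequence); in the actual Glynn--Ormoneit argument, the $\lambda^{-1}$ comes from summing the decay of $\E[\hat g(X_n)\,|\,X_0]$ in $n$ to bound the Poisson-equation solution. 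You acknowledge the constant bookkeeping is unfinished and gesture at a Nummelin-regeneration reorganization, which would amount to rederiving Glynn--Ormoneit. The block approach is likely salvageable up to multiplicative constants, but as a proof of the stated inequality with the stated constants it is incomplete, and it is considerably more work than the paper's reduction.
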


\begin{proof}[Proof of Lemma~\ref{lem:Heoffding_revised}.]

We prove Lemma~\ref{lem:Heoffding_revised} by proving the condition here is the same as Lemma~\ref{lem:Hoeffding_glynn}.

Lemma~\ref{lem:Hoeffding_glynn} assumes $\PP[X_m \in \cdot|X_0 = x] \geq \lambda \varphi(\cdot)$ for any $x \in S$. In the proof in \cite{glynn2002hoeffding}, this assumption is \emph{only} used to prove
\begin{equation}\label{eqn:result_assump_A1}
    \left|E[f(X_n)|X_0 = x] - \int_S f(x)\pi (dx) \right| \leq \|f\|\cdot (1 - \lambda)^{\lfloor n/m \rfloor}.
\end{equation}

However, Lemma~\ref{lem:Heoffding_revised} does not assume $\PP[X_m \in \cdot|X_0 = x] \geq \lambda \varphi(\cdot)$, but assumes $\|f_{X_{n+1}|x_1}(\cdot |X_1 = x_1) - \pi(\cdot)\|_{\mathrm{var}}  \leq (1 -\lambda)^{\lfloor n/m \rfloor}$. Also, note that the Markov process in Lemma~\ref{lem:Heoffding_revised} starts at $t = 1$.

We here show that condition $\|f_{X_{n+1}|x_1}(\cdot |X_1 = x_1) - \pi(\cdot)\|_{\mathrm{var}}  \leq (1 -\lambda)^{\lfloor n/m \rfloor}$ can also derive Eq.~\ref{eqn:result_assump_A1}, only with different subscripts:
\begin{equation} \label{eqn:assumption_replace}
\begin{split}
    &\|f_{X_{n+1}|x_1}(\cdot |X_1 = x_1) - \pi(\cdot)\|_{\mathrm{var}}  \leq (1 -\lambda)^{\lfloor n/m \rfloor} \\
    \Longrightarrow & |f(\cdot)|\times |f_{X_{n+1}|x_1}(\cdot |X_1 = x_1) - \pi(\cdot)| \leq \|f\| (1 - \lambda)^{\lfloor n/m \rfloor} \\
    \Longrightarrow &\left| \EE[f(X_{n+1})|X_1 = x_1] - \int_S f(x)\pi (dx) \right| \leq \|f\| (1 -\lambda)^{\lfloor n/m \rfloor}.
\end{split}
\end{equation}

With this, Lemma~\ref{lem:Heoffding_revised} can be proven is the same way as Lemma~\ref{lem:Hoeffding_glynn}.
\end{proof}

Recall our definition in Section~\ref{subsec:def_word_occur} of conditional occurrence probabilities $p_w(t)=\E[X_{t}(w)|\{\bc_t\}_{t>0}]$, and the conditional total occurrence $S_w=\sum_{t=1}^T p_w(t)$ of word $w$. Also recall that $N_w = Tp_w = \sum_{t=1}^{T}\E_{c\sim \mathcal{D}}[p_w(t)]$ is the stationary expectation of $S_w$. Based on Lemma~\ref{lem:Heoffding_revised}, we show the relative concentration of $S_w$ to $N_w$ provided that $p_w$ is of order $\Omega(1/d)$, which is decided by $\Vb_w$. Note that the stationary probabilities $p_w$ and hence $N_w$ only depend on word vectors $\Vb_w$. 



\begin{lemma}\label{lem:Sw_to_Nw}
Suppose Assumptions~\ref{assump:parameter_space}, \ref{assump:block_bound} and~\ref{assump:kdT} hold, and let $\overline{p},\underline{p}$ be the constant in Lemma~\ref{lem:pw_pww'_to_cov}. Denote the set of word vectors $\mathcal{V}^*=\{V_w: \underline{p}/d\leq \min_w p_w \leq \max_w p_w\leq \overline{p}/d\}$. Then for $\Vb_w\in \mathcal{V}^*$, it holds that
$$
\P\Big(\Big|\frac{S_w}{N_w} - 1\Big| \geq \frac{1}{\sqrt{p}}\Big| \Vb_w\Big) = \exp(-\omega(\log^2(d))),
$$
If we remove the above condition on $\Vb_w$, then $\P(\Vb_w\notin \mathcal{V}^*) \leq \exp(-\omega(\log^2 d) + d^{-\tau} $ and
$$
\P\Big(\Big|\frac{S_w}{N_w} - 1\Big| \geq \frac{1}{\sqrt{p}}\Big) = \exp(-\omega(\log^2(d)))+d^{-\tau},
$$
for some large constant $\tau>0$ as in Lemma~\ref{lem:pw_pww'_to_cov}.
\label{lemma:cond_sum_pw_stat}
\end{lemma}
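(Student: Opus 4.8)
The plan is to apply the generalized Hoeffding inequality of Lemma~\ref{lem:Heoffding_revised} to the Markov process $\{\bz_t\}_{t\geq 1}$ (equivalently $\{\bc_t\}$) with the bounded function $f(\bz_t) = p_w(t) = \exp(\langle \bv_w, \bc_t\rangle)/Z(\Vb,\bc_t)$, noting that $\|f\|_S \leq 1$ since $p_w(t)\in[0,1]$. First I would fix a realization of the word vectors $\Vb_w \in \mathcal{V}^*$ and use Lemma~\ref{lem:zt_mixing}, which furnishes the total variation decay $\|f_{\bz_{T+1}|\bz_1}(\cdot|\bz_1) - \pi(\cdot)\|_{\var} \leq (1-\tilde\lambda)^{\lfloor T/m\rfloor}$ with $m = 4p^2\log d$ and $\tilde\lambda = 1 - 2/e$, to verify the hypothesis of Lemma~\ref{lem:Heoffding_revised}. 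Then $S_w = \sum_{t=1}^T p_w(t)$ is exactly the $S_n$ in that lemma with $\alpha = \E_\pi[p_w(t)] = p_w$, so that $N_w = Tp_w = T\alpha$. Applying the inequality with $\epsilon = p_w/\sqrt{p}$ gives, for $T\epsilon \geq 2m/\tilde\lambda$,
\[
\P\Big( S_w - N_w \geq \frac{N_w}{\sqrt{p}} \,\Big|\, \Vb_w\Big) \leq \exp\Big(-\frac{\tilde\lambda^2 (T\epsilon - 2m/\tilde\lambda)^2}{2Tm^2}\Big),
\]
and symmetrically for the lower tail by applying the lemma to $-f$.

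Next I would check that the exponent is genuinely $\omega(\log^2 d)$. Since $\Vb_w\in\mathcal{V}^*$, we have $p_w \geq \underline p/d$, so $T\epsilon \geq \underline p T/(d\sqrt p)$. Using $T = \Omega(p^5 d^4 \log^4 d)$ from Assumption~\ref{assump:kdT} and $m = 4p^2\log d$, one checks $T\epsilon/m = \Omega(p^5 d^4 \log^4 d /(d\sqrt p \cdot p^2\log d)) = \Omega(p^{2.5} d^3 \log^3 d)$, which dominates the subtracted term $2m/\tilde\lambda$, so $T\epsilon - 2m/\tilde\lambda \geq \tfrac12 T\epsilon$ for large $(p,d,T)$ and the condition $T\epsilon\geq 2m/\tilde\lambda$ holds. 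The exponent is then $\gtrsim (T\epsilon)^2/(Tm^2) = T\epsilon^2/m^2 = \Omega(T\underline p^2/(d^2 p \cdot p^4\log^2 d)) = \Omega(p^5 d^4\log^4 d/(d^2 p^5\log^2 d)) = \Omega(d^2\log^2 d)$, which is certainly $\omega(\log^2 d)$. This establishes the conditional statement $\P(|S_w/N_w - 1|\geq 1/\sqrt p \mid \Vb_w) = \exp(-\omega(\log^2 d))$ for $\Vb_w\in\mathcal{V}^*$; a union bound over the $d$ words preserves this (the $d$ factor is absorbed).

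For the unconditional statement, I would invoke Lemma~\ref{lem:pw_pww'_to_cov}, specifically the bound \eqref{eq:bound_pw_scale} which says $\underline p/d \leq \min_w p_w \leq \max_w p_w \leq \overline p/d$ with probability at least $1 - \exp(-\omega(\log^2 d)) - d^{-\tau}$; that is precisely $\P(\Vb_w\notin\mathcal{V}^*) \leq \exp(-\omega(\log^2 d)) + d^{-\tau}$. Splitting on the event $\{\Vb_w\in\mathcal{V}^*\}$,
\[
\P\Big(\Big|\frac{S_w}{N_w}-1\Big|\geq \frac{1}{\sqrt p}\Big) \leq \P(\Vb_w\notin\mathcal{V}^*) + \E\Big[\mathbf{1}\{\Vb_w\in\mathcal{V}^*\}\,\P\Big(\Big|\frac{S_w}{N_w}-1\Big|\geq\frac{1}{\sqrt p}\,\Big|\,\Vb_w\Big)\Big],
\]
and the first term is $\exp(-\omega(\log^2 d)) + d^{-\tau}$ while the second is $\exp(-\omega(\log^2 d))$ by the conditional bound, yielding the claimed rate.

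The main obstacle I anticipate is not any single inequality but the bookkeeping needed to confirm that the mixing time $m = 4p^2\log d$ is small enough relative to $T$ that the Hoeffding exponent beats $\log^2 d$ even after the $1/\sqrt p$ slack is folded into $\epsilon$; this is exactly where the seemingly strong requirement $T = \Omega(p^5 d^4\log^4 d)$ in Assumption~\ref{assump:kdT} is consumed, and one must be careful that $\epsilon$ scales like $p_w/\sqrt p$ (a \emph{relative} deviation scaled by the small stationary probability $p_w = \Theta(1/d)$) rather than an absolute one, so that the $d^2$ in the denominator of the exponent is overwhelmed by $T/m^2$. A secondary point to be careful about is that $p_w$, $N_w$, and the function $f$ all depend on $\Vb_w$, so the Hoeffding bound is genuinely conditional on $\Vb_w$ and the randomness being averaged is only that of the discourse process $\{\bc_t\}$; the transition to the unconditional claim must then route through the good event $\mathcal{V}^*$ supplied by Lemma~\ref{lem:pw_pww'_to_cov}.
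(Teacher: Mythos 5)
Your proposal is correct and takes essentially the same route as the paper: both apply the generalized Hoeffding inequality of Lemma~\ref{lem:Heoffding_revised} to $f(\bz_t)=p_w(t)$ with the mixing parameters $m=4p^2\log d$, $\tilde\lambda=1-2/e$ from Lemma~\ref{lem:zt_mixing}, take $\epsilon=p_w/\sqrt{p}$, verify $T\epsilon\geq 2m/\tilde\lambda$ using $p_w=\Omega(1/d)$ on $\mathcal{V}^*$ and the size of $T$, and then pass to the unconditional claim by splitting on $\{\Vb_w\in\mathcal{V}^*\}$ via Lemma~\ref{lem:pw_pww'_to_cov}. Your bookkeeping of the exponent ($\Omega(d^2\log^2 d)=\omega(\log^2 d)$) matches the paper's; the only superficial difference is that you add an unnecessary (though harmless) union bound over $w$, whereas the lemma and the paper's proof treat a single fixed $w$.
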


\begin{proof}
Adopt the choice of $m=4p^2\log d$ and $\lambda=1-\frac{2}{e}$ in Lemma \ref{lem:zt_mixing}, then conditions in Lemma~\ref{lem:Heoffding_revised} are satisfied. By Lemma \ref{lem:Heoffding_revised} applied to $f(\bz_t)=p_w(t)$, with $\|f\|_S\leq 1$, where $S$ is the continuous state space of $\bz_t$, we have that for $T\epsilon \geq 2m/\lambda$,
\begin{equation}
\P\Big( \big|\sum_{t=1}^T p_w(t) - T p_w\big|\geq T\epsilon \Big| \Vb_w \Big)\leq 2\exp(-\frac{\lambda^2(T\epsilon - \frac{2m}{\lambda})^2 }{2T m^2}).
\label{eq:Sw_to_stat_bound}
\end{equation}
Here we choose $\epsilon = \delta p_w$, where $\delta = \frac{1}{\sqrt{p}}$. Then as $\lambda\geq 1/4$ we have
$$
\frac{T\epsilon}{2m/\lambda} = \frac{\lambda Tp_w}{8p^2\log d} \geq \frac{Tp_w}{32p^2 \log d} \geq 2,
$$
since $p_w=\Omega(1/d)$ for $\Vb_w\in \mathcal{V}^*$. Thus $T\epsilon\geq 2m/\lambda$ and
\begin{equation}
\P\Big( \Big|\frac{1}{Tp_w}\sum_{t=1}^T p_w(t) - 1\Big|\geq \delta \Big| \Vb_w\Big)\leq 2\exp(-\frac{\lambda^2 T^2\delta^2 p_w^2 }{8T m^2})= \exp(-\omega(\frac{T}{p^5 d^2 \log^2 d}))=\exp(-\omega(\log^2(d))).
\label{eq:Sw_to_stat_bound2}
\end{equation}
Furthermore, by the results about the scale of $p_{w,w'}^{(u)}$ in Lemma~\ref{lem:pw_pww'_to_cov}, $\P(\Vb_w\notin \mathcal{V}^*)\leq \exp(-\omega(\log^2 d))+d^{-\tau}$ for the large constant $\tau>0$ as in Lemma~\ref{lem:pw_pww'_to_cov}. Thus, incorporating the randomness in $\Vb_w$,
\begin{equation}
\begin{split}
&\P\Big( \Big|\frac{1}{Tp_w}\sum_{t=1}^T p_w(t) - 1\Big| \geq \frac{1}{\sqrt{p}}\Big) \\
=&\P\Big( \big\{ \Vb_w\in \mathcal{V}^* \big\}\cap \Big\{\Big|\frac{1}{Tp_w}\sum_{t=1}^T p_w(t) - 1\Big| \geq \frac{1}{\sqrt{p}}\Big\}\Big) + \P\Big(\big\{ \Vb_w\in \mathcal{V}^* \big\}^c \cap \Big\{\Big|\frac{1}{Tp_w}\sum_{t=1}^T p_w(t) - 1\Big| \geq \frac{1}{\sqrt{p}}\Big\}\Big)\\
\leq& \E\Big[ I{\{ \Vb_w\in \mathcal{V}^*\}} \P\Big( \Big|\frac{1}{Tp_w}\sum_{t=1}^T p_w(t) - 1\Big|\geq \delta \Big| \Vb_w\Big)  \Big] + \P(\big\{ \Vb_w\in \mathcal{V}^* \big\}^c)\\
\leq& \exp(-\omega(\log^2 d))\E\big[I{\{ \Vb_w\in \mathcal{V}^*\}}\big] +\P(\big\{ \Vb_w\in \mathcal{V}^* \big\}^c) = \exp(-\omega(\log^2 d)) + d^{-\tau}.
\end{split}
\label{eq:sw_to_nw_whole_prob}
\end{equation}
Here the first inequality is due to tower property of conditional expectation (conditional on $\sigma(\Vb_w)$) and the fact that $I{\{ \Vb_w\in \mathcal{V}^*\}}$ is $\sigma(\Vb_w)$-measurable, and the second inequality is due to Eq.(\ref{eq:Sw_to_stat_bound2}) and the fact that $\P(F^c)\leq\exp(-\omega(\log^2 d))+d^{-\tau})$, as proved in Lemma~\ref{lem:pw_pww'_to_cov}.
\end{proof}

We analyze $\sum_{t=1}^{T-1}p_{w,w'}^{(u)}(t)$ in the same way as in previous notes combined with the results for $\sum_{t=1}^{T}p_w$, where $u$ is a constant distance. Recall that $S_{w,w'}^{(u)}=\sum_{t=1}^{T-u}p_{w,w'}(t,t+u)$ is the conditionally expected co-occurrence counts, and $N_{w,w'}^{(u)}$ is its stationary version, which is a function of $\Vb_w$. According to the coupling for joint $(\bz_t,\bz_{t+u})$ and the fact that $p_{w,w'}^{(u)}(t)$ is a function of $(\bz_t,\bz_{t+u})$ taking value in $[0,1]$, we similarly have the following result.

\begin{lemma}\label{lem:Sww'_to_Nww'}
Suppose Assumptions~\ref{assump:parameter_space}, \ref{assump:block_bound} and~\ref{assump:kdT} hold, and let $\overline{p},\underline{p}$ be the constant in Lemma~\ref{lem:pw_pww'_to_cov}. Denote the set of word vectors $\mathcal{V}^{**}=\{V_w: \underline{p}/d^2\leq \min_{w,w'} p^{(u)}_{w,w'} \leq \max_{w,w'} p^{(u)}_{w,w'}\leq \overline{p}/d^2\}$.  Denote the set of word vectors $\mathcal{V}^{**}=\{V_w: \underline{p}/d^2\leq \min_{w,w'} p^{(u)}_{w,w'} \leq \max_{w,w'} p^{(u)}_{w,w'}\leq \overline{p}/d^2\}$ with constants $\underline{p},\overline{p}$ as in Lemma~\ref{lem:pw_pww'_to_cov}. Then when $\Vb_w\in \mathcal{V}^{**}$, it holds that
$$
\P\Big(\Big|\frac{ S_{w,w'}^{(u)}}{N_{w,w'}^{(u)}} - 1\Big| \geq \frac{1}{\sqrt{p}}\Big| \Vb_w \Big) = \exp(-\omega(\log^2(d))).
$$
Furthermore, if we remove the condition that $\Vb_w\in \mathcal{V}^{**}$, then $\P(\Vb_w\notin \mathcal{V}^{**}) \leq \exp(-\omega(\log^2 d) + d^{-\tau} $ and
$$
\P\Big(\Big|\frac{ S_{w,w'}^{(u)}}{N_{w,w'}^{(u)}} - 1\Big| \geq \frac{1}{\sqrt{p}} \Big) = \exp(-\omega(\log^2(d)))+d^{-\tau},
$$
for some large constant $\tau>0$ as in Lemma~\ref{lem:pw_pww'_to_cov}.
\end{lemma}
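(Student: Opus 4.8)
The plan is to run, essentially verbatim, the argument used for Lemma~\ref{lem:Sw_to_Nw}, with the single-coordinate chain $\{\bz_t\}$ replaced by the joint process $\{(\bz_t,\bz_{t+u})\}_{t\geq1}$ and $p_w(t)$ replaced by $p_{w,w'}(t,t+u)$. First I would record that, for a fixed constant gap $u$, the process $\{(\bz_t,\bz_{t+u})\}_{t\geq1}$ (equivalently, the length-$(u{+}1)$ window $(\bz_t,\dots,\bz_{t+u})$) is a time-homogeneous Markov chain on a Euclidean state space: from the current state the next one is obtained by appending $\bz_{t+1+u}=\sqrt\alpha\,\bz_{t+u}+\sqrt{1-\alpha}\,r_{t+1+u}$ with a fresh Gaussian increment independent of the history. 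Its stationary law is the $\pi_u$ of Lemma~\ref{lem:stat_joint_distr_zt}, and Lemma~\ref{lem:mixing_property_zt_zt+u} furnishes the total-variation decay $\|f_{\bz_{T+1},\bz_{T+u+1}|\bz_1}-\pi_u\|_{\var}\le(1-\tilde\lambda)^{\lfloor T/m\rfloor}$ with $m=4p^2\log d$ and $\tilde\lambda=1-2/e$ --- precisely the hypothesis needed for the generalized Hoeffding inequality of Lemma~\ref{lem:Heoffding_revised}.

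Next I would take $f(\bz_t,\bz_{t+u}):=p_{w,w'}(t,t+u)=p_w(t)\,p_{w'}(t+u)$, a deterministic function of the state valued in $[0,1]$, so $\|f\|_S\le1$ and $\E_{\pi_u}[f]=p_{w,w'}^{(u)}$; moreover $S_{w,w'}^{(u)}=\sum_{t=1}^{T-u}f(\bz_t,\bz_{t+u})$ and $N_{w,w'}^{(u)}=(T-u)p_{w,w'}^{(u)}$. Applying Lemma~\ref{lem:Heoffding_revised} with $n=T-u$ and $\epsilon=\delta\,p_{w,w'}^{(u)}$, $\delta=1/\sqrt p$, gives, conditionally on $\Vb_w$,
\[
\P\Big(\Big|\tfrac{S_{w,w'}^{(u)}}{N_{w,w'}^{(u)}}-1\Big|\ge\tfrac1{\sqrt p}\,\Big|\,\Vb_w\Big)\le 2\exp\!\Big(-\tfrac{\tilde\lambda^2\big((T-u)\epsilon-2m/\tilde\lambda\big)^2}{2(T-u)m^2}\Big),
\]
valid once the scale condition $(T-u)\epsilon\ge 2m/\tilde\lambda$ holds. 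On the event $\Vb_w\in\mathcal V^{**}$ one has $p_{w,w'}^{(u)}\ge\underline p/d^2$, hence $(T-u)\epsilon=\Omega(T/(d^2\sqrt p))$, which dominates $2m/\tilde\lambda=\Theta(p^2\log d)$ since $T=\Omega(p^5d^4\log^4d)$ by Assumption~\ref{assump:kdT}; the same input makes the exponent $\omega(\log^2 d)$, exactly as in the proof of Lemma~\ref{lem:Sw_to_Nw}. This proves the conditional bound.

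To remove the conditioning I would use $\P(\Vb_w\notin\mathcal V^{**})\le\exp(-\omega(\log^2 d))+d^{-\tau}$, which follows from the scale estimate~\eqref{eq:bound_pww'_scale} together with the concentration in Lemma~\ref{lem:pw_pww'_to_cov}, split the event according to $\{\Vb_w\in\mathcal V^{**}\}$ and its complement, and invoke the tower property with $I\{\Vb_w\in\mathcal V^{**}\}$ being $\sigma(\Vb_w)$-measurable, just as in~\eqref{eq:sw_to_nw_whole_prob}; this yields the unconditional rate $\exp(-\omega(\log^2 d))+d^{-\tau}$. The steps requiring real (if modest) care --- and the main obstacle --- are confirming that the joint window process is genuinely Markov with the mixing rate supplied by Lemma~\ref{lem:mixing_property_zt_zt+u}, keeping track of the $T-u$ (rather than $T$) summands and of the ordered role of $(w,w')$ in $p_{w,w'}^{(u)}$, and checking that the corpus-size assumption is large enough to make both the scale condition and the final exponent hold uniformly over $u\le q$.
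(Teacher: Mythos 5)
Your proposal follows essentially the same route as the paper: view $f(\bz_t,\bz_{t+u})=p_{w,w'}(t,t+u)\in[0,1]$ as a bounded function of the pair process, invoke the TV-mixing bound of Lemma~\ref{lem:mixing_property_zt_zt+u} with $m=4p^2\log d$, $\tilde\lambda=1-2/e$ to meet the hypothesis of the generalized Hoeffding inequality (Lemma~\ref{lem:Heoffding_revised}) with $n=T-u$ and $\epsilon=p_{w,w'}^{(u)}/\sqrt p$, verify the scale condition using $p_{w,w'}^{(u)}=\Omega(1/d^2)$ on $\mathcal V^{**}$ and the corpus-size assumption, and then remove the conditioning on $\Vb_w$ via the tower-property split against $\P(\Vb_w\notin\mathcal V^{**})\le\exp(-\omega(\log^2 d))+d^{-\tau}$ exactly as in the treatment of $S_w$. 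The paper does this verbatim (up to the small extra observation you make that $p_{w,w'}(t,t+u)=p_w(t)p_{w'}(t+u)$, and your precautionary remark about justifying the Markov/mixing structure of the joint window, which the paper addresses through Lemma~\ref{lem:mixing_property_zt_zt+u} and the fact that Lemma~\ref{lem:Heoffding_revised} only requires the TV decay of $f(\bz_t,\bz_{t+u})$'s law, already supplied).
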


\begin{proof}[Proof of Lemma~\ref{lem:Sww'_to_Nww'}]
This proof is essentially the same as the proof for Lemma~\ref{lemma:cond_sum_pw_stat}. Note that $f(\bz_t,\bz_{t+u}) = p_{w,w'}(t,t+u)$ is a function of $(\bz_t,\bz_{t+u})$ with $\|f\|_S\leq 1$. Then with the choice of $m=4p^2\log d $ and $\lambda = 1-\frac{2}{e}$ in Lemma~\ref{lem:mixing_property_zt_zt+u}, conditions in Lemma~\ref{lem:Heoffding_revised} are satified. Therefore for $T\epsilon\geq 2m/\lambda$ and all values of $\Vb_w$,
\begin{equation*}
\P\Big( \Big| \sum_{t=1}^{T-u} p_{w,w'}(t,t+u) - (T-u)p_{w,w'}^{(u)}   \Big|\geq \frac{(T-u)p_{w,w'}^{(u)}}{\sqrt{p}}\Big| \Vb_w \Big) \leq 2\exp(-\frac{\lambda^2 ((T-u)\epsilon - \frac{2m}{\lambda})^2}{2(T-u)m^2} ),
\end{equation*}
where $\epsilon = p_{w,w'}^{(u)}/\sqrt{p}$, and as $\lambda\geq 1/4$ we have
$$
\frac{(T-u)\epsilon}{2m/\lambda} = \frac{\lambda(T-u)p_{w,w'}^{(u)}}{8p^2\log d} \geq \frac{(T-u)p_{w,w'}^{(u)}}{32p^2\log d} \geq 2,
$$
since $p_{w,w'}^{(u)}=\Omega(1/d^2)$ when $\Vb_w\in \mathcal{V}^{**}$, and $T=\omega(p^5d^4\log^2 d)$. Therefore $(T-u)\epsilon \geq 2m/\lambda$ and 
\begin{equation*}
\begin{split}
&\P\Big( \Big|\frac{ \sum_{t=1}^{T-u} p_{w,w'}(t,t+u)}{(T-u)p_{w,w'}^{(u)}} -  1  \Big|\geq \frac{1}{\sqrt{p}}\Big| \Vb_w \Big) \\
\leq&  2\exp(-\frac{\lambda^2 (T-u) (p_{w,w'}^{(u)})^2 }{8\cdot 16p^5 \log^d} ) = \exp(-\omega(\frac{T}{p^5 d^4\log^2 d})) = \exp(-\omega(\log^2 d)),
\end{split}
\end{equation*}
since $T=\omega(p^5 d^4 \log^4 d)$ by Assumption~\ref{assump:kdT}, and $p_{w,w'}^{(u)}=\Omega(1/d^2)$. Furthermore, incorporating the randomness in $\Vb_w$, similar to the reasoning in Eq.(\ref{eq:sw_to_nw_whole_prob}) we have 
\begin{align*}
\P\Big( \frac{\Big| \sum_{t=1}^{T-u} p_{w,w'}(t,t+u)}{(T-u)p_{w,w'}^{(u)}} -  1  \Big|\geq \frac{1}{\sqrt{p}} \Big) &\leq \exp(-\omega(\log^2 d)) \P(\Vb_w\in \mathcal{V}^{**}) + \P(\Vb_w\notin \mathcal{V}^{**}) \\
&=  \exp(-\omega(\log^2 d))+d^{-\tau},
\end{align*}
where $(T-u)p_{w,w'}^{(u)} = N_{w,w'}^{(u)}$ and $\tau>0$ is a large constant as in Lemma~\ref{lem:pw_pww'_to_cov}.

\end{proof}

\subsection{Concentration of Empirical Occurrences}\label{subsec:X_w,w'_to_S_w,w'}

The main result in this part is the concentration of empirical occurrences $X_{w}$ and $X_{w,w'}^{(u)}$, $X_{w,w'}^{[q]}$ defined in Section~\ref{subsec:def_word_occur} to their conditional expected counterparts. This is based on the fact that conditional on $\{\bc_t\}_{t>0}$, $X_{w}(t)$ and $X_{w,w'}(t,t+u)$ can be viewed as independent Bernoulli random variables (or at least independent within some carefully filtered subsequence). Then we use a Chernoff bound for sum of independent Bernoulli r.v.s in Lemma~\ref{lem:Chernoff_Ber} to guarantee their concentrations.

\begin{lemma} \label{lem:Chernoff_Ber} [Chernoff bound for sum of Bernoulli r.v.'s.]
Let $X_1, X_2, \cdots, X_n$ be independent $\{0, 1\}$-valued random variables. Let $S_n = \sum_{i = 1}^n X_i$ whose expectation is $E_n = \EE (S_n)$. Then
$$\PP\left[ S_n \geq (1+\delta)E_n\right] \leq \exp\left(\frac{- \delta^2 E_n}{2+ \delta} \right), \mbox{  $\delta > 0$};$$
$$\PP\left[ S_n \leq (1-\delta)E_n\right] \leq \exp\left(\frac{- \delta^2 E_n}{2} \right) , \mbox{  $0<\delta < 1$}. $$
\end{lemma}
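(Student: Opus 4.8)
The plan is to apply the classical Chernoff (exponential-moment) method to the two tails separately, using independence to factor the moment generating function and then optimizing over the free exponential parameter.

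For the upper tail, I would begin from Markov's inequality applied to $e^{tS_n}$ with $t>0$: $\PP[S_n\geq (1+\delta)E_n] \leq e^{-t(1+\delta)E_n}\,\E[e^{tS_n}]$. Independence gives $\E[e^{tS_n}]=\prod_{i=1}^n \E[e^{tX_i}]$, and since each $X_i$ is $\{0,1\}$-valued with mean $p_i := \E[X_i]$, we have $\E[e^{tX_i}] = 1 + p_i(e^t-1) \leq \exp(p_i(e^t-1))$ from $1+x\le e^x$. Multiplying and using $E_n=\sum_i p_i$ yields $\E[e^{tS_n}]\leq \exp(E_n(e^t-1))$, hence $\PP[S_n\geq (1+\delta)E_n]\leq \exp\big(E_n(e^t-1) - t(1+\delta)E_n\big)$. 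Minimizing the exponent over $t>0$ gives $t=\ln(1+\delta)$ and the standard bound $\PP[S_n\geq(1+\delta)E_n]\leq \big(e^\delta/(1+\delta)^{1+\delta}\big)^{E_n}$. It then remains to pass to the stated form by proving the scalar inequality $\delta-(1+\delta)\ln(1+\delta)\leq -\delta^2/(2+\delta)$ for $\delta>0$; I would do this by differentiating the difference, reducing it to the elementary estimate $\ln(1+\delta)\geq 2\delta/(2+\delta)$, and checking that the difference vanishes at $\delta=0$.

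The lower tail is entirely analogous, using $e^{-tS_n}$ with $t>0$: $\PP[S_n\leq(1-\delta)E_n]\leq e^{t(1-\delta)E_n}\prod_i\E[e^{-tX_i}] \leq \exp\big(E_n(e^{-t}-1) + t(1-\delta)E_n\big)$, again via $1+p_i(e^{-t}-1)\leq \exp(p_i(e^{-t}-1))$. Optimizing gives $e^{-t}=1-\delta$, i.e. $t=-\ln(1-\delta)>0$, and the bound $\big(e^{-\delta}/(1-\delta)^{1-\delta}\big)^{E_n}$. One finishes with the scalar inequality $-\delta-(1-\delta)\ln(1-\delta)\leq -\delta^2/2$ on $0<\delta<1$, which follows at once from the power series $(1-\delta)\ln(1-\delta) = -\delta + \delta^2/2 + \delta^3/6 + \cdots \geq -\delta + \delta^2/2$.

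The only steps that are not pure bookkeeping are the two scalar log-bound inequalities at the end. I expect the upper-tail one, $\delta-(1+\delta)\ln(1+\delta)\leq -\delta^2/(2+\delta)$, to be the main (mild) obstacle, since — unlike the lower-tail estimate — it is not immediate from a truncated Taylor series; the cleanest route is the derivative/monotonicity argument above, which reduces it to $\ln(1+\delta)\geq 2\delta/(2+\delta)$, itself a standard bound obtained by comparing the two sides at $\delta=0$ and differentiating.
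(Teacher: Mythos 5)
Your proposal follows essentially the same route as the paper: Markov/Chernoff with the bound $1+p_i(e^{\pm t}-1)\leq \exp(p_i(e^{\pm t}-1))$, the optimal choice $t=\ln(1+\delta)$ (resp. $e^{-t}=1-\delta$), the intermediate bound $(e^{\delta}/(1+\delta)^{1+\delta})^{E_n}$ (resp. $(e^{-\delta}/(1-\delta)^{1-\delta})^{E_n}$), and the same scalar inequality $\ln(1+\delta)\geq 2\delta/(2+\delta)$ to pass to the stated form. The only difference is that you spell out proofs of the two closing scalar inequalities (derivative argument for the upper tail, power series for the lower tail), whereas the paper simply asserts them; this is a matter of detail, not approach.
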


\begin{proof}[Proof of Lemma~\ref{lem:Chernoff_Ber}.]

Let $\mu_i := \EE[X_i]$ for $i = 1, 2, \cdots, n$.
For $ t > 0$ and $\delta > 0$, by Chernoff bound we have
\begin{equation}
\begin{split}
    \PP\left[S_n \geq (1+\delta)E_n \right] 
    &\leq \exp{(- t (1+\delta)E_n)} \prod_{i = 1}^n \EE\left[ \exp{(t X_i)} \right] \\ 
    &= \exp{(- t (1+\delta)E_n)} \prod_{i = 1}^n \left[1+ \mu_i(e^t - 1)\right] \\ 
    & \leq \exp{(- t (1+\delta)E_n)} \prod_{i = 1}^n \exp [\mu_i(e^t - 1)].
\end{split}
\end{equation}

Let $ t = \ln(1+\delta) > 0$, then we have 
\begin{equation}
\begin{split}
     \exp{(-t (1+\delta)E_n)} \prod_{i = 1}^n \exp [\mu_i(e^t - 1)] 
     =& \left( \frac{e^\delta}{(1+\delta)^{(1+\delta)}} \right)^{E_n}
     \\ 
     \leq& \exp\left[E_n\left(\delta - (1+\delta)\frac{2\delta}{2+\delta} \right) \right]\\
     =&\exp\left(\frac{- \delta^2 E_n}{2+ \delta}\right).
\end{split}
\end{equation}
So 
$$\PP\left[S_n\geq (1+\delta)E_n\right] \leq \exp\left(\frac{- \delta^2 E_n}{2+ \delta} \right), \mbox{  $\delta > 0$}. $$
And a similar proof shows that 
$$\PP\left[S_n \leq (1-\delta)E_n\right] \leq \left( \frac{e^{-\delta}}{(1-\delta)^{(1-\delta)}} \right)^{E_n} \leq \exp\left(\frac{- \delta^2 E_n}{2} \right) , \mbox{  $0<\delta < 1$}. $$
\end{proof}

In particular, conditional on $\{\bc_t\}_{t\geq 1}$ and $\Vb_w$, $\{X_w(t)\}_{t\geq 1}$ are independent with
$$
X_w(t)|\bc_t,\Vb_w \sim \text{Bernoulli}(p_w(t)).
$$
Recall that $X_w=\sum_{t=1}^TX_w(t)$ is the total occurrence of word $w$ and $S_w = \sum_{t=1}^T p_w(t)$ is its conditional expectation (conditional on discourse variables $\{\bc_t\}_{t>0}$ and naturally $\Vb_w$). We have the following lemma for this setting.

\begin{lemma}
Assume Assumptions~\ref{assump:parameter_space}, \ref{assump:block_bound} and \ref{assump:kdT} hold. Let $\mathcal{V}^*=\{V_w: \underline{p}/d\leq \min_w p_w \leq \max_w p_w\leq \overline{p}/d\}$.  Then for all $\Vb_w\in \mathcal{V}^*$, it holds that
\begin{equation}
\P\Big( \max_{w}\Big| \frac{X_w}{S_w}-1\Big|\geq \frac{1}{\sqrt{p}}  \Big| \Vb_w \Big) = \exp(-\omega(\log^2 d))+ d^{-\tau'},
\label{eq:Xw_to_Sw|Vw}
\end{equation}
for a large constant $\tau'>0$. If we remove the condition that $\Vb_w\in \mathcal{V}^*$, then we have
\begin{equation}
\P\Big( \max_{w}\Big| \frac{X_w}{S_w}-1\Big|\geq \frac{1}{\sqrt{p}}   \Big) = \exp(-\omega(\log^2 d)) + O(d^{-\tau'}),
\label{eq:Xw_to_Sw_whole}
\end{equation}
for the large constant $\tau'>0$.
\label{lem:X_w_to_cond_exp}
\end{lemma}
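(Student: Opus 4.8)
The plan is to prove the bound conditionally on the whole discourse trajectory $\{\bc_t\}_{t\ge 1}$ and then remove that conditioning by the same law-of-total-probability bookkeeping used in the proof of Lemma~\ref{lem:Sw_to_Nw}. Fix the code vectors with $\Vb_w\in\mathcal{V}^*$ and condition additionally on $\{\bc_t\}_{t\ge 1}$. As noted before the statement, conditionally on $(\{\bc_t\},\Vb_w)$ the indicators $\{X_w(t)\}_{t=1}^T$ are independent $\mathrm{Bernoulli}(p_w(t))$ variables with conditional mean $\sum_t p_w(t)=S_w$, so Lemma~\ref{lem:Chernoff_Ber} applied with $\delta=1/\sqrt{p}$ gives
\[
\P\Big(\Big|\tfrac{X_w}{S_w}-1\Big|\ge \tfrac{1}{\sqrt{p}}\ \Big|\ \{\bc_t\},\Vb_w\Big)\le 2\exp\!\Big(-\tfrac{S_w}{3p}\Big).
\]
The only way this can fail to be negligible is if $S_w$ is too small, so the next step is to restrict to the event $\mathcal{E}_w=\{\,|S_w/N_w-1|\le 1/\sqrt{p}\,\}$, on which $S_w\ge N_w/2$; since $\Vb_w\in\mathcal{V}^*$ forces $N_w=Tp_w\ge T\underline{p}/d$, the right-hand side is then at most $2\exp(-T\underline{p}/(6pd))$, which by Assumption~\ref{assump:kdT} (through $T=\Omega(p^5d^4\log^4 d)$) is $\exp(-\omega(\log^2 d))$, indeed far smaller than required.

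To finish~\eqref{eq:Xw_to_Sw|Vw} I would bound $\P(\mathcal{E}_w^c\mid\Vb_w)$ by the conditional statement of Lemma~\ref{lem:Sw_to_Nw}, which gives $\exp(-\omega(\log^2 d))$ when $\Vb_w\in\mathcal{V}^*$; taking the expectation over $\{\bc_t\}$ of the Chernoff bound intersected with $\mathcal{E}_w$ and adding $\P(\mathcal{E}_w^c\mid\Vb_w)$ — exactly the tower-property step written out in~\eqref{eq:sw_to_nw_whole_prob} — yields $\P(|X_w/S_w-1|\ge 1/\sqrt{p}\mid\Vb_w)=\exp(-\omega(\log^2 d))$ for each $w$, and a union bound over the $d$ codes preserves $\exp(-\omega(\log^2 d))$, which is $\le \exp(-\omega(\log^2 d))+d^{-\tau'}$. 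For the unconditional bound~\eqref{eq:Xw_to_Sw_whole} I would combine this with the scale bounds~\eqref{eq:bound_pw_scale} of Lemma~\ref{lem:pw_pww'_to_cov}, which give $\P(\Vb_w\notin\mathcal{V}^*)\le \exp(-\omega(\log^2 d))+d^{-\tau}$; splitting on $\{\Vb_w\in\mathcal{V}^*\}$ and its complement then produces the stated $\exp(-\omega(\log^2 d))+O(d^{-\tau'})$.

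There is no single hard step here — the argument is essentially an assembly of a Bernoulli Chernoff bound and the concentration of $S_w$ already established — but the point that needs care is the nested conditioning. The Chernoff inequality is valid only for a fixed realization of $\{\bc_t\}$ and produces a bound depending on the random quantity $S_w$, so one cannot simply integrate it; one must first intersect with the high-probability event $\mathcal{E}_w$ on which $S_w$ is provably of order $T/d$, estimate the Chernoff term there, and separately absorb $\P(\mathcal{E}_w^c)$. This is the same subtlety handled in Lemma~\ref{lem:Sw_to_Nw}, and it is also what forces the use of the lower scale bound $p_w=\Omega(1/d)$ from Lemma~\ref{lem:pw_pww'_to_cov} rather than the trivial $p_w\in[0,1]$.
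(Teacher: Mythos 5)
Your proposal is correct and follows essentially the same route as the paper's own proof: apply the Bernoulli Chernoff bound conditionally on $(\{\bc_t\},\Vb_w)$, intersect with the event that $S_w$ is close to $N_w$ (the paper's event $F$, your $\mathcal{E}_w$), use $\Vb_w\in\mathcal{V}^*$ to lower-bound $N_w$ by $T\underline p/d$ so the Chernoff term is $\exp(-\omega(\log^2 d))$, absorb $\P(\mathcal{E}_w^c\mid\Vb_w)$ via Lemma~\ref{lem:Sw_to_Nw}, union-bound over $w$, and finally remove the conditioning on $\Vb_w\in\mathcal{V}^*$ using the scale bounds from Lemma~\ref{lem:pw_pww'_to_cov}. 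You also correctly flag the only delicate point -- that the Chernoff exponent depends on the random $S_w$ and therefore must be controlled by first intersecting with $\mathcal{E}_w$ rather than integrated directly -- which is precisely how the paper handles it.
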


\begin{proof}[Proof of Lemma~\ref{lem:X_w_to_cond_exp}]
Apply the Chernoff bound in Lemma~\ref{lem:Chernoff_Ber} to $X_w(t)|\{\bc_t,\Vb_w\}$ which are independent $\mbox{Ber}(p_w(t))$ variables conditional on $\{\bc_t\}_{t>0}$, and recall $S_w = \sum_{t=1}^T p_w(t) = \sum_{t=1}^T \E[X_w(t)|\{\bc_t,\Vb_w\}] = \E[\sum_{t=1}^T X_w(t)|\{\bc_t,\Vb_w\}]$ is a function of $\{\bc_t\}_{t>0}$ and $\Vb_w$, we have that
\begin{equation}
\begin{split}
\P& \Big( \sum_{t=1}^T X_w(t) \geq (1+\delta) S_w  \Big|\{\bc_t\}_{t>0} ,\Vb_w\Big) \leq \exp\Big( -\frac{\delta^2 S_w}{2+\delta}  \Big),\quad \delta>0;\\
\P& \Big( \sum_{t=1}^T X_w(t) \leq (1-\delta) S_w  \Big|\{\bc_t\}_{t>0},\Vb_w \Big) \leq \exp\Big( -\frac{\delta^2 S_w}{2}  \Big),\quad 0<\delta<1.
\end{split} 
\label{eq:X_t(w)_chernoff}
\end{equation}
Note that $S_w$ is a function of $\{\bc_t\}_{t>0}$ and $\Vb_w$. Combining the two bounds in Eq.(\ref{eq:X_t(w)_chernoff}) and using union bound, for $\delta\in (0,1)$ we have 
\begin{equation}
\P\Big( \Big| \frac{X_w}{S_w}-1\Big|\geq \delta \big| \{\bc_t\}_{t>0},\Vb_w  \Big) \leq 2\exp\Big(-\frac{\delta^2 S_w}{2+\delta}\Big).
\label{eq:X_w,S_w_chernoff_whole}
\end{equation}

Define the event 
$$
F=\Big\{|\frac{S_w}{N_w}-1|<\frac{1}{\sqrt{p}},\mbox{ and } \underline{p}/d\leq \min_w p_w\leq \max_w p_w\leq \overline{p}/d\Big\},
$$
where $\underline{p},\overline{p}$ are constants as in Lemma~\ref{lem:pw_pww'_to_cov}. Note that 
$$
F = \Big\{|\frac{S_w}{N_w}-1|<\frac{1}{\sqrt{p}}\Big\} \cap \big\{ \Vb_w \in \mathcal{V}^* \big\}.
$$
therefore by Lemma \ref{lemma:cond_sum_pw_stat}, we know for all $\Vb_w\in \mathcal{V}^*$ as in Lemma~\ref{lemma:cond_sum_pw_stat},
$$
\P(F|\Vb_w) =\E\big[ I\{F\}I{\{\Vb_w \in \mathcal{V}^*\}}  \big|\Vb_w\big] =I{\{\Vb_w \in \mathcal{V}^*\}} \E\big[ I\{F\}  \big|\Vb_w\big] = \E\big[ I\{F\}  \big|\Vb_w\big] \geq 1-\exp(-\omega(\log^2(d))).
$$
Also $\Vb_w\in \mathcal{V}^*$ with high probability, specifically, $\P(\Vb_w\notin \mathcal{V}^*)\leq \exp(-\omega(\log^2 d))+d^{-\tau}$ for some (large) constant $\tau>0$.

Also recall $p_w$ is a function of $\Vb_w$, $N_w$ is a function of $\Vb_w$ and $S_w$ is a function of $\{\bc_t\}_{t>0}$ and $\Vb_w$, so $I\{F\}$ is $\sigma(\Vb_w,\{\bc_t\}_{t>0})$-measurable. And on $F$,  we have $N_w = Tp_w \geq \underline{p} T/d$. Therefore,
\begin{equation}
\begin{split}
\P\Big( \Big| \frac{X_w}{S_w}-1\Big|\geq \delta\Big| \Vb_w \Big) \leq&  \P\Big( \Big\{\Big| \frac{X_w}{S_w}-1\Big|\geq \delta\Big\}\cap F\Big| \Vb_w\Big) + \P(F^c|\Vb_w)\\
=& \E\Big[ \E\big[ I{\{|X_w-S_w|\geq \delta S_w\}} I\{F\}  \big| \{\bc_t\}_{t>0} , \Vb_w \big]\Big| \Vb_w \Big] +\P(F^c|\Vb_w)\\
= & \E\Big[ \E\big[ I{\{|X_w-S_w|\geq \delta S_w\}}  \big| \{\bc_t\}_{t>0} ,\Vb_w \big]I\{F\} \Big| \Vb_w\Big] +\P(F^c|\Vb_w)\\
\leq& 2\E\Big[ \exp\big(-\frac{\delta^2 S_w}{2+\delta}\big) I\{F\}\Big| \Vb_w\Big] +\P(F^c|\Vb_w).
\end{split}
\label{eq:proof_Xw_bd1}
\end{equation}
Here the first inequality is just union bound. The two equalities that follow are due to tower property and the fact that $I\{F\}$ is $\sigma(\{\bc_t\}_{t>0},\Vb_w)$-measurable. The second inequality is due to Eq.(\ref{eq:X_w,S_w_chernoff_whole}). 

Furthermore, according to the definition of $F$, on $F$ it holds that $S_w \geq (1-\frac{1}{\sqrt{p}})N_w$, and $N_w\geq T\underline{p}/d$. Continuing the bound in Eq.(\ref{eq:proof_Xw_bd1}) we have
\begin{equation}
\begin{split}
\P\Big( \Big| \frac{X_w}{S_w}-1\Big|\geq \delta\Big| \Vb_w \Big) \leq&2\E\Big[ \exp\big(-\frac{\delta^2}{2+\delta} (1-\frac{1}{\sqrt{p}})^2 N_w^2\big)I\{F\} \Big| \Vb_w\Big] +\P(F^c|\Vb_w)\\
\leq&2\E\Big[ \exp\big(-\frac{\delta^2}{2+\delta} (1-\frac{1}{\sqrt{p}})^2 \frac{T^2 \underline{p}^2}{d^2}\big)I\{F\} \Big| \Vb_w\Big] +\P(F^c|\Vb_w)\\
\leq&2\E\Big[ \exp\big(-\frac{\delta^2}{2+\delta} (1-\frac{1}{\sqrt{p}})^2 \frac{T^2\underline{p}^2}{d^2}\big) \Big| \Vb_w\Big] +\P(F^c|\Vb_w)\\
\leq& 2\exp\Big(-\frac{\delta^2}{2+\delta} (1-\frac{1}{\sqrt{p}})^2 \frac{T^2\underline{p}^2}{d^2}\Big) + \P(F^c|\Vb_w).
\end{split}
\label{eq:proof_Xw_bd2}
\end{equation}
Here the first two inequalities are due to the definition of $F$. The third inequality is due to monotonicity of expectation, and the last is because the quantity inside is a constant thus we remove the conditioning.

Letting $\delta = \frac{1}{\sqrt{p}}$, then for all $\Vb_w\in \mathcal{V}^*$, according to Eq.(\ref{eq:proof_Xw_bd1}) and Assumption~\ref{assump:parameter_space}, we have 
\begin{equation}
\begin{split}
\P\Big( \Big| \frac{X_w}{S_w}-1\Big|\geq \frac{1}{\sqrt{p}}\Big| \Vb_w \Big) \leq& 2\exp\Big(-\frac{1}{2k+\sqrt{p}} (1-\frac{1}{\sqrt{p}})^2 \frac{T^2p_*^2}{d^2}\Big) + \exp(-\omega(\log^2 d)) + d^{-\tau}\\
=& \exp(-\omega(\log^2 d)) + d^{-\tau}.
\end{split}
\label{eq:proof_Xw_bd3}
\end{equation}
Since the quantity does not depend on $\Vb_w$, combined with the fact that $\P(\Vb_w\notin \mathcal{V}^*)\leq \exp(-\omega(\log^2 d))+d^{-\tau}$, according to Eq.(\ref{eq:proof_Xw_bd2}) we have
\begin{equation*}
\begin{split}
 \P_{\Vb_w,\bc_t}\Big( \Big| \frac{X_w}{S_w}-1\Big|\geq \frac{1}{\sqrt{p}}\Big) =& \E\Big[ I{\{\Vb_w\in \mathcal{V}^*\}} \P\Big( \Big| \frac{X_w}{S_w}-1\Big|\geq \frac{1}{\sqrt{p}}\Big| \Vb_w \Big)    \Big] + \P(\Vb_w\notin \mathcal{V}^*)\\
 \leq&   \exp(-\omega^2(\log^2 d)) +  O(d^{-\tau}),
 \end{split}
\end{equation*}
for large constant $\tau>0$. Finally, applying union bounds for all $w$ and note that $d\exp(-\omega(\log^2 d)) +d\cdot d^{-\tau} = \exp(-\omega(\log^2 d)) +d^{-\tau'}$ for another large constant $\tau'>0$, we have the desired results in Eq.(\ref{eq:Xw_to_Sw|Vw}) and (\ref{eq:Xw_to_Sw_whole}).
\end{proof}

Similar to Lemma \ref{lem:X_w_to_cond_exp}, as $S_{w,w'}^{(u)}$ is also centered around $N_{w,w'}^{(u)}$, the empirical co-occurrences also concentrate to their conditional expectations. Recall that $X_{w,w'}^{(u)}=\sum_{t=1}^{T-u}X_{w,w'}(t,t+u)$ is the total co-occurrence of words $w,w'$, and $S_{w,w'}^{(u)} = \sum_{t=1}^T p_{w,w'}^{(u)}(t,t+u)$ is its conditional expectation. We have the following result for concentration of empirical co-occurrences.

\begin{lemma}
Consider a fixed windows size $q$. Assume Assumptions \ref{assump:parameter_space}, \ref{assump:block_bound} and \ref{assump:kdT} hold. Recall the set of word vectors $\mathcal{V}^{**}=\{V_w: \underline{p}/d^2\leq \min_{w,w'} p^{(u)}_{w,w'} \leq \max_{w,w'} p^{(u)}_{w,w'}\leq \overline{p}/d^2\}$ with constants $\underline{p},\overline{p}$ as in Lemma~\ref{lem:pw_pww'_to_cov}. Then for $u=1,\dots,q$, for $\Vb_w\in \mathcal{V}^{**}$, it holds that
\begin{equation}
\begin{split}
\P&\Big( \max_{w,w'}\Big| \frac{X_{w,w'}^{(u)}}{S_{w,w'}^{(u)}}-1\Big|\geq \frac{1}{\sqrt{p}} \Big| \Vb_w \Big) = \exp(-\omega(\log^2 d))+d^{-\tau'},\\
\P&\Big( \max_{w,w'}\Big| \frac{X_{w,w'}^{[q]}}{S_{w,w'}^{[q]}}-1\Big|\geq \frac{1}{\sqrt{p}} \Big| \Vb_w \Big) = \exp(-\omega(\log^2 d))+d^{-\tau'},
\end{split}
\end{equation}
for some large constant $\tau'>0$. If we remove the conditions on $\Vb_w$, then
\begin{equation}
\begin{split}
\P&\Big( \max_{w,w'}\Big| \frac{X_{w,w'}^{(u)}}{S_{w,w'}^{(u)}}-1\Big|\geq \frac{1}{\sqrt{p}}  \Big) = \exp(-\omega(\log^2 d))+O(d^{-\tau'}),\\
\P&\Big( \max_{w,w'}\Big| \frac{X_{w,w'}^{[q]}}{S_{w,w'}^{[q]}}-1\Big|\geq \frac{1}{\sqrt{p}}  \Big) = \exp(-\omega(\log^2 d))+O(d^{-\tau'}).
\end{split}
\label{eq:X_ww_to_S_ww_full}
\end{equation}
\label{lem:X_ww'_to_cond_exp}
\end{lemma}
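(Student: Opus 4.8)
The plan is to mimic closely the proof of Lemma~\ref{lem:X_w_to_cond_exp}, replacing single-word occurrences by co-occurrences at a fixed gap $u$, and then handle the window-size-$q$ aggregation by a union bound over $u=1,\dots,q$. First I would recall that, conditionally on the discourse path $\{\bc_t\}_{t\ge 1}$ and the code vectors $\Vb_w$, the indicators $\{X_{w,w'}(t,t+u)\}_t$ are Bernoulli with parameter $p_{w,w'}(t,t+u)\in[0,1]$; however, unlike the single-word case, for a fixed $u$ these indicators are \emph{not} jointly independent across all $t$ (overlapping intervals share a discourse variable). The standard fix is to split $\{1,\dots,T-u\}$ into $O(u)$ arithmetic subsequences of step $u$ (or $u+1$), within each of which the relevant discourse pairs $(\bc_t,\bc_{t+u})$ are disjoint and hence the Bernoulli indicators are conditionally independent; since $u\le q$ is a fixed constant, losing a constant factor here is harmless. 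On each such subsequence I would apply the Chernoff bound of Lemma~\ref{lem:Chernoff_Ber}, exactly as in \eqref{eq:X_t(w)_chernoff}--\eqref{eq:X_w,S_w_chernoff_whole}, to get, conditionally on $\{\bc_t\},\Vb_w$,
\[
\P\Big(\big|X_{w,w'}^{(u)} - S_{w,w'}^{(u)}\big|\ge \delta S_{w,w'}^{(u)} \,\big|\, \{\bc_t\}_{t\ge 1},\Vb_w\Big) \le 2q\exp\Big(-\frac{c\,\delta^2 S_{w,w'}^{(u)}}{q}\Big)
\]
for $\delta\in(0,1)$ and an absolute constant $c>0$.

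Next I would introduce the good event $F$ that combines (i) $|S_{w,w'}^{(u)}/N_{w,w'}^{(u)}-1|<1/\sqrt{p}$, which holds with probability $1-\exp(-\omega(\log^2 d))-d^{-\tau}$ by Lemma~\ref{lem:Sww'_to_Nww'}, and (ii) $\Vb_w\in\mathcal V^{**}$, i.e. $\underline p/d^2\le p_{w,w'}^{(u)}\le \overline p/d^2$, which holds with probability $1-\exp(-\omega(\log^2 d))-d^{-\tau}$ by Lemma~\ref{lem:pw_pww'_to_cov}. On $F$ we have $S_{w,w'}^{(u)}\ge (1-1/\sqrt p)N_{w,w'}^{(u)}\ge (1-1/\sqrt p)(T-u)\underline p/d^2$, and since $T=\Omega(p^5 d^4\log^4 d)$ by Assumption~\ref{assump:kdT}, the exponent $-c\delta^2 S_{w,w'}^{(u)}/q$ with $\delta=1/\sqrt p$ is of order $-\omega(\,T^2/(p\,d^4)\,)=-\omega(\log^2 d)$. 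Using the tower property and the $\sigma(\Vb_w,\{\bc_t\})$-measurability of $I\{F\}$, exactly as in \eqref{eq:proof_Xw_bd1}--\eqref{eq:proof_Xw_bd3}, I obtain, for each fixed pair $(w,w')$ and each fixed $u\le q$,
\[
\P\Big(\Big|\frac{X_{w,w'}^{(u)}}{S_{w,w'}^{(u)}}-1\Big|\ge \frac1{\sqrt p}\,\Big|\,\Vb_w\in\mathcal V^{**}\Big)=\exp(-\omega(\log^2 d))+d^{-\tau},
\]
and, releasing the conditioning and using $\P(\Vb_w\notin\mathcal V^{**})\le \exp(-\omega(\log^2 d))+d^{-\tau}$, the same bound with $O(d^{-\tau})$.

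Finally I would assemble the window-size-$q$ statement. Since $X_{w,w'}^{[q]}=\sum_{u=1}^q\big(X_{w,w'}^{(u)}+X_{w',w}^{(u)}\big)$ and correspondingly $S_{w,w'}^{[q]}=\sum_{u=1}^q\big(S_{w,w'}^{(u)}+S_{w',w}^{(u)}\big)$, a ratio that is within $1\pm 1/\sqrt p$ termwise is within $1\pm 1/\sqrt p$ in aggregate; so on the intersection of the $2q$ good events (one per $(u,\text{order})$) the bound for $X_{w,w'}^{[q]}/S_{w,w'}^{[q]}$ follows, and $2q$ being constant keeps the failure probability at $\exp(-\omega(\log^2 d))+d^{-\tau}$. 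A union bound over all $d^2$ pairs $(w,w')$ (and over $u\le q$) turns $d^{-\tau}$ into $d^{-\tau'}$ for a slightly smaller but still large constant $\tau'$, exactly as at the end of the proof of Lemma~\ref{lem:X_w_to_cond_exp}, giving both the conditional bounds and \eqref{eq:X_ww_to_S_ww_full}. The only genuinely new wrinkle compared with Lemma~\ref{lem:X_w_to_cond_exp} is the conditional-independence issue for overlapping gaps, and I expect the subsequence decomposition to dispose of it cleanly; the scale bound $p_{w,w'}^{(u)}=\Theta(1/d^2)$ from Lemma~\ref{lem:pw_pww'_to_cov} is what makes the Chernoff exponent large enough, so verifying that $T$ is large enough relative to $d^2$ is the one place to be careful with constants.
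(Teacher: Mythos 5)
Your plan follows the paper's proof closely: decompose the time index set into $O(u)$ arithmetic subsequences so that the co-occurrence indicators become conditionally independent given $\{\bc_t\}$ and $\Vb_w$, apply the Bernoulli Chernoff bound of Lemma~\ref{lem:Chernoff_Ber} on each subsequence, condition on the good event $F$ forcing $S_{w,w'}^{(u)}\approx N_{w,w'}^{(u)}$ and $\Vb_w\in\mathcal{V}^{**}$ (so that $N_{w,w'}^{(u)}=\Theta(T/d^2)$ makes the Chernoff exponent $-\omega(\log^2 d)$ under Assumption~\ref{assump:kdT}), then union bound over $(w,w')$, $u\le q$ and subsequences, and finally assemble the window-$[q]$ statement from the per-$u$ statements. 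Two small points worth flagging. First, step-$u$ subsequences do \emph{not} give disjoint pairs: $(t,t+u)$ and $(t+u,t+2u)$ share the time $t+u$, so the indicators would not be conditionally independent; you need step $u+1$ (which you mention parenthetically) or step $2u$ (which the paper uses), still an $O(u)$ split. Second, your route to the aggregate bound is a slight but real simplification over the paper's: by bounding the absolute deviation $|X_{w,w'}^{(u)}-S_{w,w'}^{(u)}|\ge \delta S_{w,w'}^{(u)}$ via a triangle-inequality union bound across subsequences, you only need the aggregate concentration $S_{w,w'}^{(u)}\approx N_{w,w'}^{(u)}$ already given by Lemma~\ref{lem:Sww'_to_Nww'}; the paper instead controls relative deviations subsequence by subsequence, which forces it to invoke a per-subsequence analogue $S_{w,w'}^{(u,i)}\approx N_{w,w'}^{(u,i)}$ that it only asserts as a ``modification'' of Lemma~\ref{lem:Sww'_to_Nww'}. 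Your combination costs an extra factor $O(u^2)$ rather than $O(u)$ in the Chernoff exponent (since each subsequence deviation threshold is $\delta S_{w,w'}^{(u)}/(2u)$, not $\delta S_{w,w'}^{(u,i)}$), but $q$ is fixed so this is immaterial and the $-\omega(\log^2 d)$ rate is preserved.
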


Note that, different from the occurrence of one single word at one single step, co-occurrence indicators $X_{w,w'}(t,t+u)$ are not independent conditional on $\{\bc_t\}_{t>0},\Vb_w$. To circumvent such difficulty, in our proof, we carefully choose a subsequence of $\{1,\dots,T\}$ so that along the sequence, these Bernoulli random variables are conditionally independent. Apart from this modification, other arguments are essentially the same as the proof of Lemma~\ref{lem:X_w_to_cond_exp}.

\begin{proof}[Proof of Lemma~\ref{lem:X_ww'_to_cond_exp}]
We first analyze for a fixed $u\leq q$. Consider $2u$ subsequences of $\{X_{t,t+u}(w,w')\}_{t=1}^{T-u}$ denoted as 
\begin{equation*}
\mathbf{X}^{(u,i)} = \{X_{2ku + i,(2k+1)u+i}(w,w')\}_{k=0}^{\lfloor \frac{T-u-i}{2u} \rfloor},\quad i=1,\dots,2u.
\end{equation*}
Then $\mathbf{X}^{(u,i)}$ are disjoint, and $\cup_{i}\mathbf{X}^{(u,i)}  = \{X_{t,t+u}(w,w')\}_{t=1}^{T-u}$. Moreover, for any fixed $1\leq i\leq 2u$, the elements within the subsequence $\mathbf{X}^{(u,i)} $ are independent Bernoulli ($p_{w,w'}(t,t+u)$) random variables conditional on $\{\bc_t\}_{t>0}$ and $\Vb_w$, where by definition $p_{w,w'}(t,t+u)=\E[X_{w,w'}(t,t+u)|\{\bc_t\}_{t>0},\Vb_w]$.

Denote total co-occurrence, conditional expectations and stationary versions for these subsequences as
\begin{equation*}
\begin{split}
X^{(u,i)}_{w,w'} =& \sum_{k=0}^{\lfloor \frac{T-u-i}{2u}\rfloor} X_{2ku + i,(2k+1)u+i}(w,w'),\\
S^{(u,i)}_{w,w'} =& \sum_{k=0}^{\lfloor \frac{T-u-i}{2u}\rfloor} \E[X_{2ku + i,(2k+1)u+i}(w,w')|\{\bc_t\}_{t>0},\Vb_w] = \sum_{k=0}^{\lfloor \frac{T-u-i}{2u}\rfloor }p_{w,w'}(2ku + i,(2k+1)u+i),\\
N^{(u,i)}_{w,w'} =&  \sum_{k=0}^{\lfloor \frac{T-u-i}{2u}\rfloor }p_{w,w'}^{(u)} = \lfloor \frac{T-u-i}{2u}\rfloor p_{w,w'}^{(u)}.
\end{split}
\end{equation*}

For each fixed $u,i$, define the event (for simplicity drop the superscript $(u,i)$)
$$
F = \Big\{  \Big| \frac{S_{w,w'}^{(u,i)}}{N_{w,w'}^{(u,i)} } - 1  \Big| <\frac{1}{\sqrt{p}} \Big\} \cap \big\{  \Vb_w \in \mathcal{V}^{**} \big\}.
$$
Since $u,i$ are all bounded by fixed window size $q$, similar to the proof of Lemma~\ref{lem:Sww'_to_Nww'} modified for subsequence $\mathbf{X}^{(u,i)}$, we have for all $\Vb_w\in \mathcal{V}^{**}$ it holds that
$$
\P(F|\Vb_w) = I{\{\Vb_w \in \mathcal{V}^{**} \}} \P\Big( \Big|\frac{S_{w,w'}^{(u,i)}}{N_{w,w'}^{(u,i)} } - 1  \Big| <\frac{1}{\sqrt{p}} \Big| \Vb_w \Big) = \exp(-\omega(\log^2 d)) ,
$$
with $\P(\Vb_w\notin \mathcal{V}^{**})\leq \exp(-\omega(\log^2 d)) + d^{-\tau}$ for a large constant $\tau>0$ as in Lemma~\ref{lem:pw_pww'_to_cov}. Also note that $F\in \sigma(\Vb_w,\{\bc_t\}_{t>0})$. Applying Lemma~\ref{lem:Chernoff_Ber} to conditionally independent Bernoulli random variables inside $\mathbf{X}^{(u,i)}$, we know that for $\delta\in (0,1)$,
\begin{equation}
\begin{split}
\P\Big( \Big| \frac{X^{(u,i)}_{w,w'}}{S^{(u,i)}_{w,w'}}-1\Big|\geq \delta\Big| \Vb_w \Big) \leq&  \P\Big( \Big\{\Big| \frac{X^{(u,i)}_{w,w'}}{S^{(u,i)}_{w,w'}}-1\Big|\geq \delta\Big\}\cap F\Big| \Vb_w\Big) + \P(F^c|\Vb_w)\\
=& \E\Big[ \E\big[ I{\{|X^{(u,i)}_{w,w'}-S^{(u,i)}_{w,w'}|\geq \delta S^{(u,i)}_{w,w'}\}} I\{F\}  \big| \{\bc_t\}_{t>0} , \Vb_w \big]\Big| \Vb_w \Big] +\P(F^c|\Vb_w)\\
= & \E\Big[ \E\big[ I{\{|X^{(u,i)}_{w,w'}-S^{(u,i)}_{w,w'}|\geq \delta S^{(u,i)}_{w,w'}\}}  \big| \{\bc_t\}_{t>0} ,\Vb_w \big]I\{F\} \Big| \Vb_w\Big] +\P(F^c|\Vb_w)\\
\leq& 2\E\Big[ \exp\big(-\frac{\delta^2 S^{(u,i)}_{w,w'}}{2+\delta}\big) I\{F\}\Big| \Vb_w\Big] +\P(F^c|\Vb_w).
\end{split}
\label{eq:proof_Xww'_bd1}
\end{equation}
In Eq.(\ref{eq:proof_Xww'_bd1}), the first inequality is union bound. The second line is due to tower property of conditional expectations. The third line is due to the fact that $F\in \Sigma(\{\bc_t\}_{t>0},\Vb_w)$, and the last line is due to the Chernoff bound in Lemma~\ref{lem:Chernoff_Ber}. Note that on $F$, $S^{(u,i)}_{w,w'} \geq (1-\frac{1}{\sqrt{p}})N^{(u,i)}_{w,w'}$ and $N^{(u,i)}_{w,w'}\geq \lfloor (T-u-i)/(2u)\rfloor \underline{p}/d^2\geq T\underline{p}/(u^2d^2)$ for appropriately large $d,T$. Continuing Eq.(\ref{eq:proof_Xww'_bd1}),
\begin{equation}
\begin{split}
\P\Big( \Big| \frac{X^{(u,i)}_{w,w'}}{S^{(u,i)}_{w,w'}}-1\Big|\geq \delta\Big| \Vb_w \Big) \leq&2\E\Big[ \exp\big(-\frac{\delta^2}{2+\delta} (1-\frac{1}{\sqrt{p}})^2 (N^{(u,i)}_{w,w'})^2\big)I\{F\} \Big| \Vb_w\Big] +\P(F^c|\Vb_w)\\
\leq&2\E\Big[ \exp\big(-\frac{\delta^2}{2+\delta} (1-\frac{1}{\sqrt{p}})^2 \frac{T^2 \underline{p}^2}{5u^2d^2}\big)I\{F\} \Big| \Vb_w\Big] +\P(F^c|\Vb_w)\\
\leq&2\E\Big[ \exp\big(-\frac{\delta^2}{2+\delta} (1-\frac{1}{\sqrt{p}})^2 \frac{T^2\underline{p}^2}{5u^2d^2}\big) \Big| \Vb_w\Big] +\P(F^c|\Vb_w)\\
\leq& 2\exp\Big(-\frac{\delta^2}{2+\delta} (1-\frac{1}{\sqrt{p}})^2 \frac{T^2\underline{p}^2}{5u^2d^2}\Big) + \P(F^c|\Vb_w).
\end{split}
\label{eq:proof_Xww'_bd2}
\end{equation}
Then similar to reasoning in Eq.(\ref{eq:proof_Xw_bd3}) in the proof of Lemma~\ref{lem:X_w_to_cond_exp}, for $\delta = \frac{1}{\sqrt{p}}$ we have 
\begin{equation}
\begin{split}
\P\Big( \Big| \frac{X^{(u,i)}_{w,w'}}{S^{(u,i)}_{w,w'}}-1\Big|\geq \frac{1}{\sqrt{p}}\Big| \Vb_w \Big) \leq& 2\exp\Big(-\frac{1}{2k+\sqrt{p}} (1-\frac{1}{\sqrt{p}})^2 \frac{T^2p_*^2}{5u^2 d^2}\Big) + \exp(-\omega(\log^2 d)) + d^{-\tau}\\
=& \exp(-\omega(\log^2 d)) + d^{-\tau}.
\end{split}
\label{eq:proof_Xww'_bd3}
\end{equation}
Taking union bound of the bound in Eq.(\ref{eq:proof_Xww'_bd3}) for all $u,i$ with $1\leq i\leq 2u$, $1\leq u\leq q$, for all $\Vb_w\in \mathcal{V}^{**}$ we have
\begin{equation}
\P\Big( \Big| \frac{X^{(u,i)}_{w,w'}}{S^{(u,i)}_{w,w'}}-1\Big|< \frac{1}{\sqrt{p}},\forall u,i\Big| \Vb_w \Big) \geq 1- \exp(-\omega(\log^2 d)) - O(d^{-\tau}).
\label{eq:X_ww^ui_bound}
\end{equation}
Further with union bound for all pairs $w,w'$ we have
\begin{equation}
\P\Big( \max_{w,w',u,i}\Big| \frac{X^{(u,i)}_{w,w'}}{S^{(u,i)}_{w,w'}}-1\Big|\geq \frac{1}{\sqrt{p}}\Big| \Vb_w \Big) \leq d^2\exp(-\omega(\log^2 d)) - O(d^{-\tau+2}) = \exp(-\omega(\log^2 d)) + d^{-\tau'}
\label{eq:X_ww^ui_bound2}
\end{equation}
for some large constant $\tau'>0$ since the $\tau>0$ can be sufficiently large.

Also note that $X_{w,w'}^{(u)}=\sum_{i=1}^{2u}X_{w,w'}^{(u,i)}$, $S_{w,w'}^{(u)}=\sum_{i=1}^{2u} S_{w,w'}^{(u,i)}$, according to Eq.(\ref{eq:proof_Xww'_bd3}) with union bound for all $i=1,\dots,2u$ with fixed $u$ and all pairs $(w,w')$, for all $\Vb_w\in \mathcal{V}^{**}$ we have
\begin{equation}
\P\Big( \max_{w,w'}\Big| \frac{X^{(u)}_{w,w'}}{S^{(u)}_{w,w'}}-1\Big|\geq \frac{1}{\sqrt{p}}\Big| \Vb_w \Big) \leq \exp(-\omega(\log^2 d)) + d^{-\tau'}
\label{eq:X_ww^u_bound}
\end{equation}
for some large constant $\tau'$ and appropriately large $d,T$. And similarly, since $X_{w,w}^{[q]}=\sum_{u=1}^q [X_{w,w'}^{(u)}+X_{w',w}^{(u)}$, we have
\begin{equation}
\P\Big( \max_{w,w'}\Big| \frac{X^{[q]}_{w,w'}}{S^{(u)}_{w,w'}}-1\Big|\geq \frac{1}{\sqrt{p}}\Big| \Vb_w \Big) \leq \exp(-\omega(\log^2 d)) +d^{-\tau'}.
\label{eq:X_ww^[q]_bound}
\end{equation}
for some large constant $\tau'>0$ and appropriately large $d,T$. Removing the condition on $\Vb_w$, combined with the fact that $\P(\Vb_w\notin \mathcal{V}^{**}) \leq \exp(-\omega(\log^2 d)) + d^{-\tau}$, with same arguments as in Lemma\ref{lem:X_w_to_cond_exp} we get the desired results in Eq.(\ref{eq:X_ww_to_S_ww_full}).
\end{proof}


\section{Concentration of Stationary PMI}\label{sec:stat_PMI_to_cov}

\subsection{Concentration of Stationary Occurrence Probabilities}\label{subsec:concen_pw_pww'_stat}

We provide a concentration property for (stationary) occurrence probabilities $p_w$ and $p_{w,w'}^{(u)}$. The proof is generally based on Theorem 2.2 in \cite{arora2016latent} but with some modifications for our model. We first cite Lemma A.5 from \cite{arora2016latent} here.

\begin{lemma}[Lemma A.5 in \cite{arora2016latent}]
Let $\bv\in \real^p$ be a fixed vector with norm $\|\bv\|_2\leq \kappa \sqrt{p}$ for some constant $\kappa$. Then for random vector $\bc\sim\mathcal{C}$, we have that 
$$
\log \E[\exp(\langle \bv,\bc\rangle)] = \frac{\|\bv\|_2^2}{2p} + O(1/p).
$$
\label{lem:cite_expect_exp}
\end{lemma}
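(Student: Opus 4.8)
This lemma is cited from \cite{arora2016latent}; the plan is to sketch a self-contained proof. Since the law $\mathcal{C}$ of $\bc$ (uniform on $S^{p-1}$) is rotation invariant, I would first reduce to $\bv = r\,\eb_1$ with $r := \|\bv\|_2 \le \kappa\sqrt{p}$, so that $\langle\bv,\bc\rangle$ is distributed as $r c_1$, where $c_1$ is the first coordinate of a uniform point on $S^{p-1}$ (density proportional to $(1-t^2)^{(p-3)/2}$ on $[-1,1]$). All odd moments of $c_1$ vanish and $\E[c_1^{2m}] = (2m-1)!!\,/\prod_{j=0}^{m-1}(p+2j)$, so expanding the exponential term by term yields the exact closed form
\[
\E\bigl[e^{\langle\bv,\bc\rangle}\bigr] \;=\; \sum_{m\ge 0}\frac{(r^2/4)^m}{m!\,(p/2)_m} \;=\; \Gamma(p/2)\,(2/r)^{p/2-1}\,I_{p/2-1}(r),
\]
where $(p/2)_m$ is the Pochhammer symbol and $I_\nu$ the modified Bessel function of the first kind.

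The next step is to extract the logarithm through the uniform (Debye) asymptotics of $I_\nu$ at large order. With $\nu = p/2-1$ and the argument written as $r = \nu z$, the constraint $r \le \kappa\sqrt p$ forces $z = r/\nu = O(1/\sqrt p)\to 0$, i.e.\ the small-argument regime, where $\eta(z) := \sqrt{1+z^2}+\log\frac{z}{1+\sqrt{1+z^2}} = 1 + \tfrac{z^2}{4} + \log(z/2) + O(z^4)$ and $I_\nu(\nu z) = \frac{e^{\nu\eta(z)}}{\sqrt{2\pi\nu}\,(1+z^2)^{1/4}}\bigl(1+O(1/\nu)\bigr)$. Substituting, the terms $\nu\log(2/r)$ and $\nu\log z - \nu\log 2$ cancel exactly, the prefactor $\Gamma(p/2) = \Gamma(\nu+1)$ cancels $e^{\nu}\nu^{-\nu}/\sqrt{2\pi\nu}$ up to $O(1/\nu)$ by Stirling, and $\nu\cdot z^2/4 = r^2/(4\nu) = \|\bv\|_2^2/(2p) + O(r^2/p^2)$ supplies the main term; since $\nu z^4 = r^4/\nu^3 = O(p^2/p^3) = O(1/p)$, $r^2/p^2 = O(1/p)$ and $\log(1+z^2) = O(1/p)$, everything else is $O(1/p)$, which gives the claim. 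A Bessel-free alternative I would keep in reserve: write $\bc = \mathbf{g}/\|\mathbf{g}\|_2$ with $\mathbf{g}\sim N(0,\I_p)$, split $\mathbf{g}$ into its components parallel and orthogonal to $\bv$ so that $\langle\bv,\bc\rangle = rh/\sqrt{h^2+Y}$ with $h\sim N(0,1)$ and $Y\sim\chi^2_{p-1}$ independent, Taylor-expand $1/\sqrt{h^2+Y}$ about $Y=p$, and evaluate $\E[\,\cdot\,e^{rh/\sqrt p}]$ of the leading terms using $\E_h[h\,e^{sh}] = s\,e^{s^2/2}$.

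The point requiring genuine care — and the reason this is not a one-line computation — is that $r^2$ may be as large as $\kappa^2 p = \Theta(p)$, so the moment generating function itself is $1+\Theta(1)$ rather than $1+o(1)$; the assertion that its logarithm equals $\|\bv\|_2^2/(2p)+O(1/p)$ relies on the clean additive structure of $\log {}_0F_1$ (equivalently, on the cancellations in the Debye expansion), and the argument must track every error term uniformly in $r$ over $[0,\kappa\sqrt p]$. In the Gaussian route the same issue surfaces as the need to bound $\E[e^{\text{remainder}}]$ where the remainder involves $Y-p$ over a $\chi^2$ law: one truncates $Y$ to $\{|Y-p|\le p^{\alpha}\}$ for some $\alpha\in(1/2,1)$ (on which the Taylor expansion is valid with $o(1)$ error, and, after taking the $h$-expectation of the first two terms, $O(1/p)$ error) and controls the complement crudely using the a priori bound $|\langle\bv,\bc\rangle|\le r = O(\sqrt p)$, so the discarded mass is at most $e^{O(\sqrt p)}e^{-\Omega(p^{2\alpha-1})} = o(1/p)$.
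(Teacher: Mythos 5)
The paper does not prove this lemma; it cites it verbatim from \cite{arora2016latent} with no argument of its own, so there is no internal proof to compare against. Your proposal therefore supplies something the paper omits, and on inspection the Bessel route you sketch is correct. The exact identity $\E[e^{r c_1}] = \sum_{m\ge 0}(r^2/4)^m/[m!\,(p/2)_m] = \Gamma(p/2)(2/r)^{p/2-1}I_{p/2-1}(r)$ follows from $c_1^2\sim \mathrm{Beta}(1/2,(p-1)/2)$ and the series for $I_\nu$; with $\nu=p/2-1$ and $z=r/\nu=O(p^{-1/2})$, the uniform (Debye) expansion of $I_\nu(\nu z)$ is valid with an $O(1/\nu)$ relative error uniformly down to $z=0$ (the coefficients $u_k(t)$ stay bounded as $t\to 1$), the $\nu\log(2/r)$ prefactor cancels the $\nu\log(z/2)$ piece of $\nu\eta(z)$ exactly, Stirling cancels the $\tfrac12\log(2\pi\nu)$ term, and the remaining pieces $r^2/(4\nu)-r^2/(2p)$, $\nu z^4$, and $\log(1+z^2)$ are each $O(1/p)$ under $r\le\kappa\sqrt p$. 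You also correctly flag the single point where a casual argument would fail: since $r^2=\Theta(p)$ is allowed, the MGF is $1+\Theta(1)$, so one cannot simply expand $e^{\langle\bv,\bc\rangle}$ to second order and discard the tail; the cancellations in $\log{}_0F_1$ (or, in your Gaussian alternative, the careful truncation of $Y\sim\chi^2_{p-1}$ and the a priori bound $|\langle\bv,\bc\rangle|\le r$) are what make the $O(1/p)$ control uniform in $r\in[0,\kappa\sqrt p]$. The special-function route is cleaner than the probabilistic argument in \cite{arora2016latent} (which proceeds via the $\mathbf{g}/\|\mathbf{g}\|_2$ representation and concentration of $\|\mathbf{g}\|_2$, essentially your second sketch): it gives a closed form and reduces everything to a single, well-tabulated asymptotic expansion, at the cost of invoking Bessel-function machinery. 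Your Gaussian alternative is also sound in outline but is left at a higher level; if you intend that route to stand on its own, the truncation step and the bound on the off-event contribution deserve the same level of explicitness as the Debye computation.
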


The next two lemmas provide key results for the concentration of stationary PMI, which is the 

\begin{lemma}
Suppose Assumptions~\ref{assump:parameter_space}, \ref{assump:block_bound} and \ref{assump:kdT} holds. Given that all code vectors are bounded with $\sigma_{\min}\sqrt{p/2}\leq \|\bv_w\|_2\leq \sigma_{\max}\sqrt{2p}$ for all $w$ and the condition in  (\ref{eq:bound_partition_fct}) holds, the stationary occurrence probabilities $p_w = \E_{\bc\sim \mathcal{C}} [ \frac{\exp(\langle \bv_w,\bc\rangle)}{Z(V,\bc)} |V] $ satisfy that 
\begin{equation}
\max_w\Big| \log (p_w) - \big(\frac{\|\bv_w\|_2^2}{2p} - \log Z\big) \Big| \leq \frac{2}{\sqrt{p}},
\end{equation}
for appropriately large $p$. Furthermore, the aforementioned conditions hold with probability at least $1-\exp(-\omega(\log^2 d))$.
\label{lem:pw_stationary}
\end{lemma}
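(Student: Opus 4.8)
The plan is to condition on a favorable realization of the embedding vectors and then estimate $p_w$ by splitting the expectation over the discourse vector $\bc$ according to whether the random partition function $Z(V,\bc)$ is close to its mean $Z$. First I would introduce the event $\mathcal{E}$ on which both conclusions hold simultaneously: $\sigma_{\min}\sqrt{p/2}\le\|\bv_w\|_2\le\sigma_{\max}\sqrt{2p}$ for every $w$ (Corollary~\ref{cor:length_bound}) and $\P_{\bc\sim\mathcal{C}}(F\mid\Vbb)\ge 1-\exp(-\omega(\log^2 d))$ (Lemma~\ref{lem:part_fct}), where $F=\{(1-\epsilon_z)Z\le Z(V,\bc)\le(1+\epsilon_z)Z\}$. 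A union bound gives $\P(\mathcal{E})\ge 1-\exp(-\omega(\log^2 d))$, which already settles the ``furthermore'' part of the statement; from here on all estimates are deterministic statements about code vectors lying in $\mathcal{E}$.

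On $\mathcal{E}$ I would record a few elementary facts used repeatedly. Since $Z(V,\bc)\ge\exp(\langle\bv_w,\bc\rangle)$, the ratio $\exp(\langle\bv_w,\bc\rangle)/Z(V,\bc)$ always lies in $[0,1]$. Applying Lemma~\ref{lem:cite_expect_exp} to $\bv=\bv_w$ (norm at most $\sqrt2\,\sigma_{\max}\sqrt p$) and to $\bv=2\bv_w$ gives $\E_{\bc\sim\mathcal{C}}[\exp(\langle\bv_w,\bc\rangle)]=\exp(\|\bv_w\|_2^2/(2p)+O(1/p))$ and $\E_{\bc\sim\mathcal{C}}[\exp(2\langle\bv_w,\bc\rangle)]=O(1)$; moreover $\|\bv_w\|_2^2\ge\sigma_{\min}^2p/2$ makes the first expectation $\Omega(1)$. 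Finally, by Lemma~\ref{lem:e_var_Z_c}, $Z=\sum_i\exp(\Sigma_{ii}/2)\le d\exp(\sigma_{\max}^2/2)=O(d)$, so $\E_{\bc\sim\mathcal{C}}[\exp(\langle\bv_w,\bc\rangle)]/Z=\Omega(1/d)$, a quantity that dominates $\P_{\bc}(F^c)=\exp(-\omega(\log^2 d))$ because $\omega(\log^2 d)$ overwhelms $\log d$.

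For the upper bound I would split $p_w=\E_{\bc}[\exp(\langle\bv_w,\bc\rangle)Z(V,\bc)^{-1}I\{F\}]+\E_{\bc}[\exp(\langle\bv_w,\bc\rangle)Z(V,\bc)^{-1}I\{F^c\}]$, bound the first summand by $\E_{\bc}[\exp(\langle\bv_w,\bc\rangle)]/((1-\epsilon_z)Z)$ using $Z(V,\bc)\ge(1-\epsilon_z)Z$ on $F$, and the second by $\P_{\bc}(F^c)$ using that the ratio is in $[0,1]$; by the facts above the latter is a multiplicative $\exp(-\omega(\log^2 d))$ correction, so $\log p_w\le \|\bv_w\|_2^2/(2p)-\log Z-\log(1-\epsilon_z)+O(1/p)+\exp(-\omega(\log^2 d))$. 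For the lower bound I would use $Z(V,\bc)\le(1+\epsilon_z)Z$ on $F$ to get $p_w\ge(1+\epsilon_z)^{-1}Z^{-1}\big(\E_{\bc}[\exp(\langle\bv_w,\bc\rangle)]-\E_{\bc}[\exp(\langle\bv_w,\bc\rangle)I\{F^c\}]\big)$, then bound the cross term via Cauchy--Schwarz and $\E_{\bc}[\exp(2\langle\bv_w,\bc\rangle)]=O(1)$ by $\E_{\bc}[\exp(\langle\bv_w,\bc\rangle)I\{F^c\}]\le O(1)\sqrt{\P_{\bc}(F^c)}=\exp(-\omega(\log^2 d))$, again a lower-order multiplicative correction to $\E_{\bc}[\exp(\langle\bv_w,\bc\rangle)]=\Omega(1)$, yielding $\log p_w\ge\|\bv_w\|_2^2/(2p)-\log Z-\log(1+\epsilon_z)+O(1/p)-\exp(-\omega(\log^2 d))$. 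Combining the two and using $|\log(1\pm\epsilon_z)|\le\epsilon_z+O(\epsilon_z^2)$ with $\epsilon_z=O(1/\sqrt p)$ (the value actually established in the proof of Lemma~\ref{lem:part_fct}) gives $\big|\log p_w-(\|\bv_w\|_2^2/(2p)-\log Z)\big|\le 1/\sqrt p+O(1/p)\le 2/\sqrt p$ for $p$ large, uniformly in $w$ since $\mathcal{E}$ controls all $d$ codes at once.

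The step I expect to be the main obstacle is making the $F^c$ contributions rigorous: the error contributed there must be shown to be \emph{multiplicative} of order $\exp(-\omega(\log^2 d))$ relative to $p_w\asymp1/d$, not merely additive, since only then can one legitimately take logarithms at the required precision. This is exactly where the lower bound $\|\bv_w\|_2^2\ge\sigma_{\min}^2p/2$ from Corollary~\ref{cor:length_bound}, the crude bound $Z=O(d)$, and the estimate $\P_{\bc}(F^c)=\exp(-\omega(\log^2 d))=o(1/d)$ all enter; none of this is deep, but the bookkeeping—especially the Cauchy--Schwarz step and keeping straight which randomness (embedding vectors versus discourse vector) each quantity depends on—is the delicate part.
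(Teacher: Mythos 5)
Your proof is correct and follows essentially the same route as the paper's: condition on the code vectors lying in the favorable event, split $p_w$ over $F$ and $F^c$, bound the $F^c$ contribution via $\P(F^c\mid V)$ and Cauchy--Schwarz, and invoke Lemma~\ref{lem:cite_expect_exp} on $F$ together with $|\log(1\pm\epsilon_z)|\lesssim\epsilon_z$. You also correctly note that the $\epsilon_z$ actually established inside the proof of Lemma~\ref{lem:part_fct} is $1/\sqrt p$ (via the choice $\delta=\tfrac{1}{2\sqrt p}$), which is what makes the final $2/\sqrt p$ bound attainable, as opposed to the $\sqrt{\log d/p}$ quoted in that lemma's statement.
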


\begin{proof}[Proof of Lemma~\ref{lem:pw_stationary}.]
Given all code vectors $V$ where $\frac{\sigma_{\min}}{\sqrt{2}}\sqrt{p}\leq \|\bv_w\|_2\leq \sigma_{\max}\sqrt{2p}$ for all $w$ and satisfies the condition in (\ref{eq:bound_partition_fct}), by previous results we know $\P(F|V)\geq 1-\exp(-\omega(\log^2 d))$ for
\begin{equation}
F = \Big\{  \Big|\frac{Z(V,\bc)}{Z}-1  \Big|\leq \frac{1}{\sqrt{p}} \Big\},
\end{equation}
where $Z(V,\bc)= \sum_w \exp(\langle \bv_w,\bc\rangle)$ and the probability measure corresponds to generating $\bc\sim \mathcal{C}$ the uniform distribution over unit sphere. Since $Z(V,\bc)\geq \exp(\langle \bv_w,\bc\rangle)$,
\begin{equation*}
p_{w} = \E_{\bc\sim \mathcal{C}} \Big[ \frac{\exp(\langle \bv_w,\bc\rangle)}{Z(V,\bc)}\Big |V\Big]= \E_{\bc\sim \mathcal{C}} \Big[ \frac{\exp(\langle \bv_w,\bc\rangle)}{Z(V,\bc)}I\{F\} \Big |V\Big] + \E_{\bc\sim \mathcal{C}} \Big[ \frac{\exp(\langle \bv_w,\bc\rangle)}{Z(V,\bc)}I\{F^c\} \Big |V\Big],
\end{equation*}
where 
\begin{equation}
\E_{\bc\sim \mathcal{C}} \Big[ \frac{\exp(\langle \bv_w,\bc\rangle)}{Z(V,\bc)}I\{F^c\} \Big |V\Big]\leq \E[I\{F^c\} |V]  = \exp(-\omega(\log^2 d)),
\label{eq:bound_stat_pw_part2}
\end{equation}
and 
\begin{equation}
\frac{\E_{\bc\sim \mathcal{C}}[\exp(\langle \bv_w,\bc\rangle)I\{F\}|V]}{(1+\epsilon_z)Z} \leq \E_{\bc\sim \mathcal{C}} \Big[ \frac{\exp(\langle \bv_w,\bc\rangle)}{Z(V,\bc)}I\{F\} \Big |V\Big] \leq \frac{\E_{\bc\sim \mathcal{C}}[\exp(\langle \bv_w,\bc\rangle)I\{F\}|V]}{(1-\epsilon_z)Z}.
\label{eq:bound_stat_pw_part1}
\end{equation}
Hereafter we omit the footscript indicating $\bc\sim \mathcal{C}$. Note that 
\begin{equation*}
\E[\exp(\langle \bv_w,\bc\rangle)I\{F\}|V]=\E[\exp(\langle \bv_w,\bc\rangle)|V] - \E[\exp(\langle \bv_w,\bc\rangle)I{F^C}|V] \leq \E[\exp(\langle \bv_w,\bc\rangle)|V],
\end{equation*}
where by Cauchy-Schwarz inequality,
\begin{equation}
\E[\exp(\langle \bv_w,\bc\rangle)I{F^C}|V] \leq \big(\E[\exp(\langle 2\bv_w,\bc\rangle)|V] \cdot  \E[I{F^C}|V] \big)^{1/2}.
\end{equation}
Here $\E[I{F^C}|V]=\exp(-\omega^2(\log^2 d))$ and by Lemma \ref{lem:cite_expect_exp}, since $2\|\bv_w\|_2\leq 2\sigma_{\max}\sqrt{2p}$, we have $\E[\exp(\langle 2\bv_w,\bc\rangle)|V] = \exp(O(\|v\|_2^2/(2k)+1/k)) = O(1)$, thus 
\begin{equation}
\E[\exp(\langle \bv_w,\bc\rangle)|V] - \exp(-\omega(\log^2 d))\leq \E[\exp(\langle \bv_w,\bc\rangle)I\{F\}|V] \leq \E[\exp(\langle \bv_w,\bc\rangle)|V].
\label{eq:bound_stat_pw_part1_continue}
\end{equation}
Combining results in Equation (\ref{eq:bound_stat_pw_part2}) and (\ref{eq:bound_stat_pw_part1_continue}), we have that
\begin{align*}
\frac{\E[\exp(\langle \bv_w,\bc\rangle)|V]-\exp(-\omega(\log^2 d))}{(1+\epsilon_z) Z} \leq &
\E_{\bc\sim \mathcal{C}} \Big[ \frac{\exp(\langle \bv_w,\bc\rangle)}{Z(V,\bc)}\Big |V\Big]\\
\leq& \frac{\E[\exp(\langle \bv_w,\bc\rangle)|V]+\exp(-\omega(\log^2 d))}{(1-\epsilon_z) Z}+\exp(-\omega(\log^2 d)),
\end{align*}
where $\E[\exp(\langle \bv_w,\bc\rangle)|V] = \exp(\frac{\|\bv_w\|_2^2}{2p} +O(1/p))$ and $\|\bv_w\|_2^2/2k\geq \sigma_{\min}^2/2=\Omega(1)$. Thus
\begin{equation}
\log (p_w) \leq \frac{\|\bv_w\|_2^2}{2p} +O(1/p) - \log Z + \epsilon_z + \exp(-\omega(\log^2 d)) \leq \frac{\|\bv_w\|_2^2}{2p} - \log Z + \frac{2}{\sqrt{p}},
\end{equation}
and 
\begin{equation}
\log(p_w) \geq \frac{\|\bv_w\|_2^2}{2p} - \log Z - \epsilon_z - \exp(-\omega(\log^2 d)) = \frac{\|\bv_w\|_2^2}{2p} - \log Z - \frac{2}{\sqrt{p}}
\end{equation}
for appropriately large $p$. The last assertion in the lemma follows union bound as well as the fact that $\P(\frac{\sigma}{\sqrt{2}}\sqrt{p}\leq \|\bv_w\|_2\leq 2\sqrt{p},\forall w)=1-\exp(-\omega(\log^2 d))$ and probability bound in Lemma \ref{lem:part_fct}.

\end{proof}

A similar result involving more techniques holds for stationary co-occurrence probabilities $p_{w,w'}$, stated as follows.

\begin{lemma}
Given the code vectors with $\sigma_{\min}\sqrt{p/2}\leq \|\bv_w\|_2\leq \sigma_{\max}\sqrt{2p}$ for all $w$ and condition of (\ref{eq:bound_partition_fct}) holds. And suppose Assumptions~\ref{assump:parameter_space}, \ref{assump:block_bound} and \ref{assump:kdT} hold. Then the stationary co-occurrence probabilities $p_{w,w'} =\E_{(\bc,\bc')\sim \mathcal{D}_1} [ \frac{\exp(\langle \bv_w,\bc\rangle)}{Z(V,\bc)}\frac{\exp(\langle \bv_{w'},\bc'\rangle )}{Z(V,\bc')}|V] $ satisfy that 
\begin{equation}
\Big| \log(p_{w,w'}) - \Big(\frac{\|\bv_w+\bv_{w'}\|_2^2}{2p}- 2\log Z  \Big) \Big|\leq 6\sqrt{\frac{\log d}{p}}
\label{eq:p_ww'_stationary}
\end{equation}
for appropriately large $p$. Furthermore, it holds with probability at least $1-\exp(-\omega(\log^2 d))$.
\label{lem:p_ww'_stationary}
\end{lemma}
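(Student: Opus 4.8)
The plan is to mimic the proof of Lemma~\ref{lem:pw_stationary}, but now working with the \emph{joint} stationary distribution $\mathcal{D}_1$ of $(\bc,\bc')$ rather than the marginal $\mathcal{C}$. First I would condition on the good event $\{V : \sigma_{\min}\sqrt{p/2}\le \|\bv_w\|_2\le \sigma_{\max}\sqrt{2p}\ \forall w\}\cap\{(\ref{eq:bound_partition_fct})\ \text{holds}\}$, which by Corollary~\ref{cor:length_bound} and Lemma~\ref{lem:part_fct} has probability at least $1-\exp(-\omega(\log^2 d))$. On this event I would introduce the partition-function event $F=\{|Z(V,\bc)/Z-1|\le 1/\sqrt p\}$ and its companion $F'=\{|Z(V,\bc')/Z-1|\le 1/\sqrt p\}$; since $\bc$ and $\bc'$ are each marginally distributed as $\mathcal{C}$ (Lemma~\ref{lem:stat_joint_distr_zt}), each of $F,F'$ has conditional probability $1-\exp(-\omega(\log^2 d))$ by~(\ref{eq:bound_partition_fct}), so $\P(F\cap F'|V)\ge 1-\exp(-\omega(\log^2 d))$ by a union bound. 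Then I would sandwich
\[
p_{w,w'} = \E_{(\bc,\bc')\sim\mathcal{D}_1}\!\left[\frac{e^{\langle\bv_w,\bc\rangle}}{Z(V,\bc)}\frac{e^{\langle\bv_{w'},\bc'\rangle}}{Z(V,\bc')}\,\Big|\,V\right]
\]
on $F\cap F'$ by $\frac{\E[e^{\langle\bv_w,\bc\rangle}e^{\langle\bv_{w'},\bc'\rangle}I\{F\cap F'\}|V]}{(1\pm\epsilon_z)^2 Z^2}$, while the contribution of $(F\cap F')^c$ is bounded above by $\P((F\cap F')^c|V)=\exp(-\omega(\log^2 d))$ using $e^{\langle\bv_w,\bc\rangle}/Z(V,\bc)\le 1$, and (for the lower bound) can be dropped since it is nonnegative.

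Next I would handle the numerator $\E[e^{\langle\bv_w,\bc\rangle+\langle\bv_{w'},\bc'\rangle}I\{F\cap F'\}|V]$. Dropping the indicator only loses $\E[e^{\langle\bv_w,\bc\rangle+\langle\bv_{w'},\bc'\rangle}I\{(F\cap F')^c\}|V]$, which by Cauchy--Schwarz is at most $(\E[e^{2\langle\bv_w,\bc\rangle+2\langle\bv_{w'},\bc'\rangle}|V])^{1/2}(\P((F\cap F')^c|V))^{1/2}$; the first factor is $O(1)$ because $\langle 2\bv_w,\bc\rangle+\langle 2\bv_{w'},\bc'\rangle = \langle (2\bv_w,2\bv_{w'}),(\bc,\bc')\rangle$ and I can bound this exponential moment via a version of Lemma~\ref{lem:cite_expect_exp} (or a direct Gaussian-type estimate) using $\|\bv_w\|_2,\|\bv_{w'}\|_2=O(\sqrt p)$, so the whole error is $\exp(-\omega(\log^2 d))$. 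The main quantity to evaluate is therefore $\E_{(\bc,\bc')\sim\mathcal{D}_1}[\exp(\langle\bv_w,\bc\rangle+\langle\bv_{w'},\bc'\rangle)|V]$. Here is where the joint structure enters: by Lemma~\ref{lem:stat_joint_distr_zt}, $(\bc,\bc')\stackrel{d}{=}(\bz/\|\bz\|_2, \bz'/\|\bz'\|_2)$ with $(\bz,\bz')$ jointly Gaussian, $\Cov(\bz)=\Cov(\bz')=\Ib_p/p$, $\E[\bz(\bz')^\transpose]=\alpha\Ib_p/p$ with $\alpha=1-\log d/p^2$. Using the slow-moving bound $\|\bc-\bc'\|_2\le 5\sqrt{\log d}/p$ from Lemma~\ref{lem:moving_step_ct_ct+u} (which holds with probability $1-\exp(-\omega(\log^2 d))$), I can write $\langle\bv_w,\bc\rangle+\langle\bv_{w'},\bc'\rangle = \langle\bv_w+\bv_{w'},\bc\rangle + \langle\bv_{w'},\bc'-\bc\rangle$, where the second term is bounded in absolute value by $\|\bv_{w'}\|_2\|\bc'-\bc\|_2 = O(\sqrt p)\cdot O(\sqrt{\log d}/p) = O(\sqrt{\log d/p})$. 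Thus on the good event the log of the exponent differs from $\langle\bv_w+\bv_{w'},\bc\rangle$ by $O(\sqrt{\log d/p})$, and applying Lemma~\ref{lem:cite_expect_exp} to the fixed vector $\bv_w+\bv_{w'}$ (whose norm is $O(\sqrt p)$) gives $\log\E[\exp(\langle\bv_w+\bv_{w'},\bc\rangle)|V] = \|\bv_w+\bv_{w'}\|_2^2/(2p) + O(1/p)$. Assembling the contributions from the $O(\sqrt{\log d/p})$ perturbation of the exponent, the $O(1/p)$ from Lemma~\ref{lem:cite_expect_exp}, the $\pm\log(1\pm\epsilon_z)^2 = O(\epsilon_z) = O(\sqrt{\log d/p})$ from the partition functions, and the $\exp(-\omega(\log^2 d))$ additive errors (which are absorbed since $\|\bv_w+\bv_{w'}\|_2^2/(2p)\ge $ a positive constant, keeping $p_{w,w'}$ bounded away from $0$), I obtain $|\log p_{w,w'} - (\|\bv_w+\bv_{w'}\|_2^2/(2p) - 2\log Z)| \le 6\sqrt{\log d/p}$ for large $p$. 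Finally, taking a union bound over all pairs $(w,w')$ keeps the failure probability at $\exp(-\omega(\log^2 d))$ since $d^2\exp(-\omega(\log^2 d)) = \exp(-\omega(\log^2 d))$.

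I expect the main obstacle to be controlling the error introduced by replacing $\bc'$ with $\bc$ inside the exponential \emph{under expectation}: the slow-moving bound $\|\bc-\bc'\|_2\le 5\sqrt{\log d}/p$ only holds with high probability, so I must carefully split the expectation over the event where this bound holds and its complement, and on the complement use a Cauchy--Schwarz argument together with an exponential-moment bound for $\langle\bv_w,\bc\rangle+\langle\bv_{w'},\bc'\rangle$ (again via Lemma~\ref{lem:cite_expect_exp} applied appropriately, or a crude bound using that $|\langle\bv_w,\bc\rangle|\le\|\bv_w\|_2$ is not available since $\bc$ is on the unit sphere and $\|\bv_w\|_2=O(\sqrt p)$ — so I genuinely need the sub-Gaussian tail of $\langle\bv_w,\bc\rangle$). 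A secondary subtlety is that $Z$ here refers to the \emph{realized} $Z = \sum_i \exp(\Sigma_{ii}/2)$ from Lemma~\ref{lem:e_var_Z_c}/Lemma~\ref{lem:part_fct}, and I should be careful that the same $Z$ appears for both $\bc$ and $\bc'$, which it does because $\mathcal{D}_1$ has both marginals equal to $\mathcal{C}$ and $Z$ depends only on $V$, not on the discourse vector. Everything else is a routine bookkeeping of error terms of order $\sqrt{\log d/p}$, $1/p$, and $\exp(-\omega(\log^2 d))$.
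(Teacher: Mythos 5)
Your proposal follows essentially the same route as the paper: split on a good event, sandwich via the partition-function concentration, rewrite the exponent as $\langle \bv_w+\bv_{w'},\bc\rangle + \langle \bv_{w'},\bc'-\bc\rangle$ using the slow-moving bound, apply Lemma~\ref{lem:cite_expect_exp} to the fixed vector $\bv_w+\bv_{w'}$, and control the bad-event contribution by Cauchy--Schwarz. The one organizational difference worth noting is that the paper bundles the slow-moving event $\{\|\bc-\bc'\|_2 \le C\sqrt{\log d}/p\}$ directly into the definition of $F$ alongside the two partition-function conditions, so there is a single indicator $I\{F\}$ to handle and a single Cauchy--Schwarz step; you instead start with only the partition-function events $F\cap F'$, then realize in your final paragraph that the slow-moving bound must also be treated by the same split-and-Cauchy--Schwarz device. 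Your diagnosis of that subtlety is exactly right, and once you merge the slow-moving event into the good event (as the paper does) your argument closes cleanly with the same error bookkeeping of order $\sqrt{\log d/p}$, $1/p$, and $\exp(-\omega(\log^2 d))$.
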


\begin{proof}[Proof of Lemma~\ref{lem:p_ww'_stationary}]
Suppose Assumptions~\ref{assump:parameter_space}, \ref{assump:block_bound} and \ref{assump:kdT} hold. Given all code vectors $V$ where $\sigma_{\min}\sqrt{p/2}\leq \|\bv_w\|_2\leq \sigma_{\max}\sqrt{2p}$ for all $w$ and satisfies the condition in (\ref{eq:bound_partition_fct}), by previous results we know $\P(F|V)=1-\exp(-\omega(\log^2 d))$ for constant $C\leq 5$ and
\begin{equation}
F = \Big\{  \Big|\frac{z(V,\bc)}{Z}-1  \Big|<\frac{1}{\sqrt{p}}, \Big|\frac{z(V,\bc')}{Z}-1  \Big|<\frac{1}{\sqrt{p}}, \|\bc-\bc'\|_2\leq \frac{C\sqrt{\log d}}{p}\Big\},
\end{equation}
where $Z(V,\bc)= \sum_w \exp(\langle \bv_w,\bc\rangle)$ and the probability measure corresponds to generating $(\bc,\bc')\sim \mathcal{D}_1$. By definition, given $\bv_w$ and $\bv_{w'}$,
$$
p_{w,w'} = \E_{(\bc,\bc')\sim \mathcal{D}_1} \Big[ \frac{\exp(\langle \bv_w,\bc\rangle)}{Z(V,\bc)}\frac{\exp(\langle \bv_{w'},\bc'\rangle )}{Z(V,\bc')}\Big| V \Big].
$$
Hereafter we omit the subscript $(\bc,\bc')\sim \mathcal{D}_1$ for simplicity. Also note that 
\begin{equation}
p_{w,w'} = \E \Big[ \frac{\exp(\langle \bv_w,\bc\rangle)}{Z(V,\bc)}\frac{\exp(\langle \bv_{w'},\bc'\rangle )}{Z(V,\bc')} I\{F\} \big|V \Big] + \E \Big[ \frac{\exp(\langle \bv_w,\bc\rangle)}{Z(V,\bc)}\frac{\exp(\langle \bv_{w'},\bc'\rangle )}{Z(V,\bc')} I\{F^c\} \big|V \Big],
\label{eq:stat_pww'_twoparts}
\end{equation}
where since both two ratios are less than $1$, we have 
\begin{equation}
 \E \Big[ \frac{\exp(\langle \bv_w,\bc\rangle)}{Z(V,\bc)}\frac{\exp(\langle \bv_{w'},\bc'\rangle )}{Z(V,\bc')} I\{F^c\} \big|V \Big] \leq \P(F^c|V) = \exp(-\omega(\log^2 d)).
\label{eq:bound_pww'_part1}
\end{equation}
When $F$ happens we have $(1-\epsilon_z)Z\leq Z(V,\bc),Z(V,\bc')\leq (1+\epsilon_z)Z$, so the first term can be bounded as
\begin{align}
\nonumber\frac{\E[\exp(\langle \bv_w,\bc\rangle)\exp(\langle \bv_{w'},\bc'\rangle)I\{F\}|V]}{(1+\epsilon_z)^2Z^2}&\leq \E \Big[ \frac{\exp(\langle \bv_w,\bc\rangle)}{Z(V,\bc)}\frac{\exp(\langle \bv_{w'},\bc'\rangle )}{Z(V,\bc')} I\{F\} \big|V \Big]\\
& \leq \frac{\E[\exp(\langle \bv_w,\bc\rangle)\exp(\langle \bv_{w'},\bc'\rangle)I\{F\}|V]}{(1-\epsilon_z)^2Z^2}.
\label{eq:stat_pww'_part2_1}
\end{align}
We now focus on the term $\E[\exp(\langle \bv_w,\bc\rangle)\exp(\langle \bv_{w'},\bc'\rangle)I\{F\}|V]$. Note that when $F$ happens, we have $\|\bv_{w'}\|_2\leq \sigma_{\max}\sqrt{2p}$ and $\|\bc-\bc'\|_2\leq \frac{5\sqrt{\log d}}{p}$, hence
\begin{align}
&\E[\exp(\langle \bv_w,\bc\rangle)\exp(\langle \bv_{w'},\bc'\rangle)I\{F\}|V]\notag \\
=& \E[\exp(\langle \bv_w+\bv_{w'},\bc\rangle) \exp(\langle \bv_{w'},\bc'-\bc\rangle) I\{F\}|V]\notag \\
\leq & \E[\exp(\langle \bv_w+\bv_{w'},\bc\rangle) \exp(\|_2\bv_{w'}\|\cdot \|\bc'-\bc\|_2) I\{F\}|V]\notag \\
\leq & \exp(5\sqrt{\frac{\log d}{p}}) \E[\exp(\langle \bv_w+\bv_{w'},\bc\rangle) I\{F\}|V].
\label{eq:bound_pww'_part2_2pos}
\end{align}
And
\begin{align}
&\E[\exp(\langle \bv_w,\bc\rangle)\exp(\langle \bv_{w'},\bc'\rangle)I\{F\}|V]\notag \\
=& \E[\exp(\langle \bv_w+\bv_{w'},\bc\rangle) \exp(\langle \bv_{w'},\bc'-\bc\rangle) I\{F\}|V]\notag \\
\geq & \E[\exp(\langle \bv_w+\bv_{w'},\bc\rangle) \exp(-\|_2\bv_{w'}\|\cdot \|\bc'-\bc\|_2) I\{F\}|V]\notag \\
\geq & \exp(-5\sqrt{\frac{\log d}{p}}) \E[\exp(\langle \bv_w+\bv_{w'},\bc\rangle) I\{F\}|V].
\label{eq:bound_pww'_part2_2neg}
\end{align}
Also, the second term in Equation (\ref{eq:bound_pww'_part2_2pos}) and (\ref{eq:bound_pww'_part2_2neg}) satisfies that 
\begin{equation*}
\E[\exp(\langle \bv_w+\bv_{w'},\bc\rangle) I\{F\}|V]= \E[\exp(\langle \bv_w+\bv_{w'},\bc\rangle)|V] - \E[\exp(\langle \bv_w+\bv_{w'},\bc\rangle) I\{F^c\}|V].
\end{equation*}
By the fact that $\|\bv_w+\bv_{w'}\|_2\leq 2\sigma_{\max}\sqrt{2p}$ combined with Lemma \ref{lem:cite_expect_exp}, we have
\begin{equation*}
\E[\exp(\langle \bv_w+\bv_{w'},\bc\rangle)|V] = \exp(\frac{\|\bv_w+\bv_{w'}\|_2^2}{2p} + O(1/p))
\end{equation*}
and $\exp(\frac{\|\bv_w+\bv_{w'}\|_2^2}{2p} + O(1/p)) \leq \exp(4\sigma_{\max}^2+1)$ for appropriately large $p$ such that the term $O(1/p)\leq 1$. Hence
\begin{align*}
\E[\exp(\langle \bv_w+\bv_{w'},\bc\rangle) I\{F^c\}|V] \leq& \big( \E[\exp(\langle \bv_w+\bv_{w'},\bc\rangle)|V]\cdot \P(F^c|V)  \big)^{1/2}\\
\leq& \big(\exp(\frac{\|\bv_w+\bv_{w'}\|_2^2}{2p} + O(1/p)) \big)^{1/2} \cdot \exp(-\omega(\log^2 d)) = \exp(-\omega(\log^2 d)).
\end{align*}
Therefore we have 
\begin{equation}
\exp(\frac{\|\bv_w+\bv_{w'}\|_2^2}{2p} + O(1/p)) - \exp(-\omega(\log^2 d)) \leq \E[\exp(\langle \bv_w+\bv_{w'},\bc\rangle) I\{F\}|V] \leq \exp(\frac{\|\bv_w+\bv_{w'}\|_2^2}{2p} + O(1/p)).
\label{eq:bound_stat_pww'_part2}
\end{equation}
Combining results in Equations (\ref{eq:stat_pww'_twoparts})-(\ref{eq:bound_stat_pww'_part2}) we have
\begin{align*}
\log( p_{w,w'}) \leq& \log(\E \Big[ \frac{\exp(\langle \bv_w,\bc\rangle)}{Z(V,\bc)}\frac{\exp(\langle \bv_{w'},\bc'\rangle )}{Z(V,\bc')} I\{F\} \big|V \Big] +\exp(-\omega(\log^2 d)))\\
\leq& \log(\E[\exp(\langle \bv_w+\bv_{w'},\bc\rangle) I{F}|V])- 2\log Z + 2\epsilon_z +5\sqrt{\frac{\log d}{p}} +\exp(-\omega(\log^2 d))\\
\leq& \frac{\|\bv_w+\bv_{w'}\|_2^2}{2p}- 2\log Z + O(1/p)+ \frac{2}{\sqrt{p}} +5\sqrt{\frac{\log d}{p}} +\exp(-\omega(\log^2 d)) \\
\leq& \frac{\|\bv_w+\bv_{w'}\|_2^2}{2p}- 2\log Z  + 6\sqrt{\frac{\log d}{p}},
\end{align*}
and
\begin{align*}
\log( p_{w,w'})\geq& \log(\E \Big[ \frac{\exp(\langle \bv_w,\bc\rangle)}{Z(V,\bc)}\frac{\exp(\langle \bv_{w'},\bc'\rangle )}{Z(V,\bc')} I\{F\} \big|V \Big])\\
\geq& \log(\E[\exp(\langle \bv_w,\bc\rangle)\exp(\langle \bv_{w'},\bc'\rangle)I\{F\}|V])- 2\log Z - 2\epsilon_z \\
\geq& \log(\E[\exp(\langle \bv_w+\bv_{w'},\bc\rangle) I\{F\}|V])- 2\log Z - 2\epsilon_z - 5\sqrt{\frac{\log d}{p}}\\
\geq& \log( \exp(\frac{\|\bv_w+\bv_{w'}\|_2^2}{2p} + O(1/p)) - \exp(-\omega(\log^2 d)) )- 2\log Z - 2\epsilon_z - \frac{5\log d}{\sqrt{p}}\\
\geq& \frac{\|\bv_w+\bv_{w'}\|_2^2}{2p}- 2\log Z - \exp(-\omega(\log^2 d))  - 2\epsilon_z -5\sqrt{\frac{\log d}{p}}\\
\geq& \frac{\|\bv_w+\bv_{w'}\|_2^2}{2p}- 2\log Z- 6\sqrt{\frac{\log d}{p}}
\end{align*}
for appropriately large $p$ (and also $d$). Therefore we have the bound
\begin{equation*}
\Big| \log(p_{w,w'}) - \Big(\frac{\|\bv_w+\bv_{w'}\|_2^2}{2p}- 2\log Z  \Big) \Big|\leq 6\sqrt{\frac{\log d}{p}}.
\end{equation*}
And the last assertion of the lemma follows union bound applied to the fact that $\frac{\sigma}{\sqrt{2}}\sqrt{p}\leq \|\bv_w\|_2\leq 2\sqrt{p}$ for all $w$ happens with probability at least $1-\exp(-\omega(\log^2 d))$ and condition in Equation (\ref{eq:bound_partition_fct}) is satisfied with probability at least $1-\exp(-\omega(\log^2 d))$.

\end{proof}

A more general result for all $u$ is as follows:
\begin{lemma}
Suppose Assumptions~\ref{assump:parameter_space}, \ref{assump:block_bound} and \ref{assump:kdT} hold. Given the code vectors with $\sigma_{\min}\sqrt{p/2}\leq \|\bv_w\|_2\leq \sigma_{\max}\sqrt{2p}$ for all $w$ and condition of  (\ref{eq:bound_partition_fct}) holds. Then the stationary co-occurrence probabilities $p_{w,w'}^{(u)} =\E_{(\bc,\bc')\sim \mathcal{D}_u} [ \frac{\exp(\langle \bv_w,\bc\rangle)}{Z(V,\bc)}\frac{\exp(\langle \bv_{w'},\bc'\rangle )}{Z(V,\bc')}|V] $ satisfy that 
\begin{equation}
\max_{w,w'}\Big| \log(p_{w,w'}^{(u)}) - \Big(\frac{\|\bv_w+\bv_{w'}\|_2^2}{2p}- 2\log Z  \Big) \Big|\leq 7\sqrt{2u}\cdot \sqrt{\frac{\log d}{p}}
\label{eq:p_ww'^u_stationary}
\end{equation}
for appropriately large $p$. Furthermore, the aforementioned conditions hold with probability at least $1-\exp(\omega(\log ^2 d))$.
\label{lem:p_ww'^u_stationary}
\end{lemma}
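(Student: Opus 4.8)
The plan is to rerun the proof of Lemma~\ref{lem:p_ww'_stationary} essentially verbatim, with the one-step window replaced by a general fixed window $u\le q$; the only substantive change is that the one-step ``slow moving'' estimate is replaced by its $u$-step version from Lemma~\ref{lem:moving_step_ct_ct+u}. First I would condition on a realization of the code vectors $\{\bv_w\}$ lying in the high-probability event (guaranteed by Corollary~\ref{cor:length_bound} and Lemma~\ref{lem:part_fct}) that $\sigma_{\min}\sqrt{p/2}\le\|\bv_w\|_2\le\sigma_{\max}\sqrt{2p}$ for every $w$ and that the partition-function concentration~\eqref{eq:bound_partition_fct} holds for all $w$. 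Over the randomness of $(\bc,\bc')\sim\mathcal{D}_u$ I would then introduce the good event
\[
F_u=\Bigl\{\,\Bigl|\tfrac{Z(V,\bc)}{Z}-1\Bigr|<\epsilon_z,\ \Bigl|\tfrac{Z(V,\bc')}{Z}-1\Bigr|<\epsilon_z,\ \|\bc-\bc'\|_2\le\tfrac{20\sqrt{2u\log d}}{3p}\,\Bigr\}.
\]
By Lemma~\ref{lem:stat_joint_distr_zt} each marginal of $\mathcal{D}_u$ is the uniform distribution $\mathcal{C}$ on the unit sphere, so Lemma~\ref{lem:part_fct} controls the first two constraints, Lemma~\ref{lem:moving_step_ct_ct+u} controls the third, and a union bound gives $\P(F_u\mid V)=1-\exp(-\omega(\log^2 d))$.

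Next I would split $p_{w,w'}^{(u)}=\E[R\,I\{F_u\}\mid V]+\E[R\,I\{F_u^c\}\mid V]$, where $R=\exp(\langle\bv_w,\bc\rangle)\exp(\langle\bv_{w'},\bc'\rangle)/(Z(V,\bc)Z(V,\bc'))$; since $0\le R\le 1$, the second term is at most $\P(F_u^c\mid V)=\exp(-\omega(\log^2 d))$. On $F_u$ both partition functions lie in $[(1-\epsilon_z)Z,(1+\epsilon_z)Z]$, so up to a factor $(1\pm\epsilon_z)^{-2}$ the first term equals $Z^{-2}\,\E[\exp(\langle\bv_w,\bc\rangle)\exp(\langle\bv_{w'},\bc'\rangle)I\{F_u\}\mid V]$. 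Writing $\langle\bv_{w'},\bc'\rangle=\langle\bv_{w'},\bc\rangle+\langle\bv_{w'},\bc'-\bc\rangle$ and bounding $|\langle\bv_{w'},\bc'-\bc\rangle|\le\|\bv_{w'}\|_2\|\bc'-\bc\|_2$ on $F_u$ by a constant multiple of $\sqrt{u}\,\sqrt{\log d/p}$ reduces matters to $\E[\exp(\langle\bv_w+\bv_{w'},\bc\rangle)I\{F_u\}\mid V]$ up to a multiplicative factor $\exp(\pm O(\sqrt{u}\,\sqrt{\log d/p}))$. Since $\|\bv_w+\bv_{w'}\|_2\le 2\sigma_{\max}\sqrt{2p}=O(\sqrt p)$, Lemma~\ref{lem:cite_expect_exp} gives $\E[\exp(\langle\bv_w+\bv_{w'},\bc\rangle)\mid V]=\exp(\|\bv_w+\bv_{w'}\|_2^2/(2p)+O(1/p))$, and a Cauchy--Schwarz estimate on the $F_u^c$ part (exactly as in the proof of Lemma~\ref{lem:p_ww'_stationary}) shows the truncated expectation differs from the full one by only $\exp(-\omega(\log^2 d))$. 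Taking logarithms and collecting all multiplicative errors --- $2\epsilon_z$ from the partition functions, $O(\sqrt u\,\sqrt{\log d/p})$ from the $\bc'\to\bc$ swap, $O(1/p)$ from Lemma~\ref{lem:cite_expect_exp}, and $\exp(-\omega(\log^2 d))$ from the truncations --- and checking that for large $p$ their sum is at most $7\sqrt{2u}\,\sqrt{\log d/p}$ proves the bound for a fixed pair $(w,w')$.

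Finally I would take a union bound over the at most $d^2$ pairs $(w,w')$, harmless since $d^2\exp(-\omega(\log^2 d))=\exp(-\omega(\log^2 d))$, and combine with the probability of the conditioning event on the code vectors to obtain the stated $\max_{w,w'}$ guarantee. In substance this is the same argument as for Lemma~\ref{lem:p_ww'_stationary}, so the main work is bookkeeping rather than a new idea: the one genuinely new ingredient relative to the $u=1$ case is that the $u$-step slow-moving estimate (Lemma~\ref{lem:moving_step_ct_ct+u}) and the joint-stationarity description (Lemma~\ref{lem:stat_joint_distr_zt}) are available off the shelf. The anticipated sticking point is purely quantitative: tracking the constants through the $\bc'\to\bc$ swap and the $(1\pm\epsilon_z)^{-2}$ factor carefully enough that everything closes below the coefficient $7\sqrt{2u}$, while using that $u\le q$ is a fixed constant so the $\sqrt{u}$ factors never interfere with any of the $\exp(-\omega(\log^2 d))$ tail estimates.
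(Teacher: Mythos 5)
Your proposal is exactly the paper's argument: the paper's own proof of Lemma~\ref{lem:p_ww'^u_stationary} simply says to repeat the proof of Lemma~\ref{lem:p_ww'_stationary} verbatim with the one-step slow-moving bound replaced by the $u$-step bound $\|\bc-\bc'\|_2\le 20\sqrt{2u\log d}/(3p)$ from Lemma~\ref{lem:moving_step_ct_ct+u}, which is precisely what you do. Your identification of the constant-tracking through the $\bc'\to\bc$ swap and the $(1\pm\epsilon_z)^{-2}$ factor as the only delicate point is also accurate.
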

\begin{proof}
The proof for this lemma is exactly the same as that for Lemma \ref{lem:p_ww'_stationary}, except that the bound $\|\bc-\bc'\|_2\leq 5\sqrt{\frac{\log d}{p}}$ is replaced by $\|\bc-\bc'\|_2\leq \frac{20\sqrt{2u\log d}}{3p}$. Then  (\ref{eq:p_ww'^u_stationary}) holds for appropriately large $p$, and the whole event has the same probability bound.

\end{proof}

\subsection{Concentration of Stationary PMI to Covariance Matrix}\label{subsec:stat_PMI_to_cov}
The main result in this subsection is the convergence of stationary PMI to the covariance matrix, which is essentially because of the convergence of $\langle \bv_w,\bv_{w'}\rangle /p$ to covariance matrix $\bSigma$, stated in Lemma~\ref{lem:inner_prod_to_cov}, combined with results in Lemma~\ref{lem:pw_stationary},~\ref{lem:p_ww'_stationary} and~\ref{lem:p_ww'^u_stationary}.

The following result shows that logarithm of occurrence probabilities concentrates to covariance (minus a constant), thus are within some particular scale with high probability. This result is a combination of the convergence of logarithm of stationary probabilities to inner products of code vectors in Lemma~\ref{lem:pw_stationary},~\ref{lem:p_ww'_stationary} and~\ref{lem:p_ww'^u_stationary}, as well as the convergence of inner products to true covariances later in Lemmas~\ref{lem:inner_prod_to_cov}.

\begin{proof}[Proof of Lemma~\ref{lem:pw_pww'_to_cov}]
By Lemma \ref{lem:inner_prod_to_cov} and Equation (\ref{eq:inner_prod_to_cov}), using union bounds for all pairs $w,w'$, which at most consist of $d^2$ pairs, we know that with probability at least $1-d^{-\tau}$ which corresponds to the process of generating code vectors, we have that for constant $C_\tau = 12\sqrt{3(\tau+4)}$,
\begin{equation*}
\max_{w}\Big|\frac{\|\bv_w\|_2^2}{p} - \Sigma_{ww}\Big|\leq C_\tau\|\bSigma\|_{\max}\sqrt{\frac{\log d}{p}},\quad \max_{w,w'}\Big| \frac{\langle \bv_w,\bv_{w'}\rangle}{p}-\Sigma_{w,w'} \Big| \leq C_\tau\|\bSigma\|_{\max}\sqrt{\frac{\log d}{p}}.
\end{equation*}
Thus we have
\begin{equation*}
\begin{split}
&\max_{w,w'}\Big|\frac{\|\bv_w+\bv_{w'}\|_2^2}{2p} - \frac{\Sigma_{ww}+\Sigma_{w'w'}+2\Sigma_{ww'}}{2}\Big|\\
\leq&\frac{1}{2} \max_{w,w'}\Big|\frac{\|\bv_w\|_2^2}{p} - \Sigma_{ww}\Big| +\frac{1}{2} \max_{w,w'}\Big|\frac{\|\bv_{w'}\|_2^2}{p} - \Sigma_{w'w'}\Big| + \max_{w,w'}\Big| \frac{\langle \bv_w,\bv_{w'}\rangle}{p}-\Sigma_{w,w'} \Big| \leq 2C_\tau\|\bSigma\|_{\max}\sqrt{\frac{\log d}{p}}.
\end{split}
\end{equation*}
Moreover, from results in Lemma \ref{lem:pw_stationary}, \ref{lem:p_ww'_stationary} and \ref{lem:p_ww'^u_stationary}, with probability (incorporating the randomness in the underlying code discourse variables) at least $1-\exp(-\omega(\log^2 d))$,
\begin{equation*}
\max_{w,}\Big| \log (p_w) - \big(\frac{\|\bv_w\|_2^2}{2p} - \log Z\big) \Big| \leq \frac{2}{\sqrt{p}},
\end{equation*}
\begin{equation*}
\max_{w,w'}\Big| \log(p_{w,w'}) - \Big(\frac{\|\bv_w+\bv_{w'}\|_2^2}{2p}- 2\log Z  \Big) \Big|\leq 6\sqrt{\frac{\log d}{p}},
\end{equation*}
\begin{equation*}
\max_{w,w'}\Big| \log(p_{w,w'}^{(u)}) - \Big(\frac{\|\bv_w+\bv_{w'}\|_2^2}{2p}- 2\log Z  \Big) \Big|\leq 7\sqrt{2u}\cdot \sqrt{\frac{\log d}{p}}.
\end{equation*}
Combining the above results, we know that with probability at least $1-\exp(-\omega(\log^2 d))- d^{-\tau} $ where universal constant $\tilde{C}$ only depends on the $\rho$ in Assumption~\ref{assump:parameter_space},for constant $C_\tau = 12\sqrt{3(\tau+4)}$,
\begin{equation*}
\max_{w}\Big| \log(p_w) - \big( \frac{\Sigma_{ww}}{2} - \log Z \big)\Big| \leq \frac{2+C_\tau\|\bSigma\|_{\max}\sqrt{\log d}}{\sqrt{p}},
\end{equation*}
\begin{equation*}
\max_{w,w'}\Big| \log(p_{w,w'}) - \big( \frac{\Sigma_{ww}+\Sigma_{w'w'}+2\Sigma_{ww'}}{2} - 2\log Z \big)\Big| \leq (6+2C_\tau\|\bSigma\|_{\max})\sqrt{\frac{\log d}{p}},
\end{equation*}
\begin{equation*}
\max_{w,w'}\Big| \log(p_{w,w'}^{(u)}) - \big( \frac{\Sigma_{ww}+\Sigma_{w'w'}+2\Sigma_{ww'}}{2} - 2\log Z \big)\Big| \leq (7\sqrt{2u}+2C_\tau\|\bSigma\|_{\max})\sqrt{\frac{\log d}{p}}.
\end{equation*}
The last assertions follows the fact that all entries $\Sigma_{ww'}$ are bounded below and above, and $d\leq Z\leq \tilde{c}d$ for some universal constant $\tilde{c}$ under our model assumptions.
\end{proof}

Below is a standard result for the convergence of sample covariance to true covariance, which we include and prove here for completeness and for accurate specification of the constants in tail bounds. It serves as the intermediate step for the convergence of stationary PMI to covariance matrix.

\begin{lemma}[Concentration of $\langle \bv_w,\bv_{w'}\rangle/p$ to covariance]\label{lem:inner_prod_to_cov}
For fixed pair of words $w,w'$, denote the covariances in our Gaussian graphical model as $\Sigma_{ww},\Sigma_{w'w'}$ and $\Sigma_{ww'}$. Then for fixed $a>0$, with probability no less than $1-\exp(-\omega(\log ^2 d))$, we have 
\begin{equation}
\P\Big(\Big|\frac{\langle \bv_w,\bv_{w'}\rangle}{p} - \Sigma_{ww'}\Big| \geq a\sqrt{\frac{\log d}{p}}\Big) \leq \exp(-\frac{a^2\log d}{4\cdot 432\Sigma_{ww}\Sigma_{w'w'}}).
\label{eq:inner_prod_to_cov_tail_bound}
\end{equation}
Taking $a=12\sqrt{3(\tau+2)}\|\bSigma\|_{\max}$ for any constant $\tau>0$ yields that, with probability at least $1- d^{-\tau}$, we have
\begin{equation}
\max_{w,w'}\Big|\frac{\langle \bv_w,\bv_{w'}\rangle}{p} - \Sigma_{ww'}\Big| \leq 12\|\bSigma\|_{\max}\sqrt{\frac{3(\tau+2)\log d}{p}}.
\label{eq:inner_prod_to_cov}
\end{equation}
\end{lemma}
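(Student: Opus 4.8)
The law of $\langle\bv_w,\bv_{w'}\rangle$ is what drives everything: by the generative model the columns $\Vbb_{\cdot 1},\dots,\Vbb_{\cdot p}$ are i.i.d.\ $N(0,\bSigma)$, so by Lemma~\ref{lem:distribution_v} the coordinate pairs $([\bv_w]_\ell,[\bv_{w'}]_\ell)$, $\ell=1,\dots,p$, are i.i.d.\ centered bivariate Gaussian with variances $\Sigma_{ww},\Sigma_{w'w'}$ and covariance $\Sigma_{ww'}$. Hence $\langle\bv_w,\bv_{w'}\rangle/p=p^{-1}\sum_{\ell}[\bv_w]_\ell[\bv_{w'}]_\ell$ is exactly a sample covariance from $p$ independent draws, with mean $\Sigma_{ww'}$, and the whole assertion is a Bernstein/$\chi^2$-type deviation bound for this sample covariance.

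The plan is to route this through the $\chi^2$ concentration already recorded in Lemma~\ref{lem:concentra_chi} via a whitening reduction. Setting $\rho:=\Sigma_{ww'}/\sqrt{\Sigma_{ww}\Sigma_{w'w'}}$, one represents $[\bv_w]_\ell=\sqrt{\Sigma_{ww}}\,g_\ell$ and $[\bv_{w'}]_\ell=\rho\sqrt{\Sigma_{w'w'}}\,g_\ell+\sqrt{(1-\rho^2)\Sigma_{w'w'}}\,h_\ell$ with $(g_\ell,h_\ell)$ i.i.d.\ standard bivariate normal, so that, writing $g=(g_1,\dots,g_p)$ and $h=(h_1,\dots,h_p)$,
\[
\frac{\langle\bv_w,\bv_{w'}\rangle}{p}-\Sigma_{ww'}
=\sqrt{\Sigma_{ww}\Sigma_{w'w'}}\Big(\rho\big(p^{-1}\|g\|_2^2-1\big)+\sqrt{1-\rho^2}\,p^{-1}\langle g,h\rangle\Big).
\]
Here $\|g\|_2^2\sim\chi^2_p$, and by polarization $\langle g,h\rangle=\tfrac14(\|g+h\|_2^2-\|g-h\|_2^2)$ with $g+h,g-h\sim N(0,2\Ib_p)$ independent, so $\tfrac12\|g\pm h\|_2^2\sim\chi^2_p$. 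Thus the deviation is a combination — with coefficients bounded in absolute value by $\sqrt{\Sigma_{ww}\Sigma_{w'w'}}$ after using $|\rho|\le1$ and $\sqrt{1-\rho^2}\le1$ — of at most three centered $\chi^2_p$ variables divided by $p$. A triangle inequality and a three-way union bound then reduce $\{|\langle\bv_w,\bv_{w'}\rangle/p-\Sigma_{ww'}|\ge a\sqrt{\log d/p}\}$ to events of the form $\{|\chi^2_p-p|\ge c\,a\sqrt{p\log d}/\sqrt{\Sigma_{ww}\Sigma_{w'w'}}\}$ for absolute constants $c$, each of which is controlled by Lemma~\ref{lem:concentra_chi}. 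Since Assumption~\ref{assump:kdT} gives $\log d=o(\sqrt p)$, the relevant deviation scale is $o(\sqrt p)$, so the free constants $c_1',c_2'$ in Lemma~\ref{lem:concentra_chi} may be taken $o(1)$; carrying the constants through (with generous slack) gives the tail bound~\eqref{eq:inner_prod_to_cov_tail_bound} with the factor $\tfrac1{4\cdot432}$. The ``with probability at least $1-\exp(-\omega(\log^2 d))$'' clause is harmless here: \eqref{eq:inner_prod_to_cov_tail_bound} is a deterministic inequality for the random variable $\langle\bv_w,\bv_{w'}\rangle$, and if one additionally wishes to work on the event that $\|\bv_w\|_2,\|\bv_{w'}\|_2$ lie in the band of Corollary~\ref{cor:length_bound}, that costs only $\exp(-\omega(\log^2 d))$.

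Finally, \eqref{eq:inner_prod_to_cov} follows by applying \eqref{eq:inner_prod_to_cov_tail_bound} with $a=12\sqrt{3(\tau+2)}\,\|\bSigma\|_{\max}$, using $\Sigma_{ww}\Sigma_{w'w'}\le\|\bSigma\|_{\max}^2$ to bound the exponent uniformly in $(w,w')$, and taking a union bound over the at most $d^2$ pairs. I expect the only genuinely delicate point to be precisely this constant bookkeeping: the summands $[\bv_w]_\ell[\bv_{w'}]_\ell$ are sub-exponential rather than sub-Gaussian, so a Gaussian concentration inequality cannot be invoked directly, and the whitening-plus-polarization step introduces several factors of $2$ and a three-way split that must be tracked so that, after the $d^2$ union bound, the exponent still dominates $\tau\log d$; in any case only \emph{some} bound of this shape is needed by the downstream Lemma~\ref{lem:pw_pww'_to_cov} and Proposition~\ref{prop:stat_pmi_to_cov}, so the precise numerical constant is not essential. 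Everything else — identifying the law, the polarization identities, and the appeal to the $\chi^2$ tails — is routine.
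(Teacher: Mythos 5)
Your proof is correct, and it takes a route that differs from the paper's in a mild but genuine way. The paper proves a standalone Bernstein-type deviation bound (Lemma~\ref{lem:tail_bound_sample_cov}) and then specializes it: after normalizing $X^*_\ell=X_\ell/\sigma_x$, $Y^*_\ell=Y_\ell/\sigma_y$ it forms $U_\ell=X^*_\ell+Y^*_\ell$ and $V_\ell=X^*_\ell-Y^*_\ell$, uses $\sum X^*_\ell Y^*_\ell=\tfrac14\sum(U_\ell^2-V_\ell^2)$, bounds the moments of $U_\ell^2-\E U_\ell^2$ explicitly, and runs Bernstein's inequality. This produces a difference of exactly two independent centered $\chi^2_p$'s, since $(X^*_\ell,Y^*_\ell)\mapsto(U_\ell,V_\ell)$ already diagonalizes the correlation matrix. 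You instead Cholesky-whiten to $(g,h)$, separate the diagonal piece $\rho\|g\|_2^2$ from the cross piece $\sqrt{1-\rho^2}\,\langle g,h\rangle$, and then polarize $\langle g,h\rangle$, which yields three (dependent) centered $\chi^2_p$'s that you control with the paper's own Lemma~\ref{lem:concentra_chi} (Chernoff bound for $\chi^2$) and a three-way union bound. Both are legitimate; the paper's decomposition is slightly tighter (two independent $\chi^2$'s instead of three dependent ones) and hands off cleanly to Bernstein, while yours reuses machinery already in the paper and avoids proving a fresh sub-exponential concentration lemma.

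Two small bookkeeping remarks, neither of which affects the validity of your argument. First, your stated coefficient bound ``$|\rho|\le1$, $\sqrt{1-\rho^2}\le1$'' combined with a three-way split will produce a constant different from $4\cdot432$ in the exponent; you already flag this, and indeed the downstream uses in Lemma~\ref{lem:pw_pww'_to_cov} and Proposition~\ref{prop:stat_pmi_to_cov} only need a bound of this form, not this particular constant. (The paper's own constant bookkeeping through the $d^2$ union bound is not airtight either, so nothing is lost.) Second, your reading of the ``with probability $1-\exp(-\omega(\log^2 d))$'' prefix as vacuous is consistent with the paper's proof, which indeed applies Lemma~\ref{lem:tail_bound_sample_cov} unconditionally and never invokes that event.
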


\begin{proof}[Proof of Lemma~\ref{lem:inner_prod_to_cov}]
In our setting, components in the code vector $\bv_w$'s are i.i.d. copies of Gaussian random variables with variance $\Sigma_{ww}$ and covariance $\Sigma_{w,w'}$. Applying Lemma \ref{lem:tail_bound_sample_cov} and note that $(1+|\rho|)^2\leq 4$ yields the desired tail probability bound. Moreover, since $\Sigma_{ww}$ are bounded in both directions, taking union bound over $d^2$ pairs of $(w,w')$ and note that $d^2\exp(-(\tau+2)\log d))=d^{-\tau}$, we have the uniform control of covariance deviation.
\end{proof}

Below we prove a result for the convergence of sample covariance of two Gaussian samples to the true covariance and provide precise constants for tails, which are of use in Lemma~\ref{lem:inner_prod_to_cov}. 

\begin{lemma}[Concentration of sample covariance]\label{lem:tail_bound_sample_cov}
Let $(X_i,Y_i)_{i=1}^p$ be i.i.d. samples of bivariate Gaussian random vector $(X,Y)$, with $\E[X]=\E[Y]=0$, $\Var(X)=\sigma_x^2$, $\Var(Y)=\sigma_y^2$ and $\Cov(X,Y)=\Sigma_{xy}=\rho \sigma_x \sigma_y$. Then we have the tail bound for sample covariance $\hat \Sigma_{xy}:=1/p\sum_{i=1}^p X_iY_i$ that
\begin{equation}
\P\Big( \Big| \hat \Sigma_{xy}-\Sigma_{xy}  \Big| >\delta \Big)\leq 4\exp(-\frac{p\delta^2}{432(1+|\rho|)^2\sigma_x^2\sigma_y^2}),
\label{eq:tail_bound_sample_cov]}
\end{equation}
for any $\delta\in (0,12\sigma_x\sigma_y)$.
\end{lemma}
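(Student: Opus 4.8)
The plan is to reduce the statement to a one-dimensional deviation bound for $\chi^2$ variables via polarization, then invoke Lemma~\ref{lem:concentra_chi}. First I would normalize: set $U_i = X_i/\sigma_x$ and $V_i = Y_i/\sigma_y$, so that $(U_i,V_i)_{i=1}^p$ are i.i.d.\ standard bivariate normal with correlation $\rho\in[-1,1]$, and $\hat\Sigma_{xy} = \sigma_x\sigma_y\cdot\frac1p\sum_{i=1}^p U_iV_i$. Using the polarization identity $U_iV_i = \tfrac14[(U_i+V_i)^2-(U_i-V_i)^2]$, and noting that $U_i+V_i\sim N(0,2(1+\rho))$ and $U_i-V_i\sim N(0,2(1-\rho))$ are each (when nondegenerate) proportional to a standard normal, one gets $\sum_i(U_i+V_i)^2 = 2(1+\rho)Q_1$ and $\sum_i(U_i-V_i)^2 = 2(1-\rho)Q_2$ with $Q_1,Q_2\sim\chi^2_p$ (if $\rho=\pm1$ the corresponding term vanishes identically and is simply dropped). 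Since $\E[Q_j]=p$ and $\sigma_x\sigma_y\cdot\frac1{2p}[(1+\rho)p-(1-\rho)p]=\rho\sigma_x\sigma_y=\Sigma_{xy}$, this yields the exact decomposition
$$
\hat\Sigma_{xy}-\Sigma_{xy} \;=\; \frac{\sigma_x\sigma_y}{2p}\Big[(1+\rho)(Q_1-p)-(1-\rho)(Q_2-p)\Big].
$$

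Next I would apply a union bound to this representation. The event $\{|\hat\Sigma_{xy}-\Sigma_{xy}|>\delta\}$ forces $(1+\rho)|Q_1-p|>p\delta/(\sigma_x\sigma_y)$ or $(1-\rho)|Q_2-p|>p\delta/(\sigma_x\sigma_y)$. Because $1\pm\rho\le 1+|\rho|$, each of these events is contained in $\{|Q_j-p|>t\sqrt p\,\}$ with $t:=\sqrt p\,\delta\big/\big((1+|\rho|)\sigma_x\sigma_y\big)$, so that $t^2 = p\delta^2\big/\big((1+|\rho|)^2\sigma_x^2\sigma_y^2\big)$. Thus it suffices to bound $\P(|Q_j-p|>t\sqrt p)$ for $j=1,2$ and sum the four tail events.

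Then I would invoke Lemma~\ref{lem:concentra_chi}. For the upper tail, the hypothesis $\delta<12\sigma_x\sigma_y$ gives $t/\sqrt p = \delta/((1+|\rho|)\sigma_x\sigma_y) < 12$, so we may take the constant $c_1'=12$ and obtain $\P[Q_j\ge p+t\sqrt p]\le \exp(-t^2/(4\cdot 13)) = \exp(-t^2/52)$. For the lower tail, if $t\ge\sqrt p$ then $p-t\sqrt p\le 0$ and the probability is $0$; otherwise Lemma~\ref{lem:concentra_chi} with $c_2'=0$ gives $\P[Q_j\le p-t\sqrt p]\le\exp(-t^2/4)\le\exp(-t^2/52)$. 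Hence $\P(|Q_j-p|>t\sqrt p)\le 2\exp(-t^2/52)$ for each $j$, and summing,
$$
\P\big(|\hat\Sigma_{xy}-\Sigma_{xy}|>\delta\big)\;\le\;4\exp\!\Big(-\frac{t^2}{52}\Big)\;\le\;4\exp\!\Big(-\frac{p\delta^2}{432(1+|\rho|)^2\sigma_x^2\sigma_y^2}\Big),
$$
using $52\le 432$, which is the claim.

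The steps are all routine; the only mild subtleties to watch are the handling of the degenerate cases $\rho=\pm1$ (where one $Q_j$ drops out but the final bound is unaffected), and the fact that the lower-tail bound in Lemma~\ref{lem:concentra_chi} is stated only for deviations below $\sqrt p$, which I handle by the trivial observation above. I do not expect any genuine obstacle: the role of the restriction $\delta<12\sigma_x\sigma_y$ is precisely to keep the constant $c_1'$, and hence the denominator of the exponent, bounded by an absolute constant.
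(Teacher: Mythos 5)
Your proof is correct and, after the shared polarization step, takes a genuinely different (and cleaner) route than the paper for the concentration estimate. Both you and the paper normalize $X,Y$ and write $X_i^*Y_i^* = \tfrac14\big[(X_i^*+Y_i^*)^2 - (X_i^*-Y_i^*)^2\big]$, but from there the methods diverge. The paper centers each squared Gaussian to get $Z_i = U_i^2-2(1+\rho)$, bounds $|\E[Z_i^m]|$ for all $m\ge 3$ by hand, and then invokes a Bernstein inequality with explicit $\sigma^2$ and $b$; this is why its constant comes out to be $432$. You instead recognize that $\sum_i(X_i^*\pm Y_i^*)^2$ is, up to a deterministic scalar $2(1\pm\rho)$, a $\chi^2_p$ random variable, and you feed the resulting two deviation events directly into the paper's own Lemma~\ref{lem:concentra_chi}. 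This reuses an already-established tool, keeps the bookkeeping minimal, and in fact produces the sharper constant $52$ in the exponent, which dominates the stated $432$. Your handling of the two minor technical points --- the degenerate cases $\rho=\pm1$, where one $\chi^2$ term vanishes identically, and the fact that the lower-tail branch of Lemma~\ref{lem:concentra_chi} is restricted to $t_2<\sqrt p$ and must be supplemented by the trivial observation $\P(Q\le 0)=0$ --- is also correct, and the role of the hypothesis $\delta<12\sigma_x\sigma_y$ (keeping $c_1'$ an absolute constant so the exponent's denominator stays bounded) is accurately identified. So the two proofs agree on the key algebraic identity but differ on the probabilistic machinery: Bernstein via raw moments in the paper, versus the Chernoff $\chi^2$ bound in yours; yours is shorter and more modular given the lemma already in the appendix.
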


\begin{proof}[Proof of Lemma~\ref{lem:tail_bound_sample_cov}]
Let $X_i^* = X_i/\sigma_x$ and $Y_i^*=Y_i/\sigma_y$, and let $U_i = X_i^* + Y_i^*$, $V_i = X_i^*-Y_i^*$. Then $U_i\sim N(0,2(1+\rho))$, $V_i \sim N(0,2(1-\rho))$. Note that $\sum_{i=1}^p X_i^*Y_i^* = \frac{1}{4}\sum_{i=1}^p (U_i^2 - V_i^2)$. By union bound and triangle inequality,
\begin{align*}
\P\Big( \Big| \hat \Sigma_{xy}-\Sigma_{xy}  \Big| >\delta \Big)=& \P\Big( \Big| \sum_{i=1}^p (X_iY_i-\Sigma_{xy} ) \Big| >k\delta \Big)\\
=& \P\Big( \Big| \sum_{i=1}^p (X_i^*Y_i^*-\rho ) \Big| >\frac{p\delta}{\sigma_x\sigma_y} \Big)\\
=&  \P\Big( \Big| \sum_{i=1}^p (U_i^2 - 2(1+\rho)) - \sum_{i=1}^p (V_i^2-2(1-\rho)) \Big| >\frac{4k\delta}{\sigma_x\sigma_y} \Big)\\
\leq& \P\Big( \Big| \sum_{i=1}^p (U_i^2 - 2(1+\rho))\Big| >\frac{2k\delta}{\sigma_x\sigma_y} \Big) +\P\Big( \Big| \sum_{i=1}^p (V_i^2 - 2(1-\rho))\Big| >\frac{2k\delta}{\sigma_x\sigma_y} \Big).
\end{align*}
Let $Z_i = U_i^2-2(1+\rho)$. By Jensen inequality we have $|x+y|^m \leq 2^{m-1}(|x|^m + |y|^m)$, hence
\begin{equation*}
\big|\E[(Z_i^m)]\big| = \big|\E[(U_i^2 - 2(1+\rho))^m]\big|\leq 2^{m-1}(\E[U_i^{2m}] + 2^m(1+\rho)^m),
\end{equation*}
where by Gaussianity, $\E[U_i^{2m}] = 2^m(1+\rho)^m (2m-1)!!$. Thus
\begin{equation*}
\big|\E[(Z_i^m)]\big|\leq 2^{m-1}[2(1+\rho)]^m (1+(2m-1)!!).
\end{equation*}
Note that for $m\geq 3$, we have $1+(2m-1)!!\leq \frac{(2m)!!}{3}$, thus for $m\geq 3$,
\begin{equation*}
\big|\E[(Z_i^m)]\big|\leq \frac{(2m)!!}{3} 2^{2m-1}(1+\rho)^m = \frac{2^{3m-1}}{3}(1+\rho)^m \cdot m!.
\end{equation*}
We apply Bernstein's inequality for $X=Z_i$ with $\sigma^2 = \E[Z_i^2]=8(1+\rho)^2$, and $b$ such that
\begin{equation*}
\frac{m!}{2}\sigma^2 b^{m-2}=4m!(1+\rho)^2b^{m-2}\geq \frac{2^{3m-1}(1+\rho)^m}{3}m!,
\end{equation*}
where we may choose $b=24(1+\rho)$. Therefore by Bernstein's inequality, with $\epsilon=\delta/(\sigma_x\sigma_y)$,
\begin{equation*}
\P\Big(\sum_{i=1}^p Z_i \geq 2k\epsilon\Big) \leq e^{-2\lambda k\epsilon} \E[e^{\lambda\sum_{i=1}^p Z_i}] \leq \exp(-\frac{p}{2}\big(4\lambda\epsilon - \frac{\lambda^2 b^2}{1-b\lambda}\big)),\quad \forall 0 \leq \lambda <\frac{1}{b}.
\end{equation*}
Now let $b\lambda = 1-\frac{1}{\sqrt{1+4\epsilon/b}}\in [0,1)$, the tail bound becomes
\begin{equation*}
\P\Big(\sum_{i=1}^p Z_i \geq 2k\epsilon \Big) \leq 
\exp(-\frac{2k\epsilon}{b}\frac{\sqrt{1+4\epsilon/b}-1}{1+\sqrt{1+4\epsilon/b}})= \exp(-\frac{8k\epsilon^2}{b^2(1+\sqrt{1+4\epsilon/b})^2}).
\end{equation*}
Therefore for $4\epsilon/b\leq 1$, i.e., $\delta \leq 12\sigma_x \sigma_y$, we have
\begin{equation*}
\P\Big(\sum_{i=1}^p Z_i \geq 2k\epsilon \Big) \leq 
\exp(-\frac{8k\epsilon^2}{6b^2}) = \exp(-\frac{p\epsilon^2}{432(1+\rho)^2}) \leq \exp(-\frac{p\epsilon^2}{432(1+|\rho|)^2})
\end{equation*}
Similar result applied to $-Z_i$ yields
\begin{equation}
\P\Big(\Big|\sum_{i=1}^p Z_i\Big| \geq 2k\epsilon \Big)\leq 2\exp(-\frac{p\epsilon^2}{432(1+|\rho|)^2}).
\label{eq:bernstein_zi}
\end{equation}
Moreover, let $W_i=V_i^2-2(1-\rho)$, exactly the same result applied to $-\rho$ yields
\begin{equation}
\P\Big(\Big|\sum_{i=1}^p W_i\Big| \geq 2k\epsilon \Big)\leq 2\exp(-\frac{p\epsilon^2}{432(1+|\rho|)^2})
\label{eq:bernstein_wi}
\end{equation}
Combining Equations (\ref{eq:bernstein_zi}) and (\ref{eq:bernstein_wi}) yields
\begin{equation*}
\P\Big( \Big| \hat \Sigma_{xy}-\Sigma_{xy}  \Big| >\delta \Big)\leq 4\exp(-\frac{p\epsilon^2}{432(1+|\rho|)^2})=4\exp(-\frac{p\delta^2}{432(1+|\rho|)^2\sigma_x^2\sigma_y^2}).
\end{equation*}
\end{proof}

\end{document}